\tikzset{
    >=stealth',	
    punkt/.style={
           rectangle,
           rounded corners,
           draw=black, thick,
           text width=6.5em,
           minimum height=2em,
           text centered},
    pil/.style={
           ->,
           thick,
           shorten <=2pt,
           shorten >=2pt,}
}
\newtheorem{mydef}{Definition}
\newtheorem{remark}{Remark}
\newtheorem{theorem}{Theorem}
\newtheorem{lemma}{Lemma}
\newtheorem{corollary}{Corollary}
\pgfplotsset{
every axis label/.append style={font=\large},
}
\newcommand{\blankpage}{
\newpage
\thispagestyle{empty}
\mbox{}
\newpage
}
\crefname{observation}{observation}{observations}
\crefname{algorithm}{algorithm}{algorithms}
\crefname{align}{equation}{equations}
\crefname{eqnarray}{equation}{equations}
\begin{document}

\title{Stochastic Approximation with Markov Noise: Analysis and applications in reinforcement learning}	
\submitdate{2015}

\submitdate{\monthyeardate\today} 
\phd
\dept{Computer Science and Automation}
\faculty{Faculty of Engineering}
\author{Prasenjit Karmakar}


\maketitle

\begin{center}
\LARGE{\underline{\textbf{Declaration of Originality}}}
\end{center}
\noindent I, \textbf{Prasenjit Karmakar}, with SR No. \textbf{04-04-00-10-12-13-1-10390} hereby declare that
the material presented in the thesis titled

\begin{center}
\textbf{Stochastic Approximation with Markov Noise: Analysis
and applications in reinforcement learning}
\end{center}

\noindent represents original work carried out by me in the \textbf{Department
of Computer Science and Automation} at \textbf{Indian Institute of
Science} during the years \textbf{Aug 2013- Jan 2018}.

\noindent With my signature, I certify that:
\begin{itemize}
	\item I have not manipulated any of the data or results.
	\item I have not committed any plagiarism of intellectual
	property.
	I have clearly indicated and referenced the contributions of
	others.
	\item I have explicitly acknowledged all collaborative research
	and discusions.
	\item I have understood that any false claim will result in severe
	disciplinary action.
	\item I have understood that the work may be screened for any form
	of academic misconduct.
\end{itemize}

\vspace{20mm}

\noindent {\footnotesize{Date:	\hfill	Student Signature}} \qquad

\vspace{20mm}

\noindent In my capacity as supervisor of the above-mentioned work, I certify
that the above statements are true to the best of my knowledge, and 
I have carried out due diligence to ensure the originality of the
report.

\vspace{20mm}

\noindent  {\footnotesize{Advisor Name: \hfill Advisor Signature}} \qquad

\blankpage

\vspace*{\fill}
\begin{center}
\large\bf \textcopyright \ Prasenjit Karmakar\\
\large\bf \monthyeardate\today\\
\large\bf All rights reserved
\end{center}
\vspace*{\fill}
\thispagestyle{empty}

\blankpage

\vspace*{\fill}
\begin{center}
DEDICATED TO \\[2em]
\Large\it My Parents and My teachers\\[2em]
\end{center}
\vspace*{\fill}
\thispagestyle{empty}


\setcounter{secnumdepth}{3}
\setcounter{tocdepth}{3}
\frontmatter 
\pagenumbering{roman}
\prefacesection{Acknowledgements}
 I take this opportunity to express my profound gratitude to the people who have helped and supported me throughout my PhD. 
 I thank my guide, Prof. Shalabh Bhatnagar, for his constant and valuable suggestions during my research. 
 Without his constant guidance and support this thesis would not have been possible.
 In all my time as a
student, I never
had one idea that didn’t have his ear. I aspire to be able to bring the kind of rigour,
commitment, and clarity to one’s work as he does.
 I will always cherish our many lengthy discussions, especially
when we were working on problems.
 Next I thank Prof. Csaba Szepesvári for several interesting and useful discussions when I started working on the first 
 problem. I thank Rajkumar Maity, who was a project associate of Stochastic Systems Lab, for helping me in running 
 the simulations related to the off-policy learning work in this thesis.
 I thank Prof. Narahari, Chairman of Electrical Sciences Division, who has always been keen to know how my thesis work is progressing.  
 I am grateful to all the professors of CSA Department and Mathematics Department for sharing their knowledge and for their encouragement.  
 I  also
thank  all  the  anonymous  reviewers  of  our  papers,  and  the  committee  of  my  comprehensive
examination.
 I, also, thank the CSA Dept office staff for their invaluable support.  
 I would like to particularly acknowledge the help by Mrs. Suguna, Mrs. Kushael and Mrs. Padmavathi who promptly helped me on many occasions. 
 Last but not the least, I express my heartfelt gratitude to the Almighty, my parents, and friends for their love and blessings.

\prefacesection{Abstract}
Stochastic approximation algorithms are sequential non-parametric methods for finding a zero or minimum of a function in the situation where only the noisy observations of the function values are available. Two time-scale stochastic approximation algorithms consist of two coupled recursions which are updated with different (one is considerably smaller than the other) step sizes which in turn facilitate convergence for such algorithms.

 We present for the first time an asymptotic convergence analysis of two time- scale stochastic approximation driven by 'controlled' Markov noise. In particular, the faster and slower recursions have non-additive controlled Markov noise components in addition to martingale difference noise. We analyze the asymptotic behavior of our framework by relating it to limiting differential inclusions in both time scales that are defined in terms of the ergodic occupation measures associated with the controlled Markov processes. 

Using a special case of our results, we present a solution to the off-policy convergence problem for temporal-difference learning with linear function approximation.

 One of the important assumption in the earlier analysis is the point-wise boundedness (also called the 'stability') of the iterates. However, finding sufficient verifiable conditions for this is very hard when the noise is Markov as well as when there are multiple timescales. We compile several aspects of the dynamics of stochastic approximation algorithms with Markov iterate-dependent noise when the iterates are not known to be stable beforehand. We achieve the same by extending the lock-in probability (i.e. the probability of convergence to a specific attractor of the limiting o.d.e. given that the iterates are in its domain of attraction after a sufficiently large number of iterations (say) $n_0$) framework to such recursions. Specifically, with the more restrictive assumption of Markov iterate-dependent noise supported on a bounded subset of the Euclidean space we give a lower bound for the lock- in probability. We use these results to prove almost sure convergence of the iterates to the specified attractor when the iterates satisfy an `asymptotic tightness' condition. This, in turn, is shown to be useful in analyzing the tracking ability of general 'adaptive' algorithms. Additionally, we show that our results can be used to derive a sample complexity estimate of such recursions, which then can be used for step-size selection.

 Finally, we obtain the first informative error bounds on function approximation for the policy evaluation algorithm proposed by Basu et al. when the aim is to find the risk-sensitive cost represented using exponential utility. We also give examples where all our bounds achieve the “actual error” whereas the earlier bound given by Basu et al. is much weaker in comparison. We show that this happens due to the absence of difference term in the earlier bound which is always present in all our bounds when the state space is large. Additionally, we discuss how all our bounds compare with each other.
\prefacesection{Publications based on this Thesis}
\begin{enumerate}

\item P. Karmakar and S. Bhatnagar (2018), ``Two Timescale Stochastic Approximation with Controlled Markov noise 
and Off-policy temporal difference learning'',{\em Mathematics of Operations Research, Vol. 43, No. 1},
{\url{https://doi.org/10.1287/moor.2017.0855}}.

\item P. Karmakar, R. Maity and S. Bhatnagar (2016), ``On a convergent \textit{off}-policy temporal difference learning algorithm in \textit{on}-line learning environment'', 
{\url{https://arxiv.org/abs/1605.06076}}.

\item P. Karmakar and S. Bhatnagar (2016),
  ``Dynamics of stochastic approximation with
Markov iterate-dependent noise with the stability
of the iterates not ensured'', {\em  to be submiited in IEEE Transactions on Automatic Control},  
{\url{https://arxiv.org/abs/1601.02217}}.

\item P. Karmakar and S. Bhatnagar (2017), 
 `` On the function approximation error for risk-sensitive
reinforcement learning'', {\em  to be submiited in IEEE Transactions on Automatic Control}, {\url{https://arxiv.org/abs/1612.07562}}.

 \end{enumerate}

\tableofcontents
\listoffigures
\mainmatter 
\setcounter{page}{1}
\chapter{Introduction}
\label{chap:introduction}

\section{History of Stochastic Approximation algorithms}
Optimization is ubiquitous in various research and application fields.
It is quite often that an optimization problem can be reduced to finding
zeros (roots) of an unknown function $f(\cdot)=E[g(\cdot,\eta)]$,
which can be observed but
the observation may be corrupted by errors (such is the case where distribution of the noise $\eta$ is unknown, and say 
i.i.d samples of $\eta$, namely $\mathbb{R}^k$-valued $\{\eta_k\}$ are available). Here $g: \mathbb{R}^d \times \mathbb{R}^k \to \mathbb{R}^d$. This is the topic of stochastic 
approximation (SA). The error source may be observation noise, but
may also come from structural inaccuracy of the observed function. For
example, suppose one wants to find zeros of $f(\cdot)$
but actually observes functions $f_k(\cdot) = g(\cdot,\eta_{k+1})$
which are different from $f(\cdot)$.

The basic recursive algorithm for finding roots of an unknown function
on the basis of noisy observations is the Robbins-Monro (RM) algorithm \cite{robbins_monro} namely, 
\begin{align}
\label{rm} 
\theta_{k+1} = \theta_k + a(k) g(\theta_k, \eta_{k+1}),  
\end{align}
where $\theta_n \in \mathbb{R}^d, n \geq 0$.
Here $g(\theta_k, \eta_{k+1})$ is a noisy observation of the function $f(\cdot)$ when the parameter value is 
$\theta_k$, $\eta_{k+1}$ is the noise at instant $k+1$ and $a(k), k\geq 0$ is a step-size sequence. 
The algorithm thus incrementally updates the parameter $\theta$ at each instant by making use of the 
previous values of the updates.

\subsection{Convergence Analysis: Probabilistic Method vs. ODE method}
Robbins and Monro \cite{robbins_monro} have proved mean-square convergence to the point 
where a regression function assumes a given value. Wolfowitz \cite{wolf1} showed
that under weaker assumptions we may still obtain convergence in probability to the root; and
Blum \cite{blum}  demonstrated that, under even weaker assumptions, there is not only convergence 
in probability but in fact also convergence with probability 1. Kiefer and Wolfowitz \cite{wolf2}
have devised a method for approximating the point where the maximum of a regression
function occurs. They proved that under suitable conditions there is convergence in probability and 
Blum \cite{blum} subsequently weakened somewhat the conditions and strengthened the conclusion to 
convergence with probability 1. Dvoretzky \cite{dvor} deals with a vastly
more general situation. The underlying idea is to think of the random element as noise
superimposed on a convergent deterministic scheme. The Robbins-Monro and Kiefer-Wolfowitz 
procedures, under conditions weaker than any previously considered, are included 
as very special cases and, despite this generality, the conclusion is stronger since
his results assert that the convergence is both in the mean-square sense as well as with probability 1.

In another method \cite[Chapter 1.2]{fu_chen}, Robbins-Siegmund Theorem \cite[Theorem 1.3.12]{duflo} 
is used to prove the almost sure convergence of the stochastic 
approximation algorithm. However, the classical probabilistic approach to
analyzing stochastic approximation algorithms requires rather restrictive conditions on the
observation noise (for example see A 1.2.4 in \cite{fu_chen}). An alternative approach called 
the ordinary differential equation (o.d.e) method was proposed by Ljung \cite{ljung} which treats the 
stochastic approximation algorithm as a noisy discretization (or 
Euler Scheme in numerical analysis parlance) for the o.d.e 
\begin{align}
\label{o.d.e}
\dot{\theta}(t) = f(\theta(t)). 
\end{align} Then a result from \cite{benaim_original} (this also appeared in \cite{benaimode} later) says that almost surely the sequence $\{\theta_n\}$ generated by (\ref{rm})
converges to a (possibly sample path dependent) compact connected internally chain transitive invariant set of (\ref{o.d.e}). 
Additionally, suppose there exists a continuously differentiable $V: \mathbb{R}^d \to [0, \infty)$ such that $\lim_{\|\theta\| \to \infty} V(\theta) = \infty$, 
$H:= \{\theta \in \mathbb{R}^d : V(\theta) =0\} \neq \phi$, and $\langle f(\theta), \nabla V(\theta) \rangle \leq 0$ with 
equality if and only if $\theta \in H$ (thus $V$ is a `Lyapunov function'). Then a standard result (\cite[Corollary 3]{borkar1}, \cite{benaimode})
says that almost surely, $\{\theta_n\}$ converges to an internally chain transitive invariant set contained in $H$. 

\subsection{Two time-scale stochastic approximation}
We first provide some motivation here. Suppose that 
an iterative algorithm calls for a particular subroutine in each iteration. Suppose also that this subroutine itself is 
another iterative algorithm. The traditional method would be to use the output of the subroutine after running it 'long 
enough' (i.e. until near-convergence) during each iterate of the outer loop. But the question is that can we get 
the same effect by running both the inner and the outer loops (i.e. the corresponding iterations) concurrently, albeit 
on different timescales i.e. using different step-size schedules, one of which governing the `slower 
recursion' goes to zero at a rate faster than the other. Then the inner 'fast' loop sees the outer 'slow' loop as quasi-static while the latter sees the 
former as nearly equilibrated. Such iterations are described as:
\begin{align}
\theta_{n+1} = \theta_n + a(n)\left[h(\theta_n, w_n) + M^{(1)}_{n+1}\right], \label{int_slow} \\ 
w_{n+1} = w_n + b(n)\left[g(\theta_n, w_n) + M^{(2)}_{n+1}\right].\label{int_fast} 
\end{align}
Here $h:\mathbb{R}^{d+k} \to \mathbb{R}^d, g:\mathbb{R}^{d+k} \to \mathbb{R}^k$ are Lipschitz and $\{M^{(1)}_n\},\{M^{(2)}_n\}$ are 
martingale difference sequences w.r.t. the increasing $\sigma$-fields
\begin{align}
\mathcal{F}_n :=\sigma(\theta_m,w_m,M^{(1)}_m, M^{(2)}_m, m \leq n), n \geq 0. \nonumber     
\end{align}
Stepsizes $\{a(n)\}, \{b(n)\}$ are positive scalars satisfying
\begin{align}
\sum_{n}a(n) = \sum_{n}b(n)= \infty, \sum_n (a(n)^2 + b(n)^2) < \infty, \frac{a(n)}{b(n)} \to 0. \nonumber 
\end{align}
The last condition implies that $a(n) \to 0$ at a faster rate than $\{b(n)\}$, implying that $(\ref{int_slow})$ moves on a slower timescale than $(\ref{int_fast})$. Examples of such
stepsizes are $b(n)= \frac{1}{n}, a(n) = \frac{1}{1+n\log n}$, or
$b(n) = \frac{1}{n^{2/3}}, a(n) = \frac{1}{n}$.

The intuition of such a  framework comes from comparing the above iterations to the singularly perturbed o.d.e. \cite{sptb}
\begin{align}
\dot{w}(t) = \frac{1}{\epsilon}g(\theta(t),w(t)), \label{e1} \\
\dot{\theta}(t) = h(\theta(t), w(t)) \label{e2},
\end{align}
in the limit $\epsilon \Downarrow 0$. Thus $w(\cdot)$ is the fast transient and $\theta(\cdot)$ the slow component. It then makes 
sense to think of $\theta(\cdot)$ as quasi-static (i.e. `almost a constant') while analyzing the behaviour of $\theta(\cdot)$. 
This suggests looking at the o.d.e.
\begin{align}
\dot{w}(t) = g(\theta,w(t)), \nonumber 
\end{align}
where $\theta$ is held fixed as a constant parameter. Suppose that the above o.d.e. has a 
globally asymptotically stable equilibrium $\lambda(\theta)$ where $\lambda: \mathbb{R}^k \to \mathbb{R}^d$. Then clearly   
for sufficiently small values of $\epsilon$, we expect $w(t)$ to closely track $\lambda(\theta(t))$ for $t >0$. 
In turn this suggests looking at the o.d.e.
\begin{align}
\dot{\theta}(t) = h(\theta(t), \lambda(\theta(t))), \nonumber 
\end{align}
which should capture the behaviour of $\theta(\cdot)$ in (\ref{e2}) to a good approximation. This technique of 
replacing fast variable $w$ with $\lambda(\theta)$ is common in the literature of 
singularly perturbed o.d.e. \cite[Section 1.2]{sptb}. However, 
here in order to relate the discrete time system to its corresponding o.d.e, Borkar assumed 
$\lambda$ to be a Lipschitz continuous 
map.

Then under \textbf{(A1)}, \textbf{(A2)} and \textbf{(A3)} from \cite[Chapter 6]{borkar1}, Borkar proved almost 
sure convergence of such iterates. Another analysis of similar two 
timescale systems can be found in \cite[Chapter 8.6]{kushner}.

\subsection{Stochastic Approximation with state-dependent Markov Noise}
\label{markov}
Stochastic Approximation algorithms with Markov Noise are algorithms where 
$\eta_k$'s are trajectories of a parameterized (by the algorithm's iterates) Markov chain. Such noise arises in many 
important applications where evolution of the noise process depends more intimately on the 
iterate (also called the \textit{state})
and there is a reasonably long-term ``memory'' in this dependence.
In the following we investigate cases where such iterate-dependent Markov noise arises:
\begin{enumerate}
    \item The following adaptive ``routing'' example 
taken from \cite{kushner2,kushner} illustrates this point in a simple way. Suppose that calls arrive at a switch randomly, but at 
discrete instants $n = 1,2, \dots$. No more than a single call can arrive at a time and 
\begin{align}
P\{\mbox{arrival at time}~~n | \mbox{data upto but not including time~~} n \} = \mu > 0 \nonumber
\end{align}

The assumptions concerning single calls and discrete time make the formulation simpler, but the analogous models 
in continuous time are treated in essentially the same manner. To have a clear sequencing of the events, 
suppose that each call in progress gets completed ``just before'' the next discrete instant, so if 
a call is completed at time 
$n^-$, then the circuit is available for use by a new call that arrives at time $n$. There are two possible 
routes for each call. The $i$-th route has $N_i$ lines and can handle $N_i$ calls simultaneously. The sets of call 
lengths and inter-arrival times are mutually independent, and $\lambda_i > 0$ is the probability 
that a call is completed at the $(n+1)$st instant, given that it is in the system at time $n$ and handled by route $i$, and 
the rest of the past data. The system is illustrated in Fig. \ref{routing}. 
\begin{figure}
 \centering
\includegraphics[width=4 in, height= 3 in]{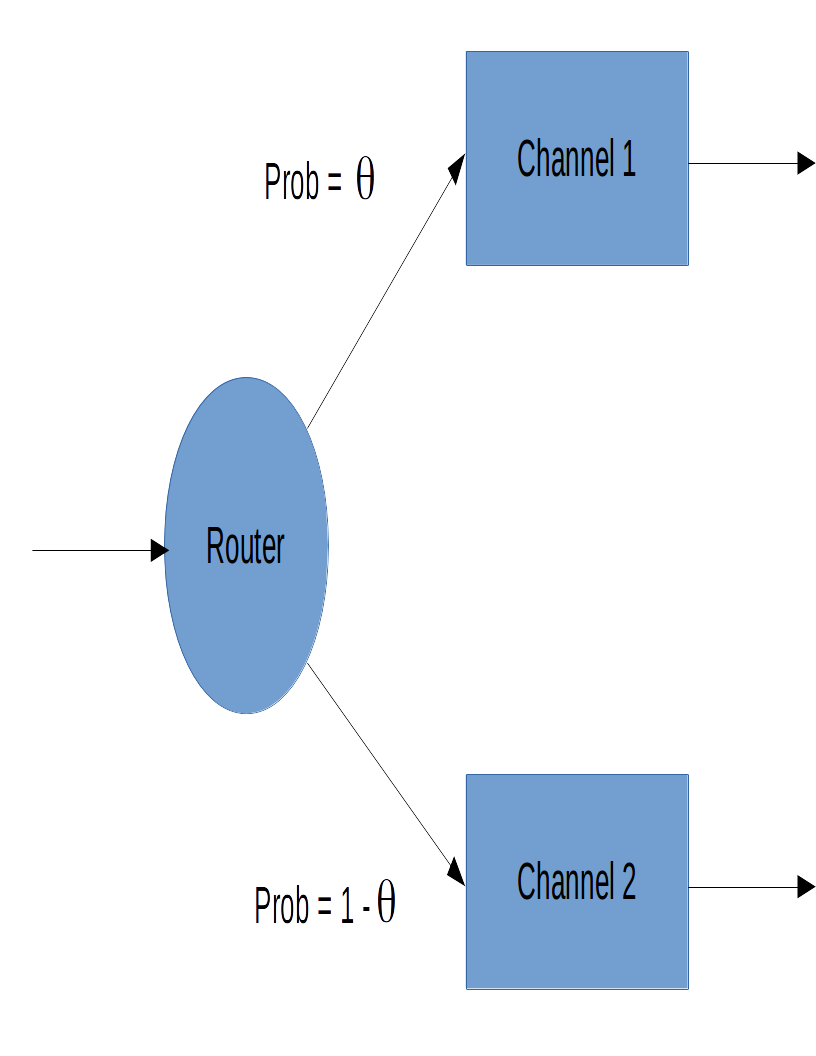}
\caption{The routing system, this figure is taken from \cite[p~38, Fig. 3.1]{kushner}}
\label{routing}   
\end{figure}
The routing law is ``random'', and is updated by a stochastic approximation procedure with 
constant step size $\epsilon$. Let $\eta^{\epsilon}_{n}=(\eta^{\epsilon}_{n,1}, \eta^{\epsilon}_{n,2})$ 
denote the occupancies of the two routes at time $n$. If a call arrives at time $n+1$ then it is sent 
to route $1$ with probability $\theta^{\epsilon}_{n}$ and to 
route 2 with probability $1- \theta^{\epsilon}_{n}$. If all 
lines of the selected route are occupied at that time, the call is redirected to the other route. If the alternative 
route is also full, the call is lost from the system. Let $J^{\epsilon}_{n,i}$ be the indicator function of the
 event that a call arrives at time $n+1$, is sent to route $i$, and is accepted there. The updating rule for 
 $\theta^{\epsilon}_{n}$ is 
\begin{align}
\theta^{\epsilon}_{n+1} &= \Pi_{[a,b]}[\theta^{\epsilon}_{n} + \epsilon Y^{\epsilon}_{n}] \nonumber \\ 
                       &= \Pi_{[a,b]}[\theta^{\epsilon}_{n} + \epsilon \left(\left(1-\theta^{\epsilon}_{n}\right)J^{\epsilon}_{n.1} - \theta^{\epsilon}_{n} J^{\epsilon}_{n.2}\right)]  \nonumber
\end{align}
where $0 < a <b <1$ are truncation levels and $\Pi_{[a,b]}$ denotes the truncation operator. 

The occupancies $\eta^{\epsilon}_{n}$ (and the random acceptances and routing choices) determine the effective noise in the system, and 
the evolution of $\eta^{\epsilon}_{n}$ depends on $\theta^{\epsilon}_{n}$ in a complicated way and with significant memory. 
The dependence is of the Markovian type in that 
\begin{align}
P\{\eta^{\epsilon}_{n+1} = \tilde{\eta} | \eta^{\epsilon}_{i}, \theta^{\epsilon}_{i}, i \leq n\}  = P\{\eta^{\epsilon}_{n+1}=\tilde{\eta} |\theta^{\epsilon}_{n},\eta^{\epsilon}_{n}\}, \forall n. \nonumber  
\end{align}

\item Here we are concerned with a controlled Markov chain and average cost per unit time problem, where the
control is parameterized. We seek the optimal parameter value. The approach is based on an estimate of the derivative of the invariant measure
with respect to the parameter. This is an example of an optimization problem over an infinite time interval. The procedure attempts to approximate
the gradient of the stationary cost with respect to the parameter.

The process is a finite-state Markov chain $\{X_n\}$ with time-invariant and
known transition probabilities $p(x, y|\theta)$ that depend continuously and differentiably on a parameter $\theta$ that takes values in some compact set. We
will suppose that $\theta$ is real-valued and confined to some interval $[a, b]$. Write $p_{\theta}(x, y|\theta)$ for the $\theta$-derivative. Let
the chain be ergodic for each $\theta$, and denote the unique invariant measure
by $\mu(\theta)$. Let $E_{\mu(\theta)}$ denote expectation under the stationary probability and
$E_x^{\theta}$ the expectation, given parameter value $\theta$ and initial condition $x$. The objective is to obtain
\begin{align}
\min_{\theta} e(\theta) = E_{\mu(\theta)} k(X_0, X_1, \theta) = \sum_{x,y} \mu(x,\theta) p(x,y|\theta) k(x,y, \theta)    
\end{align}
\end{enumerate}
where $k(x, y, \cdot)$ is continuously differentiable in $\theta$ for each value of $x$ and
$y$, and $\mu(x, \theta)$ is the stationary probability of the point $x$.

Let $L(x,y,\theta) = \frac{p_{\theta}(x,y|\theta)}{p(x,y|\theta)}$.
Then the following algorithm can be given for approximating the optimal 
value of $\theta$.

\begin{align}
\theta_{n+1}&=\theta_n  - \epsilon_n k_{\theta}(X_n, X_{n+1}, \theta_n) - \epsilon_n L(X_n, X_{n+1}, \theta_n) k(X_n, X_{n+1}, \theta_n) - \epsilon_n(k(X_n, X_{n+1}, \theta_n) - \lambda_n) C_n  \nonumber \\
\lambda_{n+1} &= \lambda_n + \epsilon_n'\left[k(X_n, X_{n+1}, \theta_n) - \lambda_n \right] \nonumber \\
C_{n+1} &= \beta C_n + L(X_n, X_{n+1}, \theta_n) \nonumber
\end{align}
where $\epsilon_n'=q \epsilon_n$ for some $q>0$ and $k_{\theta}(.,.,.)$ denotes derivative with respect to $\theta$. 

Clearly, the noise $\{X_n\}$ in the algorithm is iterate-dependent.

This scenario is sometimes referred as policy gradient algorithm in reinforcement learning.

Now, for each $\theta \in \mathbb{R}^d$ we consider a transition probability
$\Pi_{\theta}(y;dx)$ on $\mathbb{R}^k$. This transition probability defines a
controlled Markov chain on $\mathbb{R}^d$. 

Stochastic approximation
iterates in $\mathbb{R}^d$ driven by Markovian noise are given by
\begin{align}
\label{q1}
\theta_{n+1} = \theta_n + a(n)f(\theta_n,Y_{n+1}), n\geq 0,
\end{align}
where $\theta_0$ is the initial point, $\{\theta_n\}$ are the iterates,
$\{Y_n\}$ is an $\mathbb{R}^k$-
valued `Markov iterate-dependent' noise, i.e., satisfies
\begin{align}
P[Y_{n+1} \in A | \mathcal{F}_n] = \int_A\Pi_{\theta_n}(Y_n; dx) \ \mbox{a.s.,} \label{eqn4}
\end{align}
where $\mathcal{F}_n :=$ the $\sigma$-field generated by all random variables realized till time $n$, $a(n)$ is the $n$-th step-size and $f: \mathbb{R}^d \times \mathbb{R}^k \to \mathbb{R}^d$.

It is well known that under reasonable assumptions \cite{metivier,benveniste,adam,met_prior}, (\ref{q1}) is an
asymptotic pseudotrajectory (in the sense of \cite{ben_hirsch}) and that
by the results in \cite{ben_hirsch, benaim_original} its limit set can be precisely
described as an internally chain transitive set of the o.d.e.
\begin{align}
\label{ode_m}
\dot{\theta}(t)=h(\theta(t)),
\end{align}
where $h(\theta)= \int f(\theta, y)\Gamma_{\theta}(dy)$, with $\Gamma_{\theta}$ being the unique stationary distribution
of the Markov iterate-dependent process $\{Y_n\}$ for a fixed $\theta$.

In another work, \cite{borkar}, Borkar studied the stochastic approximation algorithm where the noise 
is a controlled Markov process.

Note that Markov noise arises naturally in reinforcement learning situations because of the Markov decision process in the
background. In the next subsection we briefly describe about reinforcement learning.

\subsection{Markov Decision Process, Dynamic Programming and Reinforcement learning}
\label{back_m}
This section is a summarized version of the materials present in the popular book \cite{suttonb} and Chapter 2 and 3 of 
\cite{maeith}. Some nice books on these materials are \cite{bert,bert2, bert3, csaba_b} 
\subsubsection{The Agent-Environment Interface}
Reinforcement learning (RL) problem deals with the
problem of learning from interaction to achieve a goal. The learner/decision-
maker is called the \textit{agent}. The agent  interacts with the \textit{environment}. 
These entities interact continually, the agent selecting
actions and the environment responding to those actions and presenting new situations to 
the agent. The environment also gives the agent a \textit{reward} each time the 
agent picks an action or control. The goal of the agent is to maximize a certain 
long-term reward objective.  A complete specification of an
environment defines a \textit{task}, one instance of the reinforcement learning problem.

More specifically, the agent and environment interact at each of a sequence of 
discrete time steps, $t = 0, 1, 2, 3, . . .. $. At each time step $t$, the agent receives some
representation of the environment’s state, $S_t \in S$, where $S$ is the set of possible states,
and on that basis selects an action, $A_t \in \mathcal{A}(S_t)$, where $\mathcal{A}(S_t)$ 
is the set of actions
available in state $S_t$ . One time step later, in part as a consequence of its action, the
agent receives a numerical reward , $R_{t+1} \in \mathcal{R} \subset \mathbb{R}$, and 
moves to a new state,
$S_{t+1}$ . Figure \ref{agent} shows a diagram of  the agent-environment interaction.

\begin{figure}
 \centering
\includegraphics[width=3 in]{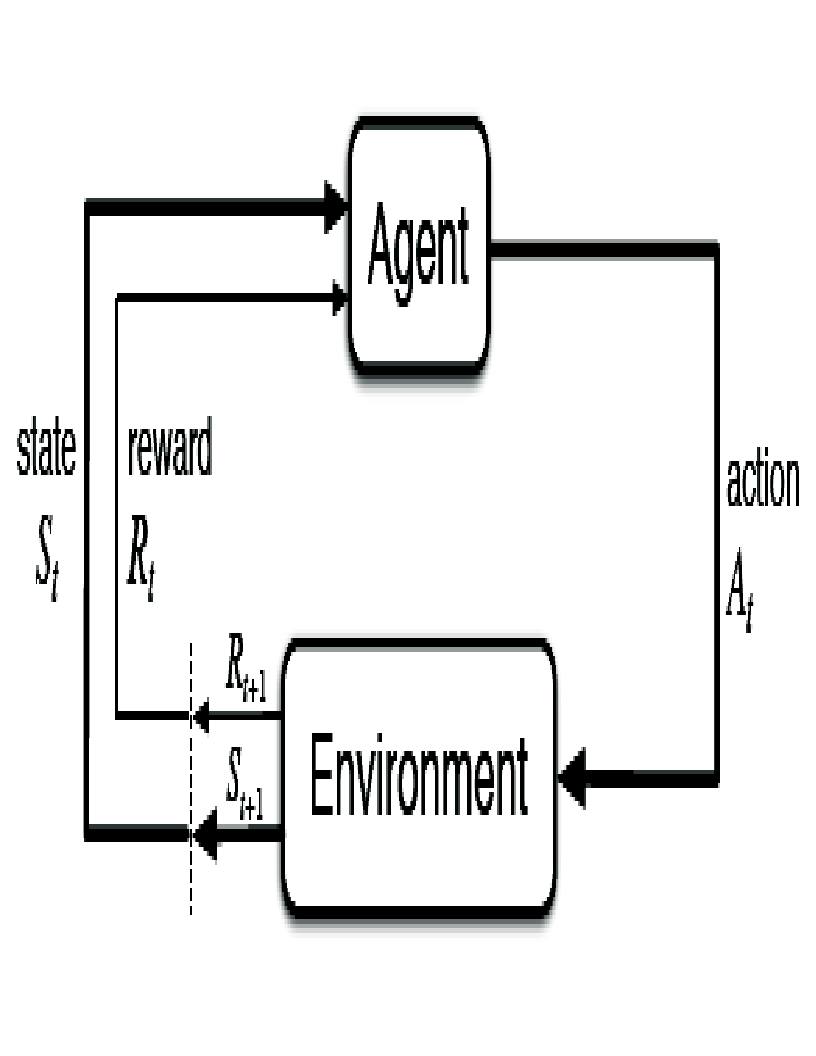}
\caption{The agent-environment interaction in reinforcement learning, this figure is taken from \cite[p~48]{suttonb}}
\label{agent}   
\end{figure}

At each time step, the agent implements a mapping from states to probabilities
of selecting each possible action. This mapping is called the agent’s \textit{policy} and is
denoted $\pi_t$ , where $\pi_t(a|s)$ is the probability that $A_t = a$ if $S_t = s$. This is 
called \textit{stochastic policy}. Policy can be deterministic also where $\pi_t: S \to A$. 
Here we assume \textit{stationary policy} where $\pi_t$ is independent of $t$.

Reinforcement
learning methods specify how the agent changes its policy as a result of its experience.
The agent’s goal, roughly speaking, is to maximize the total amount of reward it
receives over the long run.

\subsubsection{Goals and Rewards}
In reinforcement learning, the purpose or goal of the agent is formalized in terms of a
special reward signal that the environment gives to the agent. Informally, the agent's goal is to maximize the
total amount of reward it receives. This means maximizing not immediate reward,
but cumulative reward in the long run.
The use of a reward signal to formalize the idea of a goal is one of the most distinctive
features of reinforcement learning.

\subsubsection{Returns}
So far we have discussed the objective of learning informally. We have said that the
agent's goal is to maximize the cumulative reward it receives in the long run. How
might this be defined formally? If the sequence of rewards received after time step
$t$ is denoted $R_{t+1}, R_{t+2}, R_{t+3}, \dots$, then what precise aspect of this sequence do we
wish to maximize? In general, we seek to maximize the expected return, where the
return $G_t$ is defined as some specific function of the reward sequence. In the simplest 
case the return is simply the sum of the rewards:
\begin{align}
\label{fstate}
G_t:= R_{t+1} + R_{t+2} + R_{t+3} + \dots + R_T,
\end{align}
where $T$ is a final time step. This approach makes sense in applications in which there
is a natural notion of a final time step, that is, when the agent-environment interaction
breaks naturally into subsequences, which we call episodes. Each episode ends in
a special state called the \textit{terminal state}, followed by a reset to a standard starting
state or to a sample from a standard distribution of starting states. Tasks with
episodes of this kind are called \textit{episodic tasks}. In episodic tasks one sometimes need
to distinguish the set of all non-terminal states, denoted S, from the set of all states
plus the terminal state, denoted $\mathcal{S}+$.

On the other hand, in many cases the agent-environment interaction does not
break naturally into identifiable episodes, but goes on continually without limit. For
example, this would be the natural way to formulate a continual process-control task,
or an application to a robot with a long life span. The
return formulation (\ref{fstate}) is problematic for tasks that do not terminate  because the final time
step would be $T = \infty$, and the return, which is what we are trying to maximize,
could itself easily be infinite (for example, suppose the agent receives a reward of
+1 at each time step). 
The additional concept that we need is that of discounting. According to this
approach, the agent tries to select actions so that the sum of the discounted rewards
it receives over the future is maximized. In particular, it chooses $A_t$ to maximize the
expected discounted return:
\begin{align}
G_t := R_{t+1} + \gamma R_{t+2} + \gamma^2 R_{t+3} + \dots = \sum_{k=0}^{\infty} \gamma^k R_{t+k+1} \nonumber 
\end{align}
where $\gamma$ is a parameter, $0\leq \gamma < 1$, called the discount rate.

In the cases considered thus far the total expected cost is finite either because of discounting or because 
of a cost-free termination state that the system eventually enters. In many situations, however, discounting 
is inappropriate and there is no natural cost-free termination state. In such situations it is often meaningful to optimize 
the average cost per stage starting from a state $i$, which is defined by 
\begin{align}
J_{\pi}(i) = \lim_{N \to \infty} \frac{1}{N} E\{\sum_{k=0}^{N-1}g(x_k, \mu_k(x_k))|x_0 =i\}
\end{align}
where $g(i,a)$ is the cost of taking action $a$ from state $i$. $\mu_k$ maps states $x_k$ into controls $u_k = \mu_k(x_k)$.
For details , see \cite[Chapter 7.4]{bert}.

\subsubsection{The Markov Property} 
In the reinforcement learning framework, the agent makes its decisions as a function
of a signal from the environment called the environment's state. In this section we
discuss what is required of the state signal, and what kind of information we should
and should not expect it to provide. In particular, we formally define a property of
environments and their state signals that is of particular interest, called the Markov
property.

In this section, by ``the state'' we mean whatever information is available to the agent.
We assume that the state is given by some preprocessing system that is nominally
part of the environment. In other words, our main concern is not with designing the
state signal, but with deciding what action to take as a function of whatever state
signal is available.

We now formally define the Markov property for the reinforcement learning problem.
To keep the mathematics simple, we assume here that there are a finite number
of states and reward values. Consider how a general environment
might respond at time $t+ 1$ to the action taken at time $t$. In the most
general, causal case this response may depend on everything that has happened earlier.
In this case the dynamics can be defined only by specifying the complete joint
probability distribution:

\begin{align}
P\{S_{t+1} = s', R_{t+1} =r |S_0, A_0, R_1,\dots, S_{t-1}, A_{t-1}, R_t, S_t, A_t\} \nonumber
\end{align}

for all $r, s'$ , and all possible values of the past events: $S_0 , A_0 , R_1 , ..., S_{t-1} , A_{t-1} ,
R_t , S_t , A_t$. If the state signal has the Markov property, on the other hand, then the
environment’s response at $t + 1$ depends only on the state and action representations
at $t$, in which case the environment’s dynamics can be defined by specifying only
\begin{align}
\label{kern}
p(s',r|s,a):=Pr\{S_{t+1} =s', R_{t+1}=r| S_t =s, A_t=a\} 
\end{align}
for all $r,s',s$, and $a$. In other words, a state signal has the Markov property, and is a
Markov state, if and only if (\ref{kern}) holds for all $s',r$, and histories,
$S_0, A_0, R_1, \dots, S_{t-1},\linebreak A_{t-1}, R_t, S_t, A_t$. In this case, the environment and task as a
whole are also said to have the Markov property.

If an environment has the Markov property, then its one-step dynamics (\ref{kern}) enables
us to predict the next state and expected next reward given the current state and
action. One can show that, by iterating this equation, one can predict all future
states and expected rewards from knowledge only of the current state as
would be possible given the complete history up to the current time. It also follows
that Markov states provide the best possible basis for choosing actions. That is, the
best policy for choosing actions as a function of a Markov state is just as good as
the best policy for choosing actions as a function of complete histories.

\subsubsection{Markov decision process}
A reinforcement learning task that satisfies the Markov property is called a Markov
decision process, or MDP. If the state and action spaces are finite, then it is called a
finite Markov decision process (finite MDP).
A finite MDP is defined by its state and action sets and by the one-step
dynamics of the environment. 
These quantities completely specify the dynamics of a finite MDP. Most of the theory
we present in the rest of this section implicitly assumes the environment is a finite MDP.
Given the dynamics as specified by (\ref{kern}), one can compute anything else one might
want to know about the environment, such as the expected rewards for state-action
pairs,
\begin{align}
r(s,a):=E[R_{t+1}|S_t=s, A_t=a] = \sum_{r\in \mathcal{R}}r\sum_{s' \in \mathcal{S}}p(s',r|s,a), \nonumber
\end{align}
the state-transition probabilities, 
\begin{align}
p(s'|s,a):= Pr\{S_{t+1}=s'|S_t=s,A_t=a\}=\sum_{r \in \mathcal{R}}r p(s',r|s,a)  \nonumber
\end{align}
and the expected rewards for state-action-next-state triples, 
\begin{align}
r(s,a,s') := E[R_{t+1}|S_t=s,A_t=a, S_{t+1}=s']=\frac{\sum_{r \in \mathcal{R}}r p(s',r|s,a)}{p(s'|s,a)}. \nonumber
\end{align}

\subsubsection{Bellman Equation} We mentioned that the goal of RL agent is to maximize 
a certain long-run reward criterion. 
received rewards in the long-run criterion. The infinite horizon expected discounted reward is a standard reward criteria. 
The state-value function of a policy, $\pi: \mathcal{S} \to \mathcal{A}$, is defined as:
\begin{align}
V^{\pi}(s)=E\left[\sum_{t=0}^{\infty}\gamma^tR_{t+1} | S_0=s, \pi \right], \nonumber 
\end{align}
where $\gamma \in [0,1)$ is the discount rate and $E[.]$ denotes expectation over random samples,
which are generated by following policy $\pi$.

Let $P^{\pi}$ denote the state-state transition probability matrix and $V^{\pi} \in \mathbb{R}^d$ be the value 
function vector, whose $s$-th element is $V^{\pi}(s)$. $V^{\pi} \in \mathbb{R}^{|S|}$
satisfies the following Bellman equation \cite{bert3,suttonb}:
\begin{align}
V^{\pi} = R^{\pi} + \gamma P^{\pi}V^{\pi}:= T^{\pi} V^{\pi} \nonumber 
\end{align}
where $R^{\pi}$ is the vector with components $E[R_{t+1} | S_t =s]$, and $T^{\pi}$ is known as the Bellman
operator for the policy $\pi$.

Analogously, we can also define the action-values, $Q^{\pi}(s,a)$ which evaluate the value of taking
action $a$ where the initial state is  $s$ while for the other subsequent states, actions are chosen according to 
the policy $\pi$: 
\begin{align}
Q^{\pi}(s,a) = E\left[\sum_{t=0}^{\infty} \gamma^t R_{t+1} | S_0 =s, A_0=a, \pi \right]. \nonumber
\end{align}

So far we have formulated the problem of policy evaluation or prediction. However, solving
a reinforcement learning problem roughly means finding an optimal policy that achieves a large long-term
 reward or value. An optimal policy is a policy whose value function is 
better than (not necessarily strictly) the value function of other policies. The value function of an optimal
policy is called the optimal value function. The optimal state-value function, denoted $V^*$, 
is obtained as:
\begin{align}
V^*(s):= \max_{\pi} V^{\pi}(s), \forall s \in \mathcal{S}, \nonumber
\end{align}
and the optimal action-value function, denoted by $Q^*$ is obtained as
\begin{align}
Q^*(s,a):= \max_{\pi} Q^{\pi}(s,a), \forall s \in \mathcal{S}, \forall a \in \mathcal{A}. \nonumber
\end{align}

In the next section, we review some of the most popular temporal-difference (TD) learning
algorithms, also known as classical TD methods. One of the key properties of classical TD
methods is their ability to learn from every single fragment of experience without waiting
for the final outcome. Once the policy evaluation is done, the full control problem can be solved by
on-policy TD control method such as Sarsa, see \cite[Chapter 6.4]{suttonb} for details.

\subsubsection{Temporal Difference Learning}

TD learning is a key algorithm for prediction (i.e. the problem of estimating the value function corresponding to a 
given stationary policy $\pi$) and plays central role in reinforcement learning
\cite{sutton_td, suttonb}. It uses bootstrapping ideas developed in dynamic programming 
as well as Monte Carlo simulation. Classical TD methods such as TD($\lambda$),
Sarsa, and Q-learning are simple, sample-based, online, and incremental algorithms and as
such are popular in the RL community. 

TD methods use each fragment of experience to update the
value of state $S_t$, $V_t(S_t)$, at time $t$. This would allow moment-to-moment prediction. This
is different from the dynamic programming (DP) approach in the sense that the value of each
state, in DP approach, is updated by sweeping over next states. In the next paragraph, we
briefly describe  the simplest TD method.

\paragraph{Tabular TD(0) algorithm for estimating $V^{\pi}$:} The simplest TD method, known as
tabular TD(0), estimates the value of each individual state; e.g. $S_t$, according to the following update:
\begin{align}
\label{td(0)}
V_{t+1}(S_t) &= V_t(S_t) + \alpha_t \left[R_{t+1} + \gamma V_t(S_{t+1}) - V_t(S_t)\right], \\
             &= V_t(S_t) + \alpha_t S_t,
\end{align}
where $S_t = R_{t+1} + \gamma V_t(S_{t+1}) - V_t(S_t),$
is the one-step TD error, or in short TD error, and $\alpha_t$ is a deterministic positive step-size parameter, 
which is typically small, and for the purpose of convergence analysis is assumed to 
satisfy the Robbins-Monro conditions: $\sum_{t=0}^{\infty} \alpha_t = \infty, \sum_{t=0}^{\infty} \alpha^2_t < \infty$. 
Tabular TD(0) is guaranteed to converge to $V^{\pi}$ under standard conditions. 

\paragraph{Temporal-Difference learning with function approximation}
When the number of states and actions is excessive, estimating the true value function for any policy $\pi$ can be 
computationally a nightmare. In such a case, it makes sense to consider a parametrization of the value function using 
low-dimension parameters. There is a version of TD with function approximation that we describe below.

The TD(0) algorithm (\ref{td(0)}), can be combined with parametrized value function $V_{\theta}, \theta \in \mathbb{R}^d$.
The value function can be either linear or nonlinear and differentiable function (such as a 
neural network) with respect to the parameter vector, $\theta$. The resulting algorithm has the
following update rule:
\begin{align}
\theta_{t+1} =  \theta_{t} + \alpha_t \delta_t(\theta_t)\nabla V_{\theta_t}(S_t) \nonumber
\end{align}
where 
\begin{align}
\delta_t(\theta_t) = R_{t+1} + \gamma V_{\theta_t}(S_{t+1}) - V_{\theta_t}(S_{t}) \nonumber
\end{align}
and $\nabla V_{\theta}(s) \in \mathbb{R}^d$ denotes the gradient of $V_{\theta}$ with respect to $\theta$ at $s$. 
We call this algorithm linear/nonlinear TD(0). Also, the TD(0)-solution (or TD(0)-fixpoint), $\theta$, (if it exists), satisfies:
\begin{align}
E\left[\delta_t(\theta)\nabla V_{\theta}(S_t)\right]=0. \nonumber
\end{align}

For simplicity, we adopt the following notation for TD error:
\begin{align}
\delta_t \equiv \delta_t(\theta_t). \nonumber
\end{align}
Sutton’s TD($\lambda$) with linear function approximation \cite{sutton_td} is one of the simplest
forms of TD learning with function approximation and, since its development, has played a 
central role in modern reinforcement learning. To begin first, we review the simplest form
of TD($\lambda$) with linear function approximation, that is referred to as linear TD(0). In the beginning,
let’s limit ourselves to on-policy training data. This refers to the case 
where the value function to be learned is for the same policy that is used for picking actions 
in any state that is visited by the Markov chain. The case when these policies are different is more 
interesting and is referred to as the off-policy case.

The linear TD(0) algorithm, starts with an arbitrary parameter vector, $\theta_0$. Upon observing
the $t$-th transition from state $S_t$ to $S_{t+1}$ (on-policy transitions), which follows with feature-
vector observation $(\phi_t , R_{t+1} , \phi_{t+1})$, where $\phi_t := \phi(S_t)$, the learning parameter vector is
updated according to
\begin{align}
\theta_{t+1} = \theta_t + \alpha_t \delta_t \phi_t, \nonumber
\end{align}
where $\delta_t = R_{t+1} + \gamma \theta_t^T\phi_{t+1} - \theta_t^T\phi_{t}$
is the TD-error with linear function approximation. If the
discount factor $\gamma$ is zero, the problem becomes one of supervised learning and the linear TD(0) update
rule becomes the conventional least-mean-square (LMS) algorithm in supervised learning. Thus,
the key feature that distinguishes reinforcement learning from supervised learning is the 
existence of the bootstrapping term, $\theta_t^T \phi_{t+1}$, in the above update rule. This will allow the
algorithm to guess about the future outcome without waiting for it - i.e.  allows it to learn from 
single fragment of experience without waiting for the final outcome. Also, this is the key
difference between TD learning and conventional Monte-Carlo methods (which are based
on supervised learning ideas). Thus, TD methods have the ability to learn from single
transitions without waiting for the final outcome; they do this by guessing from a guess!
We will see how this key idea allows TD to learn from off-policy data.
It is well known that the linear TD(0) algorithm is convergent under on-policy training
(\cite{tsit, tadic_t}). From the theory of stochastic methods, the
convergence point of linear TD(0), is a parameter vector, say $\theta$, that satisfies
\begin{align}
\label{td_0}
0 = E\left[\delta_t(\theta) \phi_t\right] = b- A\theta
\end{align}
where 
\begin{align}
\delta_t(\theta) = R_{t+1} + \gamma \theta^T\phi_{t+1} - \theta^T\phi_{t}, \nonumber \\
A= E\left[\phi_t(\phi_t-\gamma \phi_{t+1})^T\right], b=E\left[R_{t+1}\phi_t\right] \nonumber
\end{align}
and the expectation is over all random samples. In this thesis, the parameter vector $\theta$, which satisfies 
the above equation, is called the TD(0) solution.

In general, the TD-solution refers to the fixed-point of the expected TD update.
But under off-policy training, if this fixed-point exists, it may not be stable. 
Off-policy learning refers to learning about one way of behaving, called the target policy,
from data generated by another way of selecting actions, called the behavior policy. The
target policy is often a deterministic policy that approximates the optimal policy. Conversely, 
the behavior policy is often stochastic, exploring all possible actions in each state
as part of finding the optimal policy.
In other words,
if $\theta$ does satisfy (\ref{td_0}), then the TD(0) algorithm in expectation may cause it to
move away and eventually diverge to infinity.

(\ref{td_0}) gives us the TD-solution in the parameter space. In the value function
space, the TD-solution, $\theta$, satisfies
\begin{align}
V_{\theta} = \Pi TV_{\theta}, \nonumber
\end{align}
where $V_{\theta}=\Phi \theta \in \mathbb{R}^{|\mathcal{S}|}$, $\Phi$ 
is the matrix whose rows are the $\phi(s)^T$ for any given row entry $s$, and $\Pi$ 
is the projection operator to the linear space. The projection operator, $\Pi$, 
takes any value function $v$ and projects it to the nearest value function representable by the function
approximator:
\begin{align}
\Pi v = V_{\theta^*}, \nonumber
\end{align}
with $\theta^* = \arg\min_{\theta}{\|V_{\theta}-v\|}^2_{\mu}$, where $\mu$ 
is state-visitation probability distribution vector whose $s$-th component, 
$\mu(s)$, represents the probability of visiting state $s$, and 
\begin{align}
\|v\|^2_{\mu} = \sum_s \mu(s) v^2(s). \nonumber
\end{align}

In a linear architecture, in which $V_{\theta} = \Phi \theta$, the projection operator is linear and independent
of $\theta$: 
\begin{align}
\Pi = \phi(\phi^TD\phi)^{-1}\phi^TD \nonumber
\end{align}
where $D$ is a diagonal matrix whose diagonal elements are $\mu(s)$.

\paragraph{Derivation of TD(0) with function approximation} 
In this section, we  briefly overview the derivation of TD(0) with function approximation. The
purpose of this section it to show how linear/nonlinear TD(0) has been derived to get a better
understanding of why it may diverge. Particularly, we  will show that linear/nonlinear TD(0) is
not a true gradient-descent method, which (partially) might suggest why TD methods with
function approximation are not robust for general settings.

To start, let’s consider the following mean-square-error (MSE) objective function:
\begin{align}
MSE(\theta) = E\left[(V^{\pi}(S_t) - V_{\theta}(S_t))^2\right]. \nonumber 
\end{align}
Following gradient-descent methods, the learning update can be obtained by adjusting the
modifiable parameter $\theta$ along the steepest descent direction of the MSE objective function;
that is, $\theta_{\mbox{new}} - \theta_{\mbox{old}} \propto -\frac{1}{2}\nabla \mbox{MSE}(\theta)|_{\theta = \theta_{\mbox{old}}}$,
where 
\begin{align}
-\frac{1}{2} \nabla \mbox{MSE}(\theta_{\mbox{old}}) \nonumber \\
= -\frac{1}{2} \nabla (E\left[(V^{\pi}(S_t) - V_{\theta}(S_t))^2\right])_{\theta = \theta_{\mbox{old}}} \nonumber \\
=E[(V^{\pi}(S_t) - V_{\theta_{\mbox{old}}}(S_t))\nabla V_{\theta}(S_t)|_{\theta = \theta_{\mbox{old}}} \label{samp}
\end{align}
assuming that the interchange between the gradient and the expectation above can be justified.
Nonetheless, in general,  it is not practical to compute the above update term because: 1) The target value,
$V_{\pi}(S_t)$, is not known; 2) We do not have access to model of the environment and therefore
cannot compute the expectation term.

We can get around the first problem by approximating the target value:
\begin{align}
V^{\pi}(s) \approx E\left[R_{t+1} + \gamma V_{\theta}(S_{t+1}) | S_t =s,\pi\right]. \nonumber
\end{align}
This is called the bootstrapping step. To get around the second problem we use the theory
of stochastic approximation; that is, at every time-step we conduct a direct sampling from
(\ref{samp}) and update the parameters along (stochastic) direction of
\begin{align}
(V^{\pi}(S_t) - V_{\theta}(S_t))\nabla V_{\theta}(S_t). \nonumber
\end{align}

Putting these all together, we get the linear/nonlinear TD(0) algorithm:
\begin{align}
\theta_{t+1} = \theta_t + \alpha_t \delta_t \nabla V_{\theta_t}(S_t), \nonumber 
\end{align}
where $\delta_t$ is the one-step TD error.

\paragraph{The stability problem of linear TD(0)}
In the previous section, we overviewed the derivation of one of the simplest and popular TD
methods; that is TD(0), in conjunction with function approximation. In this section, we raise
one of its most outstanding problems, that is, the stability problem.

In the next paragraph, we consider one of the well-known counterexamples, which shows
the divergence of linear TD(0) and its approximate dynamic programming counterpart (updating
by sweeping over the states instead of sampling).

\paragraph{Baird’s Off-policy Counterexample:} Consider the 7-star version of the ``star'' counterexample
\cite{baird}. The Markov decision process (MDP) is
depicted in Fig. \ref{baird_f}. The reward is zero in all transitions, thus the true value function for
any given policy is zero; for all states.
The behavior policy, in this example, chooses the solid line action with probability of 1/7
and the dotted line action with probability of 6/7. The goal is to learn the value of a target
policy that chooses the solid line more often than the probability of 1/7. In this example,
the target policy chooses the solid action with probability of 1.

The value functions are approximated linearly in the form of $V(i) = 2\theta(i) + \theta_0$, for 
$i \in \{1,2,\dots,6\}$, and $V(7) = \theta(7) + 2 \theta_0$. Here, the discount factor is $\gamma= 0.99$. 
The TD solution, in this example, is $\theta(i)=0, i \in \{1,2,\dots,7\}$, and $\theta_0 =0$. 
Both TD(0) and DP
(with incremental updates), however, will diverge on this example; that is, their learning
parameters will go to $\pm \infty$  as is illustrated in Fig. \ref{baird_g}.

\begin{figure}
 \centering
\includegraphics[width=3 in]{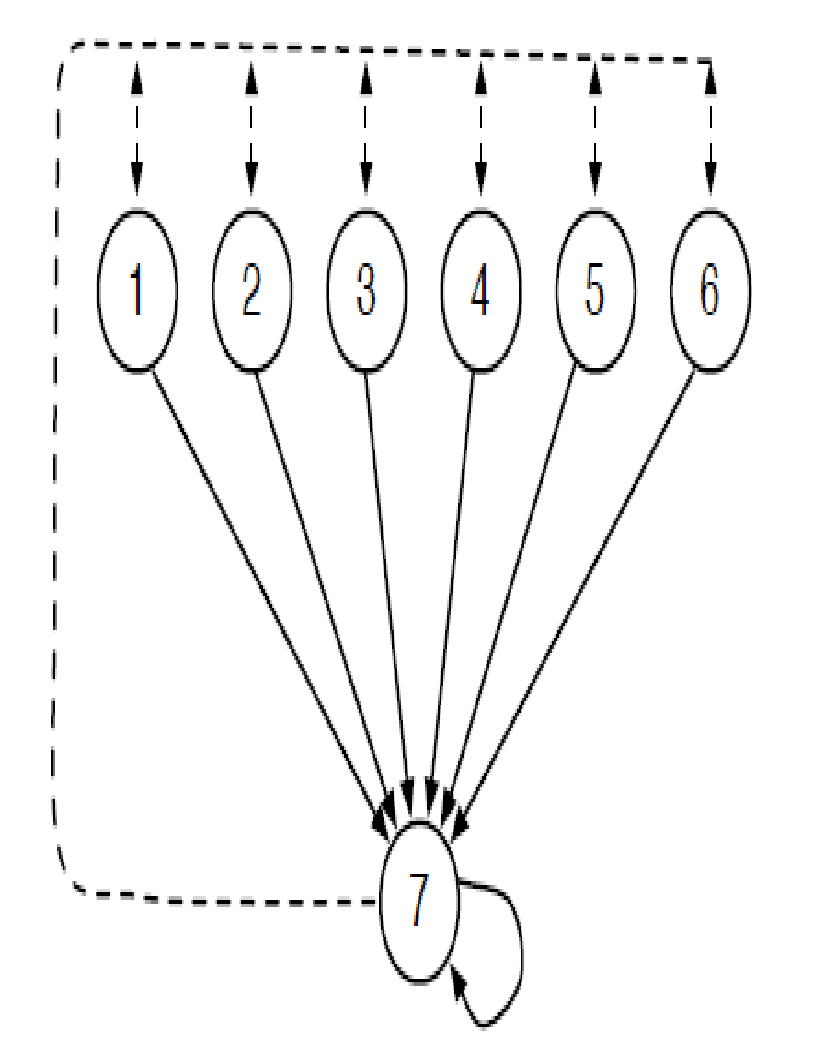}
\caption{The Baird’s 7-star MDP. Every transition in this MDP receives zero reward. Each state,
has two actions, represented by solid line and dotted line, respectively the solid line action only makes transition to
state 7, while the dotted line action uniformly makes a transition to one of the states 1-6 with probability
of 1/6. This figure is taken from \cite[p~17, Fig. 2.4]{maeith}.}
\label{baird_f}   
\end{figure}

\begin{figure}
 \centering
\includegraphics[width=3 in]{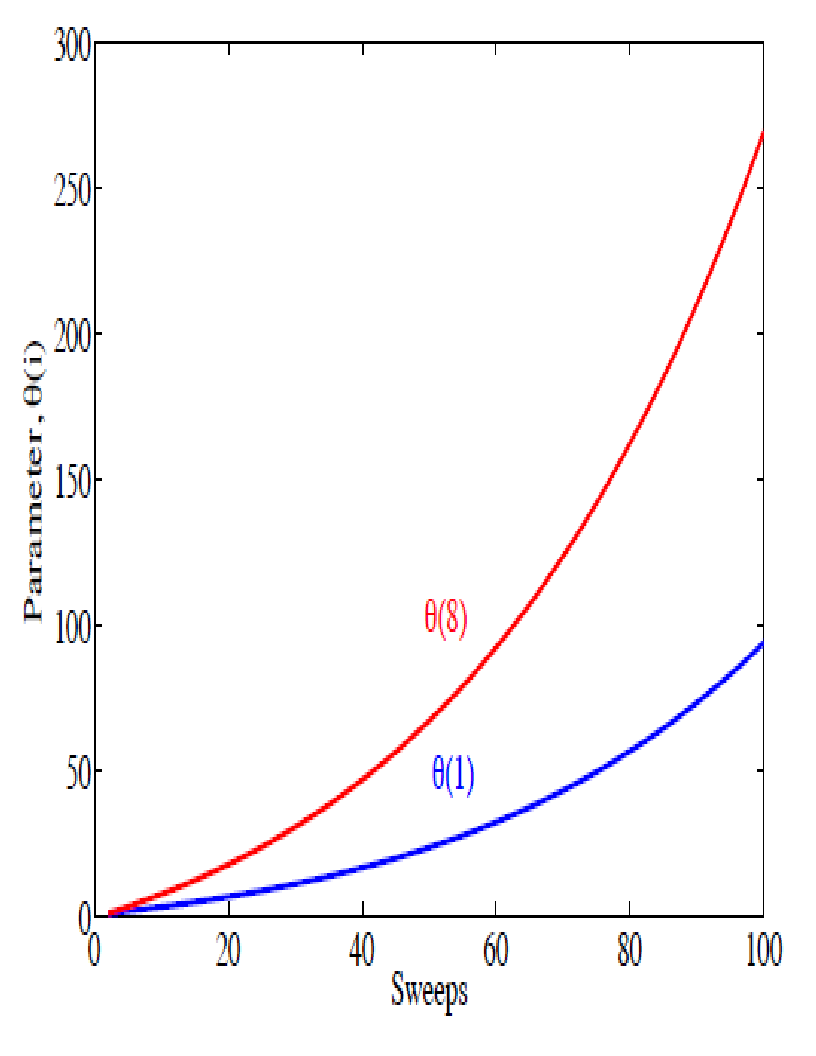}
\caption{The learning parameters in Baird’s counterexample diverge to infinity. The parameters
are updated according to the expected TD(0) update (similar to dynamic programming). 
This figure is taken  from \cite[p~18, Fig. 2.5]{maeith}.}
\label{baird_g}   
\end{figure}

Why is TD(0) with linear/nonlinear function approximation sometimes unstable? To address
this question, first we need to look at the way the algorithm is derived. Unlike supervised
learning methods, which use a mean-square-error objective function, in RL, the
objective is to estimate value functions that satisfy the Bellman equation. This would be
straightforward to do if we use tabular representation. However, for the case of function approximation,
it is not clear what is the underlying equation for the approximate value functions.
In other words, it is not straight forward to do function approximation.

TD methods with function approximation are proposed as a way of conducting this approximation. 
In the previous section, we showed how linear/nonlinear TD(0) originally has been
derived. To do this, we started with the idea of updating the learning parameters along
the gradient-descent direction of the mean-square error objective function, but with an approximation
step, $V^{\pi}(s) \approx E\left[R_{t+1} + \gamma V_{\theta} (S_{t+1}) | S_t =s\right]$. 
Thus, the resulting algorithm
would not be true stochastic gradient-descent method. As a result, TD(0) with function
approximation may diverge. 

A good way to see why linear/nonlinear TD(0) is not a true gradient-descent method is
to show that its update cannot be derived by taking the gradient of any function \cite{barnard}. To do this, let’s consider linear TD(0) with the update term, $\delta(\theta)\phi$, where 
$\delta(\theta) = R + \gamma \theta^T\phi' - \theta^T\phi$, $\phi \equiv \phi(S_t)$, $\phi' \equiv \phi(S_{t+1})$, 
where $t$ represents the time-step.

Now, let’s assume that there exists a function $J(\theta)$ whose gradient is the TD(0) update, 
$\delta(\theta)\phi$, that is, $\nabla J(\theta) = \delta(\theta)\phi$. Thus, the $j$-th element of 
$\nabla J(\theta)$ is 
\begin{align}
\frac{\partial J(\theta)}{\partial \theta_j} = \delta(\theta) \phi_j. \nonumber
\end{align}

Now, if we take another partial derivative with respect to the $i$-th component, we get
\begin{align}
\frac{\partial^2 J(\theta)}{\partial \theta_i \partial \theta_j} \neq \frac{\partial^2 J(\theta)}{\partial \theta_j \partial \theta_i} \nonumber
\end{align}

This is a contradiction, because the second derivative of a differentiable function is independent
of the order of derivatives. This derivation, however, shows that the second derivative
of function $J$ is not symmetric with respect to the order of derivative. As such, we conclude
linear TD(0) update is not a gradient of any function.

\paragraph{Objective Function for
Temporal-Difference Learning}
An objective function is some function of the modifiable parameter $\theta$ that 
we seek to minimize by updating $\theta$. In (stochastic) gradient-descent, the updates to $\theta$ are proportional to
the negative (sample) gradient of the objective function with respect to $\theta$. In standard RL,
the objective is to find a solution that satisfies the Bellman equation. However, in the case of
function approximation, it is not clear how to combine the Bellman equation with value
function approximation. In the previous section, we showed
that the TD-solution for linear TD(0) satisfies
\begin{align}
V_{\theta} = \Pi T V_{\theta}. \nonumber 
\end{align}
Thus, a nice choice for the
objective function would be to take the mean-square projected Bellman-error (MSPBE)
objective function:
\begin{align}
J(\theta) = \|V_{\theta} - \Pi T V_{\theta}\|^2_{\mu}. \nonumber 
\end{align}
GTD \cite{sutton1}, GTD2, TDC\cite{sutton} algorithms are off-policy evaluation algorithms based on this objective function. 
\subsection{Risk-sensitive reinforcement learning}
\label{risks}
The most familiar metrics in infinite horizon sequential decision problems are additive costs such as   
discounted cost and long-run average cost 
respectively. However, there is another cost criterion namely 
multiplicative cost (or risk-sensitive cost as it is better known) which has 
important connections with dynamic games and robust
control and is popular in certain applications, particularly related 
to finance where it offers the advantage of `penalizing all
moments', so to say, thus capturing the `risk' in addition to
mean return (hence the name).
In the following we summarize a few key concepts (which is based on \cite{risk}) to analyzing attitude toward risk 
taking in decision analysis practice, with particular emphasis on the use of the exponential 
utility function.

\subsubsection{Certainty Equivalence and the Idea Underlying Utility Functions}
A difficulty with decision-making under uncertainty is illustrated by the following: 
Suppose we are offered an alternative with equal chances of winning Rs. 10,000 or losing
Rs. 1,000. How much are we willing to pay for this? We certainly will not pay more
than Rs. 10,000, and we will certainly take the alternative if someone offers to give 
Rs. 1,000 in addition to the alternative. How can we settle on a number somewhere
between these two extremes?
A decision maker's attitude toward risk taking is addressed with the concept
of the certainty (or certain) equivalent, which is the certain amount that is equally
preferred to an uncertain alternative. If certainty equivalents are known for the
alternatives in a decision, then it is easy to find the most preferred alternative: It
is the one with the highest (lowest) certainty equivalence if we are considering profit
(cost).
The ``Weak Law of Large Numbers'' argues for using expected
values as certainty equivalents when the stakes in a decision under uncertainty are
small. This Law shows that under general conditions the average outcome for a large
number of independent decisions stochastically converges to the average of the expected 
values for the selected alternatives in the decisions (the term stochastically
converges means that the probability the actual value will differ from the expected
value by any specified amount gets closer to zero as the number of independent decisions 
increases). Thus, if we value alternatives at more than their expected values,
we will lose money over many decisions since we will only sell such alternatives
for more than they will return on average. Similarly, if we value alternatives at less
than their expected values, we will lose money because we will sell alternatives for
less than they will return on average.

However, additional factors enter when the stakes are high. Most of us would
be willing to pay up to the expected value of Rs. 2.50 for a lottery ticket giving us a
50:50 chance of winning Rs. 10.00 or losing Rs. 5.00. On the other hand, most of us would
not be willing to pay as much as Rs. 25,000 for a lottery ticket with a 50:50 chance
of winning Rs. 100,000 or losing Rs. 50,000 even though Rs. 25,000 is the expected value of
this lottery. This is because a few Rs. 50,000 losses would leave most of us without
the resources to continue. We cannot ``play the averages'' over a series of decisions
where the stakes are this large, and thus considerations of long-run average returns
are less relevant to our decision making.
Many conservative business people are averse to taking risks. That is, they
attempt to avoid the possibility of large losses. Such individuals have certainty
equivalents that are lower than the expected values of uncertain alternatives if we
are dealing with profits, or higher than the expected values if we are dealing with
costs. That is, these individuals are willing to sell the alternatives for less than these
alternatives will yield on average over many such decisions in order to avoid the risk
of a loss. Intuitively, we might consider incorporating this aversion toward risk into
an analysis by replacing expected value as a decision criterion by something else
which weights less desirable outcomes more heavily. Thus, we might replace the
expected value of alternative $A$, i.e.
\begin{align}
E(x|A) = \sum_{i=1}^n x_i p(x_i|A) \nonumber
\end{align}
as a decision criterion by the expected value of some utility function $u(x)$, that is 
\begin{align}
E[u(x)|a] = \sum_{i=1}^n u(x_i) p(x_i|A) \nonumber
\end{align}
where $p(x_i|A)$ is the probability of $x_i$ given that $A$ is selected.

If $x$ is total assets in hundreds of thousands of dollars, then we might have
 $u(x) = \log (x+1)$. With this utility function, higher asset positions will not receive
as much weight as with expected value and very low asset positions will receive large
negative weight. This will tend to favor alternatives that have lower risk even if they
also have lower expected values.

The certainty equivalent CE can be determined if a utility function $u(x)$ is
known using the relationship $u(\mbox{CE}) = E[u(x)|A]$ where $E[u(x)|A]$ is the expectation
of the utility for alternative $A$. As an example, consider again the decision above
which has equal chances of either winning Rs. 100,000 or losing Rs. 50,000, and suppose
that the decision maker's initial asset position is Rs. 100,000. The expected value of this
alternative in terms of total assets is $0.5 \times Rs. 200, 000 + 0.5 \times Rs. 50, 000 = Rs. 125, 000$.
Using the logarithmic utility function shown in the preceding paragraph, we can
solve for the certainty equivalent from $\log$(CE + 1) = 0.5 $\log$ (2 + 1) + 0.5 $\log$(0.5 + 1)
which gives CE = Rs. 112, 000. Thus, the alternative has a certainty equivalent which
is Rs. 13,000 less than the expected value of Rs. 125,000 when it is evaluated with this
utility function. This demonstrates the aversion to taking risks that was discussed
above. Note that in the above, the problem is to maximize profit, therefore the utility function 
is taken as logarithm, however, if the problem is the minimization of certain cost, then the usual 
practice is to take exponential function as utility function.
This is the basic idea underlying utility functions.

\subsubsection{Motivation for studying risk-sensitive reinforcement learning}
Consider an irreducible
aperiodic Markov chain $\{X_n\}$ on a finite state space $S = \{1, 2, \dots, s\}$, with transition matrix 
$P = [[p(j|i)]] i,j \in S$. 
Let $c:S \times S \to \mathbb{R}$ denote a prescribed `running cost' function and $C$ be the $s \times s$ matrix
whose $(i,j)$-th 
entry is $e^{c(i,j)}$. The aim is to evaluate risk-sensitive cost defined as
\begin{align}
\limsup_{n \to \infty} \frac{1}{n}\ln\left(E[e^{\sum_{m=0}^{n-1}c(X_m, X_{m+1})}]\right). \nonumber 
\end{align}That this limit exists follows from the
multiplicative ergodic theorem for Markov chains (see Theorem 1.2 of Balaji and Meyn (2000) \cite{bmrisk}, the
sufficient condition (4) therein is trivially verified for the finite state case here).

Like other cost criteria, one can propose and justify
iterative algorithms for solving the dynamic programming
equation for risk-sensitive setting \cite{bormn}. The issue we are interested in here is how to
do so, even approximately, when the exact model is either
unavailable or too unwieldy to afford analysis, but on
the other hand simulated or real data is available easily,
based on which one may hope to `learn' the solution in
an incremental fashion. 

One important point to note here is that the usual simulation based technique of calculating average cost  
does not work when the objective is a risk-sensitive cost. The reason is that average cost is defined as
\begin{align}
\lim_{n \to \infty}\frac{1}{n}E[\sum_{i=0}^{n-1}c(X_{i})], \nonumber 
\end{align}
where $c(i)$ is the cost of state $i$ and  $X_n$, $n\geq 0$ is 
an irreducible finite state Markov chain. Therefore the following iterative algorithm will almost surely converge to 
the average cost:
\begin{align}
\label{avg}
\theta_{n+1} = \theta_n +a(n)\left[c(X_n) - \theta_n\right], 
\end{align} where the step sizes satisfy the Robbins-Monro conditions. 
This follows from the ergodic theorem for irreducible Markov chains as well as the convergence analysis of 
stochastic approximation with Markov noise \cite{borkar}. On the contrary one needs to apply the
multiplicative ergodic theorem (\cite{bmrisk}) when the cost is risk-sensitive. However, this
does not have any closed-form limit. 
Moreover, one cannot even write iterative  algorithms like (\ref{avg}) in this setting because of the non-linear 
nature of the cost.
Due to the same 
reason, methods of \cite{marbach} also don't work in this setting when one is solving the full control 
problem.

This takes us into the domain of
reinforcement learning. In \cite{borkarq} and \cite{borkar_actor}, 
Q-learning and actor-critic methods have been proposed respectively for such a cost-criterion. 
These are `raw’ schemes in the sense that there is
no further approximation involved. Since complex control
problems lead to dynamic programming equations in very
large dimensions (`curse of dimensionality'), one often
looks for an approximation based scheme. One such learning algorithm with function approximation is
proposed in \cite{basu}. 

\subsection{Motivation for the problems considered in this thesis}
In this section we provide the motivation for the problems considered in this thesis. In Section \ref{highl} we 
summarize our solutions to these problems.
\subsubsection{Problem 1} There are many reinforcement learning applications
(precisely those where parameterization of value function is implemented) where non-additive Markov noise is present in one or both iterates thus requiring the current two-time scale framework to be extended to
include Markov noise (for example, in \cite[p.~5]{sutton2} it is mentioned that in order to generalize the analysis 
to Markov noise, the theory of two time-scale stochastic approximation needs to include the latter). This is 
our prime motivation to show a general analysis of convergence of stochastic iterates with Markov noise.
\subsubsection{Problem 2}
All learning control methods face a dilemma: They seek to learn action values conditional 
on subsequent optimal behavior, but they need to behave non-optimally in
order to explore all actions (to find the optimal actions). How can they learn about
the optimal policy while behaving according to an exploratory policy? The on-policy
approach is actually a compromise-it learns action values
not for the optimal policy, but for a near-optimal policy that still explores. A more
straightforward approach is to use two policies, one that is learned about and that
becomes the optimal policy, and one that is more exploratory and is used to generate behavior. 
The policy being learned about is called the \textit{target policy}, and the
policy used to generate behavior is called the \textit{behavior policy}. In this case we say
that learning is from data ``off'' the target policy, and the overall process is termed
\textit{off-policy learning}. It is well known that popular off-policy learning algorithms such as Q-learning
may diverge when function approximation is deployed \cite{baird}. Therefore as earlier, the off-policy learning
involves two-step procedure: off-policy prediction and off-policy control.

We consider the problem of estimating the value function corresponding to a target policy given the 
realization of a finite state Markov decision process under a behaviour policy which is different from the 
target policy. This is well known in literature as the off-policy evaluation problem. 
This is often the case 
when the simulation device can simulate states only according to a certain preset policy and 
cannot adaptively take in newer policies as they are computed. 
See \cite{suttonb} for 
additional uses.

It is well-known that for this problem the standard temporal difference learning with linear 
function approximation may diverge (\cite{baird}, \cite[Section 3]{emphatic_td}). Further, 
the usual single time-scale stochastic approximation kind of argument may not be useful 
as the associated ordinary differential equation (o.d.e) may not have the TD(0) solution as its globally asymptotically stable 
equilibrium.
In \cite{sutton1,sutton,maeith} the 
gradient temporal difference learning (GTD) algorithms were proposed to solve this problem. The 
per time-step computational complexity for these algorithms scales only linearly in the size $d$ of
the function approximator.
However, the authors 
make the assumption that either 
\begin{enumerate}
 \item one uses ``sub-sampling'' (see \cite[Section 4.1]{maeith},\cite{sutton1} for details) to filter 
the data relevant to target policy given the 
trajectory corresponding to behaviour policy, or
 \item the data itself is available in the off-policy setting i.e. one has direct access 
to quadruples of the form (state, action, reward, next state)
where the first component of the quadruples are sampled independently from 
the stationary distribution of the underlying Markov chain corresponding to the behaviour policy 
and the quadruples are formed according to the target policy.      
\end{enumerate}
Amongst all algorithms with the above assumptions, the TDC (temporal difference learning with gradient correction) algorithm 
was empirically 
found to be the most efficient in terms of the rate of convergence. It was shown in \cite{sutton} that such an 
algorithm can be proved to be convergent using the classical convergence proof for two time-scale stochastic approximation with 
martingale difference noise \cite{borkartt}. The reason for using two time-scale framework for the TDC algorithm is to make sure that the 
O.D.E's have globally asymptotically stable equilibrium. However, one can prove the convergence using single time-scale 
convergence analysis as in \cite[Theorem 3]{maeith}; however the extra condition on the step-size ratio $\eta$ 
mentioned there is hard to verify as the stationary distribution there is typically unknown or is hard to 
compute particularly in the face of large state/action spaces.

Note that such works incorporate the off-policy issue into the data as they don't take the full behaviour trajectory 
as input to the algorithm. The assumptions therein on off-policy algorithms   
are highly restrictive as 
\begin{enumerate}
 \item although in the first case the Markov chain sampled at increasing stopping times 
is time-homogeneous, its transition probabilities will be different from those of the Markov chain corresponding to 
behaviour policy.  
Further, we are interested in an \textit{online} learning scheme. Also,
 \item the second situation is not realistic too as the aforementioned 
 stationary distribution is usually unknown; 
one has access to only the trajectory corresponding to behaviour policy
from which the goal is to evaluate the target policy.    
\end{enumerate}
Keeping this in mind, another algorithm introduced in \cite{maeith}, namely, TDC with importance weighting solves the above off-policy 
evaluation problem in a more realistic scenario. The idea is to handle the off-policy issue in the algorithm rather than in the data by weighting 
the updates by the likelihood of action taken by the target policy (as opposed to the behavior policy). 
The advantage is that, unlike sub-sampling, here all the data from the given trajectory corresponding 
to the behaviour policy is used which is necessary in an \textit{online} learning scenario. 
Another advantage of this method is that we can allow both the behaviour and target policies to be
randomized unlike the sub-sampling scenario
where one can use only a deterministic policy to be a target policy.
However, to the best of 
our knowledge, both its theoretical and empirical convergence properties have not yet been analyzed. Note that   
one cannot represent the algorithm in the usual two time-scale stochastic approximation  framework to prove its convergence and 
one needs to extend such a framework to non-additive Markov noise and additive martingale difference noise. The Markov noise 
appears in the algorithm as the full trajectory of the realization of the  underlying Markov decision process corresponding to 
the behaviour policy and is taken as input to the algorithm.
\subsubsection{Problem 3} As mentioned in the Section \ref{markov}, the most important assumption
to prove convergence of stochastic approximation algorithms with Markov noise 
is the \textit{stability} of the iterates, i.e.,
\begin{align}
\label{stab}
\sup_n \|\theta_n\| < \infty \ a.s.
\end{align}
In the literature sufficient conditions that guarantee (\ref{stab}) are available (e.g. based on a Lyapunov function \cite[Chap. 6.7]{kushner}, \cite{andrieu1, 
andrieu2} and
scaled trajectory \cite[Chap 6, Theorem 9]{borkar}), etc. As mentioned in \cite{andrieu1, andrieu2}, proving stability of the iterates
is a tedious task with the Markovian dynamics due to the noise term $f(\theta_n,Y_{n+1}) - h(\theta_n)$.
In \cite{andrieu1}, the 
truncations
on adaptive truncation sets from \cite{fu_chen} has been extended to the case where the noise is Markov. 
It is clearly mentioned there that the procedure they follow is 
different
in some respects from the original
procedure proposed by \cite{fu_chen}. To prove that the number of re-initializations of the procedure described in 
\cite[Section 3.2]{andrieu1} is finite, they establish a bound on the probability that the $n$-th reinitialization time is 
finite in terms of the fluctuations of the noise sequence of the algorithm between successive re-initializations. 
In order to control the fluctuations some less classical assumptions have been imposed on the transition kernel as well 
as on the vector field $g(\cdot,\cdot)$ (see (DRI2) and (DRI3)) and the discussion thereafter. 
Further, the stability theorem  
stated in the second work also
requires assumptions such as
1) continuity of the transition kernel, 2) Lipschitz continuity of $f$ in the first component \textit{uniformly} w.r.t 
the second, and that 3) $f$ is jointly continuous.

In
this work, we investigate the dynamics of stochastic approximation with   Markov iterate-dependent noise 
when (\ref{stab}) is not known to be satisfied beforehand. We achieve the same by 
extending the  \textit{lock-in probability} framework of Borkar \cite{lock_in_original} to such a recursion. Note 
that similar results \cite{dev1,dev2} using large deviation theory were present in prior 
literature. However, assumptions made in Borkar's work 
are easily verifiable in applications.

The motivation for lock-in probability comes from a phenomenon noticed by W.B.Arthur
in simple urn models (\cite[Chap. 1]{borkar}) of increasing return economics:
if occurrences
predominantly of one type tend to fetch more occurrences of the same type, then after
some initial randomness the process gets \textit{locked into}
that type of (possibly undesirable) occurrence.
Moreover, it is known that
under reasonable conditions, every asymptotically stable equilibrium will have a positive probability
of emerging as $\lim_{n\to \infty}\theta_n$ \cite{arthur},
while this probability is zero for unstable equilibria under mild conditions on the noise \cite{brandiere,permantle}.

With this picture in mind and to give a quantitative explanation of this phenomenon,
Borkar defined lock-in probability \cite{lock_in_original} for the iterates of the form
\begin{align}
\theta_{n+1} = \theta_n + a(n)(h(\theta_n) + M_{n+1}), \label{RM}
\end{align}
where $\{M_n\}$ is a martingale difference noise sequence,
as the probability of convergence of $\theta_n$ to an asymptotically stable attractor $H$ of (\ref{ode_m})
 \textit{given} that the iterate is in a neighbourhood $B$ thereof after a \textit{sufficiently large} $n_0$, i.e.,
\begin{align}
P(\theta_n \to H | \theta_{n_0} \in B) \nonumber
\end{align}
for a compact $\bar{B}$ where $H \subset B \subset \bar{B} \subset G$ where $G$ is the domain of attraction of the local attractor. 
He also found a lower bound for this quantity by studying the \textit{local} behavior of iterates in a neighborhood of the attractor.
Clearly, $n_0$ depends on the specific $H$. Specifically, under the assumption
$E[\|M_{n+1}\|^2|\mathcal{F}_n]\leq K(1+\|\theta_n\|^2)\mbox{~a.s.}$ the bound obtained is $1-O(\sum_{i\geq n_0} a(i)^2)$
and under the more restrictive condition  $\|M_{n+1}\| \leq K_0 (1+\|\theta_n\|)\mbox{~a.s.}$, a
tighter bound of $1-O(e^{-\frac{1}{\sum_{i\geq n_0} a(i)^2}})$ has been obtained \cite{lock_in_original}. 
There are recent results \cite{gugan,sameer} which obtain tighter bounds under much weaker 
assumptions on martingale and step-size sequences.  

The fact that lock-in probability is not just a theoretical metric to explain the 
lock-in phenomenon of information economics
was shown by Kamal \cite{sameer}. If the iterates are \textit{tight} then lock-in probability results are used in
\cite{sameer} to prove almost sure
convergence of the stochastic approximation scheme albeit with 
(with only martingale difference noise) to the \textit{global} attractor.

The phenomenon described earlier can be observed in reinforcement learning (RL) applications  where the
limiting o.d.e.\ has multiple equilibria, e.g.,  several instances of stochastic gradient descent in machine learning.

We extend in our work the currently available lock-in probability estimates to the case where the vector field includes a Markov iterate-dependent
noise sequence. This is for instance the case with many reinforcement learning algorithms.


Although the recursion (\ref{q1}) covers most of the cases of stochastic approximation with  Markov iterate-dependent noise,
there are reinforcement learning scenarios where there can be a dependence on both the present and the next sample of the Markov iterate-dependent noise
in the vector field \cite{off-policy}. For such scenarios the general recursion is:
\begin{align}
\label{many}
\theta_{n+1} = \theta_n + a(n)f(\theta_n,Y_{n},Y_{n+1}).
\end{align}
One can write (\ref{many}) as
\begin{align}
\theta_{n+1} = \theta_n + a(n)\left[E[f(\theta_n,Y_{n},Y_{n+1})|\mathcal{F}_n] + M_{n+1}\right],\nonumber
\end{align}
where $\mathcal{F}_n=\sigma(\theta_m,Y_m,m\leq n)$ and $M_{n+1}=f(\theta_n,Y_{n},Y_{n+1})- E[f(\theta_n,Y_{n},Y_{n+1})|\mathcal{F}_n]$ 
is now a martingale difference sequence.
Therefore, with abuse of notation, the general recursion which takes care of 
Markov iterate-dependent noise can be
described as
\begin{align}
\label{main_m}
\theta_{n+1} = \theta_n + a(n)\left[f(\theta_n,Y_{n}) + M_{n+1}\right].
\end{align}
In fact, this also covers the situation where both Markov iterate-dependent \textit{and} martingale difference noise are present.
In this work, we give a lower bound on the lock-in probability estimate
of iterates of the form (\ref{main_m}) using the \textit{Poisson equation} based analysis
as in \cite{metivier, benveniste}. 
Under some assumptions in \cite{metivier} and some further assumptions,
we get a lower bound of $1-O(e^{-\frac{C}{\sum_{i=n_0}^{\infty}a(i)^2}})$ for the recursion (\ref{main_m}),
and thus also for the special case (\ref{q1}). Therefore, with the more general assumption of Markov iterate-dependent noise, we recover
the \textit{same bounds} available for the setting of martingale noise \cite[p.~38]{borkar1} albeit under
some additional assumptions on the Markov iterate-dependent process and step size sequence.

Very few results \cite{gen} are 
available on non-asymptotic rate of
convergence of general stochastic approximation iterates (\ref{q1}), see also \cite{rakhlin} for stochastic gradient descent and 
\cite{tdg} for finite sample analysis of temporal difference learning.
But lock-in probability estimates can be used to calculate an upper bound for the sample complexity
 of stochastic approximation \cite[chap. 4.2]{borkar},\cite{sameer}.
Given a desired accuracy $\epsilon >0$ and confidence $\gamma$, the 
sample complexity estimate is defined to be the minimum number of iterations
$N(\epsilon, \gamma)$ after which the iterates are within a certain neighbourhood
(which is a function of $\epsilon$) of $H$ with probability at least $1-\gamma$.
This is slightly different from the sample complexity estimate
arising in the context of
consistent supervised learning algorithms in statistical learning theory \cite{lorenzo}. The differences are:
\begin{enumerate}
 \item In the case of statistical learning theory, sample complexity corresponds to
the number of \textit{i.i.d training samples} needed for the algorithm to successfully learn a target function. However,
in our case, we have a \textit{recursive} scheme whose sample complexity depends on the step-size.
\item Ours is a \textit{conditional} estimate, i.e.,
 the estimate is conditioned on the event $\{\theta_{n_0} \in B\}$ where $B$ is
an open subset of the domain of attraction of $H\subset B$ and has compact closure, and $n_0$ is sufficiently large.
\end{enumerate}
Another point worth noting is that sample complexity results are much weaker than lock-in probability and do not
require existence of Lyapunov function.

\subsubsection{Problem 4} In the approximation architectures as mentioned in Section \ref{risks} 
an important problem is to obtain a good error bound 
for the approximation. This has been pointed out by Borkar in the future 
work sections of \cite{borkar_mont,basu,borkar_conf}. While \cite{basu} provides such a bound
when the problem is of policy evaluation, 
it is also mentioned there that the bound obtained 
is not good when the state space is large. 

In our work we investigate the problems with the existing bound and 
then improve upon the same. We show that good approximations are captured in our bounds whereas the earlier bound will 
infer them as bad approximation.

\subsection{Highlights of the thesis contributions}
\label{highl}
As described earlier convergence analysis of stochastic approximation with Markov Noise has been studied extensively in the seminal work of \cite{metivier,met_prior,benveniste}. The main aim of this thesis is to present convergence analysis of such recursions or such recursions with two time-scales under general assumptions which has not been done earlier, thus facilitating applications in reinforcement learning. Chapter \ref{chap:intro} - \ref{chap:risk} describe the four problems considered in this thesis. Chapter \ref{chap:con} 
describes conclusions and future directions based on the work in this thesis. In the following we summarize the contributions 
of Chapter \ref{chap:intro} - \ref{chap:risk}.
\subsubsection{Problem 1}
We present for the first time an asymptotic convergence analysis of two time-scale 
stochastic approximation driven by ``controlled'' Markov noise. In particular, the
faster and slower recursions have non-additive iterate-dependent Markov noise components that depend 
on an additional control sequence and this is  
in
addition to martingale difference noise. We analyze the asymptotic behavior of our
framework by relating it to limiting differential inclusions in both time scales that are
defined in terms of the ergodic occupation measures associated with the controlled
Markov processes. Note that the results of \cite{borkar} 
assume that the state space of the controlled Markov process is Polish which 
may impose  additional conditions that are hard to verify. In this section, other 
than proving our two time-scale results, we prove many 
of the results in \cite{borkar} (which were only
stated there) assuming the state space to be compact and thus can be easily verified. Additionally, 
we generalize the global attractor assumption of the faster o.d.e in \cite[Chapter 6, (A1)]{borkar1} to 
local attractors. We then prove almost sure convergence of the iterates  under the requirement that  
the faster iterate belongs to a compact subset of the intersection (over all $\theta$)
of the domain of attraction of the local attractors \textit{eventually}. The requirement  on the 
faster iterate is much stronger than the usual local attractor 
statement for the Kushner-Clarke lemma \cite[Section II.C]{metivier} which requires 
the iterates to enter a compact set in the domain of attraction for the local attractor 
\textit{infinitely often} only. Additionally, we prove the tracking lemma of \cite[Lemma 2.2]{borkar} 
using the Borel-Cantelli Lemma (the main idea is to 
prove $X_n \to 0 \mbox{~~a.s}$ by showing that $\forall \epsilon>0$, 
$\sum_{n=1}^{\infty}P(|X_n| >\epsilon) <\infty$ for a sequence of random variables $X_n, n\geq 0$) 
so that we can allow the Lipschitz constant of the vector fields to depend on the state space. 
Although there is a recent work \cite{vyaji} which relaxes many of the assumptions of our work 
(such as vector fields in both iterates 
as well as $\lambda(\theta)$ are set-valued Marchaud maps), however, this is done under the assumption that  
$\lambda(\theta)$ is a global attractor and the constant in the  
point-wise boundedness assumption of the Marchaud map  does not depend on the state space of the Markov process. 

This work is described in Chapter \ref{chap:intro}.

\subsubsection{Problem 2}

In this work we give a rigorous almost sure convergence  proof of TDC algorithm with importance weighting by formulating 
it into the two time-scale stochastic approximation framework with non-additive Markov noise and additive martingale 
difference noise. 
To the best of our knowledge this 
is the first time an almost sure convergence proof of off-policy temporal difference learning algorithm with linear function
approximation is presented for step-sizes satisfying the standard Robbins-Monro conditions. We also support these theoretical results 
by providing empirical results. Our results show that due to 
the aforementioned importance weighting factor, 
\textit{online} TDC with importance weighting performs 
much better than the sub-sampling version of TDC for standard off-policy counterexamples  
when the behaviour policy is much different from the 
target policy. 

Recently, emphatic temporal difference learning has been introduced in \cite{emphatic_td} to solve the off-policy evaluation 
problem. 
However, such algorithms are proven to be almost surely convergent for special step-size sequences and 
weakly convergent for a large range of step-sizes \cite{yu_new}. 

Another related work is the much complex off-policy learning algorithms that obtain
the benefits of weighted importance sampling (to reduce variance)  with $O(d)$ computational
complexity \cite{weight}. However, nothing is known about the convergence of such algorithms.
In this context, we empirically show that in the case of TDC with importance weighting 
the variance of the difference between true value 
function and the estimated one for standard off-policy counterexamples such as \cite{baird} becomes 
small eventually. 

This work is described in Chapter \ref{chap:off}.

\subsubsection{Problem 3}
This work compiles several aspects of the dynamics of stochastic approximation 
algorithms with Markov iterate-dependent noise when the
iterates are not known to be stable beforehand. We achieve the same
by extending the lock-in probability (i.e., the probability of convergence
to a specific attractor of the limiting o.d.e. given that the iterates are
in its domain of attraction after a sufficiently large number of iterations
(say) $n_0$) framework to such recursions. Specifically, with the more restrictive 
assumption of Markov iterate-dependent noise supported on a
bounded subset of the Euclidean space we give a lower bound for the lock-in probability, 
leading in turn to the following:

\begin{enumerate}
 \item Let $H$ be an asymptotically stable attractor of (\ref{ode_m}) and $G$ its domain of attraction. 
If $\{\theta_n\}$ is \textit{asymptotically tight} (which is a much weaker condition than (\ref{stab})) 
and \linebreak $\liminf_n P(\theta_n \in G) =1,$ then $P(\theta_n \to H) =1$ under reasonable 
set of assumptions satisfied in application areas such as reinforcement learning \cite{borkar_meyn}.
To the best of our knowledge this is the first time 
an almost sure  convergence proof for such recursion is presented 
without assuming the stability of the iterates, however, following the classic Poisson equation
model of Metivier and Priouret \cite{metivier} for such recursion which is designed keeping 
in mind the stability  of the iterates.
Additionally, a  simple test for asymptotic tightness is also provided.
 \item We show that for common step-size sequences such as
$\{\frac{1}{n^k}\}, \frac{1}{2} < k \leq 1$ and $\frac{1}{n (\log n)^{k}}, k\leq 1$, if
the iterates belong to some special open set with compact closure in the domain of attraction of the local attractor
infinitely often w.p. 1, the iterates are stable and converge a.s. to the local attractor.
 \item We show that our results can be used to analyze the tracking ability of 
general (not necessarily linear) stochastic approximation driven by another ``slowly'' varying 
stochastic approximation process when the iterates are not known to be stable. 
Note that in two time-scale stochastic approximation the coupled o.d.e has no attractor. 
Therefore we need to consider two 
quantities describing difference (over compact time interval) 
between algorithm and o.d.e.,  one for the coupled algorithm/o.d.e and
another for the slower algorithm/o.d.e. This gives rise to a situation where the 
conditioning event in the martingale concentration
inequality will not belong to the first $\sigma$-field in the current collection of $\sigma$-fields (unlike in case of  
single timescale stochastic approximation where the conditioning event always belongs to the 
first $\sigma$-field in the current collection of $\sigma$-fields \cite[p~40]{borkar1}). 

Such results are useful in the context of 
adaptive algorithms \cite{konda} as not much is known about the stability of frameworks with different 
timescales. There is some recent work \cite{dalal} that also calculates lock-in probability for 
multiple timescales, however, under the assumption that the vector fields are ``linear''.
\item We give a sample complexity estimate
for the setting where the recursion is a \textit{stochastic fixed point} iteration driven by a Markov iterate-dependent noise. This
shows a \textit{quantitative} estimate of \textit{large vs. small step size trade-off} well known in stochastic
approximation literature that is shown to be useful in choosing the \textit{optimal step-size}.

\end{enumerate}
This work is described in Chapter \ref{chap:lock}.

\subsubsection{Problem 4}
In this work we obtain the several informative error bounds on function approximation for the policy evaluation
algorithm proposed by \cite{basu} when the aim is to find the risk-sensitive cost represented using exponential
utility. We
also give examples where all our bounds achieve the \textit{actual error} whereas the earlier bound given by \cite{basu}  
is much weaker in comparison. We show that this happens due to the absence of a difference term in the earlier
bound which is always present in all our bounds when the state space is large. Additionally, we discuss how all
our bounds compare with each other. We also describe a temporal difference learning algorithm in this 
setting and show that if the stationary distribution is available (in the case of doubly stochastic transition kernel), 
then one can choose the features appropriately so that the algorithm converges to the actual risk-sensitive cost.

This work is described in Chapter \ref{chap:risk}.
\subsection{Notations used to denote assumptions}
In the following chapters we have used \textbf{(Ai)}, \textbf{(Bi)}, \textbf{(Si)} and \textbf{(S'i)} (where $i \geq 1$) to denote 
assumptions under which the stated theorems and lemmas are true. Unless stated explicitly, \textbf{(Ai)},  \textbf{(Bi)}, \textbf{(Si)} 
and \textbf{(S'i)} will denote 
the same defined in the corresponding chapter.
\chapter{Two Time-scale Stochastic Approximation with Controlled Markov noise}
\label{chap:intro}

\section{Brief introduction and Organization}
Here we present a more general framework of two time-scale stochastic approximation
with ``controlled'' Markov noise, i.e., the noise is 
not simply Markov; rather it is driven by the 
iterates and an additional control process as well. 
We analyze the asymptotic behaviour of our framework by relating it to limiting differential inclusions in both timescales that 
are defined in terms of the ergodic occupation measures associated with the controlled Markov processes. 
To the best of our knowledge there are related works  such as \cite{Tadic, konda, konda_actor, tadic_new} 
where two time-scale stochastic approximation 
algorithms with algorithm iterate dependent non-additive Markov noise is analyzed. In all of them the 
Markov noise in the recursion is handled using the classic Poisson equation based approach of \cite{benveniste, metivier} and 
applied to the asymptotic analysis of many algorithms used in machine learning, system identification, signal 
processing, image analysis and automatic control. However, we show that our method 
also works if there is another additional control process as well and if the
underlying Markov process has non-unique stationary distributions. 
Additionally, our assumptions 
are quite different from the assumptions made in the mentioned literature and we give a detailed
comparison in Section \ref{def}.

The results described in this chapter are mainly based on the proof techniques of \cite{borkartt} and \cite{borkar}. To the best of our knowledge, we believe that the assumption of 
Polish state space of \cite{borkar} will impose extra conditions that are hard to verify. In our work, 
we prove many of the results in \cite{borkar} (which were only stated there) 
assuming the state space to be compact. Also, in our work we allow the Lipschitz constant of the vector field to depend on the state space of the Markov process which was assumed independent of the state space in \cite{borkar}. We also relax the assumption of global attractor \cite{borkartt} by allowing the o.d.e in the faster timescale to have local attractors.  

The organization of this chapter is as follows: Section \ref{secdef} formally defines the problem and 
provides background and assumptions. Section \ref{mres} shows the main
results. Section \ref{relax} discusses how one of our assumptions of Section \ref{secdef} can be relaxed. 
Finally, we conclude by providing
some future research directions. 

\section{Background, Problem Definition, and Assumptions}
\label{secdef}

In the following we describe the preliminaries and notation used in our proofs. 
Most of the definitions and notation are from \cite{benaim,borkar1,Aubin}.
\subsection{Definition and Notation}

Let $F$ denote a set-valued function mapping each point $\theta \in \mathbb{R}^m$ to a set $F(\theta) \subset \mathbb{R}^m$. $F$ is called
a \textit{Marchaud map} if the following hold: 
\begin{enumerate}[label=(\roman*)]
\item  $F$ is \textit{upper-semicontinuous} in the sense that if $\theta_n \to \theta$ and $w_n \to w$ with $w_n \in F(\theta_n)$ for all $n\geq 1$, then 
$w \in F(\theta)$. In order words, the graph of $F$ defined as $\{(\theta,w):  w \in F(\theta)\}$ is closed. 
\item $F(\theta)$ is a non-empty compact convex subset of $\mathbb{R}^m$ for all $\theta \in \mathbb{R}^m$.
\item $\exists c >0$ such that for all $\theta \in \mathbb{R}^m$,
\begin{equation}
\sup_{z\in F(\theta)} \|z\| \leq c(1+\|\theta\|),\nonumber 
\end{equation}
where $\|.\|$ denotes any norm on $\mathbb{R}^m$. 
\end{enumerate}


\textit{A solution for the differential inclusion (D.I.)} 
\begin{equation}
\label{diffin}
\dot{\theta}(t) \in F(\theta(t)) 
\end{equation}
with initial point $ \theta_0 \in \mathbb{R}^m$ is an absolutely continuous (on compacts) 
mapping  $\theta :  \mathbb{R} \to \mathbb{R}^m$ such that
$\theta(0) =\theta_0$ and 
\begin{equation}
\dot{\theta}(t) \in F(\theta(t))\nonumber 
\end{equation}
for almost every $t \in \mathbb{R}$. 
If $F$ is a Marchaud map, it is well-known that (\ref{diffin}) has solutions 
(possibly non-unique) through every initial point. The differential inclusion  (\ref{diffin}) 
induces a \textit{set-valued dynamical system} $\{\Phi_t\}_{t\in \mathbb{R}}$ defined by
\begin{equation}
\Phi_t(\theta_0) = \{\theta(t) : \theta(\cdot) \mbox{ is a solution to  
(\ref{diffin}) with $\theta(0) =\theta_0$}\}.\nonumber 
\end{equation}  \indent
Consider the autonomous ordinary differential equation (o.d.e.)
\begin{equation}
\label{ode1}
\dot{\theta}(t)=h(\theta(t)), 
\end{equation}
where $h$ is Lipschitz continuous. One can write (\ref{ode1}) in the 
format of (\ref{diffin}) by taking $F(\theta)=\{h(\theta)\}$. 
It is well-known that (\ref{ode1}) is well-posed, i.e., it has a \textit{unique solution} for every initial point.
Hence the set-valued dynamical system induced by the o.d.e. or \textit{flow} is  $\{\Phi_t\}_{t\in \mathbb{R}}$ with 
\begin{equation}
\Phi_t(\theta_0) = \{\theta(t)\},\nonumber 
\end{equation}
where $\theta(\cdot)$ is the solution to  
(\ref{ode1}) with $\theta(0) =\theta_0$.
It is also well-known that $\Phi_t(\cdot)$ is a \textit{continuous function} for all $t \in \mathbb{R}$.
\\ \indent     
A set $A \subset \mathbb{R}^m$ is said to be \textit{invariant} (for $F$) if for all 
$\theta_0\in A$ there exists a solution $\theta(\cdot)$ of (\ref{diffin})
with $\theta(0) = \theta_0$ such that $\theta(\mathbb{R}) \subset A$. 
\\ \indent
Given a set $A \subset \mathbb{R}^m$ and $\theta'',w''\in A$, we write $\theta''\hookrightarrow_A w''$ 
if for every $\epsilon > 0$ and $T>0$ $\exists n \in \mathbb{N}$, solutions $\theta_1(\cdot), \dots, \theta_n(\cdot)$ 
to  (\ref{diffin}) and real numbers $t_1, t_2, \dots, t_n$ greater than 
$T$ such that
\begin{enumerate}[label=(\roman*)]
 \item  $\theta_i(s) \in A$ for all $0 \leq s \leq t_i$ and for all $i=1, \dots, n,$
 \item  $\|\theta_i(t_i) - \theta_{i+1}(0)\| \leq \epsilon$ for all $i=1, \dots, n-1,$
 \item  $\|\theta_1(0) - \theta''\| \leq \epsilon$ and $\|\theta_n(t_n) - w''\| \leq \epsilon.$
\end{enumerate}
The sequence $(\theta_1(\cdot), \dots, \theta_n(\cdot))$ is called an $(\epsilon, T)$ 
chain (in $A$ from $\theta''$ to $w''$) for $F$. A set $A \subset \mathbb{R}^m$
is said to be \textit{internally chain transitive}, provided that $A$ is compact and $\theta'' 
\hookrightarrow_A w''$ for all $\theta'',w''\in A$. It can be 
proved that in the above case, $A$ is an invariant set. 
\\ \indent
A compact invariant set $A$ is called an \textit{attractor} for $\Phi$, provided that there is a neighbourhood $U$ of $A$
(i.e., for the induced topology) with the property that 
$d(\Phi_t(\theta''), A) \to 0$ as $t\to \infty$ \textit{uniformly} in $\theta'' \in U$.
Here $d(X, Y) = \sup_{\theta'' \in X}\inf_{w'' \in Y}\|\theta''-w''\|$ for $X,Y \subset \mathbb{R}^m.$
Such a $U$ is called a \textit{fundamental neighbourhood} of the attractor $A$.
\textit{An attractor of a well-posed o.d.e.} is an attractor for the set-valued dynamical system induced by the o.d.e.    
\\ \indent
The set 
\begin{equation}
\omega_\Phi(\theta'') = \bigcap_{t \geq 0} \overline{\Phi_{[t, \infty)}(\theta'')} \nonumber 
\end{equation}
is called the \textit{$\omega$-limit} set of a point $\theta'' \in \mathbb{R}^m$. 
If $A$ is a set, then 
\begin{equation}
B(A) = \{\theta'' \in \mathbb{R}^m :  \omega_\Phi(\theta'') \subset A\} \nonumber 
\end{equation}
denotes its \textit{basin of attraction}. A \textit{global attractor} for $\Phi$ is an attractor $A$
whose basin of attraction consists of all $\mathbb{R}^m$. 
Then the following lemma will be useful for our proofs, see \cite{benaim} for a proof.
\begin{lemma}
\label{ga}
Suppose $\Phi$ has a global attractor $A$. Then every internally chain transitive set lies in $A$.
\end{lemma}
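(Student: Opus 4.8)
The plan is to fix an arbitrary internally chain transitive set $L$ (if $L=\emptyset$ the claim is vacuous) and to show that every $x\in L$ lies in $A$. Since $A$ is compact, hence closed, it suffices to prove $d(x,A)=0$ for the one-sided distance $d$ introduced above. The whole argument rests on converting a single link of a recurrence chain inside $L$ into a point that the global attractor has already pulled arbitrarily close to $A$.

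First I would record the one quantitative property of a global attractor that the proof actually needs: uniform attraction of compact subsets of the basin. Let $U$ be a fundamental neighbourhood of $A$, so that $\sup_{z\in U} d(\Phi_t(z),A)\to 0$ as $t\to\infty$. Because $A$ is a \emph{global} attractor, $B(A)=\mathbb{R}^m\supset L$, and the compactness of $L$ together with the (upper-semicontinuous) dependence of solutions on their initial condition yields a finite absorption time $T^*$ with $\Phi_{T^*}(L)\subset U$. Using the semigroup property $\Phi_{s+t}=\Phi_t\circ\Phi_s$ of the reachable-set flow, one then obtains, for every $\delta>0$, a time $T_\delta$ such that
\begin{equation}
d(\Phi_t(z),A)<\delta \quad\text{for all } z\in L \text{ and all } t\geq T_\delta. \nonumber
\end{equation}

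With this in hand the chain argument is short. Fix $x\in L$ and $\epsilon,\delta>0$. Since $L$ is internally chain transitive we have $x\hookrightarrow_L x$, so for the threshold $T=T_\delta$ there exist solutions $\theta_1(\cdot),\dots,\theta_n(\cdot)$ of the differential inclusion and times $t_i>T_\delta$ forming an $(\epsilon,T_\delta)$ chain in $L$ from $x$ to $x$; in particular the last link satisfies $\theta_n(s)\in L$ for $0\leq s\leq t_n$ and $\|\theta_n(t_n)-x\|\leq\epsilon$. Because $\theta_n(0)\in L$ and $t_n>T_\delta$, the displayed estimate gives $d(\theta_n(t_n),A)<\delta$, whence
\begin{equation}
d(x,A)\leq \|x-\theta_n(t_n)\| + d(\theta_n(t_n),A) < \epsilon+\delta. \nonumber
\end{equation}
Letting first $\epsilon\downarrow 0$ and then $\delta\downarrow 0$ forces $d(x,A)=0$, i.e.\ $x\in A$; as $x\in L$ was arbitrary, $L\subset A$.

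The one step that carries real content, and that I would expect to be the main obstacle, is the uniform absorption $\Phi_{T^*}(L)\subset U$. For a single-valued flow this is the classical fact that a global attractor attracts compacta uniformly, proved by covering $L$ with finitely many neighbourhoods on which the entry time into $U$ is locally bounded and taking the maximum. In the present set-valued (Marchaud) setting the statement remains true but must be imported from the differential-inclusion theory of \cite{benaim}, since one now has to control entire families of solutions simultaneously and rely on upper semicontinuity of the set-valued flow rather than on continuity of individual trajectories. Everything else in the proof is elementary once this uniformity is available.
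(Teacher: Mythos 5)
The paper never proves this lemma itself: it states it and refers to \cite{benaim} for a proof, so there is no in-paper argument to compare against line by line. What you have written is, in outline, exactly the standard argument from that reference, and it is correct. The chain step is airtight as stated: with the paper's definition of an $(\epsilon,T)$ chain, the last solution $\theta_n$ of a chain in $L$ from $x$ back to $x$ starts at a point of $L$, stays in $L$, and runs for a time $t_n > T_\delta$, so $\theta_n(t_n)\in\Phi_{t_n}(\theta_n(0))$ is within $\delta$ of $A$ while being within $\epsilon$ of $x$, and letting $\epsilon,\delta\downarrow 0$ gives $x\in A$ because $A$ is closed. Deferring the uniform-attraction/absorption step to \cite{benaim} is consistent with the paper's own level of rigor, and you correctly identify it as the only step with real content.

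Two technical remarks on that step, so you know where the genuine work sits. First, with the paper's literal definition of the basin via $\omega$-limit sets (which may be empty), a point whose solutions escape to infinity is vacuously in $B(A)$, and then single-point absorption of $\Phi_t(z)$ into a fundamental neighbourhood $U$ can fail; the clean repair, available to you precisely because the chain pieces are required to remain in $L$, is to run the compactness argument only over solutions confined to the compact set $L$, whose $\omega$-limit sets are automatically nonempty and contained in $A$. Second, in the finite-cover argument one cannot simply take the maximum of the local entry times, because $U$ need not be forward invariant; one first uses uniform attraction on $U$ to map $U$ into some $N_{\delta_0}(A)\subset U$ after a fixed time and only then takes a common absorption time. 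Both points are routine, and both are handled in \cite{benaim}, but a fully self-contained write-up would need to say them.
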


We also require another result which will be useful to apply our results to the RL application we mention. Before stating it 
we recall some definitions from Appendix 11.2.3 of \cite{borkar1}:
\\ \indent
A point $\theta^*\in \mathbb{R}^m$ is called \textit{Lyapunov stable} for the o.d.e (\ref{ode1}) if for all $\epsilon >0$, 
there exists a $\delta >0$ such that every trajectory of (\ref{ode1}) initiated in the 
$\delta$-neighbourhood of $\theta^*$ remains in its $\epsilon$-neighbourhood. 
$\theta^*$ is called \textit{globally asymptotically stable} if $\theta^*$ is Lyapunov stable and 
\textit{all} trajectories of the o.d.e. converge to it. 
\begin{lemma}
\label{ga2}
Consider the autonomous o.d.e. $\dot{\theta}(t)=h(\theta(t))$ where $h$ is Lipschitz continuous. Let $\theta^*$ be 
globally asymptotically stable. Then $\theta^*$ is the global 
attractor of the o.d.e. 
\end{lemma}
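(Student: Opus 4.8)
The plan is to verify the three defining properties of a global attractor for the singleton $\{\theta^*\}$: that it is compact and invariant, that it admits a fundamental neighbourhood on which the flow converges to it uniformly, and that its basin of attraction is all of $\mathbb{R}^m$.

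First I would observe that $\theta^*$ must be an equilibrium. Lyapunov stability applied to the trajectory started exactly at $\theta^*$ forces that trajectory to remain arbitrarily close to $\theta^*$ for all time, which is possible only if $h(\theta^*)=0$; hence $\Phi_t(\theta^*)=\theta^*$ for all $t$, so $\{\theta^*\}$ is compact and invariant. Since the o.d.e.\ is well-posed, $\Phi_t(\theta'')$ is a singleton for each $\theta''$, and $d(\Phi_t(\theta''),\{\theta^*\}) = \|\theta(t)-\theta^*\|$, where $\theta(\cdot)$ is the unique solution with $\theta(0)=\theta''$.

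The core of the argument is establishing uniform attraction on a neighbourhood. Fix any closed ball $U=\bar{B}_r(\theta^*)$. Given $\epsilon>0$, use Lyapunov stability to choose $\delta\in(0,\epsilon)$ so that any trajectory entering the $\delta$-neighbourhood of $\theta^*$ remains within the $\epsilon$-neighbourhood thereafter. For each $x\in U$, global asymptotic stability gives a time $T_x$ with $\|\Phi_{T_x}(x)-\theta^*\|<\delta/2$; continuity of the flow map $\Phi_{T_x}$ (continuous dependence on initial conditions, which is where Lipschitz continuity of $h$ enters) then yields an open neighbourhood $V_x$ of $x$ on which $\|\Phi_{T_x}(\cdot)-\theta^*\|<\delta$. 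Compactness of $U$ lets me extract a finite subcover $V_{x_1},\dots,V_{x_N}$; setting $T(\epsilon)=\max_i T_{x_i}$, every $y\in U$ lies in some $V_{x_i}$, so its trajectory is within $\delta$ of $\theta^*$ at time $T_{x_i}$ and hence, by the Lyapunov property, within $\epsilon$ for all $t\geq T_{x_i}$, and in particular for all $t\geq T(\epsilon)$. This is exactly $d(\Phi_t(\theta''),\{\theta^*\})\to 0$ uniformly in $\theta''\in U$, so $\{\theta^*\}$ is an attractor with fundamental neighbourhood $U$.

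Finally, for the global claim I would note that since every trajectory converges to $\theta^*$, the $\omega$-limit set $\omega_\Phi(\theta'')$ equals $\{\theta^*\}$ for every $\theta''\in\mathbb{R}^m$; thus $B(\{\theta^*\})=\mathbb{R}^m$ and $\{\theta^*\}$ is a global attractor. The main obstacle is the uniform-attraction step: pointwise convergence of individual trajectories does not by itself give uniformity over $U$, and the essential mechanism is the combination of continuous dependence of the flow on initial data with the compactness argument that converts the pointwise ``capture times'' $T_x$ into a single uniform time $T(\epsilon)$.
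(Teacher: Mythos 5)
Your proposal is correct, and it follows essentially the same route as the proof the paper relies on: the paper gives no argument of its own here, deferring entirely to Lemma~1 of Chapter~3 of Borkar's book, whose proof is precisely your mechanism of combining Lyapunov stability (to get the capture neighbourhood), continuous dependence on initial conditions via Lipschitz continuity of $h$, and compactness of $U$ to turn pointwise capture times into a uniform $T(\epsilon)$. Your additional observations---that stability forces $h(\theta^*)=0$ so $\{\theta^*\}$ is invariant, and that global convergence of all trajectories yields $B(\{\theta^*\})=\mathbb{R}^m$---correctly complete the verification of the paper's definition of a global attractor.
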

\begin{proof}
We refer the readers to Lemma~1 of \cite[Chapter 3]{borkar1} for a proof. 
\end{proof}

We end this subsection with a notation which will be used frequently in the convergence statements in the following 
sections.
\begin{mydef}
For function $\theta(\cdot)$ defined on $[0,\infty)$, the notation ``$\theta(t) \to A$ as $t \to \infty$'' 
means that $A=\cap_{t\geq 0} \overline{\{\theta(s):s \geq t\}}$.
Similar definition applies for a sequence $\{\theta_n\}$.    
\end{mydef}

\subsection{Problem Definition}
\label{def}

Our goal is to perform an asymptotic analysis of the following coupled recursions:  
\begin{eqnarray}
\theta_{n+1} &= \theta_n + a(n)\left[h(\theta_n, w_n, Z^{(1)}_n) + M^{(1)}_{n+1}\right],\label{eqn1}\\
w_{n+1} &= w_n + b(n)\left[g(\theta_n, w_n, Z^{(2)}_n) + M^{(2)}_{n+1}\right],\label{eqn2}
\end{eqnarray}
where $\theta_n \in \mathbb{R}^d, w_n \in \mathbb{R}^k, n\geq 0$ and $\{Z^{(i)}_n\}, \{M^{(i)}_{n}\}, i=1, 2$ 
are random processes that we describe below. 
\\ \indent 
We make the following assumptions:  
\begin{enumerate}[label=\textbf{(A\arabic*)}]
 \item $\{Z^{(i)}_n\}$ takes values in a compact metric space $S^{(i)}, i=1,2$. Additionally, 
the processes $\{Z^{(i)}_n\}, i = 1, 2$ are controlled 
Markov processes that are controlled by three different control processes: the iterate sequences $\{\theta_m\}, \{w_m\}$ and a 
random process $\{A^{(i)}_n\}$ taking values in a compact metric space $U^{(i)}$
respectively with their individual dynamics specified by
\begin{equation}
P(Z^{(i)}_{n+1} \in B^{(i)} |Z^{(i)}_m, A^{(i)}_m, \theta_m, w_m, m\leq n) = \int_{B^{(i)}} p^{(i)}(dy|Z^{(i)}_n, A^{(i)}_n, \theta_n, w_n), n\geq 0, \nonumber 
\end{equation} 
for $B^{(i)}$ Borel in $S^{(i)}, i = 1, 2,$ respectively.  

\begin{remark}
In this context one should note that 
\cite{benveniste, metivier} requires the Markov process to take value in a normed Polish space. 
\end{remark}

\begin{remark}
In \cite{borkar} it is assumed that the state space where the controlled Markov Process takes values is Polish. This space is then compactified
using the fact that a Polish space can be homeomorphically embedded into a dense subset of a compact metric space. The vector 
field $h(.,.) :  \mathbb{R}^d \times S \to \mathbb{R}^d$ is considered bounded when 
the first component lies in a compact set. This would, however, require a continuous 
extension of $h': \Bbb R^d \times \phi(S) \to \Bbb R^d$ defined by $h'(x,s') = h(x,\phi^{-1}(s'))$ 
to $\Bbb R^d \times \overline{\phi(S)}$. 
Here $\phi(\cdot)$ is the homeomorphism defined by 
$\phi(s) = (\rho(s, s_1), \rho(s, s_2), \dots) \in [0,1]^{\infty}$, and 
$\{s_i\}$ and $\rho$ is a countable dense subset and metric of the Polish space
 respectively. A sufficient condition for the above  
is $h'$ to be uniformly continuous \cite[Ex:13, p.~99]{Rudin}.  
However, this is hard to verify.  
This is the main motivation for us to take the range of the Markov process
as compact for our problem. However, there are other reasons for taking compact state space which will be clear 
in the proofs of this section and the next. 
\end{remark}

\item $h :  \mathbb{R}^{d+k} \times S^{(1)} \to \mathbb{R}^d$ is  
jointly continuous as well as Lipschitz in its first two arguments uniformly w.r.t the third. The latter condition means that
\begin{equation}
\forall z^{(1)} \in S^{(1)}, \|h(\theta, w, z^{(1)}) - h(\theta', w', z^{(1)})\| \leq L^{(1)}(\|\theta-\theta'\| + \|w - w'\|).\nonumber
\end{equation}
Same thing is also true for $g$ where the Lipschitz constant is $L^{(2)}$.
Note that the Lipschitz constant $L^{(i)}$ does not depend on $z^{(i)}$ for $i=1,2$.
\begin{remark}
We later relax the uniformity of the Lipschitz constant w.r.t the Markov process state space by putting 
suitable moment assumptions on Markov process.  
\end{remark}

\item $\{M^{(i)}_n\}, i=1, 2$ are martingale difference sequences
w.r.t increasing $\sigma$-fields
\begin{equation}
\mathcal{F}_n = \sigma(\theta_m, w_m, M^{(i)}_{m}, Z^{(i)}_m, m \leq n, i = 1, 2), n \geq 0,\nonumber 
\end{equation}
satisfying 
\begin{equation}
E[\|M^{(i)}_{n+1}\|^2|\mathcal{F}_n] \leq K(1 + \|\theta_n\|^2 + \|w_n\|^2), i = 1, 2,\nonumber 
\end{equation}
for $n \geq 0$ and a given constant $K>0$.
\item The stepsizes $\{a(n)\}, \{b(n)\}$ are positive scalars satisfying
\begin{equation}
\sum_n a(n) = \sum_n b(n) = \infty, \sum_{n}(a(n)^2 + b(n)^2) < \infty, \frac{a(n)}{b(n)} \to 0.\nonumber 
\end{equation}
Moreover, $a(n), b(n), 
n \geq 0$ are non-increasing. 

Before stating the assumption on the transition kernel $p^{(i)}, i=1, 2$ we need
to define the metric in the space of probability measures $\mathcal{P}(S)$. Here we mention the definitions
and main theorems on the spaces of probability measures that we use in our proofs 
(details can be found in Chapter 2 of \cite{borkar2}). 
We denote the metric by $d$ and is defined as
\begin{equation}
d(\mu, \nu) = \sum_{j} 2^{-j}|\int f_j d\mu - \int f_j d\nu|, \mu, \nu \in \mathcal{P}(S),\nonumber 
\end{equation}
where $\{f_j\}$ are countable dense in the unit ball of $C(S)$.    
Then the following are equivalent: 
\begin{enumerate}[label=(\roman*)]
 \item $d(\mu_n, \mu) \to 0,$
 \item for all bounded  $f$ in $C(S)$, 
\begin{equation}
\int_{S} fd\mu_n \to \int_{S} f d\mu,  
\end{equation} 
\item $\forall f$ bounded and uniformly continuous, 
\begin{equation}
\int_{S} fd\mu_n \to \int_{S} fd\mu.\nonumber 
\end{equation} 
\end{enumerate}
Hence we see that $d(\mu_n, \mu) \to 0$ iff 
$\int_{S} f_jd\mu_n \to \int_{S} f_jd\mu$ for all $j$. Any such sequence of functions $\{f_j\}$ is called a convergence
determining class in $\mathcal{P}(S)$. 
Sometimes we also denote $d(\mu_n, \mu) \to 0$ using the notation $\mu_n \Rightarrow \mu$.  
\\ \indent
Also, we recall the characterization of relative compactness in $\mathcal{P}(S)$ that relies on the 
definition of tightness. $\mathcal{A}\subset\mathcal{P}(S)$ is 
a tight set if for any $\epsilon >0$, there exists a compact $K_\epsilon \subset S$ such that $\mu(K_\epsilon) > 1-\epsilon$ for all $\mu \in \mathcal{A}$. 
Clearly, if $S$ is compact then any $\mathcal{A}\subset\mathcal{P}(S)$ is tight. 
By Prohorov's theorem, $\mathcal{A}\subset\mathcal{P}(S)$ is relatively compact if and only if it is tight.
\\ \indent
With the above definitions we assume the following:
\item The map $S^{(i)} \times U^{(i)} \times \mathbb{R}^{d+k} \ni (z^{(i)}, a^{(i)}, \theta, w)  
\to p^{(i)}(dy|z^{(i)}, a^{(i)}, \theta,w) \in \mathcal{P}(S^{(i)})$ is continuous. 
\textbf{(A5)} is much simpler than the assumptions on $n$-step 
transition kernel in \cite[Part II,Chap. 2, Theorem 6]{benveniste}.

Additionally, unlike \cite[p~140 line 13]{borkar}, we do not require the extra assumption of the 
continuity in the $\theta$ variable of $p(dy|z,a,\theta)$ to be uniform on compacts w.r.t 
the other variables. This means: $\forall \epsilon >0, K \subset S \times U ~~ \mbox{and $K$ compact,} ~~\exists \delta_K$ (i.e. 
this same $\delta$ works for all $(z,a)\in K$) s.t. if 
$\|\theta-\theta'\| < \delta_K$, then $d(p(dy|z,a,\theta), p(dy|z,a,\theta')) < \epsilon ~~\forall (z,a) \in K$. 
Here $d(.,.)$ is the metric in the space of probability measures \cite[Chap. 2]{borkar2}

The above is much stronger than the combination of the state space being compact and continuity of the transition kernel. 
We only use uniform continuity 
of functions like $g'|_A$ where 
$g'(z,a,\theta) = \int_{S} f(y)p(dy|z,a,\theta)$ and $A=S \times U \times (\{\theta_n\} \bigcup \theta)$ or similar functions in our proofs. 
 

For $\theta_n = \theta, w_n = w$ for all $n$ with a fixed deterministic 
$(\theta, w) \in \mathbb{R}^{d+k}$ and under any stationary randomized control $\pi^{(i)}$, 
it follows from Lemma 2.1 and Lemma 3.1 of \cite{borkar}
that 
the time-homogeneous
Markov processes $Z^{(i)}_n, i=1, 2$ have (possibly non-unique) invariant 
distributions $\eta^{(i)}_{\theta,w,\pi^{(i)}}, i = 1, 2$. 
. 

Now, it is well-known that the ergodic occupation measure defined as 
\begin{equation}
\Psi^{(i)}_{\theta, w, \pi^{(i)}}(dz, da):= \eta^{(i)}_{\theta,w,\pi^{(i)}}(dz) \pi^{(i)}(z, da) \in \mathcal{P}(S^{(i)} \times U^{(i)}) \nonumber
\end{equation}satisfies the following: 
\begin{equation}
\label{eqn3} 
\int_{S^{(i)}}f^{(i)}(z) \Psi^{(i)}_{\theta, w, \pi^{(i)}}(dz, U^{(i)}) = \int_{S^{(i)}\times U^{(i)}}\int_{S^{(i)}}f^{(i)}(y)p^{(i)}(dy|z,a, \theta, w)\Psi^{(i)}_{\theta, w, \pi^{(i)}}(dz, da)
\end{equation}
for $f^{(i)}:  S^{(i)} \to \mathcal{R} \in C_b(S^{(i)})$.
\end{enumerate}
We denote by $D^{(i)}(\theta,w), i=1,2$ the set of all such ergodic occupation measures for the prescribed $\theta$ and $w$. In the following we prove
some properties of the map $(\theta,w) \to D^{(i)}(\theta,w)$.

\begin{lemma}
\label{lemma1}
For all $(\theta,w)$, $D^{(i)}(\theta,w)$ is convex and compact.
\end{lemma}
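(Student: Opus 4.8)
The plan is to first re-characterize $D^{(i)}(\theta,w)$ as the solution set of the \emph{linear} invariance constraint (\ref{eqn3}), and then read off convexity from linearity of that constraint and compactness from closedness inside the ambient compact space $\mathcal{P}(S^{(i)} \times U^{(i)})$. Fix $(\theta,w)$ and $i$. I would argue that
\[
D^{(i)}(\theta,w) = \{\Psi \in \mathcal{P}(S^{(i)} \times U^{(i)}) : \Psi \text{ satisfies } (\ref{eqn3}) \text{ for all } f^{(i)} \in C_b(S^{(i)})\}.
\]
The inclusion ``$\subseteq$'' is immediate from the definition of the ergodic occupation measures. For ``$\supseteq$'', given any $\Psi$ satisfying (\ref{eqn3}), since $S^{(i)}$ and $U^{(i)}$ are compact metric (hence Polish), I disintegrate $\Psi(dz,da) = \eta(dz)\,\pi(z,da)$, where $\eta$ is the $S^{(i)}$-marginal and $\pi(z,\cdot)$ is a regular conditional probability, i.e.\ a stationary randomized control. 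Then (\ref{eqn3}) states precisely that $\eta$ is invariant under the transition kernel induced by $\pi$, so $\Psi$ is an ergodic occupation measure. This reduction is the crux; everything after it is topological.

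Convexity is then immediate. Both sides of (\ref{eqn3}) are linear functionals of $\Psi$, and $\mathcal{P}(S^{(i)} \times U^{(i)})$ is convex, so if $\Psi_1,\Psi_2$ each satisfy (\ref{eqn3}) then so does $\lambda \Psi_1 + (1-\lambda)\Psi_2$ for $\lambda \in [0,1]$, while the convex combination is again a probability measure.

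For compactness, I would observe that since $S^{(i)} \times U^{(i)}$ is compact, every subset of $\mathcal{P}(S^{(i)} \times U^{(i)})$ is tight, so by Prohorov's theorem the whole space $\mathcal{P}(S^{(i)} \times U^{(i)})$ is compact (and metrizable by $d$). It therefore suffices to show $D^{(i)}(\theta,w)$ is closed. I take $\Psi_n \in D^{(i)}(\theta,w)$ with $\Psi_n \Rightarrow \Psi$ and pass to the limit in (\ref{eqn3}). For the left-hand side, $(z,a) \mapsto f^{(i)}(z)$ belongs to $C_b(S^{(i)} \times U^{(i)})$, so the left integrals converge by weak convergence. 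For the right-hand side I set $g(z,a) := \int_{S^{(i)}} f^{(i)}(y)\,p^{(i)}(dy\,|\,z,a,\theta,w)$; this is bounded by $\|f^{(i)}\|_\infty$, and by \textbf{(A5)} the map $(z,a) \mapsto p^{(i)}(\cdot\,|\,z,a,\theta,w)$ is continuous into $\mathcal{P}(S^{(i)})$, whence $g \in C_b(S^{(i)} \times U^{(i)})$ and the right integrals converge too. Hence $\Psi$ satisfies (\ref{eqn3}), so $\Psi \in D^{(i)}(\theta,w)$, proving closedness and therefore compactness.

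The main obstacle is the continuity of $g$, which is exactly where \textbf{(A5)} is used: without joint continuity of the kernel, the right-hand side of (\ref{eqn3}) need not be weakly continuous and closedness could fail. A minor technical point is that it suffices to verify (\ref{eqn3}) on a countable convergence-determining class $\{f_j\} \subset C_b(S^{(i)})$, which lets me pass to the limit for all test functions simultaneously by a diagonal argument.
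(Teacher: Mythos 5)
Your proof is correct and takes essentially the same route as the paper's: convexity follows from linearity of (\ref{eqn3}) in $\Psi$, and compactness follows by combining closedness under weak convergence (where \textbf{(A5)} gives continuity of $(z,a)\mapsto\int_{S^{(i)}}f^{(i)}(y)\,p^{(i)}(dy|z,a,\theta,w)$, so both sides of (\ref{eqn3}) pass to the limit) with compactness of the ambient space $\mathcal{P}(S^{(i)}\times U^{(i)})$, which \textbf{(A1)} and Prohorov's theorem provide. Your explicit disintegration argument identifying $D^{(i)}(\theta,w)$ with the full solution set of (\ref{eqn3}) makes rigorous a step the paper leaves implicit (the same disintegration device appears in the paper only later, in the proof of Lemma~\ref{eps}), and your closing remark about a countable convergence-determining class is harmless but unnecessary here, since the limit measure is fixed and (\ref{eqn3}) can be verified for each $f^{(i)}$ separately.
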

\begin{proof}
The proof trivially follows from \textbf{(A1)}, \textbf{(A5)} and (\ref{eqn3}).
\end{proof}

\begin{lemma}
\label{upsem}
The map $(\theta,w) \to D^{(i)}(\theta,w)$ is upper-semi-continuous.
\end{lemma}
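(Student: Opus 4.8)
The plan is to exploit the fact that the range $\mathcal{P}(S^{(i)}\times U^{(i)})$ is compact (by Prohorov's theorem, since $S^{(i)}\times U^{(i)}$ is compact), so that upper-semi-continuity of the set-valued map $(\theta,w)\mapsto D^{(i)}(\theta,w)$ is equivalent to its graph being closed. Concretely, it suffices to show: whenever $(\theta_n,w_n)\to(\theta,w)$ and $\Psi_n\in D^{(i)}(\theta_n,w_n)$ with $\Psi_n\Rightarrow\Psi$ in $\mathcal{P}(S^{(i)}\times U^{(i)})$, the limit $\Psi$ lies in $D^{(i)}(\theta,w)$; compactness of the range guarantees such subsequential limits exist, and compactness of $D^{(i)}(\theta,w)$ itself (Lemma~\ref{lemma1}) then upgrades the closed graph to genuine upper-semi-continuity. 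Since membership in $D^{(i)}(\theta,w)$ is equivalent to satisfying the invariance identity (\ref{eqn3}) at the parameters $(\theta,w)$ (the disintegration $\Psi(dz,da)=\Psi(dz,U^{(i)})\,\pi(z,da)$ recovering the stationary randomized control automatically), the entire task reduces to passing to the limit in (\ref{eqn3}).

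I would fix an arbitrary $f\in C_b(S^{(i)})$ and write the defining identity for each $\Psi_n$ as
\begin{equation}
\int_{S^{(i)}}f(z)\,\Psi_n(dz,U^{(i)})=\int_{S^{(i)}\times U^{(i)}}G(z,a,\theta_n,w_n)\,\Psi_n(dz,da),\nonumber
\end{equation}
where $G(z,a,\theta,w):=\int_{S^{(i)}}f(y)\,p^{(i)}(dy\,|\,z,a,\theta,w)$. The left-hand side converges to $\int f(z)\,\Psi(dz,U^{(i)})$, because projection onto the $S^{(i)}$-marginal is weakly continuous and $f$ is bounded continuous. For the right-hand side I would use the splitting
\begin{equation}
\int G(\cdot,\theta_n,w_n)\,d\Psi_n-\int G(\cdot,\theta,w)\,d\Psi=\int\big[G(\cdot,\theta_n,w_n)-G(\cdot,\theta,w)\big]\,d\Psi_n+\int G(\cdot,\theta,w)\,d(\Psi_n-\Psi).\nonumber
\end{equation}
By \textbf{(A5)} together with the equivalent characterizations of weak convergence recalled prior to \textbf{(A5)}, $G$ is jointly continuous and bounded, so $G(\cdot,\theta,w)\in C_b(S^{(i)}\times U^{(i)})$ and the second term vanishes as $\Psi_n\Rightarrow\Psi$. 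Letting $n\to\infty$ then yields (\ref{eqn3}) for $\Psi$ at $(\theta,w)$, which is exactly what I need.

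The crux is the first term, bounded by $\sup_{(z,a)\in S^{(i)}\times U^{(i)}}|G(z,a,\theta_n,w_n)-G(z,a,\theta,w)|$, and I expect this uniform estimate to be the main obstacle. Here the compactness of the state space pays off decisively: since $(\theta_n,w_n)$ eventually lies in some compact neighbourhood $\bar K$ of $(\theta,w)$, joint continuity of $G$ on the compact set $S^{(i)}\times U^{(i)}\times\bar K$ upgrades \emph{automatically} to uniform continuity there, forcing this supremum to tend to $0$. This is precisely the point at which, unlike the Polish-space treatment of \cite{borkar} where uniform continuity in $\theta$ must be imposed as a separate hypothesis, no assumption beyond \textbf{(A5)} is required — the compactness assumed in \textbf{(A1)} does the work for free.
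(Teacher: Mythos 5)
Your proposal is correct and follows essentially the same route as the paper's proof: both verify the closed-graph property by passing to the limit in (\ref{eqn3}), splitting the error into a term handled by weak convergence $\Psi_n \Rightarrow \Psi$ and a term handled by uniform convergence of $G(\cdot,\cdot,\theta_n,w_n) \to G(\cdot,\cdot,\theta,w)$, the latter obtained from uniform continuity of the jointly continuous $G$ on a compact set built from \textbf{(A1)} and \textbf{(A5)}. The only cosmetic difference is that you deduce the uniform estimate directly from uniform continuity on a compact neighbourhood of $(\theta,w)$, whereas the paper routes it through equicontinuity of $\{g^{(i)}_n\}$ plus pointwise convergence; these are the same mechanism.
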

\begin{proof}
Let $\theta_n \to \theta, w_n \to w$ and $\Psi^{(i)}_n \Rightarrow \Psi^{(i)} \in \mathcal{P}(S^{(i)} \times U^{(i)})$ such that  $\Psi^{(i)}_n \in D^{(i)}(\theta_n, w_n)$. Let 
$g^{(i)}_n(z,a) = \int_{S^{(i)}}f^{(i)}(y)p^{(i)}(dy|z,a, \theta_n, w_n)$ and  $g^{(i)}(z,a) = \int_{S^{(i)}}f^{(i)}(y)p^{(i)}(dy|z,a, \theta, w).$
From (\ref{eqn3}) we get that
\begin{align} 
\int_{S^{(i)}}f^{(i)}(z) \Psi^{(i)}(dz,U^{(i)}) &= \lim_{n\to \infty}\int_{S^{(i)}}f^{(i)}(z) \Psi^{(i)}_n(dz,U^{(i)})\nonumber\\ 
                                        &= \lim_{n\to \infty}\int_{S^{(i)}\times U^{(i)}}\int_{S^{(i)}}f^{(i)}(y)p^{(i)}(dy|z,a, \theta_n, w_n)\Psi^{(i)}_n(dz,da)\nonumber\\        
                                        &= \lim_{n\to \infty}\int_{S^{(i)}\times U^{(i)}}g^{(i)}_n(z,a)\Psi^{(i)}_n(dz,da).\nonumber
\end{align}
Now, $p^{(i)}(dy|z,a, \theta_n, w_n) \Rightarrow p^{(i)}(dy|z,a, \theta, w)$ 
implies $g^{(i)}_n(\cdot, \cdot) \to  g^{(i)}(\cdot, \cdot)$ pointwise. We prove that the 
convergence is indeed uniform. It is enough to prove that this sequence 
of functions is equicontinuous. Then along with pointwise convergence
it will imply uniform convergence on compacts \cite[p.~168, Ex: 16]{Rudin}. This is also 
a place where \textbf{(A1)} is used.  
\\ \indent
Define $g' : S^{(i)} \times U^{(i)} \times \mathbb{R}^{d+k} \to \mathbb{R}$ by $g'(z',a', \theta',w')  =  \int_{S^{(i)}}f^{(i)}(y)p^{(i)}(dy|z,a', \theta', w')$. 
Then $g'$ is continuous. Let $A= S^{(i)} \times U^{(i)} \times (\{\theta_n\} \cup \theta) \times (\{w_n\} \cup w)$.
So, $A$ is compact and $g'|_{A}$ is uniformly continuous. 
This implies that for all $\epsilon >0$, there exists $\delta >0 $ 
such that if $\rho'(s_1, s_2) < \delta, \mu'(a_1, a_2) < \delta, \|\theta_1-\theta_2\| < \delta,   
\|w_1-w_2\| < \delta,$ then $|g'(s_1, a_1, \theta_1, w_1) - g'(s_2, a_2, \theta_2, w_2)|< \epsilon$  where $s_1, s_2 \in S^{(i)}, 
a_1, a_2 \in U^{(i)},
 \theta_1, \theta_2 \in (\{\theta_n\} \cup \theta), w_1, w_2 \in(\{w_n\} \cup w)$
and $\rho'$ and $\mu'$ denote the metrics in $S^{(i)}$ and $U^{(i)}$ respectively. 
Now use this same $\delta$ for the $\{g^{(i)}_n(\cdot, \cdot)\}$ to get for all $n$ the following for $\rho'(z_1, z_2) < \delta, 
\mu'(a_1, a_2) < \delta$: 
\begin{align}
|g^{(i)}_n(z_1,a_1) - g^{(i)}_n(z_2,a_2)| = |g'(z_1,a_1, \theta_n,w_n) - g'(z_2,a_2, \theta_n,w_n)| < \epsilon.\nonumber 
\end{align}
Hence $\{g^{(i)}_n(\cdot, \cdot)\}$ is equicontinuous. For large $n$, $\sup_{(z,a) \in S^{(i)} \times U^{(i)}}|g^{(i)}_n(z,a) - g^{(i)}(z,a)| < \epsilon/2$ 
because of uniform convergence of $\{g^{(i)}_n(\cdot, \cdot)\}$, hence $\int_{S^{(i)}\times U^{(i)}}|g^{(i)}_n(z,a) - g^{(i)}(z,a)|\Psi^{(i)}_n(dz,da) < \epsilon/2$. 
Now (for $n$ large),
\begin{align}
\label{limit} 
&|\int_{S^{(i)}\times U^{(i)}}g^{(i)}_n(z,a)\Psi^{(i)}_n(dz,da) - \int_{S^{(i)}\times U^{(i)}}g^{(i)}(z,a)\Psi^{(i)}(dz,da)|\nonumber\\ 
&= |\int_{S^{(i)}\times U^{(i)}}[g^{(i)}_n(z,a) - g^{(i)}(z,a)] \Psi^{(i)}_n(dz,da) + \int_{S^{(i)}\times U^{(i)}}g^{(i)}(z,a)\Psi^{(i)}_n(dz,da) \nonumber\\ 
&-\int_{S^{(i)}\times U^{(i)}}g^{(i)}(z,a)\Psi^{(i)}(dz,da)|\nonumber\\  
&< \epsilon/2 + |\int_{S^{(i)}\times U^{(i)}}g^{(i)}(z,a)\Psi^{(i)}_n(dz,da) - \int_{S^{(i)}\times U^{(i)}}g^{(i)}(z,a)\Psi^{(i)}(dz,da)|\nonumber\\
&< \epsilon.
\end{align}
 The last inequality follows the fact that $\Psi^{(i)}_n \Rightarrow \Psi^{(i)}$. Hence from  (\ref{limit}) we get,
\begin{align} 
\int_{S^{(i)}}f^{(i)}(z) \Psi^{(i)}(dz, U^{(i)}) = \int_{S^{(i)}\times U^{(i)}}\int_{S^{(i)}}f^{(i)}(y)p^{(i)}(dy|z,a, \theta, w)\Psi^{(i)}(dz,da)\nonumber
\end{align} 
proving that the map is upper-semi-continuous.   
\end{proof}

Define $\tilde{g}(\theta, w, \nu) = \int g(\theta,w,z)\nu(dz, U^{(2)})$ for $\nu \in P(S^{(2)}\times U^{(2)})$ 
and $\hat{g}_\theta(w) = \{\tilde{g}(\theta, w, \nu):  \nu \in D^{(2)}(\theta, w)\}.$
\begin{lemma}
\label{march}
$\forall \theta \in \mathbb{R}^d$, $\hat{g}_\theta(\cdot)$ is a Marchaud map. 
\end{lemma}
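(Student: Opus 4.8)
The plan is to verify the three defining properties of a Marchaud map for the set-valued map $w \mapsto \hat{g}_\theta(w)$, holding $\theta$ fixed throughout. The key observation is that $\hat{g}_\theta(w)$ is exactly the image of $D^{(2)}(\theta,w)$ under the affine map $\nu \mapsto \tilde{g}(\theta,w,\nu) = \int g(\theta,w,z)\,\nu(dz,U^{(2)})$. Since $S^{(2)}$ is compact by \textbf{(A1)} and $g$ is jointly continuous by \textbf{(A2)}, the function $g(\theta,w,\cdot)$ is bounded and continuous on $S^{(2)}$, so $\tilde{g}(\theta,w,\cdot)$ is a bounded linear functional that is continuous for weak convergence of measures.

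Non-emptiness, compactness, and convexity then follow readily. By Lemma~\ref{lemma1}, $D^{(2)}(\theta,w)$ is non-empty, convex, and compact; the affine image of a convex set is convex and the continuous image of a compact set is compact, so $\hat{g}_\theta(w)$ is a non-empty compact convex subset of $\mathbb{R}^k$. For the linear growth bound, I would use the uniform Lipschitz property of \textbf{(A2)} to write $\|g(\theta,w,z)\| \leq \|g(\theta,0,z)\| + L^{(2)}\|w\|$, where $C_\theta := \sup_{z}\|g(\theta,0,z)\| < \infty$ by continuity of $g$ on the compact $S^{(2)}$. Since $\|\tilde{g}(\theta,w,\nu)\| \leq \sup_z\|g(\theta,w,z)\|$ for any probability measure $\nu$, this gives $\sup_{v\in\hat{g}_\theta(w)}\|v\| \leq C_\theta + L^{(2)}\|w\| \leq c(1+\|w\|)$ with $c = \max(C_\theta, L^{(2)})$, which suffices because $\theta$ is fixed.

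The main work, and the expected obstacle, is upper semicontinuity. Suppose $w_n \to w$, $v_n \in \hat{g}_\theta(w_n)$, and $v_n \to v$; I must show $v \in \hat{g}_\theta(w)$. Choose $\nu_n \in D^{(2)}(\theta,w_n)$ with $v_n = \tilde{g}(\theta,w_n,\nu_n)$. Since $S^{(2)}\times U^{(2)}$ is compact, $\mathcal{P}(S^{(2)}\times U^{(2)})$ is compact by Prohorov's theorem, so along a subsequence $\nu_{n_k} \Rightarrow \nu$. Applying Lemma~\ref{upsem} with the constant sequence $\theta$ and $w_{n_k}\to w$ then yields $\nu \in D^{(2)}(\theta,w)$.

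It remains to pass to the limit in $v_{n_k} = \int g(\theta,w_{n_k},z)\,\nu_{n_k}(dz,U^{(2)})$, and the delicate point is that both the integrand (through $w_{n_k}$) and the measure ($\nu_{n_k}$) vary simultaneously. I would split the difference into two terms: the integrand error is controlled uniformly because \textbf{(A2)} gives $\sup_z\|g(\theta,w_{n_k},z)-g(\theta,w,z)\| \leq L^{(2)}\|w_{n_k}-w\| \to 0$, while the remaining term $\int g(\theta,w,z)\,\nu_{n_k}(dz,U^{(2)}) \to \int g(\theta,w,z)\,\nu(dz,U^{(2)})$ since $g(\theta,w,\cdot)$ is bounded continuous and the $S^{(2)}$-marginals of $\nu_{n_k}$ converge weakly to that of $\nu$. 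Hence $v_{n_k} \to \tilde{g}(\theta,w,\nu)$, so $v = \tilde{g}(\theta,w,\nu) \in \hat{g}_\theta(w)$, establishing upper semicontinuity. I expect this step to be notably simpler than the equicontinuity argument needed in Lemma~\ref{upsem}, precisely because here the integrand depends on $z$ directly and inherits a uniform modulus of continuity in $w$ from \textbf{(A2)}, rather than through the transition kernel.
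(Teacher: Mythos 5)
Your proposal is correct and follows the same overall structure as the paper's proof: convexity and compactness of $\hat{g}_\theta(w)$ are inherited from $D^{(2)}(\theta,w)$ via Lemma~\ref{lemma1}, the linear growth bound comes from the uniform Lipschitz property in \textbf{(A2)} applied as $\|g(\theta,w,z)\| \leq \|g(\theta,0,z)\| + L^{(2)}\|w\|$, and upper semicontinuity is obtained by extracting a weakly convergent subsequence $\nu_{n_k} \Rightarrow \nu$ (compactness of $\mathcal{P}(S^{(2)}\times U^{(2)})$ via Prohorov) and invoking Lemma~\ref{upsem} to conclude $\nu \in D^{(2)}(\theta,w)$.

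The one place you genuinely diverge is the final limit passage $v_{n_k} \to \tilde{g}(\theta,w,\nu)$. The paper disposes of this by saying "using the arguments similar to the proof of Lemma~\ref{upsem}," i.e., it re-runs the equicontinuity-plus-uniform-convergence machinery used there. You instead give a direct two-term splitting: the integrand error is killed uniformly by $\sup_z\|g(\theta,w_{n_k},z)-g(\theta,w,z)\| \leq L^{(2)}\|w_{n_k}-w\|$, and the remaining term converges because $g(\theta,w,\cdot)$ is a fixed bounded continuous function integrated against weakly convergent measures. This is a real simplification and your diagnosis of why it works is exactly right: here the dependence on $w$ enters the integrand directly and comes with a uniform (in $z$) Lipschitz modulus from \textbf{(A2)}, whereas in Lemma~\ref{upsem} the dependence on $(\theta_n,w_n)$ enters through the transition kernel, where no such uniform modulus is available and one must manufacture it via uniform continuity on a compact set and equicontinuity. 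Your version is self-contained and avoids importing that heavier argument; the paper's version economizes on writing by reusing an already-proved technique. Both are valid.
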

\begin{proof}
\begin{enumerate}[label=(\roman*)]
  \item Convexity and compactness follow trivially from the same for the map $(\theta,w) \to D^{(2)}(\theta, w)$.
  \item \begin{align}
         &\|\tilde{g}(\theta, w, \nu)\|\nonumber\\
         &= \|\int g(\theta,w,z)\nu(dz,U^{(2)})\|\nonumber\\
         &\leq \int\|g(\theta,w,z)\|\nu(dz,U^{(2)})\nonumber\\
         &\leq \int L^{(2)}(\|w\|+\|g(\theta,0,z)\|)\nu(dz,U^{(2)})\nonumber\\
         &\leq \max (L^{(2)}, L^{(2)}\int \|g(\theta,0,z)\| \nu(dz,U^{(2)}))(1+\|w\|).\nonumber
        \end{align} Clearly, $K(\theta) = \max (L^{(2)}, L^{(2)}\int \|g(\theta,0,z)\| \nu(dz,U^{(2)})) > 0$. The 
above is true for all $\tilde{g}(\theta,w,\nu) \in \hat{g}_\theta(w), \nu \in D^{(2)}(\theta, w)$.
   \item Let $w_n \to w, \tilde{g}(\theta, w_n, \nu_n) \to m, \nu_n \in D^{(2)}(\theta, w_n)$. 
Now, $\{\nu_n\}$ is tight, hence has a convergent
sub-sequence $\{\nu_{n_k}\}$ with $\nu$ being the limit. Then using the arguments similar to the proof of Lemma \ref{upsem} 
one can show that  $m=\tilde{g}(\theta, w, \nu)$
whereas $\nu \in D^{(2)}(\theta, w)$ follows
directly from the upper-semi-continuity of the map $w \to D^{(2)}(\theta,w)$ for all $\theta$. 
\end{enumerate}
\end{proof}

\subsection{Other assumptions needed for two time-scale convergence analysis}
\label{assump}

We now list the other assumptions required for two time-scale convergence analysis:  
\begin{enumerate}[label=\textbf{(A\arabic*)}]
\setcounter{enumi}{5}
\item for all $\theta \in \mathbb{R}^d$, the differential inclusion
\begin{equation}
\label{fast}
\dot{w}(t) \in \hat{g}(\theta,w(t)) 
\end{equation}
has a singleton global attractor $\lambda(\theta)$ 
where $\lambda :  \mathbb{R}^d \to \mathbb{R}^k$ is a Lipschitz map with constant $K$.
Additionally,  there exists a continuous function $V: \mathbb{R}^{d+k} \to [0,\infty)$ satisfying the hypothesis of 
Corollary 3.28 of \cite{benaim} with $\Lambda = \{(\theta,\lambda(\theta)):\theta \in \mathbb{R}^d\}$.
This is the most important
assumption as it links the fast and slow iterates.

\item Stability of the iterates: $\sup_n(\|\theta_n\| + \|w_n\|) < \infty$ a.s. 
\end{enumerate}

Let $\bar{\theta}(\cdot), t\geq 0$ be the continuous, piecewise linear 
trajectory defined by $\bar{\theta}(t(n))=\theta_n, n\geq 0$, with 
linear interpolation on each interval $[t(n), t(n+1))$, i.e.,  
\begin{equation}
\bar{\theta}(t) = \theta_n + (\theta_{n+1} - \theta_n)\frac{t-t(n)}{t(n+1)-t(n)}, t \in [t(n), t(n+1)).\nonumber 
\end{equation}
The following theorem is our main result:
\begin{theorem}[Slower timescale result]Under assumptions \textbf{(A1)-(A7)}, 
\label{thm}
\begin{equation}
(\theta_n, w_n) \to \cup_{\theta^* \in A_0}(\theta^*, \lambda(\theta^*)) \mbox{a.s. as $n \to \infty$.},\nonumber 
\end{equation}
\end{theorem}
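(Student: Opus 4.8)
The plan is to decouple the two timescales in the standard manner: first prove that the faster iterate $w_n$ asymptotically tracks the global attractor $\lambda(\theta_n)$ of the frozen-$\theta$ differential inclusion \eqref{fast}, and then reduce the slower recursion to an effective differential inclusion in $\theta$ alone whose internally chain transitive sets constitute $A_0$. At each stage the controlled Markov noise is absorbed through the ergodic-occupation-measure machinery established in Lemmas \ref{lemma1}--\ref{march}.

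First I would study the faster recursion \eqref{eqn2} on its intrinsic timescale $t(n)=\sum_{k=0}^{n-1} b(k)$. Since $a(n)/b(n)\to 0$ by \textbf{(A4)}, the slow iterate $\theta_n$ is quasi-static on this scale, so the goal is to show that the interpolated $w$-trajectory is an asymptotic pseudotrajectory (in the sense of \cite{benaim}) of $\dot{w}(t)\in\hat{g}(\theta,w(t))$ for the frozen $\theta$. Two inputs are needed. The martingale noise is asymptotically negligible: the quadratic bound in \textbf{(A3)}, the stability \textbf{(A7)}, and $\sum_n b(n)^2<\infty$ make the partial sums $\sum_{k<n} b(k)M^{(2)}_{k+1}$ a convergent martingale. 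The Markov term $g(\theta_n,w_n,Z^{(2)}_n)$ must be shown to average, along the trajectory, to some $\tilde{g}(\theta,w,\nu)$ with $\nu\in D^{(2)}(\theta,w)$; this is done by tracking the empirical occupation measures of $\{Z^{(2)}_n\}$ and identifying their limit points as ergodic occupation measures using \textbf{(A1)}, \textbf{(A5)}, the defining identity \eqref{eqn3}, and the upper-semicontinuity of Lemma \ref{upsem}. With these in hand, \textbf{(A6)} — the singleton global attractor $\lambda(\theta)$ together with the Lyapunov function $V$ satisfying the hypotheses of Corollary 3.28 of \cite{benaim} — yields the tracking conclusion $\|w_n-\lambda(\theta_n)\|\to 0$ a.s.

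Next I would analyze the slower recursion \eqref{eqn1}. Writing $w_n=\lambda(\theta_n)+\varepsilon_n$ with $\varepsilon_n\to 0$ and using the uniform Lipschitz property of $h$ from \textbf{(A2)} and of $\lambda$ from \textbf{(A6)}, the recursion reduces, up to an asymptotically vanishing perturbation, to a single-timescale stochastic approximation with controlled Markov noise driven by $h(\theta_n,\lambda(\theta_n),Z^{(1)}_n)$. Its limiting dynamics is the differential inclusion $\dot{\theta}(t)\in\hat{h}(\theta(t))$ with $\hat{h}(\theta)=\{\int h(\theta,\lambda(\theta),z)\,\nu(dz,U^{(1)}):\nu\in D^{(1)}(\theta,\lambda(\theta))\}$, which is a Marchaud map by the argument of Lemma \ref{march}. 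Interpolating $\theta_n$ on the scale $\sum_{k<n}a(k)$, showing this trajectory is an asymptotic pseudotrajectory of $\dot{\theta}\in\hat{h}(\theta)$, and invoking the limit-set theorem for differential inclusions gives $\theta_n\to A_0$ a.s., where $A_0$ is an internally chain transitive set of that inclusion. Combining with the tracking lemma and the continuity of $\lambda$ then delivers $(\theta_n,w_n)\to\cup_{\theta^*\in A_0}(\theta^*,\lambda(\theta^*))$ a.s.

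The main obstacle I anticipate is the tracking lemma of the second step, specifically controlling the Markov-noise discrepancy uniformly over each time window while $\theta$ drifts. Because the Lipschitz constant of the vector field is permitted to depend on the compact state space of the Markov process, the usual pathwise Gronwall estimate does not apply directly; following the approach flagged in the introduction, I would instead establish the vanishing of the relevant discrepancy by a Borel-Cantelli argument, verifying $\sum_n P(\,\sup_{\text{window}}|\cdot|>\epsilon\,)<\infty$ for every $\epsilon>0$. Supplying the requisite concentration for the occupation-measure-averaged term — through the continuity of the kernel map \textbf{(A5)} and the equicontinuity estimates already used in Lemma \ref{upsem} — is the technical heart of the proof.
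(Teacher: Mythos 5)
Your proposal assembles the same ingredients as the paper and in the same order: ergodic-occupation-measure identification of the Markov noise limits, the Lyapunov function of \textbf{(A6)} via Corollary 3.28 of \cite{benaim} for the fast step, and reduction of the slow recursion, writing $w_n=\lambda(\theta_n)+\varepsilon_n$ with $\varepsilon_n\to 0$, to a single-timescale controlled-Markov-noise recursion whose limiting inclusion is $\dot{\theta}\in\hat{h}(\theta)$ — this is exactly what Lemmas \ref{track_slow}, \ref{ode} and \ref{slowmu} do. One framing in your fast step, however, would fail if executed literally: you propose to show the interpolated $w$-trajectory is an asymptotic pseudotrajectory of $\dot{w}\in\hat{g}(\theta,w)$ ``for the frozen $\theta$.'' There is no single frozen $\theta$, and the pointwise-in-$\theta$ global attractor property of \textbf{(A6)} does \emph{not} by itself give $\|w_n-\lambda(\theta_n)\|\to 0$; this is precisely the trap flagged in the paper's remark after Theorem \ref{thm}. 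The correct move, which the paper makes, is to rewrite the slow recursion on the fast timescale with vanishing error $\frac{a(n)}{b(n)}h(\theta_n,w_n,Z^{(1)}_n)$, couple $\alpha_n=(\theta_n,w_n)$, show the coupled iterate converges to an internally chain transitive set of the \emph{coupled} inclusion $\dot{w}\in\hat{g}(\theta(t),w(t))$, $\dot{\theta}=0$, and only then invoke the Lyapunov function $V$ with $\Lambda=\{(\theta,\lambda(\theta)):\theta\in\mathbb{R}^d\}$ to place that set inside the graph of $\lambda$. Since you do invoke $V$ and Corollary 3.28, you hold the right tool; just note it must be applied to the coupled system, since chain transitive sets of the coupled inclusion need not respect the per-$\theta$ attractors.

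A second, smaller calibration: the Borel--Cantelli machinery you anticipate for the tracking lemma is not needed under the hypotheses of this theorem. \textbf{(A2)} makes the Lipschitz constant of $h$ and $g$ uniform in the Markov state, so the standard Gronwall-based estimate (as in Lemma \ref{track_slow}, where the extra term $L^{(1)}T\sup_{k\geq 0}\|w_{n+k}-\lambda(\theta_{n+k})\|$ is absorbed into a quantity $K_{T,n}\to 0$ a.s.) suffices. The Borel--Cantelli and moment-condition route belongs to the relaxation \textbf{(A2)'} of Section \ref{relax}, where the Lipschitz constant is allowed to depend on the state; importing it here would add complexity without benefit. With the fast step recast on the coupled inclusion, your argument matches the paper's proof.
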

where $A_0 = \cap_{t\geq 0}\overline{\{\bar{\theta}(s): s\geq t\}}$ 
is almost everywhere an internally chain transitive set 
of the differential inclusion
\begin{equation}
\label{slower_di}
\dot{\theta}(t) \in \hat{h}(\theta(t)), 
\end{equation}
where $\hat{h}(\theta)=\{\tilde{h}(\theta,\lambda(\theta),\nu) :  \nu \in D^{(1)}(\theta, \lambda(\theta))\}$. 
We call (\ref{fast}) and (\ref{slower_di}) as the faster and slower d.i. 
to correspond with faster and slower recursions, respectively.
\begin{corollary}
\label{main_col}
Under the additional assumption that the inclusion 
\begin{equation}
\dot{\theta}(t)\in \hat{h}(\theta(t))), \nonumber 
\end{equation}
has a global attractor set $A_1$,  
\begin{equation}
(\theta_n, w_n) \to \cup_{\theta^* \in A_1}(\theta^*, \lambda(\theta^*)) \mbox{a.s. as $n \to \infty$.}\nonumber 
\end{equation} 
\end{corollary}
\begin{remark}
To prove the required convergence one has to prove that the internally chain 
transitive set mentioned there is a subset of 
the set $\{(\theta, \lambda(\theta)) :  \theta \in \mathbb{R}^d\}$ (this 
does not directly follow from the asymptotic stability of the 
faster d.i (\ref{fast}) for every $\theta$). For this reason we need 
$V$ which is Lyapunov function for the differential inclusion 
\begin{equation}
\dot{w}(t) \in \hat{g}(\theta(t),w(t)), \dot{\theta}(t) = 0.\nonumber  
\end{equation} with $\Lambda =\{(\theta, \lambda(\theta)), \theta \in \mathbb{R}^d\}$ and then apply Proposition 
3.27 or Corollary 3.28 of \cite{benaim}. 
\end{remark}
\begin{remark}
In case where the set $D^{(2)}(\theta,w)$ is singleton, we can relax \textbf{(A6)} to local attractors also. The relaxed 
assumption will be
\begin{enumerate}[label=\textbf{(A\arabic*)'}]
\setcounter{enumi}{5} 
\item The function $\hat{g}(\theta, w) = \int g(\theta, w, z)\Gamma^{(2)}_{\theta,w}(dz)$ is Lipschitz 
continuous where $\Gamma^{(2)}_{\theta,w}$ is the only element of $D^{(2)}(\theta,w)$. 
Further, for all $\theta \in \mathbb{R}^d$, the o.d.e
\begin{equation}
\label{cpledode}
\dot{w}(t) = \hat{g}(\theta, w(t))  
\end{equation}
has an asymptotically stable equilibrium $\lambda(\theta)$ with domain of attraction $G_\theta$ 
where $\lambda :  \mathbb{R}^d \to \mathbb{R}^k$ is a Lipschitz map with constant $K$.
Also, assume that $\bigcap_{\theta} G_\theta$ is non-empty.
Moreover, the function $V': G \to 
[0,\infty)$ defined by $V'(\theta,w) = V_\theta(w)$ is continuously differentiable where $V_\theta(\cdot)$ is the Lyapunov function 
(for definition see \cite[Chapter 11.2.3]{borkar1})
for the o.d.e. (\ref{cpledode}) with $\lambda(\theta)$ 
as its attractor, 
and $G=\bigcup_{\theta \in \mathbb{R}^d} \{\{\theta\} \times  G_\theta$\}. 
This extra condition is needed
so that the set graph($\lambda$):=$\{(\theta,\lambda(\theta)): \theta \in \mathbb{R}^d\}$ becomes an asymptotically stable set of the coupled o.d.e 
\begin{equation}
\dot{w}(t) = \hat{g}(\theta(t),w(t)), \dot{\theta}(t) = 0.\nonumber  
\end{equation} 
\end{enumerate}
Note that \textbf{(A6)'} allows multiple attractors (at least one of them 
have to be a point, others can be sets) for the faster o.d.e 
for every $\theta$.
 
Then the statement of Theorem \ref{thm} will be modified as in the following: 
\begin{theorem}[Slower timescale result when $\lambda(\theta)$ is a local attractor]Under assumptions \textbf{(A1)-(A5), (A6)'} and  \textbf{(A7)},  
on the event ``$\{w_n\}$ belongs to a compact subset $B$ (depending 
on the sample point) of $\bigcap_{\theta \in \mathbb R^d} G_\theta$ \textbf{eventually}", 
\label{thm_local}
\begin{equation}
(\theta_n, w_n) \to \cup_{\theta^* \in A_0}(\theta^*, \lambda(\theta^*)) \mbox{a.s. as $n \to \infty$.}\nonumber 
\end{equation}
\end{theorem}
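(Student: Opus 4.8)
The plan is to follow the architecture of the proof of Theorem~\ref{thm}, changing only the fast-timescale step where the global-attractor hypothesis \textbf{(A6)} is replaced by the local-attractor hypothesis \textbf{(A6)'}. Two things must be established: first, that on the prescribed event the fast iterate tracks its equilibrium, i.e. $\|w_n - \lambda(\theta_n)\| \to 0$ a.s.; and second, that once this holds the slow iterate $\{\theta_n\}$ becomes an asymptotic pseudotrajectory of the slower differential inclusion (\ref{slower_di}) and hence converges to an internally chain transitive set $A_0$. The second part is essentially the slow-timescale argument already used for Theorem~\ref{thm}, so the real work is in the first part.

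First I would view the coupled pair $(\theta_n,w_n)$ on the \emph{faster} timescale $\{b(n)\}$. Since $a(n)/b(n)\to 0$ the slow iterate is asymptotically frozen on this timescale, so $\theta_n$ contributes a vanishing drift; using the ergodic-occupation-measure / Poisson-equation handling of the controlled Markov noise $\{Z^{(2)}_n\}$ together with the martingale estimates of \textbf{(A3)} exactly as in Theorem~\ref{thm}, the interpolated trajectory of $(\theta_n,w_n)$ is an asymptotic pseudotrajectory of the coupled autonomous o.d.e. $\dot\theta(t)=0,\ \dot w(t)=\hat g(\theta(t),w(t))$. Because \textbf{(A6)'} assumes $D^{(2)}(\theta,w)$ is a singleton, this is a genuine o.d.e. rather than a set-valued inclusion and $\hat g$ is Lipschitz, which is exactly what lets me invoke the smooth-Lyapunov machinery below.

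Next I would localise to the event $E$ that $\{w_n\}$ eventually lies in a compact $B\subset\bigcap_\theta G_\theta$. On $E$, stability \textbf{(A7)} makes $\overline{\{\theta_n\}}$ compact, so the limit set $L$ of $(\theta_n,w_n)$ satisfies $L\subset \overline{\{\theta_n\}}\times \overline B$; and since $\overline B\subset\bigcap_\theta G_\theta$ every slice $\{\theta\}\times\overline B$ lies in $G=\bigcup_\theta\{\theta\}\times G_\theta$, the basin of attraction of $\mathrm{graph}(\lambda)=\{(\theta,\lambda(\theta))\}$. Thus $L$ is an internally chain transitive set of the coupled o.d.e. contained in the basin $G$. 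Now I would use the continuously differentiable Lyapunov function $V'(\theta,w)=V_\theta(w)$ supplied by \textbf{(A6)'}, which by construction makes $\mathrm{graph}(\lambda)$ an asymptotically stable set of the coupled o.d.e.: applying Proposition~3.27 / Corollary~3.28 of \cite{benaim} to the internally chain transitive set $L\subset G$ forces $L\subset\mathrm{graph}(\lambda)$, i.e. $\|w_n-\lambda(\theta_n)\|\to 0$ a.s. on $E$.

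With $w_n-\lambda(\theta_n)\to 0$ in hand I would substitute $\lambda(\theta_n)$ for $w_n$ in the slow recursion (\ref{eqn1}); the Lipschitz continuity of $\lambda$ from \textbf{(A6)'} controls the substitution error, and the controlled-Markov-noise analysis for $\{Z^{(1)}_n\}$ shows $\{\theta_n\}$ is an asymptotic pseudotrajectory of (\ref{slower_di}), whence $\theta_n\to A_0$ with $A_0$ internally chain transitive by the differential-inclusion limit theory of \cite{benaim}. Combining the two coordinates yields $(\theta_n,w_n)\to\cup_{\theta^*\in A_0}(\theta^*,\lambda(\theta^*))$ on $E$. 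The main obstacle is the confinement step: the event $E$ is sample-path dependent, so the argument that the pseudotrajectory limit set lands inside the basin $G$ and the subsequent Lyapunov-based squeezing onto $\mathrm{graph}(\lambda)$ must be carried out pathwise, and one must check carefully that $\overline B\subset\bigcap_\theta G_\theta$ genuinely keeps the whole bounded-$\theta$ slice $\overline{\{\theta_n\}}\times\overline B$ inside $G$, since otherwise the shadowed trajectory could drift toward a basin boundary and escape.
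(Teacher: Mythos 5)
Your proposal is correct, and its overall architecture (fast-timescale convergence to graph($\lambda$) on the prescribed event, followed by the slow-timescale tracking argument already used for Theorem \ref{thm}) matches the paper's. Where you genuinely differ is in the key fast-timescale step, which the paper isolates as Lemma \ref{fast_res2}: you conclude that the internally chain transitive limit set $L$ of the coupled iterate lies in graph($\lambda$) by applying Proposition 3.27 / Corollary 3.28 of \cite{benaim} with the Lyapunov function $V'$ localized to the open set $G$ — i.e., you reuse exactly the mechanism that assumption \textbf{(A6)} invokes in the global case. The paper instead argues by hand: using (i) continuity of the flow of the coupled o.d.e.\ (\ref{copuled_ode}), (ii) $\sup_n\|\theta_n\|=M_1<\infty$ from \textbf{(A7)}, and (iii) Lyapunov stability of graph($\lambda$) via $V'$, it extracts a uniform time $T_\epsilon$ by which every solution started in the compact set $\{\theta:\|\theta\|\leq M_1\}\times B\subset G$ enters the $\epsilon$-neighbourhood of graph($\lambda$), and then combines this with internal chain transitivity of the limit set (taking $T>T_\epsilon$). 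Both routes rest on the same ingredients — the limit set being ICT for the coupled o.d.e., the Lyapunov structure of \textbf{(A6)'}, and confinement of the limit set in a compact subset of $G$ — so yours is shorter and uniform with the global case, while the paper's avoids having to justify that the Benaim result still applies when $V'$ is defined only on $G$ and $\Lambda=$ graph($\lambda$) is non-compact (the paper explicitly flags this non-compactness as the reason "usual tracking lemma kind of arguments do not go through directly"). If you keep your route, check two points: first, that Proposition 3.27 localizes to ICT sets contained in $G$ (this works because chains and solutions through $L$ stay inside $L\subset G$, and $G$ is positively invariant for the coupled flow since each $G_\theta$ is a basin of the frozen-$\theta$ o.d.e.); second, the empty-interior hypothesis on $V'(\mathrm{graph}(\lambda))$ — if the $V_\theta$ are not normalized to vanish at $\lambda(\theta)$, replace $V'(\theta,w)$ by $V'(\theta,w)-V'(\theta,\lambda(\theta))$, which is still a continuous Lyapunov function (the correction is constant along trajectories of (\ref{copuled_ode})) and is identically zero on graph($\lambda$). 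One last small imprecision: the substitution error in the slow recursion (\ref{eqn1}) is controlled by the Lipschitz continuity of $h$ in its second argument (\textbf{(A2)}), not of $\lambda$; the Lipschitz property of $\lambda$ is what the paper needs later, in Lemmas \ref{ode} and \ref{slowmu}.
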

The requirement  on $\{w_n\}$ is much stronger than the usual local attractor 
statement for Kushner-Clarke lemma \cite[Section II.C]{metivier} which requires 
the iterates to enter a compact set in the domain for attraction of the local attractor 
\textit{infinitely often} only. The reason for imposing this strong assumption is 
that graph($\lambda$) is not a subset of 
any compact set in $\mathbb{R}^{d+k}$, and hence the usual tracking 
lemma kind of arguments do not go through directly. One has to 
relate the limit set of the coupled iterate $(\theta_n, w_n)$ to graph($\lambda$) (See the proof of Lemma \ref{fast_res2}).

\end{remark}

For the special case when the d.i is the o.d.e (for the case described in \textbf{(A6)'}), one can construct the 
Lyapunov function $V'$ for the coupled o.d.e from the Lyapunov function $V_{\theta}$ for the faster o.d.e for every 
$\theta$. The reason is that 
\begin{equation}
\langle \nabla_{\theta,w}V'(\theta,w), \hat{G}(\theta,w)\rangle =  \langle \nabla_w V_\theta(w), \hat{g}(\theta,w) \rangle \nonumber
\end{equation}
where $\hat{G}(\theta,w) = (0, \hat{g}(\theta,w))$.
Such $V_\theta$ exist due to the converse Lyapunov theorem \cite{kraso}. 
This is not known for the general d.i. case as 
we are not aware of  a converse Lyapunov theorem in the case of d.i.
That is why a separate Lyapunov function $V$
for the coupled d.i. is assumed.

We present the    proof of our main results
in the next section.

\section{Main Results}
\label{mres}

We first discuss an extension of the single time-scale controlled Markov noise framework of \cite{borkar}
under our assumptions to prove our main results. Note that the results of \cite{borkar} 
assume that the state space of the controlled Markov process is Polish which 
may impose  additional conditions that are hard to verify. In this section, other 
than proving our two time-scale results, we prove many 
of the results in \cite{borkar} (which were only
stated there) assuming the state space to be compact and thus can be easily verified. 
In \cite{borkar} the map $\theta \to D(\theta)$ was argued to be upper semicontinuous using the fact that (4) thereof is 
preserved under convergence in $P(S \times U)$. We observe (Lemma 4) that this 
may not be enough to prove the same as in this case 
the control $\theta$ is also changing with $n$ unlike the proof of $D(\theta)$ being closed where only the measure changes with $n$. 
As shown there with a Polish space, we can prove 
uniform convergence of $$g_n(\cdot,\cdot) = \int_{S} f(y)p(dy|\cdot,\cdot,\theta_n) \to  \int_{S} f(y)p(dy|\cdot,\cdot,\theta)= g(\cdot,\cdot)$$ only on some compact
subset $S_c \times U$ of $S \times U$  
where $\theta_n \to \theta, f \in C_b(S)$ and $p(\cdot|\cdot,\cdot,\cdot)$ is the continuous transition kernel of the controlled Markov 
process. Here pointwise convergence is obvious. However, we can get uniform continuity only for $g'|_A$ where 
$g'(z, a, \theta) = \int_{S} f(y)p(dy|z,a, \theta)$ and $A=S_c \times U \times (\{\theta_n\} \bigcup \theta)$. For  
similar reasons the proof of upper semicontinuity of $\hat{h}(\cdot)$ (defined near  (13) of \cite{borkar})
as well as the proof of  (12) of \cite{borkar} 
may require conditions that are hard to verify if Polish state space is assumed. 
However, under the compact state space assumptions our proofs are seen to go through
easily.
There are other minor differences from the proofs of \cite{borkar}: 
 \begin{enumerate}
  \item In Lemma 3.1 of \cite{borkar}, we need to use martingale convergence theorem for square integrable martingales to 
conclude a.s. convergence. We correct that in Lemma 7.
  \item As we assume state space to be compact, assumptions such as $(\star)$ (\cite[p~140]{borkar}) and $(\dagger)$ (\cite[p~141]{borkar})
are not required.
 \end{enumerate}
   
We begin by describing the intuition behind the proof techniques in \cite{borkar}.  
\\ \indent
The space $C([0, \infty); \mathbb{R}^d)$ of continuous
functions from $[0,\infty)$ to $\mathbb{R}^d$ is topologized with the 
coarsest topology such that the map that takes any $f \in C([0, \infty); \mathbb{R}^d)$
to its restriction to $[0,T]$ when viewed as an element of the space $C([0, T]; \mathbb{R}^d)$, 
is continuous for all $T>0$. In other words, 
$f_n \to f$ in this space iff $f_n|_{[0,T]} \to f|_{[0,T]}$.
The other notations used below are the same as those in \cite{borkar,borkar1}. We present a few for easy reference. 
\\ \indent
Consider the single time-scale stochastic approximation recursion with controlled Markov noise:
\begin{equation}
\label{cont_mar}
x_{n+1} = x_n + a(n)\left[h(x_n, Y_n) + M_{n+1}\right]. 
\end{equation}
Define time instants $t(0)=0, t(n)=\sum_{m=0}^{n-1} a(m), n\geq 1$. 
Let $\bar{x}(t), t\geq 0$ be the continuous, piecewise linear 
trajectory defined by $\bar{x}(t(n))=x_n, n\geq 0$, with 
linear interpolation on each interval $[t(n), t(n+1))$, i.e.,  
\begin{equation}
\bar{x}(t) = x_n + (x_{n+1} - x_n)\frac{t-t(n)}{t(n+1)-t(n)}, t \in [t(n), t(n+1)).\nonumber 
\end{equation}
Now, define $\tilde{h}(x,\nu)=\int h(x,z)\nu(dz,U)$ for $\nu \in P(S \times U)$. 
Let $\mu(t), t\geq 0$ be the random process defined by 
$\mu(t)=\delta_{Y_n,Z_n}$ for $t \in [t(n), t(n+1)), n\geq 0$, where $\delta_{y,a}$ 
is the Dirac measure corresponding to $y$. Consider 
the non-autonomous o.d.e. 
\begin{equation}
\label{auto}
\dot{x}(t) = \tilde{h}(x(t), \mu(t)). 
\end{equation}
Let $x^s(t), t\geq s$, denote the solution to (\ref{auto}) with $x^s(s)=\bar{x}(s)$, for $s\geq0$. 
Note that $x^s(t), t\in [s, s+T]$ and $x^s(t), t\geq s$ can be viewed as elements of $C([0, T]; \mathbb{R}^d)$ and 
$C([0, \infty); \mathbb{R}^d)$ respectively. With this abuse of notation, it is easy to see that 
$\{x^s(\cdot)|_{[s, s+T]}, s\geq 0\}$ is a 
pointwise bounded and equicontinuous family of functions in $C([0, T]; \mathbb{R}^d)~\forall T >0$.
By Arzela-Ascoli theorem, it is relatively compact. From Lemma~2.2 of \cite{borkar} 
one can see that $\forall s(n)\uparrow \infty, \{\bar{x}(s(n)+.)|_{[s(n), s(n)+T]}, n\geq 1\}$ 
has a limit point in $C([0, T]; \mathbb{R}^d)~\forall T >0$.
With the above topology for $C([0, \infty); \mathbb{R}^d)$, 
$\{x^s(\cdot), s\geq 0\}$ 
is also relatively compact in $C([0, \infty); \mathbb{R}^d)$ and 
$\forall s(n)\uparrow \infty, \{\bar{x}(s(n)+.), n\geq 1\}$ has a limit point in $C([0, \infty); \mathbb{R}^d)$.  
\\ \indent
One can write from (\ref{cont_mar})
the following:
\begin{equation}
\bar{x}(u(n)+t) = \bar{x}(u(n)) + \int_{0}^{t}h(\bar{x}(u(n)+\tau), \nu(u(n)+\tau))d\tau + W^n(t),\nonumber 
\end{equation}
where $u(n)\uparrow \infty, \bar{x}(u(n)+.) \to \tilde{x}(\cdot), \nu(t) = (Y_n,Z_n)$ for $t \in [t(n), t(n+1)), n\geq 0$ 
and $W^n(t) = W(t+u(n)) - W(u(n)), W(t) = W_n + (W_{n+1} - W_n)\frac{t-t(n)}{t(n+1)- t(n)},  
 W_n=\sum_{k=0}^{n-1}a(k)M_{k+1}, 
n\geq 0$. 
From here one cannot directly take limit on both sides as limit points of $\nu(s+.)$ as $s \to \infty$
is not meaningful. Now, $h(x,y)=\int h(x,z)\delta_{y,a}(dz \times U)$. 
Hence by defining 
$\tilde{h}(x,\rho)=\int h(x,z)\rho(dz)$ and $\mu(t) = \delta_{\nu(t)}$ one can write the above as
\begin{equation}
\label{mu} 
\bar{x}(u(n)+t) = \bar{x}(u(n)) + \int_{0}^{t}\tilde{h}(\bar{x}(u(n)+\tau), \mu(u(n)+\tau))d\tau + W^n(t).
\end{equation}The advantage is that the space $\mathcal{U}$ of measurable functions 
from $[0, \infty)$ to $\mathcal{P}(S \times U)$ is compact metrizable, so subsequential limits exist. Note that $\mu(\cdot)$ is not a member
of $\mathcal{U}$, rather we need to fix a sample point, i.e., $\mu(.,\omega) \in \mathcal{U}$. 
For ease of understanding, we abuse the terminology and talk about the limit points $\tilde{\mu}(\cdot)$ of $\mu(s+.)$. 
\\ \indent 
From (\ref{mu}) 
one can infer that the limit $\tilde{x}(\cdot)$ of $\bar{x}(u(n)+.)$ satisfies the o.d.e. 
$\dot{x}(t) = \tilde{h}(x(t), \mu(t))$ with $\mu(\cdot)$ replaced by $\tilde{\mu}(\cdot)$. 
Here each $\tilde{\mu}(t), t \in \mathbb{R}$ in $\tilde{\mu}(\cdot)$ is generated through different limiting
processes each one associated with the compact metrizable space $U_t$ = space of measurable 
functions from $[0,t]$ to $\mathcal{P}(S \times U)$. This will be problematic if we want to further explore the process $\tilde{\mu}(\cdot)$
and convert the non-autonomous o.d.e. into an autonomous one. 
\\ \indent 
Hence the main result is proved using one auxiliary lemma \cite[Lemma~2.3]{borkar} other
than the tracking lemma (Lemma~2.2 of \cite{borkar}). Let $u(n(k)) \uparrow \infty$ be such that
$\bar{x}(u(n(k))+.) \to \tilde{x}(\cdot)$ and 
$\mu(u(n(k))+.) \to \tilde{\mu}(\cdot)$, then using Lemma~2.2 of \cite{borkar} 
one can show that $x^{u(n(k))}(\cdot) \to \tilde{x}(\cdot)$. 
Then the auxiliary lemma shows that the 
o.d.e. trajectory $x^{u(n(k))}(\cdot)$ associated with $\mu(u(n(k))+.)$ 
tracks (in the limit) the o.d.e. trajectory associated with $\tilde{\mu}(\cdot)$. Hence Lemma~2.3 of \cite{borkar} links 
the two limiting processes $\tilde{x}(\cdot)$ and $\tilde{\mu}(\cdot)$ 
in some sense. Note that Lemma~2.3 of \cite{borkar} involves only 
the o.d.e. trajectories, not the interpolated trajectory of the algorithm. 

Consider the iteration 
\begin{equation}
\label{eps2}
\theta_{n+1} = \theta_n + a(n)\left[h(\theta_n, Y_n) + \epsilon_n + M_{n+1}\right],
\end{equation}
where $\epsilon_n \to 0$ and the rest of the notations are same as  \cite{borkar}. 
Specifically, $\{Y_n\}$ is the controlled Markov process driven by $\{\theta_n\}$
and $M_{n+1}, n\geq 0$ is a martingale difference sequence. 
Let $\bar{\theta}(t), t\geq 0$ be the continuous, piecewise linear 
trajectory of (\ref{eps2}) defined by $\bar{\theta}(t(n))=\theta_n, n\geq 0$, with 
linear interpolation on each interval $[t(n), t(n+1))$. Also, 
let $\theta^s(t), t\geq s$, denote the solution to (\ref{auto}) with $\theta^s(s)=\bar{\theta}(s)$, for $s\geq0$. 
\\ \indent
The convergence analysis of (\ref{eps2}) requires some changes  
in Lemma~2.2 and 3.1 of  \cite{borkar}. The modified versions of them are precisely 
the following two lemmas.   
\begin{lemma}
\label{track_fast}
For any $T >0$, $\sup_{t\in [s,s+T]}\|\bar{\theta}(t) - \theta^s(t)\| \to 0,$ a.s. as $s\to \infty$. 
\end{lemma}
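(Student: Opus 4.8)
The plan is to compare the interpolated iterate $\bar\theta(\cdot)$ with the o.d.e.\ solution $\theta^s(\cdot)$ through their integral forms and to close the estimate with Gronwall's inequality, after showing that the total error accumulated over the sliding window $[s,s+T]$ vanishes almost surely as $s\to\infty$.

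First I would write both trajectories over $[s,s+T]$ as integral equations. Since $\bar\theta$ is piecewise linear with slope $h(\theta_n,Y_n)+\epsilon_n+M_{n+1}$ on $(t(n),t(n+1))$, while $\theta^s$ solves $\dot x=\tilde h(x,\mu(\cdot))$ with $\tilde h(x,\mu(\tau))=h(x,Y_{[\tau]})$ on each step, subtracting the two gives
\[
\bar\theta(t)-\theta^s(t)=\int_s^t\big(h(\bar\theta(\tau),Y_{[\tau]})-h(\theta^s(\tau),Y_{[\tau]})\big)\,d\tau+\rho_s(t),
\]
where $Y_{[\tau]}$ is the noise sample frozen on the step containing $\tau$ and $\rho_s(t)$ collects three error terms: (a) the martingale contribution $W(t)-W(s)$ with $W_n=\sum_{k<n}a(k)M_{k+1}$; (b) the perturbation $\int_s^t\epsilon_{[\tau]}\,d\tau$; and (c) the discretization error from replacing $\theta_k=\bar\theta(t(k))$ by $\bar\theta(\tau)$ inside $h$. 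Using the Lipschitz bound in \textbf{(A2)}, $\|h(\bar\theta(\tau),Y_{[\tau]})-h(\theta^s(\tau),Y_{[\tau]})\|\le L^{(1)}\|\bar\theta(\tau)-\theta^s(\tau)\|$, so Gronwall's inequality yields
\[
\sup_{t\in[s,s+T]}\|\bar\theta(t)-\theta^s(t)\|\le e^{L^{(1)}T}\sup_{t\in[s,s+T]}\|\rho_s(t)\|,
\]
reducing the lemma to proving $\sup_{t\in[s,s+T]}\|\rho_s(t)\|\to 0$ a.s.

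I would then dispose of the two easy terms. For (b), since $\epsilon_n\to 0$ and the window has length $T$ in the rescaled time, $\|\int_s^t\epsilon_{[\tau]}\,d\tau\|\le T\sup_{k\ge n(s)}\|\epsilon_k\|\to 0$. For (c), $\|\theta_k-\bar\theta(\tau)\|\le\|\theta_{k+1}-\theta_k\|=a(k)\|h(\theta_k,Y_k)+\epsilon_k+M_{k+1}\|$, which is $O(a(k))$ once the iterates are stabilized by \textbf{(A7)}; so after multiplying by $L^{(1)}$ and integrating, the discretization error is dominated by a tail sum of $\sum_k a(k)^2$ over the window, which vanishes by \textbf{(A4)}.

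The substantive term is the martingale part (a), and I expect it to be the main obstacle. Here I would localize on the events $\{\sup_n\|\theta_n\|\le R\}$, whose probabilities tend to $1$ as $R\to\infty$ by \textbf{(A7)}; on such an event the conditional second-moment bound \textbf{(A3)} makes $\{W_n\}$ an $L^2$-bounded martingale. Applying Doob's maximal inequality to the increments of $W$ over a fixed grid $s_m\uparrow\infty$ produces probability estimates that are summable in $m$, and Borel--Cantelli then gives $\sup_{t\in[s,s+T]}\|W(t)-W(s)\|\to 0$ almost surely; letting $R\to\infty$ removes the localization. Routing the argument through Borel--Cantelli (rather than invoking $L^2$-martingale convergence verbatim) is what pays off here, since the probability bounds can absorb a noise/Lipschitz constant that depends on the Markov state, as needed in the later relaxation. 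The genuine difficulty is that $W$ must be controlled \emph{uniformly} over the moving window and \emph{uniformly} as $s\to\infty$, whereas \textbf{(A3)} bounds increments only in terms of the a.s.-finite but not uniformly bounded $\|\theta_n\|$; the localization-plus-Borel--Cantelli device reconciles these. Feeding the three estimates back into the Gronwall bound completes the proof.
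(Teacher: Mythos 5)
Your proposal is correct and its skeleton coincides with the paper's: the paper proves this lemma by citing Lemma~2.2 of Borkar's controlled-Markov-noise paper (with the remark there accounting for the extra $\epsilon_n$ term), and the underlying argument is exactly your integral comparison over $[s,s+T]$, a (discrete) Gronwall bound producing the factor $e^{L^{(1)}T}$, and three error terms: a discretization error of order $\sum_{k} a(n+k)^2$, the perturbation term $T\sup_{k\ge 0}\|\epsilon_{n+k}\|$, and the martingale fluctuation $\sup_{k\ge 0}\|\zeta_{n+k}-\zeta_n\|$ with $\zeta_n=\sum_{m<n}a(m)M_{m+1}$. Where you genuinely diverge is the martingale step. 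The paper disposes of it in one stroke by the convergence theorem for square-integrable martingales: on the stability event of \textbf{(A7)} the predictable quadratic variation $\sum_n a(n)^2 E[\|M_{n+1}\|^2\mid\mathcal{F}_n]\le K\sum_n a(n)^2(1+\|\theta_n\|^2+\|w_n\|^2)$ is a.s.\ finite, so $\zeta_n$ converges a.s.\ and its tail oscillations vanish --- no grid, no Doob inequality, no localization. Your Doob-plus-Borel--Cantelli route over a grid of $T$-windows is also valid and is, in fact, the very device the thesis adopts later (Section~\ref{relax}, Lemma~\ref{track}) for exactly the reason you anticipate: summable moment/probability bounds can absorb a Lipschitz constant depending on the Markov state, where the clean martingale-convergence argument no longer applies. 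So your choice buys that generality at the cost of machinery this lemma, under the uniform Lipschitz hypothesis \textbf{(A2)}, does not need.

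Two steps need tightening. First, $\{W_n\}$ is not a martingale ``on the event'' $\{\sup_n\|\theta_n\|\le R\}$; to make the localization honest you must stop at $\tau_R=\inf\{n:\|\theta_n\|>R\}$, run Doob and Borel--Cantelli for the stopped martingale, and note that it agrees with $W_n$ on $\{\tau_R=\infty\}$, whose union over $R$ has full probability by \textbf{(A7)}. Second, your claim $\|\theta_{k+1}-\theta_k\|=O(a(k))$ is false as stated: under \textbf{(A3)} the increment $a(k)M_{k+1}$ is only conditionally square-integrable, not a.s.\ bounded. Either control the resulting term via $\sum_k a(k)^2\|M_{k+1}\|<\infty$ a.s.\ (by Cauchy--Schwarz together with $\sum_k a(k)^2\|M_{k+1}\|^2<\infty$ a.s., which follows from the a.s.\ finiteness of the conditional sum), or do what the cited proof does and place the discretization error on the o.d.e.\ side, where $\|\theta^{t(n)}(t)-\theta^{t(n)}(t(n+k))\|\le C_T\,a(n+k)$ holds deterministically on the stability event. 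With these repairs your argument goes through and yields the same conclusion.
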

\begin{proof}
The proof follows from the Lemma 2.2 and the remark 3 thereof (p. 144)  of \cite{borkar}.
\end{proof}

Now, $\mu$ can be viewed as a random variable taking values in $\mathcal{U}$ = the space of measurable functions from $[0,\infty)$ to
$\mathcal{P}(S \times U)$. This space is topologized with the coarsest topology such that the map
\begin{equation}
\nu(\cdot) \in \mathcal{U} \to \int_{0}^{T} g(t) \int fd\nu(t)dt \in \mathbb{R}\nonumber 
\end{equation}
is continuous for all $f \in C(S), T>0, g \in L_2[0,T]$. Note that $\mathcal{U}$ is compact metrizable.
\begin{lemma}
\label{eps}
Almost surely every limit point of $(\mu(s+.), \bar{\theta}(s+.))$ as $s\to \infty$ is 
of the form $(\tilde{\mu}(\cdot), \tilde{\theta}(\cdot))$
where $\tilde{\mu}(\cdot)$ satisfies $\tilde{\mu}(t) \in D(\tilde{\theta}(t))$ a.e. $t$.  
\end{lemma}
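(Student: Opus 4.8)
The plan is to verify membership in $D(\tilde\theta(t))$ through the defining identity of the ergodic occupation measures. Writing $P_\theta f(z,a):=\int_{S}f(y)\,p(dy|z,a,\theta)$, recall from the discussion around (\ref{eqn3}) that $\nu\in D(\theta)$ precisely when $\int_{S\times U}[P_\theta f(z,a)-f(z)]\,\nu(dz,da)=0$ for every $f$ in a fixed countable convergence-determining class $\{f_j\}\subset C(S)$. Hence it suffices to show, for each such $f$ and every $T>0$, that $\int_0^T\int_{S\times U}[P_{\tilde\theta(t)}f(z,a)-f(z)]\,\tilde\mu(t)(dz,da)\,dt=0$; since $T$ is arbitrary this forces the inner integral to vanish for a.e.\ $t$, and intersecting over the countable class yields $\tilde\mu(t)\in D(\tilde\theta(t))$ a.e., as claimed. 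Note the extra term $\epsilon_n\to 0$ in (\ref{eps2}) enters only the evolution of $\bar\theta$ (and is already absorbed in Lemma~\ref{track_fast}); it does not touch the transition kernel of $\{Y_n\}$, so the analysis of $\mu$ below is unaffected by it.

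First I would set up the driving martingale. By the controlled Markov property in \textbf{(A1)}, $E[f(Y_{m+1})\mid\mathcal{F}_m]=P_{\theta_m}f(Y_m,A_m)$, so $\zeta^f_n:=\sum_{m=0}^{n-1}a(m)\big[f(Y_{m+1})-P_{\theta_m}f(Y_m,A_m)\big]$ is an $\{\mathcal{F}_n\}$-martingale. Because $S$ is compact, $f$ and $P_\cdot f$ are bounded, so the sum of conditional second moments is dominated by a constant multiple of $\sum_m a(m)^2<\infty$ using \textbf{(A4)}; thus $\zeta^f$ is $L^2$-bounded and converges a.s.\ by the convergence theorem for square-integrable martingales. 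This is exactly the correction to Lemma~3.1 of \cite{borkar} referred to before the statement.

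Next I would decompose $P_{\theta_m}f(Y_m,A_m)-f(Y_m)=[f(Y_{m+1})-f(Y_m)]-[f(Y_{m+1})-P_{\theta_m}f(Y_m,A_m)]$ and sum $a(m)$ times this over a sliding window $\{m:t(m)\in[t(n),t(n)+T)\}$ of $M=M(n)$ steps. The second bracket contributes $\zeta^f_{n+M}-\zeta^f_n\to 0$ a.s. For the first, summation by parts on $\sum a(m)[f(Y_{m+1})-f(Y_m)]$, using boundedness of $f$, that $a(m)\downarrow 0$ (non-increasing, \textbf{(A4)}), and $\sum_{m=n}^{n+M-1}|a(m+1)-a(m)|=a(n)-a(n+M)\le a(n)\to 0$, shows this too vanishes. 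Hence the window-sum $\sum_{m:\,t(m)\in[t(n),t(n)+T)}a(m)[P_{\theta_m}f(Y_m,A_m)-f(Y_m)]\to 0$ a.s. Since $a(m)=t(m+1)-t(m)$ and $\mu(t)=\delta_{(Y_m,A_m)}$ on $[t(m),t(m+1))$, this sum equals $\int_{t(n)}^{t(n)+T}\int_{S\times U}[P_{\bar\theta(t)}f-f]\,d\mu(t)\,dt$ up to an $o(1)$ error from replacing $\bar\theta(t)$ by $\theta_m=\bar\theta(t(m))$ inside $P$, controlled by continuity of $\theta\mapsto P_\theta f$ and $a(m)\to 0$.

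Finally I would pass to the limit along $s(k)\uparrow\infty$ with $\bar\theta(s(k)+\cdot)\to\tilde\theta(\cdot)$ and $\mu(s(k)+\cdot)\to\tilde\mu(\cdot)$, recentering to $\int_0^T\int[P_{\bar\theta(s(k)+t)}f-f]\,d\mu(s(k)+t)\,dt$, and splitting its difference from the target into a piece replacing the integrand $P_{\bar\theta(s(k)+t)}f$ by $P_{\tilde\theta(t)}f$, and a piece replacing $\mu(s(k)+\cdot)$ by $\tilde\mu(\cdot)$. The first piece is at most $\int_0^T\sup_{z,a}|P_{\bar\theta(s(k)+t)}f-P_{\tilde\theta(t)}f|\,dt$, and here lies the main obstacle: I need this to vanish. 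It does because the stability assumption keeps all $\theta$-values in a compact set and $S\times U$ is compact, so $(\theta,z,a)\mapsto P_\theta f(z,a)$ is uniformly continuous on the relevant compact, whence $\bar\theta(s(k)+t)\to\tilde\theta(t)$ uniformly in $t$ forces $P_{\bar\theta(s(k)+t)}f\to P_{\tilde\theta(t)}f$ uniformly in $(z,a,t)$. This is precisely the equicontinuity argument of Lemma~\ref{upsem}, and is the step that genuinely requires the compact (rather than Polish) state space. The second piece tends to $0$ by $\mu(s(k)+\cdot)\to\tilde\mu(\cdot)$ in $\mathcal{U}$, after approximating the jointly continuous integrand $(t,z,a)\mapsto P_{\tilde\theta(t)}f-f$ by finite sums $\sum_j g_j(t)\phi_j(z,a)$ via Stone--Weierstrass on the compact $[0,T]\times S\times U$, each product term being handled by the defining test functions of the $\mathcal{U}$-topology. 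Combining, the recentered integral converges to $\int_0^T\int[P_{\tilde\theta(t)}f-f]\,d\tilde\mu(t)\,dt$ while also equalling the vanishing window-sum, which gives the required identity and completes the argument.
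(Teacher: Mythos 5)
Your proposal is correct and follows essentially the same route as the paper's proof: the same martingale $\zeta^f_n$ built from the controlled transition kernel, the same Abel-summation shift from $f(Y_{m+1})$ to $f(Y_m)$, the same conversion to an integral against $\mu(\cdot)$ with the piecewise-constant-to-interpolated $\theta$ replacement justified by uniform continuity on compacts (stability), the same two-piece limit passage (uniform convergence of the kernel integrands as in Lemma~\ref{upsem}, plus convergence in $\mathcal{U}$), and the same Lebesgue-differentiation and separating-class finish. The only cosmetic difference is that you re-derive the $\mathcal{U}$-convergence step via Stone--Weierstrass where the paper simply cites the corresponding part of Lemma~2.3 of Borkar's work.
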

\begin{proof}
Suppose that $u(n)\uparrow \infty$, $\mu(u(n)+.) \to \tilde{\mu}(\cdot)$ and $\bar{\theta}(u(n)+.) \to \tilde{\theta}(\cdot)$.
Let $\{f_i\}$ be countable dense in the unit ball of $C(S)$, hence a separating 
class, i.e., $\forall i, \int f_i d\mu = \int f_i d\nu$ 
implies $\mu=\nu$. For 
each $i$,
\begin{equation}
\zeta^i_n = \sum_{m=1}^{n-1}a(m)(f_i(Y_{m+1}) - \int f_i(y)p(dy|Y_m,Z_m, \theta_m)), n \geq 1, \nonumber 
\end{equation}
is a zero-mean martingale with $\mathcal{F}_n = \sigma(\theta_m, Y_m, Z_m, m\leq n)$. 
Moreover, it is a square integrable martingale due to the 
fact that $f_i$'s are bounded and each $\zeta^i_n$ is a finite sum. Its quadratic 
variation process 
\begin{equation}
A_{n}=\sum_{m=0}^{n-1}a(m)^2E[(f_i(Y_{m+1}) - 
\int f_i(y)p(dy|Y_m,Z_m, \theta_m))^2|\mathcal{F}_m] + E[(\zeta^i_0)^2]\nonumber 
\end{equation}
is almost surely convergent. 
By the martingale convergence theorem, 
 $\zeta^i_n, n\geq 0$ converges a.s. $\forall i$. As before let $\tau(n,t)=\min\{m \geq n:  t(m) \geq t(n)+t\}$ for $t\geq0, n\geq0$. 
Then as $n\to\infty$,
\begin{equation}
\sum_{m=n}^{\tau(n,t)} a(m)(f_i(Y_{m+1})-\int f_i(y)p(dy|Y_m,Z_m,\theta_m))\to 0,\mbox{ a.s.}\nonumber 
\end{equation}
for $t >0$. 
By our choice of $\{f_i\}$ and the fact that 
$\{a(n)\}$ is an eventually non-increasing sequence (the latter property is used only here and in Lemma \ref{slowmu}), we have
\begin{equation}
\sum_{m=n}^{\tau(n,t)}(a(m) - a(m+1))f_i(Y_{m+1}) \to 0,\mbox{ a.s.}\nonumber 
\end{equation}
From the foregoing, 
\begin{equation}
\sum_{m=n}^{\tau(n,t)} (a(m+1)f_i(Y_{m+1})-a(m)\int f_i(y)p(dy|Y_m,Z_m,\theta_m))\to 0,\mbox{ a.s.}\nonumber 
\end{equation}
$\forall t >0$,
which implies
\begin{equation}
\sum_{m=n}^{\tau(n,t)} a(m)(f_i(Y_{m})-\int f_i(y)p(dy|Y_m,Z_m,\theta_m))\to 0,\mbox{ a.s.}\nonumber 
\end{equation}
$\forall t >0$ due to the fact that 
$a(n) \to 0$ and $f_i(\cdot)$ are bounded. 
This implies
\begin{equation}
\int_{t(n)}^{t(n)+t}(\int(f_i(z) - \int f_i(y)p(dy|z,a,\hat{\theta}(s)))\mu(s,dzda))ds \to 0,\mbox{ a.s.}\nonumber 
\end{equation}
and that in turn implies
\begin{equation}
\int_{u(n)}^{u(n)+t}(\int(f_i(z) - \int f_i(y)p(dy|z,a,\hat{\theta}(s)))\mu(s,dzda))ds \to 0,\mbox{ a.s.}\nonumber 
\end{equation}
(this is true because $a(n)\to 0$ and 
$f_i(\cdot)$ is bounded)
where $\hat{\theta}(s) = \theta_n$ when $s \in [t(n), t(n+1))$ for $n\geq 0$. Now, one can claim from the above that
\begin{equation}
\int_{u(n)}^{u(n)+t}(\int(f_i(z) - \int f_i(y)p(dy|z,a,\bar{\theta}(s)))\mu(s,dzda))ds \to 0,\mbox{ a.s.}\nonumber 
\end{equation}
This is due to the fact that the map $S \times U \times \mathbb{R}^{d} \ni (z,a,\theta) \to \int f(y)p(dy|z,a,\theta)$ is continuous and hence 
uniformly continuous on the compact set $A = S \times U \times M$ 
where $M$ is the compact set s.t. $\theta_n\in M~\forall n$.    
Here we also use the fact that $\|\bar{\theta}(s) - \theta_m\|=\|h(\theta_m, Y_m) + \epsilon_m + M_{m+1}\|(s-s_m) \to 0, s\in [t_m, t_{m+1})$ 
as the first two terms inside the norm in the R.H.S are bounded.  
The above convergence is equivalent to 
\begin{equation}
\int_{0}^{t}(\int(f_i(z) - \int f_i(y)p(dy|z,a,\bar{\theta}(s+u(n)))\mu(s+u(n),dzda))ds \to 0,\mbox{ a.s.}\nonumber 
\end{equation} 
Fix a sample point in the probability one set on which the convergence 
above holds for all $i$. Then the convergence 
above leads to 
\begin{equation}
\label{conjun1}
\int_{0}^{t}(\int f_i(z) - \int f_i(y)p(dy|z,a, \tilde{\theta}(s)))\tilde{\mu}(s, dzda)ds =0~\forall i. 
\end{equation}
Here we use one part of the proof from Lemma~2.3 of \cite{borkar} that if $\mu^n(\cdot) \to \mu^{\infty}(\cdot) \in \mathcal{U}$ then 
for any $t>0$,
\begin{equation}
\int_{0}^{t} \int \tilde{f}(s,z,a)\mu^n(s,dzda)ds - \int_{0}^{t} \int \tilde{f}(s,z,a)\mu^{\infty}(s,dzda)ds \to 0,\nonumber 
\end{equation}
for all $\tilde{f} \in 
C([0,t] \times S \times A)$ and the fact that $\tilde{f}_n(s,z,a) = \int f_i(y)p(dy|z,a,\bar{\theta}(s+u(n)))$ 
converges uniformly to $\tilde{f}(s,z,a) = \int f_i(y)p(dy|z,a,\tilde{\theta}(s))$. 
To prove the latter, define $g:C([0,t]) \times [0,t] \times S \times A \to \mathbb{R}$ by $g(\theta(\cdot), s,z,a) 
 = \int f_i(y)p(dy|z,a, \theta(s)))$. To see that $g$ is continuous we need to check that if $\theta_n(\cdot) \to \theta(\cdot)$ 
uniformly and $s(n) \to s$, then $\theta_n(s(n)) \to \theta(s)$. This is because $\|\theta_n(s(n)) - \theta(s)\| = \|\theta_n(s(n)) - \theta(s(n)) + \theta(s(n)) - \theta(s)\| \leq \|\theta_n(s(n)) - \theta(s(n))\| + 
\|\theta(s(n)) - \theta(s)\|$. The first and second terms go to zero 
due to the uniform convergence of $\theta_n(\cdot), n\geq 0$ and continuity of $\theta(\cdot)$ respectively.  
 Let $A = \{\bar{\theta}(u(n)+.)|_{[u(n),u(n)+t]}, n\geq 1\} \cup \tilde{\theta}(\cdot)|_{[0,t]}$. 
$A$ is compact as it is the
union of a sequence of functions and their limit. 
So, $g|_{(A \times [0,t]\times S \times U)}$ is uniformly 
continuous. Then using the same arguments as in Lemma~\ref{upsem} 
we can show equicontinuity of $\{\tilde{f}_n(.,.)\}$, that results in
uniform convergence and thereby (\ref{conjun1}).  
An application of Lebesgue's theorem in conjunction 
with (\ref{conjun1}) shows that
\begin{equation}
\int (f_i(z) - \int f_i(y)p(dy|z,a,\tilde{\theta}(t)))\tilde{\mu}(t, dzda) = 0~\forall i\nonumber 
\end{equation}
for a.e. $t$. 
By our choice of $\{f_i\}$, this leads to 
\begin{equation}
\tilde{\mu}(t, dy \times U) = \int p(dy|z,a,\tilde{\theta}(t))\tilde{\mu}(t, dzda)\nonumber 
\end{equation}
a.e. $t$. Therefore the conclusion follows by disintegrating such measure as the product of marginal on $S$ and
the regular conditional law on $U$ (\cite[p~140]{borkar}). 
\end{proof}

\begin{remark}
Note that the above invariant distribution does not come ``naturally''; rather it arises from the assumption made to match the natural 
timescale intuition for the controlled Markov noise component, i.e., the slower iterate should see the average effect of the Markov component.
\end{remark}

The proof of the following lemma, in this case, will be unchanged from its original version, 
so we just mention it for completeness and refer the reader to Lemma 2.3 of \cite{borkar} for its proof.
\begin{lemma}
\label{eps31}
Let $\mu^n(\cdot) \to \mu^{\infty}(\cdot) \in \mathcal{U}$. Let $\theta^n(\cdot), n=1, 2, \dots, \infty$ denote 
solutions to (\ref{auto}) corresponding to the case where $\mu(\cdot)$ is replaced by $\mu^n(\cdot)$, for $n=1,2,\dots \infty$. 
Suppose $\theta^n(0) \to \theta^{\infty}(0)$. Then 
\begin{equation}
\lim_{n \to \infty} \sup_{t\in [0, T]}\|\theta^n(t) - \theta^{\infty}(t)\| = 0 \nonumber 
\end{equation}
for every $T >0$. 
\end{lemma}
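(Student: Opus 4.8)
The plan is to read this as a continuous-dependence (stability) estimate for the non-autonomous o.d.e.\ (\ref{auto}) under simultaneous perturbation of the initial condition and of the measure-valued driving term, closed off by Gronwall's inequality. First I would pass to the integral form: each $\theta^n(\cdot)$ satisfies $\theta^n(t) = \theta^n(0) + \int_0^t \tilde{h}(\theta^n(s),\mu^n(s))\,ds$ for $n=1,2,\dots,\infty$, where existence on all of $[0,T]$ and confinement of the trajectories to a compact set follow from the uniform Lipschitz continuity and linear growth of $\tilde{h}$ inherited from \textbf{(A2)}. Subtracting the $n=\infty$ equation and inserting the term $\tilde{h}(\theta^\infty(s),\mu^n(s))$, I would write, for $t\in[0,T]$,
\begin{equation}
\theta^n(t) - \theta^\infty(t) = \big(\theta^n(0) - \theta^\infty(0)\big) + \int_0^t \big[\tilde{h}(\theta^n(s),\mu^n(s)) - \tilde{h}(\theta^\infty(s),\mu^n(s))\big]\,ds + \gamma_n(t),
\end{equation}
where $\gamma_n(t) := \int_0^t \big[\tilde{h}(\theta^\infty(s),\mu^n(s)) - \tilde{h}(\theta^\infty(s),\mu^\infty(s))\big]\,ds$.

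The first integrand is controlled by the uniform Lipschitz property: since $\tilde{h}(x,\nu)=\int h(x,z)\,\nu(dz,U)$ and $\|h(\theta,z)-h(\theta',z)\|\le L^{(1)}\|\theta-\theta'\|$ uniformly in $z$ by \textbf{(A2)}, we get $\|\tilde{h}(\theta^n(s),\mu^n(s)) - \tilde{h}(\theta^\infty(s),\mu^n(s))\| \le L^{(1)}\|\theta^n(s)-\theta^\infty(s)\|$ irrespective of $\mu^n(s)$. Writing $\Delta_n(t):=\|\theta^n(t)-\theta^\infty(t)\|$ and $\epsilon_n := \|\theta^n(0)-\theta^\infty(0)\| + \sup_{t\in[0,T]}\|\gamma_n(t)\|$, this gives $\Delta_n(t)\le \epsilon_n + L^{(1)}\int_0^t \Delta_n(s)\,ds$, whence Gronwall yields $\sup_{t\in[0,T]}\Delta_n(t)\le \epsilon_n e^{L^{(1)}T}$. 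As $\theta^n(0)\to\theta^\infty(0)$ by hypothesis, it remains only to prove $\sup_{t\in[0,T]}\|\gamma_n(t)\|\to 0$.

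This last step is the crux. The map $(s,z)\mapsto h(\theta^\infty(s),z)$ is continuous on the compact set $[0,T]\times S$ (because $\theta^\infty(\cdot)$ is a fixed continuous trajectory with compact image and $h$ is jointly continuous), hence bounded. The convergence $\mu^n(\cdot)\to\mu^\infty(\cdot)$ in $\mathcal{U}$, in precisely the form already used in the proof of Lemma~\ref{eps} --- namely $\int_0^t \int \tilde{f}(s,z,a)\,\mu^n(s,dzda)\,ds \to \int_0^t \int \tilde{f}(s,z,a)\,\mu^\infty(s,dzda)\,ds$ for every $\tilde{f}\in C([0,t]\times S\times U)$, applied componentwise with $\tilde{f}(s,z,a)=h(\theta^\infty(s),z)$ --- gives $\gamma_n(t)\to 0$ for each fixed $t\in[0,T]$. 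To upgrade this to uniform convergence I would observe that the integrand defining $\gamma_n$ is bounded uniformly in $n$ and $s$ by $2\sup_{[0,T]\times S}\|h(\theta^\infty(\cdot),\cdot)\|$, so $\{\gamma_n\}$ is uniformly bounded and equi-Lipschitz, hence equicontinuous; by Arzel\`a--Ascoli, pointwise convergence to $0$ on the compact interval $[0,T]$ forces $\sup_{t\in[0,T]}\|\gamma_n(t)\|\to 0$. Combined with the Gronwall bound above, this finishes the proof.

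I expect the main obstacle to be exactly the handling of $\gamma_n$: the topology on $\mathcal{U}$ is defined through \emph{fixed} $C(S)$ test functions, whereas the relevant integrand $h(\theta^\infty(s),\cdot)$ depends on the time $s$, so one must first justify the passage to $s$-dependent continuous integrands (the uniform-approximation argument borrowed from the proof of Lemma~\ref{eps}) and then promote the resulting pointwise-in-$t$ convergence to uniform-in-$t$ convergence via equicontinuity; everything else is a routine Lipschitz/Gronwall computation.
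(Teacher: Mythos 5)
Your proposal is correct and is essentially the paper's own argument: the paper simply defers to Lemma~2.3 of the cited Borkar reference, whose proof is exactly your decomposition --- insert $\tilde{h}(\theta^{\infty}(s),\mu^n(s))$, use the uniform ($z$-independent) Lipschitz bound from \textbf{(A2)} on the first difference, show $\gamma_n \to 0$ via the extension of the $\mathcal{U}$-topology to $s$-dependent continuous test functions, upgrade to uniformity in $t$ by equicontinuity/Arzel\`a--Ascoli, and close with Gronwall. The same steps are reproduced explicitly in the paper's proof of the analogous slower-timescale Lemma~\ref{ode}, confirming your reading, including your correct identification of the $\gamma_n$ term as the only non-routine step.
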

\begin{lemma}
Almost surely, $\{\theta_n\}$ converges to an internally 
chain transitive invariant set of the differential inclusion
\begin{equation}
\label{inc}
\dot{\theta}(t) \in \hat{h}(\theta(t)), 
\end{equation}
where $\hat{h}(\theta)=\{\tilde{h}(\theta,\nu) :  \nu \in D(\theta)\}$. 
\end{lemma}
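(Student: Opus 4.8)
The plan is to show that the interpolated trajectory $\bar\theta(\cdot)$ is an \emph{asymptotic pseudotrajectory} of the set-valued dynamical system $\Phi$ induced by the differential inclusion $\dot\theta\in\hat h(\theta)$, and then to invoke the general result that the limit set of a bounded asymptotic pseudotrajectory of a differential inclusion is internally chain transitive, and hence invariant, as developed in \cite{benaim}. Boundedness of $\{\theta_n\}$ (the compact set $M$ of the preceding discussion, with $\theta_n\in M$ for all $n$) guarantees via Arzela--Ascoli that the shifted trajectories $\{\bar\theta(s+\cdot)\}$ are relatively compact in $C([0,\infty);\mathbb{R}^d)$ and that $\{\mu(s+\cdot)\}$ is relatively compact in the compact metrizable space $\mathcal{U}$; this is what makes the limit-point analysis meaningful.

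The heart of the argument is to identify every limit point of the shifted trajectories as a genuine solution of the differential inclusion. First I would fix a sample point in the full-measure set on which Lemmas \ref{track_fast}, \ref{eps} and \ref{eps31} all hold, take an arbitrary sequence $s(n)\uparrow\infty$, and pass to a subsequence (again denoted $s(n)$) along which $\bar\theta(s(n)+\cdot)\to\tilde\theta(\cdot)$ in $C([0,\infty);\mathbb{R}^d)$ and $\mu(s(n)+\cdot)\to\tilde\mu(\cdot)$ in $\mathcal{U}$ simultaneously. The tracking Lemma \ref{track_fast} gives $\sup_{t\in[s(n),s(n)+T]}\|\bar\theta(t)-\theta^{s(n)}(t)\|\to 0$ for every $T>0$, so the shifted o.d.e.\ solution $\theta^{s(n)}(s(n)+\cdot)$ has the same limit $\tilde\theta(\cdot)$ as $\bar\theta(s(n)+\cdot)$. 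On the other hand, $\theta^{s(n)}(s(n)+\cdot)$ solves the non-autonomous o.d.e.\ $\dot x(\tau)=\tilde h(x(\tau),\mu(s(n)+\tau))$ with initial condition $\bar\theta(s(n))\to\tilde\theta(0)$; since $\mu(s(n)+\cdot)\to\tilde\mu(\cdot)$, Lemma \ref{eps31} forces these solutions to converge uniformly on compacts to the solution $\theta^\infty(\cdot)$ of $\dot x=\tilde h(x,\tilde\mu(\cdot))$ started at $\tilde\theta(0)$. Hence $\tilde\theta(\cdot)=\theta^\infty(\cdot)$ satisfies $\dot{\tilde\theta}(t)=\tilde h(\tilde\theta(t),\tilde\mu(t))$ for a.e.\ $t$. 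Finally, Lemma \ref{eps} supplies $\tilde\mu(t)\in D(\tilde\theta(t))$ a.e.\ $t$, so
\begin{equation}
\dot{\tilde\theta}(t)=\tilde h(\tilde\theta(t),\tilde\mu(t))\in\{\tilde h(\tilde\theta(t),\nu):\nu\in D(\tilde\theta(t))\}=\hat h(\tilde\theta(t))\nonumber
\end{equation}
for a.e.\ $t$, i.e.\ $\tilde\theta(\cdot)$ is a solution of the differential inclusion.

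With this established, $\bar\theta(\cdot)$ is an asymptotic pseudotrajectory of $\Phi$: every limit point of its translates is a solution curve of $\dot\theta\in\hat h(\theta)$. The limit set $A_0=\cap_{t\geq0}\overline{\{\bar\theta(s):s\geq t\}}$ then coincides with the $\omega$-limit set of this pseudotrajectory, and by the internal-chain-transitivity theorem for differential inclusions in \cite{benaim} it is internally chain transitive; as recalled in the preliminaries, any such set is automatically invariant. Since the $\omega$-limit set of the interpolation and that of the iterate sequence agree (because $\bar\theta(t(n))=\theta_n$ with linear interpolation), $\{\theta_n\}$ converges to $A_0$ in the sense of our definition, which is the desired conclusion. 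The main obstacle I anticipate is the coupling step in the second paragraph: one must extract a \emph{common} subsequence along which both $\bar\theta(s(n)+\cdot)$ and $\mu(s(n)+\cdot)$ converge, and then correctly glue the tracking estimate (which controls $\bar\theta$ against $\theta^{s(n)}$) to the continuous-dependence statement of Lemma \ref{eps31} (which controls $\theta^{s(n)}$ against $\theta^\infty$), so as to identify the limit $\tilde\theta$ unambiguously with the $\tilde\mu$-driven solution; keeping the two limiting processes $\tilde\theta$ and $\tilde\mu$ consistently matched is exactly where the non-autonomous structure of the o.d.e.\ makes the bookkeeping delicate.
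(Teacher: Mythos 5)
Your proposal is correct and takes essentially the same route as the paper: the paper's proof likewise identifies every limit point $(\tilde{\mu}(\cdot),\tilde{\theta}(\cdot))$ of $(\mu(s+\cdot),\bar{\theta}(s+\cdot))$ by gluing the tracking estimate to the continuous-dependence result of Lemma~\ref{eps31}, concluding that $\tilde{\theta}(\cdot)$ solves the non-autonomous o.d.e.\ driven by $\tilde{\mu}(\cdot)$, then uses Lemma~\ref{eps} to get $\tilde{\mu}(t)\in D(\tilde{\theta}(t))$ a.e.\ so that $\tilde{\theta}(\cdot)$ is a solution of the differential inclusion, and finally invokes the internally-chain-transitive limit set theory of \cite{benaim}. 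Your write-up merely makes explicit what the paper's terse ``hence the proof follows'' leaves implicit (the common-subsequence extraction in the compact metrizable spaces and the appeal to the asymptotic-pseudotrajectory theorem), so there is no substantive difference.
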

\begin{proof}
Lemma~\ref{eps31} shows that every limit point $(\tilde{\mu}(\cdot), \tilde{\theta}(\cdot))$ 
of $(\mu(s+.),\bar{\theta}(s+.))$ as $s\to \infty$ is such that
 $\tilde{\theta}(\cdot)$ satisfies (\ref{auto}) with $\mu(\cdot) = \tilde{\mu}(\cdot)$. 
Hence, $\tilde{\theta}(\cdot)$ is absolutely continuous. Moreover, 
using Lemma~\ref{eps}, one can see that it satisfies (\ref{inc}) a.e. $t$, hence is a solution to the differential inclusion (\ref{inc}).  
Hence the proof follows.
\end{proof}
  
\begin{lemma}[Faster timescale result]
 $(\theta_n, w_n) \to \{(\theta, \lambda(\theta)) :  \theta \in \mathbb{R}^d\}$ a.s.
\end{lemma}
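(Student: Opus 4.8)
The plan is to prove the equivalent assertion $\|w_n - \lambda(\theta_n)\| \to 0$ almost surely by analysing the coupled pair $(\theta_n, w_n)$ on the \emph{faster} timescale $t'(n) = \sum_{m=0}^{n-1} b(m)$. The intuition from Section~\ref{secdef} is that on this timescale $\theta_n$ is quasi-static, so $w_n$ should track the singleton global attractor $\lambda(\theta_n)$ of the faster inclusion (\ref{fast}). By \textbf{(A7)} the iterates stay in a sample-path-dependent compact set, so every limit set is compact and only the part of $\mathrm{graph}(\lambda)$ inside that set is relevant.

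First I would make the quasi-static property precise. On the faster timescale, (\ref{eqn1}) reads
\begin{equation}
\theta_{n+1} = \theta_n + b(n)\left[\tfrac{a(n)}{b(n)}\bigl(h(\theta_n, w_n, Z^{(1)}_n) + M^{(1)}_{n+1}\bigr)\right], \nonumber
\end{equation}
so $\theta_n$ satisfies a recursion of the form (\ref{eps2}) with step sizes $b(n)$. Over any faster-time interval of length $\tau$, the drift part $\sum_j a(j) h(\theta_j, w_j, Z^{(1)}_j)$ is bounded by $\bigl(\sup_{j \geq n} a(j)/b(j)\bigr)\,\tau\,\sup\|h\| \to 0$ since $a(j)/b(j) \to 0$ and $h$ is bounded on the compact set from \textbf{(A7)}, while the martingale part $\sum_j a(j) M^{(1)}_{j+1}$ converges almost surely because $\sum_n a(n)^2 < \infty$ and \textbf{(A3)} bounds its conditional variance. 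Hence $\theta$ has vanishing faster-timescale drift, and its limiting dynamics are $\dot{\theta}(t) = 0$.

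Next I would apply the single-timescale controlled-Markov-noise results proved above to the faster recursion (\ref{eqn2}), regarding the slowly varying $\theta$ as part of the state and $\{Z^{(2)}_n\}$ as the controlled Markov noise. Lemma~\ref{track_fast} supplies the tracking estimate and Lemmas~\ref{eps} and \ref{eps31} identify the limiting ergodic occupation measures, so that every limit point of the interpolated coupled trajectory on the faster timescale is an internally chain transitive set of
\begin{equation}
\dot{\theta}(t) = 0, \qquad \dot{w}(t) \in \hat{g}(\theta(t), w(t)), \nonumber
\end{equation}
with $\hat{g}$ the Marchaud map of Lemma~\ref{march}. Finally I would invoke \textbf{(A6)}: the continuous $V$ satisfying Corollary~3.28 of \cite{benaim} with $\Lambda = \{(\theta, \lambda(\theta)) : \theta \in \mathbb{R}^d\}$ is a Lyapunov function for this inclusion, which forces every such chain transitive set into $\Lambda$ and yields $(\theta_n, w_n) \to \Lambda$ almost surely.

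The hard part will be the middle step rather than the Lyapunov conclusion: $\mathrm{graph}(\lambda)$ is unbounded in $\mathbb{R}^{d+k}$ and is \emph{not} invariant for the genuinely coupled dynamics, so one cannot simply freeze $\theta$ but must push the controlled-Markov-noise averaging through while $\theta$ still drifts negligibly, checking that the quasi-static reduction and the Markov averaging are compatible uniformly on the compact set from \textbf{(A7)}. This non-compactness, together with the fact that asymptotic stability of (\ref{fast}) for each fixed $\theta$ does not by itself make $\mathrm{graph}(\lambda)$ a stable set of the coupled inclusion, is exactly why the Lyapunov function $V$ is assumed in \textbf{(A6)}.
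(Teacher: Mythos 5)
Your proposal is correct and follows essentially the same route as the paper: rewrite the slow recursion on the faster timescale so that its drift $\frac{a(n)}{b(n)}h(\theta_n,w_n,Z^{(1)}_n)$ becomes a vanishing perturbation, treat the coupled pair $(\theta_n,w_n)$ as a single-timescale controlled-Markov-noise recursion of the form (\ref{eps2}), conclude convergence to an internally chain transitive set of $\dot{\theta}(t)=0,\ \dot{w}(t)\in\hat{g}(\theta(t),w(t))$ via Lemmas \ref{track_fast}, \ref{eps} and \ref{eps31}, and finish with the Lyapunov function $V$ from the second part of \textbf{(A6)} and Corollary 3.28 of \cite{benaim}. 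You also correctly identify why $V$ is indispensable (per-$\theta$ asymptotic stability of (\ref{fast}) does not make $\mathrm{graph}(\lambda)$ an attractor of the coupled inclusion), which matches the paper's own remark.
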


\begin{proof}
We first rewrite (\ref{eqn1}) as
\begin{equation}
\theta_{n+1} = \theta_n + b(n)\left[\epsilon_n + M^{(3)}_{n+1}\right],\nonumber 
\end{equation}
where $\epsilon_n = \frac{a(n)}{b(n)}h(\theta_n, w_n, Z^{(1)}_n)\to 0$ as $n\to \infty$ a.s. and $M^{(3)}_{n+1} = \frac{a(n)}{b(n)} M^{(1)}_{n+1}$ for $n\geq 0$. 
Let $\alpha_n=(\theta_n, w_n), \alpha=(\theta,w) \in \mathbb{R}^{d+k}, G(\alpha,z)=(0, g(\alpha,z)), \epsilon'_n=(\epsilon_n, 0), 
M^{(4)}_{n+1}= (M^{(3)}_{n+1}, M^{(2)}_{n+1})$. Then one can write 
 (\ref{eqn1}) and (\ref{eqn2}) in the framework of (\ref{eps2}) as
\begin{equation}
\label{stability}
\alpha_{n+1} = \alpha_n + b(n)\left[G(\alpha_n,Z^{(2)}_n) + \epsilon'_n +  M^{(4)}_{n+1}\right],
\end{equation}
with $\epsilon'_n \to 0$ as $n \to \infty$.  
$\alpha_n, n\geq 0$ converges almost surely to an internally chain transitive set of the differential inclusion
\begin{equation}
\dot{\alpha}(t) \in \hat{G}(\alpha(t)),\nonumber 
\end{equation}
where $\hat{G}(\alpha) = \{\tilde{G}(\alpha, \nu) :  \nu \in D^{(2)}(\theta,w)\}$. 
In other words, 
$(\theta_n, w_n), n\geq 0$ converges to an internally chain transitive set of the differential inclusion
\begin{equation}
\label{coupled_di}
\dot{w}(t) \in \hat{g}_{\theta(t)}(w(t)), \dot{\theta}(t) = 0.\nonumber 
\end{equation}The rest follows from the second part of \textbf{(A6)}.
\end{proof}

\begin{remark} Under the conditions mentioned in Remark 4 the above 
 faster timescale result should be modified as follows:

\begin{lemma}[Faster timescale result when $\lambda(\theta)$ is a local attractor]
\label{fast_res2}
Under assumptions \textbf{(A1) - (A5), (A6)'} and  \textbf{(A7)},  
on the event ``$\{w_n\}$ belongs to a compact subset $B$ (depending 
on the sample point) of $\bigcap_{\theta \in \mathbb R^d} G_\theta$ \textit{eventually}'',
\begin{equation}
(\theta_n, w_n) \to \{(\theta, \lambda(\theta)) :  \theta \in \mathbb{R}^d\} \mbox{~~a.s.} \nonumber 
\end{equation}
\end{lemma}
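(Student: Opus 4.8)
The plan is to mimic the structure of the global-attractor version of the faster timescale result, replacing the appeal to Lemma~\ref{ga} by a Lyapunov-function characterization that is valid only on a compact subset of the basin. First I would rewrite the coupled recursion exactly as in the global-attractor proof: set $\alpha_n=(\theta_n,w_n)$, $G(\alpha,z)=(0,g(\alpha,z))$, $\epsilon'_n=(\epsilon_n,0)$ with $\epsilon_n=\frac{a(n)}{b(n)}h(\theta_n,w_n,Z^{(1)}_n)\to 0$ a.s., and $M^{(4)}_{n+1}=(M^{(3)}_{n+1},M^{(2)}_{n+1})$, so that (\ref{eqn1})--(\ref{eqn2}) take the form (\ref{stability}) of the perturbed single-timescale recursion (\ref{eps2}). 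By the single-timescale convergence lemma established just above, $(\theta_n,w_n)$ converges a.s.\ to an internally chain transitive (ICT) set $L$ of the coupled o.d.e.\ $\dot{w}(t)=\hat{g}(\theta(t),w(t)),\ \dot{\theta}(t)=0$ (an o.d.e.\ rather than a d.i.\ here since, under \textbf{(A6)'}, $D^{(2)}(\theta,w)$ is a singleton).

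The key departure from the global case is that graph($\lambda$) is merely a local asymptotically stable attractor of this coupled o.d.e.\ and is not contained in any compact subset of $\mathbb{R}^{d+k}$, so Lemma~\ref{ga} is unavailable. To circumvent this I would first confine the dynamics to a compact region. By the stability assumption \textbf{(A7)}, $\sup_n\|\theta_n\|<\infty$ a.s., so $\{\theta_n\}$ lies in a (sample-path dependent) compact set $M\subset\mathbb{R}^d$; on the conditioning event that $\{w_n\}$ eventually lies in the compact set $B\subset\bigcap_{\theta}G_\theta$, the full iterate eventually lies in $M\times B$. Since every $w\in B$ belongs to $G_\theta$ for all $\theta$, one has $M\times B\subset\bigcup_{\theta}\{\theta\}\times G_\theta = G$, which is exactly the basin of attraction of graph($\lambda$) for the coupled o.d.e. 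As $M\times B$ is closed, the limit set, and hence the ICT set $L$, is a compact subset of $G$.

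With $L\subset G$ compact, the local attractor can be handled as in the global case but restricted to $G$. Here I would invoke \textbf{(A6)'}: the function $V'(\theta,w)=V_\theta(w)$ is continuously differentiable on $G$ and, because $\langle \nabla_{\theta,w}V'(\theta,w),\hat{G}(\theta,w)\rangle=\langle \nabla_w V_\theta(w),\hat{g}(\theta,w)\rangle$ with $\hat{G}(\theta,w)=(0,\hat{g}(\theta,w))$, it is a strict Lyapunov function for the coupled o.d.e.\ $\dot{w}(t)=\hat{g}(\theta(t),w(t)),\ \dot{\theta}(t)=0$, with $\Lambda=$ graph($\lambda$) and decreasing off $\Lambda$ throughout $G$. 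Applying Proposition~3.27 / Corollary~3.28 of \cite{benaim} to the ICT set $L\subset G$ then forces $V'$ to be constant on $L$ and $L\subset\Lambda=\{(\theta,\lambda(\theta)):\theta\in\mathbb{R}^d\}$. Therefore $(\theta_n,w_n)\to\text{graph}(\lambda)$ a.s.\ on the conditioning event, which is the claim.

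The main obstacle, and the reason this lemma is genuinely different from its global-attractor counterpart, is precisely the non-compactness of graph($\lambda$): one cannot apply the tracking/Arzel\`a--Ascoli arguments or Lemma~\ref{ga} directly to an unbounded invariant set. The crux is therefore the confinement step --- using \textbf{(A7)} to bound $\{\theta_n\}$ and the (strong) conditioning event to trap $\{w_n\}$ inside a common compact subset $B$ of $\bigcap_\theta G_\theta$ --- which is what makes $M\times B\subset G$ compact and lets the Lyapunov characterization of ICT sets operate. I would also need to verify carefully that $V'$ is a genuine strict Lyapunov function for the \emph{coupled} o.d.e.\ on all of $G$ rather than merely slice-by-slice; this is exactly the content of the last part of \textbf{(A6)'} and the identity for $\langle \nabla_{\theta,w}V',\hat{G}\rangle$ noted above.
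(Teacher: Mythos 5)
Your proposal is correct, but its final step takes a genuinely different route from the paper's own proof of Lemma~\ref{fast_res2}. Both arguments share the same skeleton: recast the coupled iterates as the perturbed single-timescale recursion, conclude via the preceding lemma that the limit set is a compact internally chain transitive set of the coupled o.d.e.\ $\dot{w}(t)=\hat{g}(\theta(t),w(t)),\ \dot{\theta}(t)=0$, and use \textbf{(A7)} together with the conditioning event to confine that set inside the compact set $\{\theta:\|\theta\|\leq M_1\}\times B\subset G$. Where you diverge is in how the ICT set is then pushed into graph($\lambda$): you apply the Lyapunov characterization of ICT sets (Proposition~3.27/Corollary~3.28 of \cite{benaim}) to $V'$, i.e.\ you transplant the argument the paper uses for the \emph{global}-attractor case (second part of \textbf{(A6)}), whereas the paper's proof of the local case argues directly --- continuity of the flow, $\sup_n\|\theta_n\|=M_1<\infty$, and Lyapunov stability of graph($\lambda$) (via $V'$) yield a uniform time $T_\epsilon$ after which every o.d.e.\ trajectory started in $\{\theta:\|\theta\|\leq M_1\}\times B$ lies in the $\epsilon$-neighbourhood of graph($\lambda$), and then taking chains of length $T>T_\epsilon$ in the definition of internal chain transitivity forces the limit set into graph($\lambda$). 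What each buys: the paper's route never invokes Proposition~3.27 outside its stated scope, which matters here because that proposition is formulated for a Lyapunov function defined on all of $\mathbb{R}^{d+k}$, while $V'$ in \textbf{(A6)'} is defined only on $G$ and $\Lambda=\mathrm{graph}(\lambda)$ is non-compact; your route is shorter, but to make it airtight you must (i) state that you are using a \emph{localized} variant of Proposition~3.27, which is legitimate precisely because the ICT set $L$ is compact and invariant and its defining chains stay inside $L\subset G$, so the entire decrease-along-chains argument runs within $G$, and (ii) record that $V'(\mathrm{graph}(\lambda))=\{0\}$ (since $V_\theta(\lambda(\theta))=0$ for each $\theta$), which supplies the empty-interior hypothesis of the proposition. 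With those two points made explicit, your proof is complete and is a clean alternative to the one in the paper.
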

\begin{proof}
Fix a sample point $\omega$.
The proof follows from these observations: 
\begin{enumerate}
 \item continuity of flow for the coupled o.d.e around the initial point,
 \item $\sup_n \|\theta_n\| = M_1 < \infty$,
 \item the fact that the 
set graph($\lambda$) is Lyapunov stable ($V'(\cdot)$ as mentioned in 
\textbf{(A6)'} will be a Lyapunov function for this set), and
  \item the fact that $\bigcap_{t\geq 0} \overline{\bar{\alpha}(s): s \geq t}$ is an internally chain transitive set of 
the coupled o.d.e 
\begin{equation}
\label{copuled_ode}
\dot{w}(t) = \hat{g}(\theta(t),w(t)), \dot{\theta}(t) = 0,  
\end{equation}
where $\bar{\alpha}(\cdot)$ is the interpolated trajectory of the coupled iterate $\{\alpha_n\}$.
\end{enumerate}
As $\{\theta: \|\theta\| \leq M_1\} \times B \subset \bigcup_{\theta \in \mathbb{R}^d} \{\{\theta\} \times G_\theta \}$,  
the first three observations show that for all $\epsilon>0$, there exists a $T_\epsilon >0$ 
such that any o.d.e trajectory for (\ref{copuled_ode}) with starting point 
on the compact set $\{\theta: \|\theta\| \leq M_1\} \times B$ reaches the 
$\epsilon$-neighbourhood of graph($\lambda$) after time $T_\epsilon$.
Further, 
\begin{equation}
\bigcap_{t\geq 0} \overline{\bar{\alpha}(s): s \geq t} \subset  \{\theta: \|\theta\| \leq M_1\} \times B. \nonumber 
\end{equation}
Then one can use 
the last observation by choosing $T > T_{\epsilon}$ to 
show the required convergence to the set  graph($\lambda$).
\end{proof}
\end{remark}

In other words, $\|w_n - \lambda(\theta_n)\| \to 0$ a.s., i.e, $\{w_n\}$ asymptotically tracks $\{\lambda(\theta_n)\}$ a.s.
\\ \indent 

\begin{remark}
One interesting question in this context is to analyze whether one can extend the single timescale local attractor 
convergence statements to the two time-scale setting under some \textit{verifiable conditions}. More specifically, 
if there is a
global attractor $A_1$ for
\begin{equation}
\dot{\theta}(t) \in \hat{h}(\theta(t)), \nonumber
\end{equation}
then can one provide verifiable conditions to show 
\begin{equation}
P [(\theta_n , w_n ) \to \cup_{\theta \in A_1} (\theta , \lambda(\theta))] > 0. \nonumber
\end{equation}
Here $\lambda(\theta)$ is a local attractor as mentioned in \textbf{(A6)'}.

There are two ways in which this could possibly be tried: 

\begin{enumerate}
 \item Use Theorem \ref{thm_local} where we show that 
 on the event $\{w_n\}$ belongs to a compact subset $B$ (depending 
on the sample point) of $\bigcap_{\theta \in \mathbb R^d} G_\theta$ ``eventually'',
\begin{equation}
(\theta_n, w_n) \to \cup_{\theta^* \in A_1}(\theta^*, \lambda(\theta^*)) \mbox{a.s. as $n \to \infty$,}\nonumber 
\end{equation}which  is an extension of Kushner-Clarke Lemma to the two time-scale case.  
Therefore the task would be to impose verifiable assumptions so that
$P$($\{w_n\}$ belongs to a compact subset $B$ (depending 
on the sample point) of $\bigcap_{\theta \in \mathbb R^d} G_\theta$ ``eventually'') $>$ 0. 
In a stochastic approximation scenario it is not immediately clear how one could possibly 
impose verifiable assumptions so that 
such a probabilistic statement becomes true.
\item The second approach would be to extend the analysis of \cite{benaimode, benaim} to the two time-scale case.
In our opinion this is very hard as this analysis is based on the attractor introduced by Benaim et al. whereas 
the coupled o.d.e (\ref{copuled_ode}) which tracks the coupled iterate 
$(\theta_n,w_n)$ (therefore the interpolated trajectory of the coupled iterate will be 
an asymptotic pseudo-trajectory \cite{benaimode} for (\ref{copuled_ode}))  has no attractor. The reason is that one cannot obtain a fundamental neighbourhood for sets like 
$\cup_{\theta \in A_1} (\theta, \lambda(\theta))$ as the $\theta$ component will remain constant for any trajectory 
of the above coupled o.d.e.
\end{enumerate}

However, in the following we discuss one possible approach (without coupling both the iterates) to 
the closest implementation of the above
for the following recursion:
\begin{align}
\label{fast_a}
w_{n+1} &= w_n + b(n)\left[g(\theta_n, w_n, Z^{(2)}_n) + M^{(2)}_{n+1}\right]
\end{align}
that are driven by a single timescale stochastic approximation process:
\begin{align}
\label{slow_p}
\theta_{n+1} = \theta_n + a(n)\left[h(\theta_n,Z^{(1)}_n) + M^{(1)}_{n+1}\right].
\end{align} Note that there is a unilateral coupling between (\ref{fast_a}) and (\ref{slow_p}) in that 
(\ref{fast_a}) depends on (\ref{slow_p}) but not the other way. 
Here we  assume that the step-sizes $a(n),b(n), n\geq 0$ satisfies only the  
Robbins-Monro conditions. Assume that $Z^{(2)}_n$ is an uncontrolled Markov process.
Now, assume that the o.d.e 
\begin{align}
\dot{\theta}(t)=\hat{h}(\theta(t)) \nonumber
\end{align}
has a globally asymptotically stable attractor $\theta^*$. Consider the following version of (\ref{fast_a})
\begin{align}
w'_{n+1} &= w'_n + b(n)\left[g(\theta^*, w'_n, Z^{(2)}_n) + M^{(2)}_{n+1}\right].\nonumber
\end{align}
Let $\bar{w'}(\cdot)$ be the interpolated trajectory of $w'_n,n\geq 0$. 
Then using Benaim's result \cite[Theorem 7.3]{benaimode} one can see that if $Att(\bar{w'})\cap G_{\theta^*} \neq \phi$ then 
\begin{align}
 P(w_n' \to \lambda(\theta^*)) > 0. \nonumber
\end{align}
Therefore 
\begin{align}
P((\theta_n,w_n') \to (\theta^*, \lambda(\theta^*)) > 0. \nonumber
\end{align}

However, one can easily observe that 
\begin{align}
\|w_{n} - w_n'\| \leq L^{(2)} \sum_{i=1}^{n} \|\theta_i - \theta^*\| b(i) \Pi_{k=i}^{n-1} (1 + b(k+1)). \nonumber 
\end{align} which shows how $w_n'$ is related to the actual iterate of interest $w_n$. However, 
for common step-size sequence such as $b(n)=\frac{1}{n}, \frac{1}{1+n\log n}$ the R.H.S sequence 
diverges to infinity implying that such result should be used in a non-asymptotic sense. Also,
with $b(n) = \frac{1}{n}$, $\|w_{n} - w_n'\| \leq O(n)$ whereas with 
$b(n) = \frac{1}{1+ n\log n}$ it is $O(\log n)$ suggesting to use the latter to get the closest 
statement of the following:
\begin{align}
P((\theta_n,w_n) \to (\theta^*, \lambda(\theta^*)) > 0. \nonumber
\end{align}
\end{remark}
Now, consider the non-autonomous o.d.e.
\begin{equation}
\label{slow_tt}
\dot{\theta}(t) = \tilde{h}(\theta(t),\lambda(\theta(t)),\mu(t)), 
\end{equation}
where $\mu(t) = \delta_{Z^{(1)}_n,A^{(1)}_n}$ when $t \in [t(n), t(n+1))$ for $n\geq 0$ and $\tilde{h}(\theta,w,\nu)=\int h(\theta,w,z) \nu(dz)$. 
Let $\theta^s(t), t\geq s$ denote the solution to (\ref{slow_tt}) with 
$\theta^s(s) = \bar{\theta}(s)$, for $s \geq 0$. Then
\begin{lemma}
\label{track_slow}
For any $T >0, \sup_{t\in [s,s+T]}\|\bar{\theta}(t) - \theta^s(t)\| \to 0,$ a.s.  
\end{lemma}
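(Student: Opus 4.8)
The plan is to reduce the slower-timescale recursion (\ref{eqn1}) to exactly the perturbed form (\ref{eps2}) that is already handled by Lemma~\ref{track_fast}, by exploiting the faster-timescale conclusion $\|w_n - \lambda(\theta_n)\| \to 0$ a.s.\ to absorb the dependence on $w_n$ into a vanishing error term. Concretely, I would introduce the effective vector field $\bar{h}(\theta, z) := h(\theta, \lambda(\theta), z)$ and rewrite (\ref{eqn1}) as
\begin{equation}
\theta_{n+1} = \theta_n + a(n)\left[\bar{h}(\theta_n, Z^{(1)}_n) + \epsilon_n + M^{(1)}_{n+1}\right],\nonumber
\end{equation}
where $\epsilon_n := h(\theta_n, w_n, Z^{(1)}_n) - h(\theta_n, \lambda(\theta_n), Z^{(1)}_n)$. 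This is manifestly of the form (\ref{eps2}) with the controlled Markov noise $Z^{(1)}_n$ (now driven by $\theta_n$, $w_n$ and $A^{(1)}_n$) and the martingale difference $M^{(1)}_{n+1}$.

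First I would verify the two structural requirements. For the perturbation, the Lipschitz property of $h$ in its second argument (uniformly in the third) from \textbf{(A2)} gives $\|\epsilon_n\| \leq L^{(1)}\|w_n - \lambda(\theta_n)\|$, and the right-hand side tends to $0$ a.s.\ by the faster-timescale result, so that $\sup_{m \geq n}\|\epsilon_m\| \to 0$ a.s., which is precisely the hypothesis on $\epsilon_n$ in (\ref{eps2}). For the vector field, joint continuity of $\bar{h}$ follows from joint continuity of $h$ together with continuity of $\lambda$, while using \textbf{(A2)} and the fact that $\lambda$ is Lipschitz with constant $K$ (from \textbf{(A6)}),
\begin{equation}
\|\bar{h}(\theta, z) - \bar{h}(\theta', z)\| \leq L^{(1)}\left(\|\theta - \theta'\| + \|\lambda(\theta) - \lambda(\theta')\|\right) \leq L^{(1)}(1 + K)\|\theta - \theta'\|\nonumber
\end{equation}
uniformly in $z \in S^{(1)}$, so $\bar{h}$ is Lipschitz in its first argument uniformly in the Markov-noise variable, exactly as required.

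Next I would observe that the non-autonomous o.d.e.\ (\ref{slow_tt}) is precisely the frozen-noise o.d.e.\ (\ref{auto}) associated with $\bar{h}$: since $\mu(t) = \delta_{Z^{(1)}_n, A^{(1)}_n}$ on $[t(n), t(n+1))$, we have $\tilde{h}(\theta, \lambda(\theta), \mu(t)) = h(\theta, \lambda(\theta), Z^{(1)}_n) = \bar{h}(\theta, Z^{(1)}_n)$ on each interval, and the initial condition $\theta^s(s) = \bar{\theta}(s)$ matches that in Lemma~\ref{track_fast}. Applying Lemma~\ref{track_fast} to the rewritten recursion then yields $\sup_{t \in [s, s+T]}\|\bar{\theta}(t) - \theta^s(t)\| \to 0$ a.s., which is the assertion.

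I expect the main point requiring care is not any single estimate but the bookkeeping that the extra control process $A^{(1)}_n$ and the additional controlling variable $w_n$ do not interfere with the tracking bound. The proof of Lemma~\ref{track_fast} (via Lemma~2.2 of \cite{borkar} and Remark~3 thereof) depends only on the Lipschitz constant of the vector field, the boundedness of the iterates from \textbf{(A7)}, the a.s.\ vanishing of the interpolated martingale increments $\sup_{k\geq 0}\|\delta_{n,n+k}\|$, and $\epsilon_n \to 0$; none of these is sensitive to how the Markov noise is controlled. One should also confirm that the martingale-increment term vanishes in this setting, which follows from \textbf{(A3)} and \textbf{(A7)} exactly as in the single-timescale case.
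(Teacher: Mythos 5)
Your proposal is correct and follows essentially the same route as the paper: the paper's own proof performs the identical add-and-subtract decomposition, bounding the extra term by $L^{(1)}T\sup_{k\geq 0}\|w_{n+k}-\lambda(\theta_{n+k})\|$ (vanishing a.s.\ by the faster-timescale result) and working with the composite field $h(\theta,\lambda(\theta),z)$ whose Lipschitz constant is $L^{(1)}(K+1)$, exactly as in your reduction. The only difference is presentational: you invoke Lemma~\ref{track_fast} as a black box, whereas the paper inlines the same Gronwall-type estimates, and your observation that the extra control $A^{(1)}_n$ and the $w_n$-dependence of the noise do not affect the tracking bound matches the paper's treatment.
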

\begin{proof}
The slower recursion corresponds to
\begin{equation}
\theta_{n+1} = \theta_n + a(n)\left[h(\theta_n, w_n, Z^{(1)}_n) + M^{(1)}_{n+1}\right].\nonumber  
\end{equation}
Let $t(n+m) \in [t(n), t(n) + T]$.  Let $[t] =\max\{t(k) :  t(k) \leq t\}$. Then by construction,
\begin{align}
\bar{\theta}(t(n+m)) &= \bar{\theta}(t(n)) + \sum_{k=0}^{m-1} a(n+k)h(\bar{\theta}(t(n+k)), w_{n+k}, Z^{(1)}_{n+k}) + \delta_{n, n+m} \nonumber\\
                &= \bar{\theta}(t(n)) + \sum_{k=0}^{m-1} a(n+k)h(\bar{\theta}(t(n+k)), \lambda(\bar{\theta}(t(n+k))), Z^{(1)}_{n+k})\nonumber\\
                &+\sum_{k=0}^{m-1} a(n+k)(h(\bar{\theta}(t(n+k)), w_{n+k}, Z^{(1)}_{n+k})- h(\bar{\theta}(t(n+k)), \lambda(\theta_{n+k}), Z^{(1)}_{n+k}))\nonumber\\
                &+ \delta_{n, n+m},\nonumber
\end{align}
where $\delta_{n, n+m}=\zeta_{n+m}- \zeta_{n}$ with  $\zeta_{n} = \sum_{m=0}^{n-1}a(m)M^{(1)}_{m+1}, n \geq 1$.
\begin{align}
 \theta^{t(n)}(t(m+n)) &= \bar{\theta}(t(n)) + \int_{t(n)}^{t(n+m)} \tilde{h}(\theta^{t(n)}(t), \lambda(\theta^{t(n)}(t)),  \mu(t))dt\nonumber\\
                  &= \bar{\theta}(t(n)) + \sum_{k=0}^{m-1} a(n+k)h(\theta^{t(n)}(t(n+k)), \lambda(\theta^{t(n)}(t(n+k))), Z^{(1)}_{n+k})\nonumber\\
                  &+ \int_{t(n)}^{t(n+m)} (h(\theta^{t(n)}(t), \lambda(\theta^{t(n)}(t), \mu(t))) - h(\theta^{t(n)}([t]), \lambda(\theta^{t(n)}([t]), \mu([t]))))dt.\nonumber
\end{align}
Let $t(n) \leq t \leq t(n+m)$. Now, if $0 \leq k \leq (m-1)$ and $t \in (t(n+k), t(n+k+1)],$
\begin{align}
\|\theta^{t(n)}(t)\| &\leq \|\bar{\theta}(t(n)\| + \|\int_{t(n)}^{t} \tilde{h}(\theta^{t(n)}(\tau), \lambda(\theta^{t(n)}(\tau)), \mu(\tau))d\tau\|\nonumber\\
                &\leq \|\theta_n\| + \sum_{l=0}^{k-1} \int_{t(n+l)}^{t(n+l+1)} (\|h(0,0,Z^{(1)}_{n+l})\|+ L^{(1)}(\|\lambda(0)\|+(K+1)\|\theta^{t(n)}(\tau)\|))d\tau\nonumber\\
                & +\int_{t(n+k)}^t(\|h(0,0,Z^{(1)}_{n+k})\|+ L^{(1)}(\|\lambda(0)\|+(K+1)\|\theta^{t(n)}(\tau)\|))d\tau\nonumber\\
                &\leq C_0 + (M+L^{(1)}\|\lambda(0)\|)T + L^{(1)}(K+1)\int_{t(n)}^{t}\|\theta^{t(n)}(\tau)\|d\tau,\nonumber 
\end{align}
where $C_0 = \sup_n \|\theta_n\| < \infty, \sup_{z \in S^{(1)}}\|h(0,0,z)\| = M$. 
By Gronwall's inequality, it follows that 
\begin{equation}
\|\theta^{t(n)}(t)\| \leq (C_0 + (M+L^{(1)}\|\lambda(0)\|)T)e^{L^{(1)}(K+1)T}.\nonumber 
\end{equation}
\begin{align}
\|\theta^{t(n)}(t) - \theta^{t(n)}(t(n+k))\| &\leq \int_{t(n+k)}^t\|h(\theta^{t(n)}(s), \lambda(\theta^{t(n)}(s)), Z^{(1)}_{n+k})\|ds\nonumber\\
                                   &\leq (\|h(0,0,Z^{(1)}_{n+k})\|+ L^{(1)}\|\lambda(0)\|)(t-t(n+k))\nonumber\\
                                   &+L^{(1)}(K+1)\int_{t(n+k)}^t\|\theta^{t(n)}(s)\|ds\nonumber\\
                                   &\leq C_Ta(n+k),\nonumber
\end{align}
where $C_T=(M+ L^{(1)}\|\lambda(0)\|) + L^{(1)}(K+1)(C_0 + (M+L^{(1)}\|\lambda(0)\|)T)e^{L^{(1)}(K+1)T}.$
Thus, 
\begin{align}
&\|\int_{t(n)}^{t(n+m)} (h(\theta^{t(n)}(t), \lambda(\theta^{t(n)}(t)), \mu(t)) - h(\theta^{t(n)}([t]), \lambda(\theta^{t(n)}([t])), \mu([t])))dt\|\nonumber\\
&\leq \sum_{k=0}^{m-1}\int_{t(n+k)}^{t(n+k+1)}\|h(\theta^{t(n)}(t), \lambda(\theta^{t(n)}(t)), Z^{(1)}_{n+k}) - h(\theta^{t(n)}([t]), \lambda(\theta^{t(n)}([t])), Z^{(1)}_{n+k})\|dt\nonumber\\
&\leq L\sum_{k=0}^{m-1}\int_{t(n+k)}^{t(n+k+1)}\|\theta^{t(n)}(t) - \theta^{t(n)}(t(n+k))\|dt\nonumber\\
&\leq C_TL\sum_{k=0}^{m-1}a(n+k)^2\nonumber\\
&\leq C_TL\sum_{k=0}^{\infty}a(n+k)^2 \to 0 \mbox{~as $n \to \infty$},\nonumber
\mbox{where $L=L^{(1)}(K+1).$}
\end{align}
Hence
\begin{align}
\|\bar{\theta}(t(n+m))-\theta^{t(n)}(t(n+m))&\leq L\sum_{k=0}^{m-1}a(n+k)\|\bar{\theta}(t(n+k)) - \theta^{t(n)}(t(n+k))\|\nonumber\\
& + C_TL\sum_{k=0}^{\infty}a(n+k)^2 + \sup_{k\geq 0}\|\delta_{n,n+k}\|\nonumber\\
& + L^{(1)}\sum_{k=0}^{m-1}a(n+k)\|w_{n+k} -\lambda(\theta_{n+k})\|\nonumber\\
&\leq L\sum_{k=0}^{m-1}a(n+k)\|\bar{\theta}(t(n+k)) - \theta^{t(n)}(t(n+k))\|\nonumber\\
& + C_TL\sum_{k=0}^{\infty}a(n+k)^2 + \sup_{k\geq 0}\|\delta_{n,n+k}\|\nonumber\\
& +  L^{(1)}T\sup_{k\geq 0} \|w_{n+k} - \lambda(\theta_{n+k})\|,\nonumber \mbox{ a.s.}
\end{align}
Define
\begin{equation}
K_{T,n} =  C_TL\sum_{k=0}^{\infty}a(n+k)^2 + \sup_{k\geq 0}\|\delta_{n,n+k}\|
 + L^{(1)}T\sup_{k\geq 0} \|w_{n+k} - \lambda(\theta_{n+k})\|. \nonumber 
\end{equation}
Note that $K_{T,n} \to 0$ a.s. 
The remainder of the proof follows in the exact 
same manner as the tracking lemma, see Lemma 1, Chapter 2 of \cite{borkar1}. 
\end{proof}

\begin{lemma}
\label{ode}
Suppose, $\mu^n(\cdot) \to \mu^{\infty}(\cdot) \in U^{(1)}$. Let $\theta^n(\cdot), n=1, 2, \dots, \infty$ 
denote solutions to (\ref{slow_tt}) corresponding to the case where $\mu(\cdot)$ 
is replaced by $\mu^n(\cdot)$, for $n=1,2, \dots, \infty$. 
Suppose $\theta^n(0) \to \theta^{\infty}(0)$. Then 
\begin{equation}
\lim_{n \to \infty} \sup_{t\in [0, T]}\|\theta^n(t) - \theta^{\infty}(t)\| \to 0 \nonumber 
\end{equation}
for every $T >0$. 
\end{lemma}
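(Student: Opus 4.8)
The plan is to mirror the proof of Lemma \ref{eps31} (i.e.\ Lemma 2.3 of \cite{borkar}), the only genuinely new feature being the presence of the composition $\lambda(\theta(\cdot))$ in the second slot of $h$; since $\lambda$ is $K$-Lipschitz by \textbf{(A6)}, this merely inflates the effective Lipschitz constant of the vector field from $L^{(1)}$ to $L^{(1)}(1+K)$. First I would record a uniform-in-$n$ a priori bound. Writing each solution in integral form, using the linear growth of $\tilde h$ (inherited from the linear growth of $h$, exactly as in the a priori estimate inside the proof of Lemma \ref{track_slow}) together with the $K$-Lipschitz continuity of $\lambda$, a Gronwall estimate gives $\sup_n \sup_{t\in[0,T]}\|\theta^n(t)\| \le C_T < \infty$, because the starting points $\theta^n(0)\to\theta^\infty(0)$ are bounded. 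Hence all trajectories $\theta^n(\cdot)$, including $\theta^\infty(\cdot)$, stay in a fixed compact set $Q_T \subset \mathbb{R}^d$ on $[0,T]$.

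Next I would decompose the difference of the two integral equations as
\begin{align}
\theta^n(t) - \theta^\infty(t) ={} &\big(\theta^n(0) - \theta^\infty(0)\big) + \gamma^n(t) \nonumber \\
&+ \int_0^t \big[\tilde h(\theta^n(s),\lambda(\theta^n(s)),\mu^n(s)) - \tilde h(\theta^\infty(s),\lambda(\theta^\infty(s)),\mu^n(s))\big]\,ds, \nonumber
\end{align}
where
\begin{equation}
\gamma^n(t) := \int_0^t \big[\tilde h(\theta^\infty(s),\lambda(\theta^\infty(s)),\mu^n(s)) - \tilde h(\theta^\infty(s),\lambda(\theta^\infty(s)),\mu^\infty(s))\big]\,ds. \nonumber
\end{equation}
The last integral is controlled by the Lipschitz property in \textbf{(A2)} (uniform in the measure argument) and the $K$-Lipschitz property of $\lambda$, giving the bound $L^{(1)}(1+K)\int_0^t\|\theta^n(s)-\theta^\infty(s)\|\,ds$. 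The crux is therefore to show $\sup_{t\in[0,T]}\|\gamma^n(t)\| \to 0$; granting this, set $\varepsilon_n := \|\theta^n(0)-\theta^\infty(0)\| + \sup_{t\in[0,T]}\|\gamma^n(t)\| \to 0$ and apply Gronwall's inequality to conclude $\sup_{t\in[0,T]}\|\theta^n(t)-\theta^\infty(t)\| \le \varepsilon_n\, e^{L^{(1)}(1+K)T} \to 0$, which is the claim.

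The hard part is the measure term $\gamma^n$. Here I would invoke the portion of Lemma 2.3 of \cite{borkar} already quoted in the proof of Lemma \ref{eps}: whenever $\mu^n(\cdot)\to\mu^\infty(\cdot)$ in $\mathcal{U}^{(1)}$, one has $\int_0^t\int \tilde f(s,z,a)\,\mu^n(s,dz\,da)\,ds \to \int_0^t\int \tilde f(s,z,a)\,\mu^\infty(s,dz\,da)\,ds$ for every $\tilde f \in C([0,t]\times S^{(1)}\times U^{(1)})$. Applying this coordinatewise with the test function $\tilde f(s,z,a) = h(\theta^\infty(s),\lambda(\theta^\infty(s)),z)$ — which is jointly continuous on the compact $[0,t]\times S^{(1)}\times U^{(1)}$ because $h$ is jointly continuous by \textbf{(A2)}, $\lambda$ is continuous, and $\theta^\infty(\cdot)$ is absolutely continuous hence continuous — yields $\gamma^n(t)\to 0$ for each fixed $t\in[0,T]$.

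To upgrade this pointwise convergence to the uniform convergence required, I would observe that the family $\{\gamma^n\}$ is uniformly equicontinuous on $[0,T]$: since $\theta^\infty(\cdot)$ stays in the compact $Q_T$ and $h$ is bounded there, the integrands are uniformly bounded by some constant $M_T$, so $\|\gamma^n(t)-\gamma^n(t')\|\le 2M_T|t-t'|$. Pointwise convergence together with equicontinuity on a compact interval forces uniform convergence (the standard Arzel\`a--Ascoli argument, as used in Lemma \ref{upsem}), establishing $\sup_{t\in[0,T]}\|\gamma^n(t)\|\to 0$ and completing the proof. The only place demanding care is precisely this last step, where the time-dependence of the test function $\tilde f$ must be absorbed into a \emph{single} jointly-continuous function on a compact product space so that the convergence result of \cite{borkar} applies and the equicontinuity estimate is available.
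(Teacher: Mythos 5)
Your proposal is correct and takes essentially the same route as the paper's own proof: both decompose $\theta^n(t)-\theta^\infty(t)$ into the initial-condition error, a Lipschitz term (same measure $\mu^n$, states differing) with effective constant inflated by the $K$-Lipschitz continuity of $\lambda$, and the measure-error term $\gamma^n(t)$, which is handled by applying the convergence result from Lemma~2.3 of \cite{borkar} to the jointly continuous test function $h(\theta^\infty(s),\lambda(\theta^\infty(s)),z)$ and upgraded to uniformity in $t$ via equicontinuity and Arzel\`a--Ascoli, before concluding with Gronwall. The extra details you spell out (the uniform a priori bound $C_T$ and the explicit $2M_T|t-t'|$ equicontinuity estimate) are precisely the steps the paper leaves implicit in deferring the remainder to Lemma~2.3 of \cite{borkar}.
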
 
\begin{proof}
It is shown in  Lemma~2.3 of \cite{borkar} that
\begin{equation}
\int_{0}^{t}\int \tilde{f}(s,z)\mu^{n}(s,dz)ds - \int_{0}^{t}\int \tilde{f}(s,z)\mu^{\infty}(s,dz)ds \to 0 \nonumber 
\end{equation}
for any $\tilde{f} \in C([0,T]\times S)$. 
Using this, one can see that
\begin{equation}
\|\int_{0}^{t} (\tilde{h}(\theta^{\infty}(s),\lambda(\theta^{\infty}(s)),  \mu^n(s)) - 
\tilde{h}(\theta^{\infty}(s), \lambda(\theta^{\infty}(s)), \mu^{\infty}(s)))ds \| \to 0.\nonumber 
\end{equation}
This follows because $\lambda$ is continuous and $h$ is jointly continuous in its arguments.
As a function of $t$, the integral on the left is equicontinuous and pointwise bounded. By the Arzela-Ascoli theorem, this 
convergence must in fact be uniform for $t$ in a compact set. Now for $t>0$, 
\begin{align}
&\|\theta^n(t)-\theta^{\infty}(t)\|\nonumber\\ 
&\leq \|\theta^n(0) - \theta^{\infty}(0)\| + \int_{0}^{t} \|\tilde{h}(\theta^n(s), \lambda(\theta^n(s)),\mu^n(s)) - \tilde{h}(\theta^{\infty}(s), \lambda(\theta^{\infty}(s)), \mu^{\infty}(s))\|ds\nonumber\\
&\leq \|\theta^n(0) - \theta^{\infty}(0)\| + \int_{0}^{t} (\|\tilde{h}(\theta^n(s), \lambda(\theta^n(s)),\mu^n(s)) - \tilde{h}(\theta^{\infty}(s), \lambda(\theta^{\infty}(s)), \mu^{n}(s))\|)ds\nonumber\\ 
&+ \int_{0}^{t} (\|\tilde{h}(\theta^{\infty}(s),\lambda(\theta^{\infty}(s)), \mu^{n}(s)) - \tilde{h}(\theta^{\infty}(s),\lambda(\theta^{\infty}(s)), \mu^{\infty}(s))\|)ds.\nonumber
\end{align}
Now, using the fact that $\lambda$ is Lipschitz with constant $K$ the remaining 
part of the proof follows in the same manner as Lemma~2.3 of \cite{borkar}. 
\end{proof}

Note that Lemma~\ref{ode} shows that every limit point $(\tilde{\mu}(\cdot), \tilde{\theta}(\cdot))$ 
of $(\mu(s+.),\bar{\theta}(s+.))$ as $s\to \infty$ is such that
 $\tilde{\theta}(\cdot)$ satisfies (\ref{slow_tt}) with $\mu(\cdot) = \tilde{\mu}(\cdot)$. 
\begin{lemma}
\label{slowmu}
Almost surely every limit point of $(\mu(s+.),\bar{\theta}(s+.))$ as 
$s \to \infty$ is of the form $(\tilde{\mu}(\cdot), \tilde{\theta}(\cdot))$, where $\tilde{\mu}(\cdot)$ 
satisfies $\tilde{\mu}(t) \in D^{(1)}(\tilde{\theta}(t), \lambda(\tilde{\theta}(t)))$. 
\end{lemma}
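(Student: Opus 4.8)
The plan is to follow the proof of Lemma~\ref{eps} essentially verbatim up to the point where the transition kernel enters, the single genuinely new ingredient being the replacement of the fast iterate $w_n$ by $\lambda(\theta_n)$ inside the kernel. That replacement is exactly what the faster timescale result ($\|w_n - \lambda(\theta_n)\| \to 0$ a.s.) supplies, and it is the step that injects the two-timescale structure into an otherwise single-timescale argument.

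First I would fix a subsequence $u(n) \uparrow \infty$ along which $\mu(u(n)+\cdot) \to \tilde\mu(\cdot)$ in $\mathcal{U}$ and $\bar\theta(u(n)+\cdot) \to \tilde\theta(\cdot)$ uniformly on compacts (such limit points exist by compactness of $\mathcal{U}$ and relative compactness of the interpolated trajectory). Let $\{f_i\}$ be countable dense in the unit ball of $C(S^{(1)})$, hence a separating class. For each $i$, I would introduce the martingale $\zeta^i_n = \sum_{m=1}^{n-1} a(m)\bigl(f_i(Z^{(1)}_{m+1}) - \int f_i(y)\,p^{(1)}(dy|Z^{(1)}_m,A^{(1)}_m,\theta_m,w_m)\bigr)$ with respect to $\mathcal{F}_n$; by \textbf{(A1)} the conditional law of $Z^{(1)}_{m+1}$ is precisely this kernel, so $\zeta^i_n$ is a martingale, and boundedness of the $f_i$ together with $\sum_n a(n)^2 < \infty$ from \textbf{(A4)} makes it square integrable with a.s.\ convergent quadratic variation, so it converges a.s.\ by the martingale convergence theorem. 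Using the non-increasing property of $\{a(n)\}$ (\textbf{(A4)}) to pass between $f_i(Z^{(1)}_{m+1})$ and $f_i(Z^{(1)}_m)$ exactly as in Lemma~\ref{eps}, I would obtain for every $t>0$ that
\begin{equation}
\int_{u(n)}^{u(n)+t}\int\Bigl(f_i(z) - \int f_i(y)\,p^{(1)}(dy|z,a,\hat\theta(s),\hat w(s))\Bigr)\mu(s,dz\,da)\,ds \to 0 \ \text{a.s.}, \nonumber
\end{equation}
where $\hat\theta(s)=\theta_n$ and $\hat w(s)=w_n$ on $[t(n),t(n+1))$.

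The key new step is to replace $\hat w(s)$ by $\lambda(\bar\theta(s))$. By the faster timescale result $\|w_n-\lambda(\theta_n)\|\to 0$ a.s., and since $\lambda$ is Lipschitz by \textbf{(A6)} while $\|\bar\theta(s)-\hat\theta(s)\|\to 0$ (the per-interval increment being $O(a(n))$ with bounded bracketed term), we get $\|\hat w(s)-\lambda(\bar\theta(s))\|\to 0$ as $s\to\infty$. The map $(z,a,\theta,w)\mapsto\int f_i(y)\,p^{(1)}(dy|z,a,\theta,w)$ is continuous by \textbf{(A5)}, hence uniformly continuous on the compact set $S^{(1)}\times U^{(1)}\times \mathcal{K}$, where $\mathcal{K}\subset\mathbb{R}^{d+k}$ contains all iterate values (finite by \textbf{(A7)}). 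Consequently the error from swapping $\hat\theta(s),\hat w(s)$ for $\bar\theta(s),\lambda(\bar\theta(s))$ vanishes uniformly, so the displayed integral stays $o(1)$ with the kernel $p^{(1)}(dy|z,a,\bar\theta(s),\lambda(\bar\theta(s)))$ in place of the original. I would then pass to the limit along $u(n)$ exactly via the argument from Lemma~2.3 of \cite{borkar} used in Lemma~\ref{eps}: since $\bar\theta(u(n)+\cdot)\to\tilde\theta(\cdot)$ uniformly and $\lambda$ is continuous, the integrands converge uniformly, and the weak convergence $\mu(u(n)+\cdot)\to\tilde\mu(\cdot)$ yields
\begin{equation}
\int_0^t\int\Bigl(f_i(z) - \int f_i(y)\,p^{(1)}(dy|z,a,\tilde\theta(s),\lambda(\tilde\theta(s)))\Bigr)\tilde\mu(s,dz\,da)\,ds = 0 \quad \forall i. \nonumber
\end{equation}
Differentiating by Lebesgue's theorem gives the pointwise identity for a.e.\ $t$, and since $\{f_i\}$ separates measures, $\tilde\mu(t,dy\times U^{(1)}) = \int p^{(1)}(dy|z,a,\tilde\theta(t),\lambda(\tilde\theta(t)))\,\tilde\mu(t,dz\,da)$; disintegrating $\tilde\mu(t)$ into its $S^{(1)}$-marginal and a regular conditional law on $U^{(1)}$ then identifies it as an ergodic occupation measure, i.e.\ $\tilde\mu(t)\in D^{(1)}(\tilde\theta(t),\lambda(\tilde\theta(t)))$. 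The main obstacle will be the $w$-substitution: it is where the faster timescale conclusion is indispensable, and it genuinely needs the \emph{uniform} (not merely pointwise) continuity of the kernel afforded by \textbf{(A5)} together with the boundedness of the iterates from \textbf{(A7)}.
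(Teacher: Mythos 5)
Your proposal is correct and follows essentially the same route as the paper's own proof: the same martingale $\zeta^i_n$ and step-size manipulation, the same substitution of $\lambda(\bar\theta(s))$ for $\hat w(s)$ via the faster timescale result together with uniform continuity of the kernel on a compact set, the same limit passage through Lemma~2.3 of \cite{borkar} with uniform convergence of the integrands, and the same Lebesgue/separating-class/disintegration conclusion. The only cosmetic difference is that the paper makes the compact set explicit as $M_1\times M_2$ with $M_2$ enlarged to contain $\lambda(M_1)$, a detail your set $\mathcal{K}$ should also absorb, which it does since $\lambda$ is continuous and the $\theta$-iterates lie in a compact set.
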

\begin{proof}
Suppose that $u(n)\uparrow \infty$, $\mu(u(n)+.) \to \tilde{\mu}(\cdot)$ and $\bar{\theta}(u(n)+.) \to \tilde{\theta}(\cdot)$.
Let $\{f_i\}$ be countable dense in the unit ball of $C(S)$, hence it is a separating 
class, i.e., $\forall i \int f_i d\mu = \int f_i d\nu$ 
implies $\mu=\nu$. 
For 
each $i$,
\begin{equation}
\zeta^i_n = \sum_{m=1}^{n-1}a(m)(f_i(Z^{(1)}_{m+1}) - \int f_i(y)p(dy|Z^{(1)}_m, A^{(1)}_m, \theta_m, w_m)),\nonumber 
\end{equation}
is a zero-mean martingale with $\mathcal{F}_n = \sigma(\theta_m, w_m, Z^{(1)}_m,A^{(1)}_m, m\leq n), n\geq 1$. 
Moreover, it is a square-integrable martingale due to the 
fact that $f_i$'s are bounded and each $\zeta^i_n$ is a finite sum. Its quadratic variation process 
\begin{equation}
A_n=\sum_{m=0}^{n-1}a(m)^2  E[(f_i(Z^{(1)}_{m+1}) - 
\int f_i(y)p(dy|Z^{(1)}_m, A^{(1)}_m, \theta_m, w_m))^2|\mathcal{F}_m] + E[(\zeta^i_0)^2]\nonumber 
\end{equation}
is almost surely convergent. 
By the martingale convergence theorem, $\{\zeta^i_n\}$  
converges a.s. Let $\tau(n,t)=\min\{m \geq n:  
t(m) \geq t(n)+t\}$ for $t\geq0, n\geq0$. Then as $n\to\infty$,
\begin{equation}
\sum_{m=n}^{\tau(n,t)} a(m)(f_i(Z^{(1)}_{m+1})-\int f_i(y)p(dy|Z^{(1)}_m, A^{(1)}_m,\theta_m, w_m))\to 0,\mbox{ a.s.,}\nonumber 
\end{equation}
for $t >0$. 
By our choice of $\{f_i\}$ and the fact that 
$\{a(n)\}$ are eventually non-increasing,
\begin{equation}
\sum_{m=n}^{\tau(n,t)}(a(m) - a(m+1))f_i(Z^{(1)}_{m+1}) \to 0,\mbox{a.s.}\nonumber 
\end{equation} 
Thus,
\begin{equation}
\sum_{m=n}^{\tau(n,t)} a(m)(f_i(Z^{(1)}_m)-\int f_i(y)p(dy|Z^{(1)}_m, A^{(1)}_m,\theta_m,w_m))\to 0,\mbox{ a.s.}\nonumber 
\end{equation}
which implies
\begin{equation}
\int_{t(n)}^{t(n)+t}(\int(f_i(z) - \int f_i(y)p(dy|z,a,\hat{\theta}(s),\hat{w}(s)))\mu(s,dzda))ds \to 0,\mbox{ a.s.}\nonumber 
\end{equation} 
Recall that $u(n)$ can be any general sequence other than $t(n)$. Therefore
\begin{equation}
\int_{u(n)}^{u(n)+t}(\int(f_i(z) - \int f_i(y)p(dy|z,a,\hat{\theta}(s),\hat{w}(s)))\mu(s,dzda))ds \to 0,\mbox{ a.s.,}\nonumber 
\end{equation}
(this follows from the fact that $a(n)\to 0$ and $f_i$'s are bounded)
where $\hat{\theta}(s) = \theta_n$ and $\hat{w}(s) = w_n$ when $s \in [t(n), t(n+1)), n\geq 0$. Now, one can claim from the above that
\begin{equation}
\int_{u(n)}^{u(n)+t}(\int(f_i(z) - \int f_i(y)p(dy|z,a,\bar{\theta}(s), \lambda(\bar{\theta}(s))))\mu(s,dzda))ds \to 0,\mbox{ a.s.}\nonumber 
\end{equation}
This is due to the fact that the map $S^{(1)} \times U^{(1)} \times \mathbb{R}^{d+k} \ni (z,a,\theta,w) \to \int f_i(y)p(dy|z,a,\theta,w)$ is continuous and hence 
uniformly continuous on the compact set $A =   S^{(1)} \times U^{(1)} \times M_1 \times M_2$ 
where $M_1$ is the compact set s.t. $\theta_n \in M_1~\forall n$ and 
 $M_2= \{w :  \|w\| \leq \max(\sup\|w_n\|, K')\}$
where $K'$ is the bound for the compact set $\lambda(M_1)$.    
Here we also use the fact that $\|w_m - \lambda(\bar{\theta}(s))\|\to 0$ for $s\in [t_m, t_{m+1})$ as $\lambda$ is Lipschitz and $\|w_m -\lambda(\theta_m)\| \to 0$. 
The above convergence is equivalent to
\begin{equation}
\int_{0}^{t}(\int(f_i(z) - \int f_i(y)p(dy|z,a,\bar{\theta}(s+u(n)), \lambda(\bar{\theta}(s+u(n)))))\mu(s+u(n),dzda))ds \to 0\mbox{ a.s.}\nonumber 
\end{equation} 
Fix a sample point in the probability one set on which the convergence above holds for all $i$. Then the convergence 
above leads to 
\begin{equation}
\label{conjun}
\int_{0}^{t}(\int f_i(z) - \int f_i(y)p(dy|z,a, \tilde{\theta}(s), \lambda(\tilde{\theta}(s))))\tilde{\mu}(s, dzda)ds =0~\forall i. 
\end{equation}
For showing the above, we use one part of the proof from  Lemma~2.3 of \cite{borkar} that if $\mu^n(\cdot) \to \mu^{\infty}(\cdot) \in \mathcal{U}$ then 
for any $t$,
\begin{equation}
\int_{0}^{t} \int \tilde{f}(s,z,a)\mu^n(s,dzda)ds - \int_{0}^{t} \int \tilde{f}(s,z,a)\mu^{\infty}(s,dzda)ds \to 0\nonumber 
\end{equation}
for all $\tilde{f} \in 
C([0,t] \times S^{(1)} \times U^{(1)})$. In addition, we make use of the fact that $\tilde{f}_n(s,z,a) =   \int f_i(y)p(dy|z,a,\bar{\theta}(s+u(n)), \lambda(\bar{\theta}(s+u(n))))$ 
converges uniformly to  $\tilde{f}(s,z,a) = \linebreak \int f_i(y)p(dy|z,a,\tilde{\theta}(s), \lambda(\tilde{\theta}(s)))$. To prove this, define
$g :C([0,t]) \times [0,t] \times S^{(1)} \times U^{(1)} \to \mathbb{R}$ by  $g(\theta(\cdot), s,z,a) 
 = \int f_i(y)p(dy|z,a, \theta(s),\lambda(\theta(s)))$. 
Let $A' = \{\bar{\theta}(u(n)+.)|_{[u(n),u(n)+t]}, n \geq 1\} \cup \tilde{\theta}(\cdot)|_{[0,t]}$. 
Using the same argument as in Lemma~\ref{eps} and \textbf{(A6)}, i.e., $\lambda$ is Lipschitz (the latter 
helps to claim that if $\theta_n(\cdot) \to \theta(\cdot)$
uniformly then $\lambda(\theta_n(\cdot)) \to \lambda(\theta(\cdot))$ uniformly),  
it can be seen that $g$ is continuous. Then $A'$ is compact  
as it is a union of a sequence of functions and its limit. 
So, $g|_{(A'\times [0,t] \times S^{(1)} \times U^{(1)})}$ is uniformly 
continuous. Then a similar argument as in Lemma~\ref{upsem} shows 
equicontinuity of $\{\tilde{f}_n(.,.)\}$ that results in 
uniform convergence and thereby (\ref{conjun}).
An application of Lebesgue's theorem in conjunction 
with (\ref{conjun}) shows that
\begin{equation}
\int (f_i(z) - \int f_i(y)p(dy|z,a,\tilde{\theta}(t), \lambda(\tilde{\theta}(t)))\tilde{\mu}(t, dzda) = 0~\forall i\nonumber 
\end{equation}
for a.e. $t$. 
By our choice of $\{f_i\}$, this leads to
\begin{equation}
\tilde{\mu}(t, dy \times U^{(1)}) = \int p(dy|z,a,\tilde{\theta}(t), \lambda(\tilde{\theta}(t)))\tilde{\mu}(t, dzda),\nonumber 
\end{equation}
a.e. $t$.     
\end{proof}

Lemma~\ref{ode} shows that every limit point $(\tilde{\mu}(\cdot), \tilde{\theta}(\cdot))$ of $(\mu(s+.),\bar{\theta}(s+.))$ as $s\to \infty$ is such that
$\tilde{\theta}(\cdot)$ satisfies (\ref{slow_tt}) with $\mu(\cdot) = \tilde{\mu}(\cdot)$. Hence, $\tilde{\theta}(\cdot)$ is absolutely continuous. Moreover, 
using Lemma~\ref{slowmu}, one can see that it satisfies (\ref{slower_di}) a.e. $t$, hence is a solution to the differential inclusion (\ref{slower_di}).  
\\ \indent

\begin{proof}[{Proof of Theorem \ref{thm} and \ref{thm_local}}] 
From the previous three lemmas it is easy to see that \linebreak
$A_0 = \cap_{t\geq 0}\overline{\{\bar{\theta}(s): s\geq t\}}$ is almost everywhere an internally chain transitive set of (\ref{slower_di}).
\end{proof}

\begin{corollary}
\label{main_col}
Under the additional assumption that the inclusion 
\begin{equation}
\dot{\theta}(t)\in \hat{h}(\theta(t))), \nonumber 
\end{equation}
has a global attractor set $A_1$,  
\begin{equation}
(\theta_n, w_n) \to \cup_{\theta^* \in A_1}(\theta^*, \lambda(\theta^*)) \mbox{a.s. as $n \to \infty$.}\nonumber 
\end{equation} 
\end{corollary}
\begin{proof}
Follows directly from Theorem \ref{thm} and Lemma~\ref{ga}.
\end{proof}

\section{Discussion on the assumptions: Relaxation of (A2)} 
\label{relax}
We discuss relaxation of the uniformity of the Lipschitz constant 
w.r.t state of the controlled Markov process for the 
vector field. The modified 
assumption here is
\begin{enumerate}[label=\textbf{(A\arabic*)'}]
\setcounter{enumi}{1}
 \item $h :  \mathbb{R}^{d+k} \times S^{(1)} \to \mathbb{R}^d$ is  
jointly continuous as well as Lipschitz in its first two arguments with the third argument fixed to same value and 
Lipschitz constant is a function of this value. The latter condition means that
\begin{equation}
\forall z^{(1)} \in S^{(1)}, \|h(\theta, w, z^{(1)}) - h(\theta', w', z^{(1)})\| \leq L^{(1)}(z^{(1)})(\|\theta-\theta'\| + \|w - w'\|).\nonumber
\end{equation}
A similar condition holds for $g$ where the Lipschitz constant is $L^{(2)}: S^{(2)} \to \mathbb{R}^+$.
\end{enumerate}
Note that this allows $L^{(i)}(\cdot)$ to be an unbounded measurable function making it discontinuous due to \textbf{(A1)}.
The straightforward solution for implementing this is to additionally assume the following:
\begin{enumerate}[label=\textbf{(A\arabic*)}]
\setcounter{enumi}{7}
\item $\sup_n L^{(i)}(Z^{(i)}_n) < \infty$ a.s.  
\end{enumerate}
still allowing $L^{(i)}(\cdot)$ to be an unbounded function.
As all our proofs in Section \ref{mres} are shown for every sample point of a probability 1 set, 
our proofs will go through. In the following we give such an example for the case where the Markov 
process is uncontrolled. 

It is enough to consider examples with locally compact $S^{(i)}$ (because then we can take the standard one-point compactification 
and define $L^{(i)}$ arbitrarily at the extra point).

Let $S^{(i)}=\Bbb Z$ and let $Z^{(i)}_n, n \geq 0$ be the Markov Chain on $\Bbb Z$ starting at $0$ with transition probabilities
$p(n,n+1)=p$ and $p(n,n-1)=1-p$. We assume $1/2 < p < 1$. Let $L^{(i)}(n) = \big( \frac{1-p}{p} \big)^n$.

Note that $Z^{(i)}_n,n \geq 0$ is a transient Markov Chain with $Z^{(i)}_n \to +\infty$ a.s. 
From this it follows that $\inf_n Z^{(i)}_n>-\infty$, and 
thus $\sup_n L^{(i)}(Z^{(i)}_n)< \infty$ almost surely. 
It follows that $(L^{(i)}(Z^{(i)}_n))_{n \in \Bbb N}$ is a bounded sequence with probability $1$, but this bound is 
clearly not deterministic since there is a non-zero probability that the sample path reaches large negative values.

However in the following we discuss on the idea of using moment assumptions to
analyze the convergence of single timescale controlled Markov noise framework of \cite{borkar}. 
We show that the iterates (\ref{eps2}) (with $\epsilon_n=0$)  converge to an internally 
chain transitive set of the o.d.e. (\ref{auto}). For this we prove 
Lemma \ref{track_fast} under the following assumptions: 
For all $T >2, i=1,2$,  
\begin{enumerate}[label=\textbf{(S\arabic*)}]
 \item The controlled Markov process $Y_n$ as described in \cite{borkar} takes values in a compact metric space. 
\item $1 \geq a(n)>0$, $\sum_n a(n) = \infty$, $\sum_n a(n)^2 < \infty$ and $a(n+1)\leq a(n), n\geq 0$. 
\item $h :  \mathbb{R}^{d} \times S \to \mathbb{R}^d$ Lipschitz in its first argument w.r.t the second. 
The condition means that
\begin{equation}
\forall z \in S, \|h(\theta, z) - h(\theta', z)\| \leq L(z)(\|\theta-\theta'\|).\nonumber.
\end{equation}
 \item Let  $\phi(n,T) = \max(m: a(n) + a(n+1) + \dots + a(n+m) \leq T)$ with the bound depending on $T$. 
Then \begin{equation}
        \sup_n E\left[\left(\sup_{0 \leq m \leq \phi(n,T)}L(Y_{n+m})\right)^{16}\right] < \infty. \nonumber
       \end{equation}
 \item \begin{equation}
        \sup_n E\left[e^{8\sum_{m=0} ^ {\phi(n,T)} a(n+m) L(Y_{n+m})}\right] < \infty. \nonumber
       \end{equation}
Note that \textbf{(S4)} and \textbf{(S5)} are trivially satisfied in the case when $L(z)= L$ for all $z \in S$ i.e. the 
case of Section \ref{secdef}.
\begin{remark}
As long as one can prove Lemma \ref{track_fast} for all $T >2$ it will hold for all $T>0$, thus one can combine \textbf{(S4)} and 
\textbf{(S5)} into the following assumption:
\begin{equation}
\sup_n E\left[e^{8T\sup_{0 \leq m \leq \phi(n,T)} L(Y_{n+m})}\right] < \infty. \nonumber 
\end{equation} 
As an instance where such an assumption is verified, consider the Markov process of \cite[ (3.4)]{metivier} defined by 
\begin{equation}
Y_{n+1}= A(\theta_n) Y_n + B(\theta_n) W_{n+1} \nonumber
\end{equation}
where $A(\theta), B(\theta), \theta \in \mathbb{R}^d$, are
$k \times k$-matrices and $(W_n)_{n\geq O}$ are independent and 
identically distributed $\mathbb{R}^k$-valued random variables.  
Assume that the following 
conditions hold true for all $x,y \in S$:
\begin{enumerate}
 \item $L(Y_n)$ is a non-decreasing sequence.
 \item For $r>0, R>0$, \begin{equation}
\sup_{\|\theta\| \leq R} e^{rL(A(\theta) x + B(\theta) y)} \leq L_R {\alpha_R}^r e^{r L(x)} + M_R e^{C_R L(y)} \nonumber 
\end{equation}
for some $C_R, M_R, L_R >0$ and $\alpha_R < 1$.
\end{enumerate}Then  
\begin{align}
\nonumber
&E\left[e^{rL(Y_n)}|Y_{n-1} = x, \theta_{n-1} = \theta\right] \\ \nonumber
&\leq \int e^{rL\left(A(\theta)x + B(\theta) y\right)} \mu_n (dy) \\ \nonumber
&\leq L_R {\alpha_R}^r  e^{rL(x)} + M_R E\left[e^{C_R L(W_n)}\right] \\ \nonumber
&=L_R{\alpha_R}^r  e^{r L(x)} + K_R,\nonumber
\end{align}
with $K_R = M_R E\left[e^{C_R L(W_n)}\right]$ (this follows from the fact that $W_n$ are i.i.d 
if we assume that $E\left[e^{C_R L(W_1)}\right] < \infty$). Choosing large values of $r$, one can show that 
\begin{equation}
E\left[e^{rL(Y_n)}|Y_{n-1} = x, \theta_{n-1} = \theta\right] \leq \beta_R e^{r L(x)}+ K_R \nonumber
\end{equation}
where $\beta_R = L_R{\alpha_R}^r < 1$.
Using the above, for large $r$ 
\begin{align}
E\left[e^{r L(Y_n)}\right] = E\left[E\left[e^{r L(Y_n)}| Y_{n-1}, \theta_{n-1}\right]\right] \leq \beta_R  E\left[e^{rL(Y_{n-1})}\right] + K_R, \nonumber
\end{align}
which shows that 
\begin{equation}
\sup_n E\left[e^{rL(Y_n)}\right] < \infty.\nonumber  
\end{equation} Choosing $r > 8T$,  
\begin{equation}
\sup_n E\left[e^{8TL(Y_n)}\right] < \infty.\nonumber  
\end{equation}
 Note that this is a much weaker assumption that \textbf{(A8)}.
\end{remark}

 \item The noise sequence $M_n, n \geq 0$ (need not be a martingale difference sequence) 
satisfies the following condition \begin{equation}
        \sup_n E\left[\left(\sum_{m=0} ^ {\phi(n,T)} \|M_{n+m+1}\|\right)^4\right] < \infty. \nonumber
       \end{equation}
\item $\sup_n \|\theta_n\| < \infty$.
\end{enumerate}
With the above assumptions we prove the following tracking lemma:
\begin{lemma}
\label{track}
For any $T >0, \sup_{t\in [s,s+T]}\|\bar{\theta}(t) - \theta^s(t)\| \to 0,$ a.s.  
\end{lemma}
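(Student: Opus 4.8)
The plan is to mirror the structure of the classical tracking lemma (Lemma~1, Chapter~2 of \cite{borkar1}), but to replace the step where the error is shown to vanish \emph{deterministically} by a Borel--Cantelli argument, since with a state-dependent Lipschitz constant the controlling bounds become random. First I would fix $T>2$ and $n$, and for $t(n+m)\in[t(n),t(n)+T]$ write $\bar\theta(t(n+m))$ as a sum of increments of (\ref{eps2}) (with $\epsilon_n=0$) and $\theta^{t(n)}(t(n+m))$ as the corresponding Riemann-sum expansion of (\ref{auto}); subtracting and invoking the pointwise Lipschitz bound \textbf{(S3)} gives, with $z_k:=\|\bar\theta(t(n+k))-\theta^{t(n)}(t(n+k))\|$, the discrete inequality $z_m\le K_{T,n}+\sum_{k=0}^{m-1}a(n+k)L(Y_{n+k})z_k$. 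The discrete Gronwall lemma then yields $z_m\le K_{T,n}\,E_n$ with $E_n:=\exp\!\big(\sum_{k=0}^{\phi(n,T)}a(n+k)L(Y_{n+k})\big)$. The crucial difference from the constant-$L$ case is that there $\sum_k a(n+k)L\le LT$ makes $E_n=e^{LT}$ a harmless constant, whereas here $E_n$ is a genuinely random factor that does not converge, which is exactly what \textbf{(S5)} is designed to tame.

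Next I would split $K_{T,n}=D_n+G_n$ into a discretization part $D_n$ and a noise part $G_n=\sup_{k}\|\delta_{n,n+k}\|$ with $\delta_{n,n+k}=\sum_{j=n}^{n+k-1}a(j)M_{j+1}$. A Gronwall estimate on (\ref{auto}) over $[t(n),t(n)+T]$ bounds the o.d.e.\ trajectory by $\sup_t\|\theta^{t(n)}(t)\|\le (C_0+MT)E_n$, where $C_0=\sup_n\|\theta_n\|<\infty$ by \textbf{(S8)} and $M$ controls $\|h(0,\cdot)\|$; feeding this into the per-interval bound for $D_n$ and, crucially, using the monotonicity of $\{a(n)\}$ from \textbf{(S2)} to write $\sum_{k=0}^{\phi(n,T)}a(n+k)^2\le a(n)\sum_k a(n+k)\le a(n)(T+1)$, gives $D_n\lesssim (\sup_k L(Y_{n+k}))^2\,E_n\,a(n)$. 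Monotonicity again gives $G_n\le a(n)\sum_{j=0}^{\phi(n,T)}\|M_{n+j+1}\|$, and here (unlike in Lemma~\ref{track_fast}) I cannot appeal to martingale convergence, so I must use the moment bound \textbf{(S6)} directly. Altogether $\sup_m z_m\le K_{T,n}E_n\lesssim (\sup_k L(Y_{n+k}))^2E_n^2\,a(n)+\big(a(n)\sum_j\|M_{n+j+1}\|\big)E_n$.

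The main step is then Borel--Cantelli: I would show $\sum_n P(\sup_m z_m>\epsilon)<\infty$ for every $\epsilon>0$. Applying a second-order Markov inequality and Cauchy--Schwarz to the first term gives $P\big((\sup_k L)^2E_n^2a(n)>\epsilon\big)\le \epsilon^{-2}a(n)^2\sqrt{E[(\sup_kL(Y_{n+k}))^{8}]}\,\sqrt{E[E_n^{8}]}$, which is $\le C\,a(n)^2$ because \textbf{(S4)} controls $E[(\sup_kL)^{16}]$ (hence its eighth moment) and \textbf{(S5)} controls $E[E_n^{8}]$; an identical split handles the noise term using \textbf{(S6)}'s fourth-moment bound on $\sum_j\|M_{n+j+1}\|$ together with $E[E_n^4]<\infty$. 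Since $\sum_n a(n)^2<\infty$ by \textbf{(S2)}, both series converge, so $\sup_m z_m\to0$ a.s.\ along the times $t(n)$; the passage to an arbitrary starting time $s$ and the within-interval interpolation error follow the standard continuity-of-flow argument of \cite[Chapter~2]{borkar1} (the same Gronwall estimate shows $\theta^s$ and $\theta^{t(n)}$ with nearby initial data stay close). I expect the main obstacle to be precisely this moment bookkeeping: because the random Gronwall factor $E_n^2$ multiplies the discretization error, one must match the exponents (the $16$, $8$ and $4$ appearing in \textbf{(S4)}--\textbf{(S6)}) via Hölder so that a second-moment Markov bound combines with $\sum_n a(n)^2<\infty$ to give summability, and one must ensure the auxiliary velocity and $\|h(0,\cdot)\|$ bounds introduce no factor of $E_n$ beyond the two already accounted for.
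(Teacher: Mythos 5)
Your proposal is correct and follows essentially the same path as the paper's proof: a discrete Gronwall estimate carrying the random factor $e^{\sum_k a(n+k)L(Y_{n+k})}$, a split of the tracking error into a discretization part and a noise part (with the noise handled by the moment bound \textbf{(S6)} rather than martingale convergence, exactly as the paper does), Cauchy--Schwarz against \textbf{(S4)}--\textbf{(S5)} with the same exponents $16$, $8$, $4$, and summability via $\sum_n a(n)^2<\infty$ — the paper's summation of $E[\rho(n,T)^2]$ and your Chebyshev tail bounds being trivially equivalent routes to Borel--Cantelli. The one point where your bookkeeping is slightly sharper is the monotonicity bound $\sum_{k=0}^{\phi(n,T)}a(n+k)^2\le a(n)(T+1)$: the paper instead retains the factor $\phi(n,T)^2a(n)^4$ and must remark that for $a(n)=n^{-k}$ with $\tfrac12<k<1$ convergence requires strengthened moment assumptions, a caveat your estimate avoids.
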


\begin{proof}
Let $t(n) \leq t \leq t(n+m)$. Now, if $0 \leq k \leq (m-1)$ and $t \in (t(n+k), t(n+k+1)],$
\begin{align}
\|\theta^{t(n)}(t)\| &\leq \|\bar{\theta}(t(n)\| + \|\int_{t(n)}^{t} \tilde{h}(\theta^{t(n)}(\tau), \mu(\tau))d\tau\|\nonumber\\
                &\leq \|\theta_n\| + \sum_{l=0}^{k-1} \int_{t(n+l)}^{t(n+l+1)} (\|h(0,Y_{n+l})\|+ L(Y_{n+l})\|\theta^{t(n)}(\tau)\|))d\tau\nonumber\\
                & +\int_{t(n+k)}^t(\|h(0,Y_{n+k})\|+ L(Y_{n+k})\|\theta^{t(n)}(\tau)\|))d\tau\nonumber\\
                &\leq C_0 + MT+ \int_{t(n)}^{t} L(Y(\tau)) \|\theta^{t(n)}(\tau)\|d\tau \nonumber
\end{align}
where $Y(\tau) = Y_n$ if $\tau \in [t(n),t(n+1))$. Then it follows from an application of Gronwall inequality that
\begin{equation}
\|\theta^{t(n)}(t)\| \leq C e^{\int_{t(n)}^{t} L(Y(\tau)) d\tau} \mbox{~~a.e. $t$}\nonumber
\end{equation}
where $C=C_0 + MT$.
Next, 
\begin{align}
\|\theta^{t(n)}(t) - \theta^{t(n)}(t(n+k))\| &\leq \int_{t(n+k)}^t\|h(\theta^{t(n)}(s), Y_{n+k})\|ds\nonumber\\
                                   &\leq \|h(0,Y_{n+k})\|(t-t(n+k))+ L(Y_{n+k})\int_{t(n+k)}^t\|\theta^{t(n)}(s)\|ds\nonumber\\
                                   &\leq M a(n+k) + C L(Y_{n+k})\int_{t(n+k)}^te^{\int_{t(n)}^s L(Y(\tau)) d\tau} ds.\nonumber
\end{align}
Then 
\begin{align}
&\|\int_{t(n)}^{t(n+m)} (h(\theta^{t(n)}(t), \mu(t)) - h(\theta^{t(n)}([t]), \mu([t])))dt\|\nonumber\\
&\leq \sum_{k=0}^{m-1}\int_{t(n+k)}^{t(n+k+1)}\|h(\theta^{t(n)}(t), Y_{n+k}) - h(\theta^{t(n)}([t]), Y_{n+k})\|dt\nonumber\\
&\leq \sum_{k=0}^{m-1}L(Y_{n+k}) \int_{t(n+k)}^{t(n+k+1)} \|\theta^{t(n)}(t) - \theta^{t(n)}(t(n+k))\|dt\nonumber\\
&\leq \sum_{k=0}^{m-1} c_k\nonumber
\end{align}
where 
\begin{equation}
c_k = L(Y_{n+k})a(n+k)^2\left[M + CL(Y_{n+k}) e^{\sum_{i=0}^{k} a(n+i) L(Y_{n+i})}\right].\nonumber 
\end{equation}
\begin{align}
\|\bar{\theta}(t(n+m))-\theta^{t(n)}(t(n+m))\| &\leq \sum_{k=0}^{m-1}L(Y_{n+k})a(n+k)\|\bar{\theta}(t(n+k)) - \theta^{t(n)}(t(n+k))\|\nonumber\\
& + \sum_{k=0}^{m-1} c_k + \|\delta_{n,n+m}\|,\nonumber 
\end{align}
where $\delta_{n,n+m}=\sum_{k=n}^{n+m-1}a(k)M_{k+1}$.

Therefore using discrete Gronwall inequality we get
\begin{equation}
\|\bar{\theta}(t(n+m))-\theta^{t(n)}(t(n+m))\| \leq r(m,n)  e^{\sum_{k=0}^{m-1} a(n+k) L(Y_{n+k})}\nonumber
\end{equation}
where $r(m,n) = \sum_{k=0}^{m-1} (c_k + a(n+k) \|M_{n+k+1}\|)$.

Now, for some $\lambda \in [0,1]$, 
\begin{align}
&\|\theta^{t(n)}(t) - \bar{\theta}(t)\| \nonumber \\
&\leq (1-\lambda) \|\theta^{t(n)}(t(n+m+1)) - \bar{\theta}(t(n+m+1)) +\lambda \|\theta^{t(n)}(t(n+m))-\bar{\theta}(t(n+m))\| \nonumber \\
& + \max(\lambda, 1- \lambda) \int_{t(n+m)}^{t(n+m+1)} \|\tilde{h}(\theta^{t(n)}(s),\mu(s))\|ds \nonumber \\
&\leq r(m+1,n) e^{\sum_{k=0}^{m} a(n+k) L(Y_{n+k})} + a(n+m)\left[M + C L(Y_{n+m}) e^{\sum_{k=0}^{m} a(n+k) L(Y_{n+k})}\right]\nonumber. 
\end{align}
Therefore 
\begin{align}
\rho(n,T):= \sup_{t \in [t(n),t(n) + T]} \|\theta^{t(n)}(t) - \bar{\theta}(t)\| &\leq r(\phi(n,T+1),n) e^{\sum_{k=0}^{\phi(n,T)} a(n+k) L(Y_{n+k})} \nonumber \\
                                                                      &+ a(n)\left[M + C \sup_{0\leq m \leq \phi(n,T)} L(Y_{n+m}) e^{\sum_{k=0}^{\phi(n,T)} a(n+k) L(Y_{n+k})}\right].\nonumber 
\end{align}

Now to prove the a.s. convergence of the quantity in the left hand side as $n \to \infty$, we have using Cauchy-Schwartz 
inequality:
\begin{align}
\sum_{n=1}^{\infty} E[{\rho(n,T)}^2] \leq & 2K_T \sum_{n=1}^{\infty}\left(E\left[\left(r(\phi(n,T+1),n)\right)^4\right]\right)^{1/2} + 4M^2\sum_{n=0}^{\infty}a(n)^2 + \nonumber \\
                                          & 4C^2\sum_{n=1}^{\infty} a(n)^2 E\left[\left(\sup_{0\leq m \leq \phi(n,T)} L(Y_{n+m})\right)^2 e^{2\sum_{k=0}^{\phi(n,T)} a(n+k) L(Y_{n+k})}\right],\nonumber
\end{align}
where $K_T = \sqrt{\sup_n E[e^{4\sum_{k=0}^{\phi(n,T)} a(n+k) L(Y_{n+k})}]}$ which depends only on $T$ due to \textbf{(S5)}.
Now, the third term in the R.H.S is clearly finite from the assumptions \textbf{(S4)} and \textbf{(S5)}.
Now we analyze the first term i.e. 
\begin{align}
\label{num}
\sum_{n=1}^{\infty}\left(E\left[{r(\phi(n,T+1),n)}^4\right]\right)^{1/2} \leq & 2\sqrt{2}\sum_{n=1}^{\infty}\left(E\left[\left(\sum_{k=0}^{\phi(n,T)} c_k\right)^4\right]\right)^{1/2} \nonumber \\
                                                      & + 2\sqrt{2}\sum_{n=1}^{\infty} \left(E\left[\left(\sum_{k=0}^{\phi(n,T)} a(n+k) \|M_{n+k+1}\|\right)^4\right]\right)^{1/2}.
\end{align}

Next we analyze the first term in the R.H.S of (\ref{num}) again using Cauchy-Schwartz inequality:
\begin{align}
&\sum_{n=1}^{\infty}\left(E\left[\left(\sum_{k=0}^{\phi(n,T)} c_k\right)^4\right]\right)^{1/2} \nonumber \\
&\leq 8 M^2 \sum_{n=1}^{\infty} \phi(n,T)^2 a(n)^4  \left(E\left[\left(\sup_{0 \leq k \leq \phi(n,T)} L(Y_{n+k})\right)^4\right]\right)^{1/2}+ \nonumber \\
& 8 C^2 \sum_{n=1}^{\infty} \phi(n,T)^2 a(n)^4  \left(E\left[\left(\sup_{0 \leq k \leq \phi(n,T)} L(Y_{n+k})\right)^8 e^{4\sum_{i=0}^{\phi(n,T)} a(n+i) L(Y_{n+i})}\right]\right)^{1/2}.\nonumber 
\end{align}
Therefore the the R.H.S will be finite if we can show that 
$\sum_{n=1}^{\infty} \phi(n,T)^2 a(n)^4$ is finite. 
For common step-size sequence $a(n) =\frac{1}{n}$, $\phi(n,T)= O(n)$ thus the above series converges clearly. 
One can make the series converge 
for all $a(n)= \frac{1}{n^k}$ with $\frac{1}{2} < k \leq 1$ by putting 
assumptions on higher moments in \textbf{(S4)} and \textbf{(S5)} .

In the above we have used the following inequality repeatedly for non-negative random variables $X$ and $Y$:
\begin{equation}
\sqrt{E\left[\left(X+Y\right)^{2^n}\right]} \leq 2^{\frac{2n-1}{2}}\left[\sqrt{E[X^{2^n}]} +  \sqrt{E[Y^{2^n}]}\right]\nonumber
\end{equation}
with $n \in \mathbb{N}$.

Now, 
\begin{align}
\sum_{n=1}^{\infty} \left(E\left[\left(\sum_{k=0}^{\phi(n,T)} a(n+k) \|M_{n+k+1}\|\right)^4\right]\right)^{1/2} \nonumber \\
\leq \sum_{n=1}^{\infty} a(n)^2\left(E\left[\left(\sum_{k=0} ^ {\phi(n,T)} \|M_{n+k+1}\|\right)^4\right]\right)^{1/2}\nonumber
\end{align}
which is finite under assumption \textbf{(S5)} and the fact that $a(n)$ are non-increasing.
\end{proof}
\begin{remark}
However there are problems in the method of using moment assumptions.
 Even for the single timescale case the proof of Lemma 2.3 of \cite{borkar} does not go through.
This lemma is used to prove that 
the o.d.e. trajectory $\theta^{u(n(k))}(\cdot)$ associated with $\mu(u(n(k))+.)$ 
tracks (in the limit) the o.d.e. trajectory $\theta^{\infty}(\cdot)$ 
associated  with $\tilde{\mu}(\cdot)$ for the o.d.e (12).
Here  $u(n(k)) \uparrow \infty$ is such that
$\bar{\theta}(u(n(k))+.) \to \tilde{\theta}(\cdot)$ and 
$\mu(u(n(k))+.) \to \tilde{\mu}(\cdot)$. In this context,  
the statement of Lemma 2.3 \cite{borkar} will be changed to  
the following:
\begin{lemma}
\label{eps3}
\begin{equation}
\lim_{k \to \infty} \sup_{t\in [0, T]}\|\theta^{u(n(k))}(t) - \theta^{\infty}(t)\| = 0 \mbox{~a.s.}\nonumber 
\end{equation}
for every $T >1$ and $L:S\to [1,\infty)$. 
\end{lemma}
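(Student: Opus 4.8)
The plan is to follow the decomposition behind Lemma~2.3 of \cite{borkar}, but to drag the state-dependent Lipschitz coefficient $L(\cdot)$ of \textbf{(S3)} through the Gronwall step and then try to tame the random exponential factor it creates using the moment hypotheses \textbf{(S4)}--\textbf{(S5)}. Write $\mu^k(\cdot):=\mu(u(n(k))+\cdot)$ and $\mu^\infty(\cdot):=\tilde{\mu}(\cdot)$, so that $\theta^k:=\theta^{u(n(k))}$ and $\theta^\infty$ solve (\ref{auto}) with $\mu(\cdot)$ replaced by $\mu^k$ and $\mu^\infty$ respectively, and recall $\theta^k(0)=\bar\theta(u(n(k)))\to\tilde\theta(0)=\theta^\infty(0)$. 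First I would dispose of the frozen-trajectory term: since $h$ is jointly continuous and $\theta^\infty(\cdot)$ is continuous, $(s,z)\mapsto h(\theta^\infty(s),z)$ belongs to $C([0,T]\times S)$, so the portion of the proof of Lemma~2.3 of \cite{borkar} that does \emph{not} invoke the Lipschitz constant carries over verbatim and yields, by equicontinuity in $t$ and Arzela--Ascoli exactly as in Lemma~\ref{ode},
\begin{equation}
\varepsilon_k:=\sup_{t\in[0,T]}\Big\|\int_0^t\big(\tilde{h}(\theta^\infty(s),\mu^k(s))-\tilde{h}(\theta^\infty(s),\mu^\infty(s))\big)\,ds\Big\|\to 0.\nonumber
\end{equation}

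Next I would set up the Gronwall estimate. Splitting the integrand in the usual way and using \textbf{(S3)},
\begin{align}
\|\theta^k(t)-\theta^\infty(t)\|
&\le\|\theta^k(0)-\theta^\infty(0)\|+\varepsilon_k+\int_0^t\big\|\tilde{h}(\theta^k(s),\mu^k(s))-\tilde{h}(\theta^\infty(s),\mu^k(s))\big\|\,ds\nonumber\\
&\le\|\theta^k(0)-\theta^\infty(0)\|+\varepsilon_k+\int_0^t\ell_k(s)\,\|\theta^k(s)-\theta^\infty(s)\|\,ds,\nonumber
\end{align}
where $\ell_k(s):=\int L(z)\,\mu^k(s)(dz)$ is the time-varying Lipschitz coefficient. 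Gronwall then gives
\begin{equation}
\sup_{t\in[0,T]}\|\theta^k(t)-\theta^\infty(t)\|\le\big(\|\theta^k(0)-\theta^\infty(0)\|+\varepsilon_k\big)\,e^{\int_0^T\ell_k(s)\,ds}.\nonumber
\end{equation}
The prefactor vanishes as $k\to\infty$; since $\mu^k(s)$ is the Dirac mass realized at the index corresponding to $s+u(n(k))$, the exponent equals, up to boundary/interpolation error, the window sum $W_{N_k}:=\sum_{j=0}^{\phi(N_k,T)}a(N_k+j)\,L(Y_{N_k+j})$ with $u(n(k))\in[t(N_k),t(N_k+1))$ --- precisely the quantity whose exponential moment \textbf{(S5)} bounds uniformly in the starting index.

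The hard part is upgrading the uniform exponential-moment bound $\sup_n E\big[e^{8W_n}\big]<\infty$ into almost-sure control of $e^{W_{N_k}}$ along the subsequence $n(k)$. The subsequence is \emph{sample-path dependent} --- it is where $\bar\theta(u(n(k))+\cdot)$ and $\mu(u(n(k))+\cdot)$ converge --- so one cannot test it against a fixed probabilistic event. Moreover, a uniform bound on $E[e^{8W_n}]$ forces the $W_n$ to be tight but does \emph{not} prevent $\limsup_n W_n=\infty$ on a set of full measure, and if the exponent outgrows the decaying prefactor, the Gronwall product need not vanish. This is exactly the gap flagged in the preceding remark. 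To close it I would strengthen \textbf{(S5)} to a summable-tail condition, namely $\sum_n P\big(W_n>C\big)<\infty$ for some deterministic $C>0$ and all $T>2$; then Borel--Cantelli forces $W_n\le C$ for all but finitely many $n$, hence $W_{N_k}\le C$ eventually, so $e^{\int_0^T\ell_k}$ stays bounded and the Gronwall product collapses to $0$. Reducing from $T>1$ to the window threshold $T>2$ proceeds as in the remark following \textbf{(S5)}. Absent such a strengthening, the honest forecast is that the moment route yields tracking only in probability/expectation, and a genuinely almost-sure statement over the full range $T>1$ does not follow from \textbf{(S1)}--\textbf{(S7)} alone.
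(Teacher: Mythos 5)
Your proposal coincides with the paper's own treatment of this lemma: the paper's ``proof'' derives exactly your Gronwall bound, with a prefactor of the form $1+\bigl[\sum_{m=0}^{\phi(u(n(k)),T)}a(u(n(k))+m)L(Y_{u(n(k))+m})\bigr]\,e^{\sum_{m=0}^{\phi(u(n(k)),T)}a(u(n(k))+m)L(Y_{u(n(k))+m})}$, and then stops with the same conclusion you reach --- Borel--Cantelli cannot be applied because no step-size factor multiplies the right-hand side, so the uniform exponential-moment bounds \textbf{(S4)}--\textbf{(S5)} give no almost-sure control of the random exponential factor along the sample-path-dependent subsequence. The lemma is stated in the paper precisely to exhibit this obstruction in the moment-assumption route, so your verdict that the almost-sure statement does not follow from \textbf{(S1)}--\textbf{(S7)} alone (together with your summable-tail strengthening of \textbf{(S5)} that would repair it via Borel--Cantelli) is in full agreement with, and slightly more constructive than, the paper's own remark.
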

\proof
Following the same lines as in \cite[Lemma 2.3]{borkar}
\begin{align}
&\rho'(u(n(k)),T):=\sup_{t \in [0,T]} \|\theta^{u(n(k))}(t) - \theta^{\infty}(t)\| \nonumber \\
&\leq K_{T,u(n(k))}\left(\|\theta^{u(n(k))}(0) - \theta^{\infty}(0)\| + \sup_{t \in [0,T]} \|\int_{0}^{t}(\tilde{h}(\theta^{\infty}(s),\mu(u(n(k))+s))- \tilde{h}(\theta^{\infty}(s), \tilde{\mu}(s)))ds\|\right),\nonumber  
\end{align}
where the second term goes to zero a.s. and 
\begin{align}
K_{T,u(n(k))}&= 1+ \left[\int_{0}^T \left(\int L(z)\mu(u(n(k))+s,dzda)\right)ds\right] e^{\int_{0}^T \left(\int L(z)\mu(u(n(k))+s,dzda) \right)ds} \nonumber \\
             &= 1+ \left[\sum_{m=0}^{\phi(u(n(k)),T)}a(u(n(k)) + m)L(Y_{u(n(k))+m})\right]e^{\sum_{m=0}^{\phi(u(n(k)),T)}a(u(n(k)) + m)L(Y_{u(n(k))+m})} \nonumber \\
             &\leq 2 \left[\sum_{m=0}^{\phi(u(n(k)),T)}a(u(n(k)) + m)L(Y_{u(n(k))+m})\right]e^{\sum_{m=0}^{\phi(u(n(k)),T)}a(u(n(k)) + m)L(Y_{u(n(k))+m})}. \nonumber 
\end{align}

It is not clear how to apply Borel-Cantelli lemma here to prove as in the
proof of Lemma 15 of the as there is no step-size factor in the r.h.s of the above. 

Similar problem will arise if we are using this method for two time-scale analysis. Then even in Lemma 12 we have to make 
strong assumptions such as 
\begin{equation}
\sup_n E\left[\left(\sum_{m=0} ^ {\phi(n,T)} \|w_{n+m} - \lambda(\theta_{n+m})\| \right)^{8}\right] < \infty. \nonumber
\end{equation} 
The reason is that our proofs are based on Gronwall's inequality, using which we bound the 
difference between the o.d.e trajectory and algorithm's iterates by a quantity which is a 
product of a constant and a term going to zero (e.g. Lemma 2.2
of \cite{borkar}). Therefore
the fact that additional errors  vanish asymptotically may not be  useful in the proof 
if we use moment assumptions unless the term going 
to zero is determined by step-size (this is the case for Lemma 15). 
\end{remark}

\section{Conclusion}
We considered in this chapter two time-scale stochastic approximations with both iterates 
driven by different controlled Markov processes as well as martingale difference noise sequences 
and provided a proof of convergence of such recursions under general assumptions.  
Note that we have assumed stability of the iterates to prove our results. A nice future direction 
will be to find sufficient and verifiable condition for this assumption. To our knowledge this 
is not a natural extension of the result for single time-scale stochastic approximation 
(also called the  Borkar-Meyn theorem, see  \cite[Chapter 3]{borkar1} for details) into two time-scale as the coupled o.d.e 
has no attractor. One possible approach may be to use Lyapunov function.

In the next chapter, we shall study an application of the results developed here for proving 
convergence of off-policy temporal difference learning algorithm with linear function approximation
which is a reinforcement learning algorithm for the problem of prediction. For this problem, classical 
Poisson equation can be used to handle Markov noise if we assume state space of the underlying Markov chain is finite. 
However, we show that the results developed under general assumptions still can be applied for this special case.

\chapter{Off-policy temporal difference learning with linear function approximation in on-line learning environment}
\label{chap:off}
\section{Brief Introduction and Organization}
In this chapter we present an application of the general results developed in the previous chapter to show that the assumptions made for the general case are still satisfied for the special case. 
This is achieved by providing an almost sure convergence analysis
of off-policy temporal difference learning algorithm (only policy evaluation) with linear function approximation in on-line learning environment using the results of previous chapter. The algorithm
considered is TDC with importance weighting as mentioned in Chapter \ref{chap:introduction} as popular off-policy learning algorithm such as Q-learning \cite{watkins} may diverge when function approximation is deployed. An important point to note is that this is not a usual two time-scale scenario such as actor-critic method \cite{konda_actor}, however, two time-scale stochastic approximation is used so that the associated o.d.es have global attractor. 

The organization of this chapter is as follows: Section \ref{back} describes the TDC algorithm with importance weighting. 
Section \ref{amain_res} gives the convergence proof of the algorithm. Section \ref{empiric} 
shows empirical results supporting our theoretical results. 
Finally we conclude by providing some interesting future directions. 
\section{Background and description of TDC with importance weighting}
\label{back}
We need to estimate the value function for a target policy $\pi$ 
given the continuing evolution of the underlying MDP (with finite state and action spaces $S$ and $A$ respectively,   
specified by expected reward $r(\cdot,\cdot,\cdot)$ 
and transition probability kernel $p(\cdot|\cdot,\cdot)$) for a 
behaviour policy $\pi_b$ with $\pi \neq \pi_b$. 
Suppose, the above-mentioned on-policy trajectory is $(X_{n},A_n, R_{n},X_{n+1}), n\geq 0$ where  
$\{X_n\}$ is a time-homogeneous irreducible Markov chain with unique stationary distribution $\nu$ 
and generated from the behavior policy $\pi_b$. Here the quadruplet $(s,a,r,s')$ 
represents (current state, action, reward, next state). 
Also, assume that $\pi_b(a|s) > 0 ~\forall s \in S, a \in A$. We need 
to find the solution $\theta^*$ for the following: 
\begin{equation}
\label{fixpoint}
\begin{split}
0&=\sum_{s,a,s'}\nu(s)\pi(a|s)p(s'|s,a)\delta(\theta;s,a,s')\phi(s) = E[\rho_{X,A}\delta_{X,R,Y}(\theta)\phi(X)]\\ 
 &= b - A\theta,
\end{split}
\end{equation}
where 
\begin{itemize}
\item [(i)]$\theta \in \mathbb{R}^d$ is the parameter for value function,
\item [(ii)]$\phi: S\to \mathbb{R}^d$ is a vector of state features,
\item [(iii)] $X \sim \nu$,
\item [(iv)] $0<\gamma < 1$ is the discount factor,
\item [(v)] $E[R|X=s,Y=s'] = \sum_{a\in A}\pi_b(a|s)r(s,a,s')$,
\item [(vi)] $P(Y=s'|X=s) = \sum_{a\in A}\pi_b(a|s) p(s'|s,a)$,
\item [(vii)] $\delta(\theta; s,a,s')= r(s,a,s') + \gamma \theta^T\phi(s') - \theta^T\phi(s)$
is the temporal difference term with expected single-stage reward,
\item [(viii)] $\rho_{X,A} = \frac{\pi(A | X)}{\pi_b(A|X)}$,
\item [(ix)] $\delta_{X,R,Y}=R + \gamma \theta^T\phi(Y) - \theta^T\phi(X)$,
\item [(x)] $A=E[\rho_{X,A}\phi(X)(\phi(X) -\gamma\phi(Y))^T]$, $b=E[\rho_{X,A}R\phi(X)]$.
\end{itemize}
The desired approximate value function under the target policy $\pi$ is $V_{\pi}^*={\theta^*}^T\phi$. 
Let $V_\theta = {\theta}^T\phi$.
It is well-known (\cite{maeith}) that $\theta^*$ (solution to (\ref{fixpoint})) satisfies the projected fixed point equation namely
\begin{equation}
V_{\theta}= \Pi_{\mathcal{G},\nu}T^{\pi}V_{\theta},\nonumber 
\end{equation}
where 
\begin{equation}
\Pi_{\mathcal{G}, \nu}\hat{V} =
\arg\min_{f \in \mathcal{G}} (\|\hat{V} - f\|_{\nu}),\nonumber 
\end{equation}
with $\mathcal{G} = \{V_{\theta} | \theta \in \mathbb{R}^d\}$
and the Bellman operator
\begin{equation}
T^{\pi}V_\theta(s) = \sum_{s' \in S} \sum_{a\in A}\pi(a|s)p(s'|s, a)\left[\gamma V_\theta(s') + r(s, a, s')\right]. \nonumber 
\end{equation}
Here $\|\cdot\|_{\nu}$ is the weighted Euclidean norm defined by $\|f\|^2_{\nu}=\sum_{s\in S}f(s)^2 \nu(s)$,
Therefore to find $\theta^*$, the idea is to minimize the mean square projected 
Bellman error (MSPBE) $J(\theta)= \|V_{\theta} - \Pi_{\mathcal{G},\nu}T^{\pi}V_{\theta}\|^2_{\nu}$ using stochastic gradient descent.
It can be shown that the expression of gradient contains product of multiple expectations. Such framework can be modelled by 
two time-scale stochastic approximation where one iterate stores the quasi-stationary estimates of some of the expectations and the 
other iterate is used for sampling. 

%

We consider the TDC (Temporal Difference with Correction) algorithm with importance-weighting 
from Sections 4.2 and 5.2 of \cite{maeith}. 
The gradient in this case can be shown to satisfy 
\begin{align}
-\frac{1}{2}\nabla J(\theta)&=E[\rho_{X,A}\delta_{X,R,Y}(\theta)\phi(X)] - \gamma E[\rho_{X,A}\phi(Y)\phi(X)^T]w(\theta),\nonumber\\
w(\theta) &= E[\phi(X)\phi(X)^T]^{-1}E[\rho_{X,A}\delta_{X,R,Y}(\theta)\phi(X)].\nonumber 
\end{align}Define $\phi_n = \phi(X_n)$, $\phi'_n = \phi(X_{n+1})$, $\delta_n(\theta) = \delta_{X_n, R_n, X_{n+1}}(\theta)$ and 
$\rho_n=\rho_{X_n,A_n}$.
Therefore the associated iterations in this algorithm are: 
\begin{align}
\theta_{n+1} &= \theta_n + a(n) \rho_n\left[\delta_{n}(\theta_n)\phi_n - \gamma \phi'_{n}\phi_n^Tw_n\right],\label{tdc_slow} \\
w_{n+1} &= w_n + b(n) \left[(\rho_n\delta_{n}(\theta_n) - \phi_n^Tw_n)\phi_n\right], \label{tdc_fast}
\end{align}
\\ \indent 
with $\{a(n)\}, \{b(n)\}$ satisfying conditions which will be specified later. 
Note that the second term inside bracket in (\ref{tdc_slow}) is essentially an adjustment or correction
of the TD update so that it follows the gradient of the
MSPBE objective function thus helping in the desired convergence.

Note that the sub-sampling version of TDC algorithm (therefore the offline version of TDC
algorithm) can be written in the following way:
\begin{align}
\theta_{n+1} &= \theta_n + a(n) I_{\{A_n = \pi(X_n)\}}\left[\delta_{n}(\theta_n)\phi_n - \gamma \phi'_{n}\phi_n^Tw_n\right],\nonumber \\
w_{n+1} &= w_n + b(n) I_{\{A_n = \pi(X_n)\}}\left[(\delta_{n}(\theta_n) - \phi_n^Tw_n)\phi_n\right], \nonumber
\end{align}   
where $I_{\{A_n = \pi(X_n)\}} =1$ if $A_n = \pi(X_n)$ and $0$ otherwise.
In the rest of the chapter both the above algorithms will be denoted by ONTDC and OFFTDC respectively except the 
figures in Section \ref{empiric} where we mention the full name.
\section{Almost sure convergence proof of ONTDC}
\label{amain_res}

\begin{theorem}[Convergence of TDC with importance-weighting]
\label{th2}
Consider the iterations (\ref{tdc_fast}) and (\ref{tdc_slow}) of the TDC. Assume the following:
\begin{itemize}
 \item $\{a(n)\}, \{b(n)\}$ satisfy \textbf{(A4)}.
 \item $\{(X_n,R_n,X_{n+1}), n\geq0\}$ is such that $\{X_n\}$ is a time-homogeneous finite state irreducible Markov chain 
   generated from the behavior policy $\pi_b$ with unique stationary distribution $\nu$. 
$E[R_{n}|X_{n}=s,X_{n+1}=s'] = \sum_{a\in A} \pi_b(a|s)r(s,a,s')$ and $P(X_{n+1} =s'|X_{n}=s) = \sum_{a\in A}\pi_b(a|s)p(s'|s,a)$ 
where $\pi_b$ is the behaviour  policy,  
$\pi \neq \pi_b$.  
Also, $E[R_n^2 | X_n, X_{n+1}] < \infty~\forall n$ almost surely, and
\item $C=E[\phi(X)\phi(X)^T]$ and  $A=E[\rho_{X,R_n}\phi(X)(\phi(X) -\gamma\phi(X_{n+1}))^T]$ are non-singular where $X \sim \nu$. 
\item $\pi_b(a|s) > 0 ~\forall s \in S, a \in A$.
\item $\sup_n(\|\theta_n\| + \|w_n\|) < \infty$ w.p. 1. 
\end{itemize}
Then the parameter vector $\theta_n$ converges
with probability one as $n \to \infty$ to the TD(0) solution (\ref{fixpoint}). 
\end{theorem}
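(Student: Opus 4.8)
The plan is to recognize the recursions (\ref{tdc_slow})--(\ref{tdc_fast}) as a special case of the two time-scale controlled Markov noise framework (\ref{eqn1})--(\ref{eqn2}) and then invoke Theorem \ref{thm} together with Corollary \ref{main_col}. Here the situation is considerably simpler than the general setting: the driving noise is the behaviour-policy trajectory $\{(X_n, A_n, X_{n+1})\}$, an \emph{uncontrolled} time-homogeneous Markov process on the finite (hence compact) set $S \times A \times S$, independent of the iterates; there is no additional control process, and by irreducibility the stationary distribution $\nu$ is unique, so each set of ergodic occupation measures $D^{(i)}(\theta, w)$ is a singleton. Thus the Marchaud maps $\hat g, \hat h$ collapse to single-valued Lipschitz vector fields and the limiting differential inclusions become genuine o.d.e.s.

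First I would fix $Z^{(1)}_n = Z^{(2)}_n = X_n$ and split the bracketed terms into conditional means plus martingale differences. Setting
\[
h(\theta, w, x) = E\big[\rho_n(\delta_n(\theta)\phi_n - \gamma \phi'_n \phi_n^T w) \mid X_n = x\big], \quad g(\theta, w, x) = E\big[(\rho_n \delta_n(\theta) - \phi_n^T w)\phi_n \mid X_n = x\big],
\]
the residuals $M^{(1)}_{n+1}, M^{(2)}_{n+1}$ are martingale differences w.r.t.\ $\mathcal{F}_n$ because, given $\mathcal{F}_n$, the law of $(A_n, R_n, X_{n+1})$ depends only on $X_n$. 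Verifying the hypotheses is then routine: \textbf{(A1)} holds since $S \times A \times S$ is finite; the kernel does not depend on $(\theta, w)$, so \textbf{(A5)} and the continuity/uniform-continuity requirements are trivial; $h, g$ are \emph{affine} in $(\theta, w)$ with coefficients built from the bounded features $\phi$ and the bounded importance ratios $\rho$ (bounded since $\pi_b(a|s) > 0$ on a finite action set), giving \textbf{(A2)} with a $z$-independent Lipschitz constant; and \textbf{(A3)} follows from boundedness of $\phi, \rho$ together with $E[R_n^2 \mid X_n, X_{n+1}] < \infty$, which yields the quadratic-growth bound on $E[\|M^{(i)}_{n+1}\|^2 \mid \mathcal{F}_n]$. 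Condition \textbf{(A4)} is assumed, and \textbf{(A7)} is exactly the assumed stability $\sup_n(\|\theta_n\| + \|w_n\|) < \infty$.

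The heart of the argument is identifying the two limiting o.d.e.s. Averaging $g$ against $\nu$ gives the faster field $\hat g(\theta, w) = (b - A\theta) - C w$, using the key identity $E[\rho_{X,A}\phi(X)\phi(X)^T] = C$ (since $\sum_a \pi(a|x) = 1$). As $C = E[\phi\phi^T]$ is symmetric and, being non-singular, positive definite, the o.d.e.\ $\dot w = \hat g(\theta, w)$ has system matrix $-C$ and the globally asymptotically stable equilibrium $\lambda(\theta) = C^{-1}(b - A\theta)$, which is affine in $\theta$ and hence Lipschitz; a quadratic Lyapunov function supplies the $V$ required by \textbf{(A6)} and Lemma \ref{ga2} makes $\lambda(\theta)$ the global attractor. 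Substituting $\lambda(\theta)$ into the slower field yields $\hat h(\theta) = (b - A\theta) - \gamma E[\rho\phi'\phi^T] C^{-1}(b - A\theta)$, and the same identity gives $A^T = C - \gamma E[\rho\phi'\phi^T]$, so $\hat h(\theta) = A^T C^{-1}(b - A\theta)$. The slow o.d.e.\ $\dot\theta = A^T C^{-1}(b - A\theta)$ has system matrix $-A^T C^{-1} A$, which is negative definite because $C^{-1}$ is positive definite and $A$ is non-singular; hence $\theta^* = A^{-1} b$, the TD(0) solution (\ref{fixpoint}), is its globally asymptotically stable equilibrium and, by Lemma \ref{ga2}, its global attractor $A_1$. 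Corollary \ref{main_col} then gives $(\theta_n, w_n) \to (\theta^*, \lambda(\theta^*))$ a.s., so in particular $\theta_n \to \theta^*$ a.s.

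I expect the main obstacle to be the bookkeeping that turns the abstract conclusion of Theorem \ref{thm} into the concrete stability statements, specifically verifying that the averaged fields coincide with $(b - A\theta) - Cw$ and $A^T C^{-1}(b - A\theta)$ (which hinges on the cancellation $E[\rho\phi\phi^T] = C$) and confirming that $-A^T C^{-1}A$ is Hurwitz, along with exhibiting the coupling Lyapunov function $V$ demanded by \textbf{(A6)}. Everything else---compactness of the noise space, the martingale and Lipschitz estimates---is immediate from finiteness of the state-action space; the only genuinely quantitative inputs are the non-singularity of $A$ and the positive definiteness of $C$, which are precisely the hypotheses assumed in the statement.
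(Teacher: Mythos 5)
Your proposal is correct and follows essentially the same route as the paper: cast (\ref{tdc_slow})--(\ref{tdc_fast}) into the two time-scale controlled-Markov-noise framework (with the uncontrolled finite chain making $D^{(i)}(\theta,w)$ singletons), verify \textbf{(A1)}--\textbf{(A5)}, \textbf{(A7)} and \textbf{(A6)'} via the quadratic Lyapunov function for the fast o.d.e.\ with $\lambda(\theta)=C^{-1}(b-A\theta)$, and conclude through Corollary \ref{main_col} with the slow o.d.e.'s global attractor $\theta^*=A^{-1}b$. The only difference is that you spell out what the paper leaves implicit---the derivation $\hat h(\theta)=A^TC^{-1}(b-A\theta)$ via the identity $E[\rho\phi\phi^T]=C$ and the negative definiteness of $-A^TC^{-1}A$---which is a welcome detail (though note that identity is needed only for the slow field, not for averaging the fast recursion, whose $w$-term carries no $\rho_n$).
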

\begin{proof}
The iterations (\ref{tdc_fast}) and (\ref{tdc_slow}) can be cast into the framework of Section \ref{def} 
with \begin{itemize}
\item $Z^{(i)}_n = X_{n-1}$,
\item $h(\theta,w,z) = E[(\rho_n(\delta_{n}(\theta)\phi_n-\gamma \phi'_{n}\phi_n^Tw))|X_{n-1}=z]$,
\item $g(\theta,w,z)=E[((\rho_n\delta_{n}(\theta) - \phi_n^Tw)\phi_n)|X_{n-1}=z]$,
\item $M^{(1)}_{n+1}=\rho_n(\delta_{n}(\theta_n)\phi_n - \gamma \phi'_{n}\phi_n^Tw_n)-E[\rho_n(\delta_{n}(\theta_n)\phi_n - \gamma \phi'_{n}\phi_n^Tw_n)|X_{n-1}, \theta_n, w_n]$,
\item $M^{(2)}_{n+1}=(\rho_n\delta_{n}(\theta_n) - \phi_n^Tw_n)\phi_n - E[(\rho_n\delta_{n}(\theta_n) - {\phi_n}^T w_n)\phi_n|X_{n-1}, \theta_n, w_n]$,
\item $\mathcal{F}_n = \sigma(\theta_m, w_m, R_{m-1}, X_{m-1},A_{m-1}, m \leq n, i = 1, 2), n \geq 0$. 
\end{itemize}
Note that in (ii) and (iii) we can define $h$ and $g$ independent of $n$ due to time-homogeneity of $\{X_n\}$.  
\\ \indent
Now, we 
verify the assumptions \textbf{(A1)-(A7)} (mentioned in Sections \ref{def} and \ref{assump} of Chapter \ref{chap:intro}) for our application:
\begin{itemize}
\item  \textbf{(A1)}: $Z^{(i)}_n, \forall n, i=1,2$ takes values in compact metric space as $\{X_n\}$ is a finite state Markov chain.  
\item  \textbf{(A5)}: Continuity of transition kernel follows trivially from the fact that we have a finite state MDP.
 \item \textbf{(A2)} \begin{itemize}
\item \begin{align}
        &\|h(\theta, w,z) - h(\theta',w',z)\|\nonumber\\
        &=\|E[\rho_n(\theta-\theta')^T(\gamma \phi(X_{n+1}) - \phi(X_n))\phi(X_n) - \gamma \rho_n \phi(X_{n+1})\phi(X_n)^T(w-w')|X_{n-1}=z]\|\nonumber\\
        &\leq L(2\|\theta-\theta'\|M^2 + \|w-w'\|M^2)\nonumber,
       \end{align}
where $M=\max_{s\in S}\|\phi(s)\|$ with $S$ being the state space of the MDP and $L=\max_{(s,a)\in (S\times A)}\frac{\pi(a|s)}{\pi_b(a|s)}$. 
Hence $h$ is Lipschitz continuous in the 
first two arguments uniformly w.r.t the third. In the last inequality above, we use the 
Cauchy-Schwarz inequality. 
\item As with the case of $h$, $g$ can be shown to be Lipschitz continuous in the 
first two arguments uniformly w.r.t the third.
\item Joint continuity of $h$ and $g$ follows from (iii)(a) and (b) 
respectively as well as the finiteness of $S$.
\end{itemize}
\item \textbf{(A3)}: Clearly, $\{M_{n+1}^{(i)}\}, i=1,2$ are martingale difference sequences w.r.t. increasing $\sigma$-fields $\mathcal{F}_n$.
Note that $E[\|M_{n+1}^{(i)}\|^2 | \mathcal{F}_n] \leq K(1 + \|\theta_n\|^2 + \|w_n\|^2)$ a.s., $n\geq 0$ since
$E[R_n^2 | X_n, X_{n+1}] < \infty$ for all $n$ almost surely and $S$ is finite.
\item \textbf{(A4)}: This follows from the conditions (i) in the statement of Theorem \ref{th2}.  
\end{itemize}

Now, one can see that 
the faster o.d.e. becomes 
\begin{equation}
\dot{w}(t)=E[\rho_{X,A_n}\delta_{X,R_n,X_{n+1}}(\theta)\phi(X)] - E[\phi(X)\phi(X)^T]w(t).\nonumber 
\end{equation} 
Clearly, $C^{-1}E[\rho_{X,A_n}\delta_{X,R_n,X_{n+1}}(\theta)\phi(X)]$
is the globally asymptotically stable equilibrium of the o.d.e. The corresponding Lyapunov function 
$V(\theta,w) = \frac{1}{2} \|Cw - E[\rho_{X,A_n}\delta_{X,R_n,X_{n+1}}(\theta)\phi(X)]\|^2$ is 
continuously differentiable. Additionally, $\lambda(\theta)=C^{-1}E[\rho_{X,A_n}\delta_{X,R_n,X_{n+1}}(\theta)\phi(X)]$ 
and it is   
Lipschitz continuous in $\theta$, verifying \textbf{(A6)'}. For the slower o.d.e., the global 
attractor is $A^{-1}E[\rho_{X,A_n}R_n\phi(X)]$ verifying the additional assumption in Corollary \ref{main_col}. 
The attractor set here is a singleton. 
Also, \textbf{(A7)} is (v) in the statement of Theorem \ref{th2}. 
Therefore 
the assumptions $(\mathbf{A1}) - (\mathbf{A7})$ are verified. The proof would then follow
from Corollary \ref{main_col}. 
\end{proof}

\begin{remark}
The reason for using two time-scale framework for the TDC algorithm is to make sure that the O.D.E's have globally asymptotically stable equilibrium. 
\end{remark}
\begin{remark}
Because of the fact that the gradient is a product of two expectations the scheme 
is a  ``pseudo''-gradient descent which helps to find 
the global minimum here.
\end{remark}
\begin{remark}
Here we assume the stability of the iterates (\ref{tdc_fast}) and (\ref{tdc_slow}). 
Certain sufficient conditions have been sketched for showing  
stability of single timescale stochastic recursions with controlled Markov noise 
\cite[p.~75, Theorem 9]{borkar1}. This subsequently needs to be 
extended to the case of two time-scale recursions. 
   
Another way to ensure boundedness of the iterates is to use a projection operator. 
However, projection may introduce spurious fixed points on the boundary of the projection region and  
finding globally asymptotically stable equilibrium of a projected o.d.e. is hard. 
Therefore we do not use projection in 
our algorithm. 
\end{remark}
\begin{remark}
Convergence analysis for TDC with importance weighting along with 
eligibility traces cf. \cite[p.~74]{maeith} where it is called GTD($\lambda$)can be done similarly using our results. 
The main advantage is that it works for $\lambda < \frac{1}{L\gamma}$ ($\lambda\in [0,1]$ being the eligibility function) 
whereas the analysis in \cite{yu} is shown  
only for $\lambda$ very close to 1.   
\end{remark}
\section{Empirical results}
\label{empiric}
For the assessment of the algorithm experimentally we have compared the result 
on a variation of the classic Baird's off-policy counter-example \cite[Fig. 2.4]{maeith} 
and $\theta\rightarrow 2\theta$ problem \cite[Section 3]{emphatic_td}. In 
both cases, we compare the TD(0), OFFTDC and ONTDC. Unlike \cite{sutton} 
where updating was done synchronously
in dynamic-programming-like sweeps through the
state space, we consider the usual stochastic approximation scenario where only simulated sample trajectories 
are taken as input to the algorithms i.e. the algorithms do not use any knowledge of the probabilities 
for the underlying Markov decision process. For Baird's problem our performance metric is 
Root Mean Squared Error (RMSE) defined to be  
the square root of the average of the square of the deviation between 
true value function and the estimated value function. For $\theta \to 2\theta$ problem 
the $y$-axis is $\theta$ itself.
The average is taken over 1000 simulation runs and the metric 
is plotted  against the number of times 
$\theta_n$ is updated. While 
the analysis has been shown for the diminishing step-size case, we implement here the 
algorithm with constant step-sizes as in \cite{maeith,sutton}. 


The  $\theta\rightarrow 2\theta$ problem consists of only 2 states where $\theta$ 
and $2\theta$ are the estimated value of the states. According to its behavior policy 
with probability $p=\frac{1}{2}$ it stays on the same state and chooses the other 
state. The target policy is to choose the action that accesses the second 
state with probability 1 (See Fig. 1 in \cite[Section 3]{emphatic_td} for details). 
The constant step-sizes  are chosen as $a(n)=.075;b(n)=.05$ for the two time-scale algorithms and 
$\alpha=.075$ for single timescale algorithms. The simulations are run for 1000 
different sample paths. Rewards in all transitions 
are zero. The initial values are $\theta=1 $ and $w=0$. The results are summarized in Figure \ref{theta_2}.

\begin{figure}
  \centering
  \begin{minipage}[b]{0.49\textwidth}
    \includegraphics[width=\textwidth]{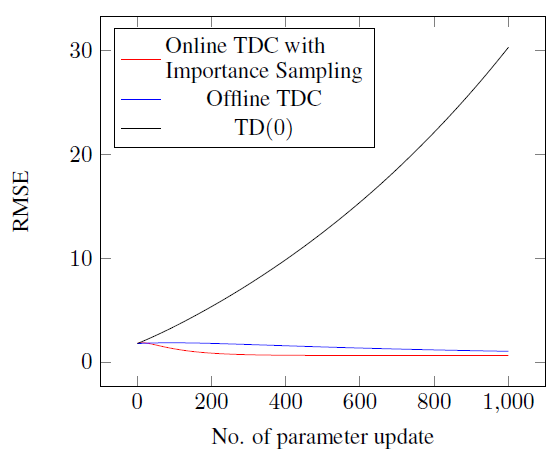}
    \caption{Comparison between TD(0), OFFTDC and ONTDC for Baird's counterexample}
\label{baird_off}  
\end{minipage}
  \hfill
\begin{minipage}[b]{0.49\textwidth}
    \includegraphics[width=\textwidth]{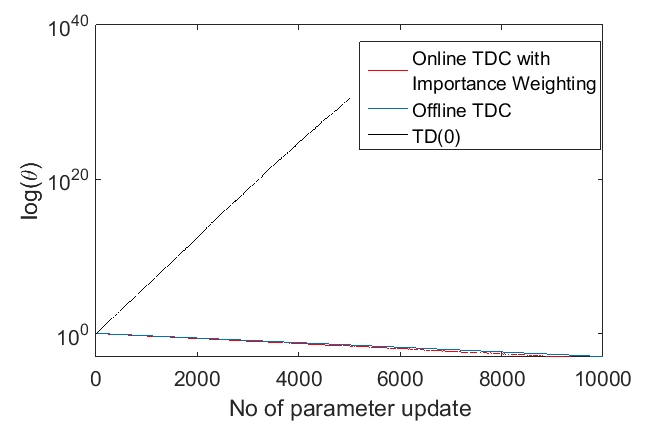}
    \caption{Comparison between TD(0), OFFTDC and ONTDC for $\theta \to 2 \theta$}
\label{theta_2}  
\end{minipage}
\end{figure}

\begin{figure}
\centering
  \begin{minipage}[b]{0.49\textwidth}
    \includegraphics[width=\textwidth]{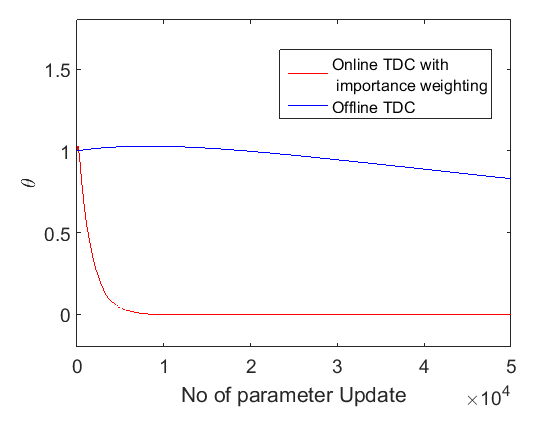}
    \caption{OFFTDC vs. ONTDC for $\theta \to 2\theta$ problem: $p$=.01}
    \label{offvsonp01}
\end{minipage}
  \hfill
\begin{minipage}[b]{0.49\textwidth}
    \includegraphics[width=\textwidth]{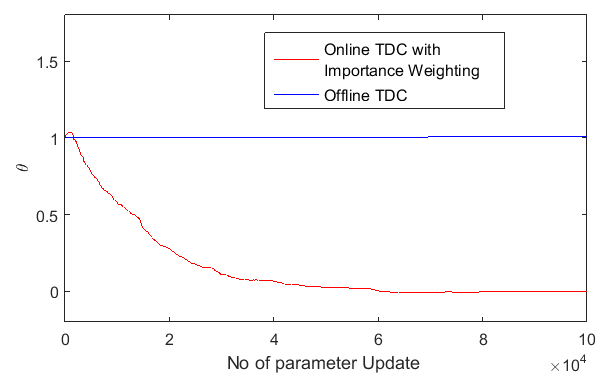}
    \caption{OFFTDC and ONTDC for $\theta \to 2\theta$ problem: $p$=.001}
    \label{offvsonp001}
\end{minipage}
\end{figure}

\begin{figure}
\centering
  \begin{minipage}[b]{0.49\textwidth}
    \includegraphics[width=\textwidth]{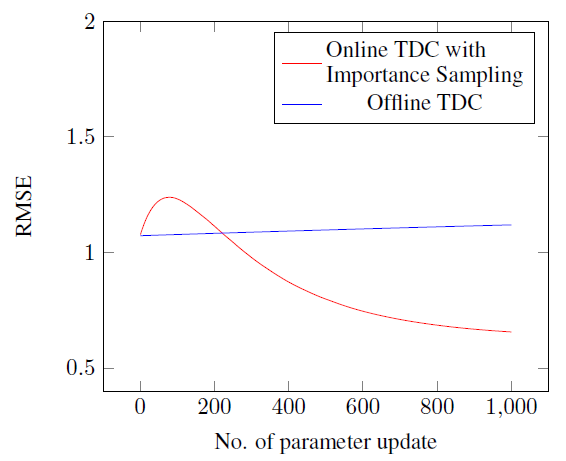}
    \caption{OFFTDC vs. ONTDC for Baird's counterexample: $q$=.01}
    \label{offvsonq01}
\end{minipage}
  \hfill
\begin{minipage}[b]{0.49\textwidth}
    \includegraphics[width=\textwidth]{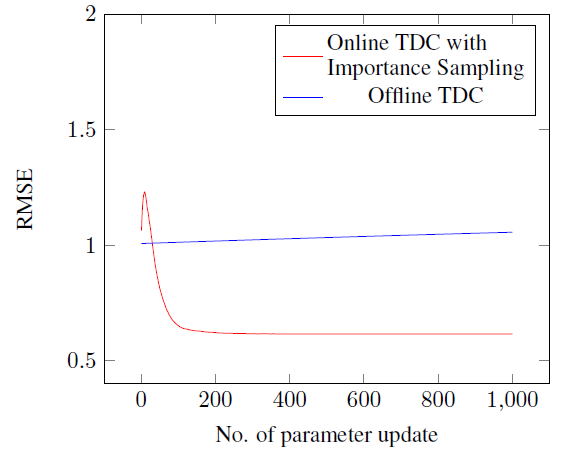}
    \caption{OFFTDC and ONTDC for Baird's counterexample: $q$=.001}
    \label{offvsonq001}
\end{minipage}
\end{figure}

Next we consider the '7-star' version of Baird's counter example from \cite[p~.17]{maeith} 
All the rewards in transitions are zero and true value function for
each state is zero. The value functions are approximated as 
$V(s)=2\theta(s)+\theta_0 $ $\forall s \in \{1,2 \ldots 6\}$ and 
$V(7)=\theta(7)+ 2\theta_0$. The behaviour policy is to 
choose the state $7$ with probability $q=\frac{1}{7}$ and choose 
uniformly states $1-6$ with probability $(1-q)=\frac{6}{7}$. 
The target policy is to choose the state $7 $ with probability 1. 
The step size chosen for this setting is $a=.005,b=.05$. The 
initial parameters are $\theta=(1,1,1,1,1,1,10,1)$ and $w=\mathbf{0}$. The results in this 
case are summarized in Figure \ref{baird_off}. 
\begin{figure}
 \centering
\includegraphics[width=3 in]{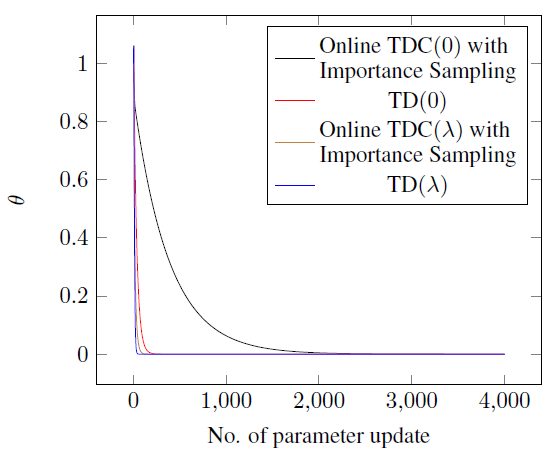}
\caption{on-policy learning on $\theta \to 2\theta$ problem}
\label{theta_on}   
\end{figure}

In both cases (Fig. \ref{theta_2} and \ref{baird_off}) ONTDC performs better than the OFFTDC. 
The difference becomes more apparent when
behaviour policy differs significantly from the target policy (Fig \ref{offvsonp01}, Fig \ref{offvsonp001},
Fig \ref{offvsonq01} and Fig \ref{offvsonq001}). The intuition is that in case of OFFTDC
the TD update is weighted by only step-size whereas in case of ONTDC it is 
additionally weighted by $\rho_n$. Therefore by changing the behaviour policy one can improve the rate of convergence 
of the algorithm. In the case  of on-policy learning for 
the $\theta \to 2\theta$ problem, Figure \ref{theta_on} shows that 
with eligibility traces the performance of ONTDC is much closer to $TD(\lambda)$ compared to 
the case with $\lambda=0$.

Although ONTDC uses importance weighting in its update, this is not importance sampling used in Monte-Carlo 
algorithms which is the source of high variance. Further, ONTDC 
does not have any follow-on trace like emphatic TD which has a high variance. 
We show in Fig \ref{varb} and Fig \ref{vart} that the variances of the performance metric for the 
ONTDC is negligible 
eventually for the two standard counterexamples.
\begin{figure}
\centering
  \begin{minipage}[b]{0.49\textwidth}
    \includegraphics[width=\textwidth]{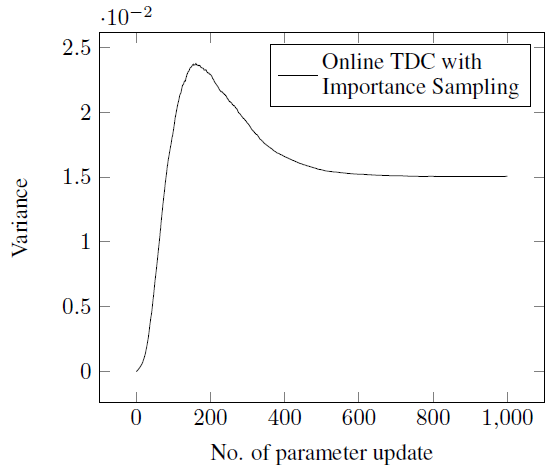}
    \caption{Variances of the performance metric for ONTDC: Baird's counterexample}
    \label{varb}
\end{minipage}
  \hfill
\begin{minipage}[b]{0.49\textwidth}
    \includegraphics[width=\textwidth]{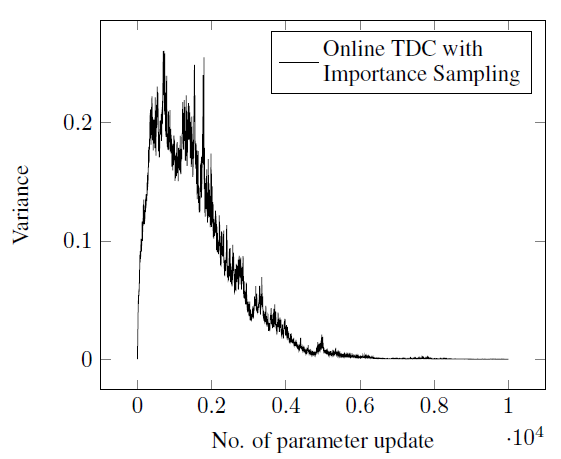}
    \caption{Variances of the performance metric for ONTDC: $\theta \to 2\theta$ problem}
    \label{vart}
\end{minipage}
\end{figure}


For both the aforementioned examples the results for the extension to eligibility traces (the algorithm is called GTD($\lambda$)
or TDC($\lambda$)) 
can be seen in Fig \ref{eb} and Fig \ref{et} with $\lambda =0.1$.

\begin{figure}
\centering
  \begin{minipage}[b]{0.49\textwidth}
    \includegraphics[width=\textwidth]{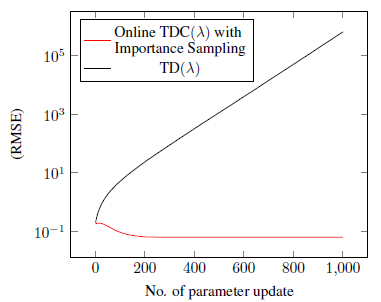}
    \caption{Comparison of TD($\lambda$) and TDC($\lambda$): Baird's counterexample}
    \label{eb}
\end{minipage}
  \hfill
\begin{minipage}[b]{0.49\textwidth}
    \includegraphics[width=\textwidth]{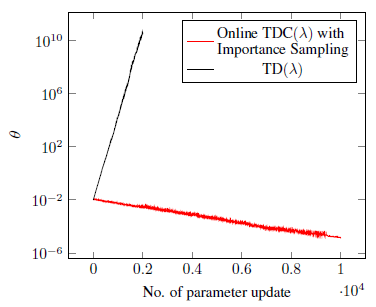}
    \caption{Comparison of TD($\lambda$) and TDC($\lambda$): $\theta \to 2\theta$ problem}
    \label{et}
\end{minipage}
\end{figure}


\begin{figure}
\centering
  \begin{minipage}[b]{0.49\textwidth}
    \includegraphics[width=\textwidth]{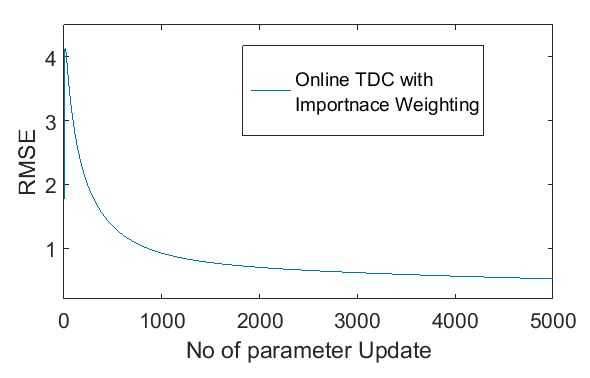}
    \caption{ONTDC with step size for Baird's counterexample $a(n)=\frac{.5}{n}$,
$b(n)=\frac{.125}{n^{.95}}$}
     \label{dimstepb}
\end{minipage}
  \hfill
\begin{minipage}[b]{0.49\textwidth}
    \includegraphics[width=\textwidth]{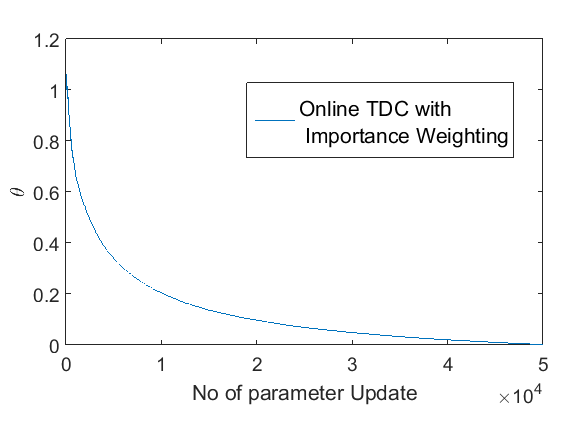}
    \caption{ONTDC with step size for $\theta \to 2\theta$ problem $a(n)=\frac{7}{n+100}$,
    \label{dimstept}
$b(n)=\frac{.5}{n^{.95}}$}
\end{minipage}
\end{figure}

Fig \ref{dimstepb} and Fig \ref{dimstept} shows the results of experiments  
where the step-size sequences obey the requirements in \textbf{(A5)}. We observe good convergence 
behaviour in this case that is also better when compared with the case of constant step-sizes as considered in this work.

\newpage
\section{Conclusion}
We presented almost sure 
convergence proof for an off-policy temporal difference learning algorithm that is also 
extendible to eligibility traces (for a sufficiently large 
range of $\lambda$) with linear function approximation under the assumption 
that the ``on-policy'' trajectory for a behaviour policy is only available. 
This has previously not been done to our knowledge. 

A future direction would be to similarly extend algorithms for off-policy control (\cite{greedygq}) to the more 
realistic settings as we consider in this chapter. 

Note that in this chapter we assume point-wise boundedness (also called the `stability') of the iterates to prove convergence of the stochastic 
approximation algorithms. However, finding sufficient verifiable conditions for this is very hard when the noise is Markov as 
well as when there are multiple timescales. In the next chapter, we compile several aspects
of the dynamics of stochastic approximation algorithms with Markov iterate-dependent noise
when the iterates are not known to be stable beforehand.


\chapter{Dynamics of stochastic approximation with
Markov iterate-dependent noise with iterate-stability
 not ensured}
\label{chap:lock}
\section{Brief Introduction and Organization}
As discussed in Chapter \ref{chap:introduction}, stability of the iterates is an important assumption to prove almost sure convergence of the stochastic approximation iterates. However, in the presence of Markov noise, it is very 
difficult to provide verifiable sufficient conditions for this assumption. In this chapter we investigate the dynamics 
of stochastic approximation with
Markov iterate-dependent noise when iterate-stability is
 not ensured by extending the lock-in probability framework of Borkar \cite{lock_in_original} to such recursions by using the classical Poisson equation \cite{metivier,met_prior}. We apply our results to the problem of tracking ability \cite{bhumesh,konda} which requires extending these results to two stochastic approximation iterates with different time-scales (the slower iterate is a single timescale stochastic approximation), where we can see that the coupled o.d.e has no attractor  as mentioned in \cite{off-policy}. Therefore we
need to consider two quantities describing difference (over compact time interval) between
algorithm and o.d.e., one for the coupled algorithm/o.d.e and another for the slower algorithm/o.d.e. This gives rise to a situation where the conditioning event in the martingale
concentration inequality will not belong to the first $\sigma$-field in the current collection of $\sigma$-
fields (unlike in case of single timescale stochastic approximation where the conditioning
event always belongs to the first $\sigma$-field in the current collection of $\sigma$-fields \cite[p~40]{borkar1}).
 
The organization of this chapter is as follows: Section \ref{sec_def} formally
defines the problem and provides background and assumptions.
Section \ref{main_res} shows our main \textit{lock-in probability}
results. 
Section \ref{a.s.conv} shows how to prove \textit{almost sure convergence} to a
\textit{local} attractor using our results along with \textit{asymptotic tightness}
of the iterates. Moreover, this section shows that stability of the iterates can be proved using our results.
Section \ref{track} analyzes the tracking ability of adaptive algorithms using our results.
Section \ref{sample} describes the results on sample complexity.
Finally, we conclude by providing
some future research directions.
\section{The Problem and Assumptions}
\label{sec_def}

In the following we describe the preliminaries and notation that we use in our proofs.
Most of the definitions and notation are from \cite{metivier, borkar1, sameer}. The notations used 
for ordinary differential equation is from \cite[Appendix 11.2]{borkar1}.
In the following we describe the lock-in probability settings based on the approach in \cite{metivier}.
The main idea is to assume existence of a solution to the Poisson equation (Assumption (M4) from Section III B of \cite{metivier}),
thus converting Markov iterate-dependent noise into a martingale difference sequence and additional additive errors. We refer the readers
to \cite[Part II,Chap. 2, Theorem 6]{benveniste}, \cite[Section III~D, Appendix A]{metivier} for details on
the existence and properties of solution of Poisson equation for a Markov iterate-dependent process.

In this work we prove 
almost sure  convergence for recursion (\ref{main_m}) 
without assuming stability of the iterates, however following the classic Poisson equation
model stated above where the assumptions are designed keeping in mind the stability of the iterates. 
To make up for this we need to strengthen 
some existing assumptions of \cite{metivier} (shown next), these are standard 
assumptions satisfied in application areas such as reinforcement learning.

Recall $h: \mathbb{R}^d \to \mathbb{R}^d$ from Section \ref{markov}.  Let $G \subset \mathbb{R}^d$ be open and let $V:G\to [0,\infty)$ be such that $\langle \nabla V, h \rangle : G \to \mathbb{R}$ is non-positive.
We shall assume as in \cite{borkar1}  that $H:=\{\theta:V(\theta) = 0\}$ is equal to the set $\{\theta: \langle \nabla V(\theta), h(\theta)\rangle=0\}$
and is a compact subset of $G$.
Thus $V$ is a strict Lyapunov function. Then $H$ is an asymptotically stable invariant set of the
differential equation $\dot{\theta}(t) = h(\theta(t))$.
Let there be an open set $B$ with compact closure such that
$H \subset B \subset \bar{B} \subset G$. In this setting, the lock-in probability is defined to be the probability that the sequence
$\{\theta_n\}$ is convergent to $H$, conditioned on the event that $\theta_{n_0}\in B$ for some $n_0$ sufficiently large.

Recall that,  Theorem 8 of
\cite[p.~37]{borkar1}, shows that for the case of martingale difference noise, $\mathbb P[\theta_n\rightarrow
H|\theta_{n_0}\in B] \geq 1 - O(e^{-\frac{1}{s(n_0)}}),$ where $s(n_0) :=
\sum_{m=n_0}^{\infty} a(m)^2$.
In this work we obtain these results when the noise is  Markov iterate-dependent under the following assumptions:
\begin{enumerate}[label=\textbf{(A\arabic*)}]
 \item $\limsup_{n\to \infty}\|Y_n\|< \bar{C}$ a.s. for some $\bar{C} > 0$.
This is stronger than   $\limsup_n E[\|Y_n\|^2] < \infty$ which is implied by 
\textbf{(M2)} of \cite{metivier}.
\item $\sup_{y}\|f(\theta,y)\| \leq K(1+\|\theta\|)$ for all $\theta$.
\begin{remark}
\textbf{(A2)} is a standard assumption satisfied in reinforcement learning scenarios as 
pointed in \cite[p~6]{lock_in_original}. Clearly, this is stronger than 
the hypothesis (F) on $f$ as mentioned in \cite[p. 143]{metivier}.
\end{remark}  
\item  The step-sizes $\{a(n)\}$ are non-increasing positive scalars satisfying
\begin{align}
\sum_n a(n) = \infty, \sum_{n}{a(n)}^2 < \infty.\nonumber
\end{align}

\item For every $\theta$, the Markov chain $\Pi_\theta$ has a unique invariant probability $\Gamma_\theta$. (\textbf{(M1)} from 
\cite{metivier}). 
Further, $h(\theta) = \int f(\theta,y)\Gamma_{\theta}(dy)$ is Lipschitz continuous in $\theta$ with Lipschitz constant
$0<L<\infty$.
\item $\|M_{n+1}\| \leq K'(1+\|\theta_n\|)$ a.s. $\forall n$.
\item For every $\theta$ the Poisson equation
$$(1 - \Pi_{\theta})v_\theta = f(\theta,\cdot)-\int f(\theta,y)\Gamma_\theta(dy)$$
has a solution $v_\theta$. This is \textbf{(M4)} from \cite{metivier}.
\item For all $R>0$ there exist constants $C_R>0$ such
that
\begin{enumerate}
\item $\sup_{\|\theta\|\leq R} \|v_{\theta}(x)\| \leq C_R (1+\|x\|)$.
\item $\|v_{\theta}(x)-v_{\theta'}(x)\| \leq C_R \|\theta - \theta'\|(1+ \|x\|)$ for all $\|\theta\| \leq R$, $\|\theta'\| \leq R$.
\end{enumerate}
This is \textbf{(M5)b,c} from \cite{metivier}.
\end{enumerate}

Under the above assumptions we show that
\[
 \mathbb P\left[\theta_n\rightarrow H|\theta_{n_0}\in B\right] \geq 1 - O\left(e^{-\frac{c}{s(n_0)}}\right)
\]
 for sufficiently large $n_0$.

We provide a more detailed discussion on assumptions \textbf{(A1)} and \textbf{(A2)} as well as 
possible relaxations of these in Section \ref{discuss}.
\section{Lock-in probability calculation for single timescale stochastic approximation}
\label{main_res}

In this subsection we give a lower bound for $\mathbb P[\theta_n\rightarrow
H|\theta_{n_0}\in B]$ in terms of $s(n_0)$ when $n_0$ is sufficiently
large based on the settings described in Section \ref{sec_def}. How large $n_0$ needs to be will be specified soon.
Before proceeding further we describe our notations
and recall some known results. For $\delta > 0$, $N_{\delta}(A)$ for a set $A$ denotes its $\delta$-neighborhood $\{y : \|y - x\| < \delta \ \forall \ x \in A\}$.
Let $H^a=N_{a}(H)$. Fix some $0 < \epsilon_1 < \epsilon$
and $\delta_B>0$ such that $N_{\delta_B}(H^{\epsilon_1})\subset H^{\epsilon} \subset B$.

Let
$T$ be an upper bound 
for the time required for a solution of the o.d.e.\ (\ref{ode_m}) to reach the set
$H^{\epsilon_1}$, starting from an initial condition in $\bar{B}$. 
The existence of such a $T$ independent of the starting point in $\bar{B}$ can be proved 
using the continuity of flow of the o.d.e (\ref{ode_m}) around the initial point and the fact that $H$ is an
asymptotically stable set of the same o.d.e; see Lemma 1  of \cite[Chapter 3]{borkar1} for a similar proof.
 Let $t(n)= \sum_{m=0}^{n
- 1}a(m)$, $n \geq 1$ with $t(0)=0$. Let $n_0 \geq 0, n_m = \min \{n:t(n)\geq t(n_{m-1})+T\}$ and $T_m = t(n_m)$, $m \geq 1$.
Define $\bar \theta(t)$ by: $\bar{\theta}(t(n)) = \theta_n$, with linear
interpolation on $[t(n), t(n+1))$ for all $n$.
Let $\theta^{t(n_m)}(\cdot)$ be the solution of the limiting o.d.e.\ (\ref{ode_m}) on $[t(n_m), t(n_{m+1}))$ with
 the initial condition $\theta^{t(n_m)}(t(n_m))=\bar \theta (t(n_m)) =  \theta_{n_m}$.
  Let
 \[
 \rho_m := \sup_{t \in [t(n_m), t(n_{m+1}))} \|\bar \theta(t) - \theta^{t(n_m)}(t)\|.
 \]

 We recall here a few key results from \cite{lock_in_original}. As shown there, 
if $\theta_{n_0}\in B$, and $\rho_m < \delta_B$ for all $m \geq 0$, then $\bar{\theta}(T_n)$ is in $H^{\epsilon} \subset B$
for all $n \geq 1$. This follows from the following: because of the way we defined $T$, it follows that $\theta^{T_0}(T_1) \in H^{\epsilon_1}$ . Since $\rho_0 < \delta_B$ 
and $N_{\delta_B} (H^{\epsilon_1}) \subset H^{\epsilon}$ , $\bar{\theta}(T_1) \in H^{\epsilon}$. Since $H^{\epsilon}$ is a positively invariant subset
of $\bar{B}$, it follows that $\theta^{T_1} (\cdot)$ lies in $H^{\epsilon}$ on $I_1$ , and that $\theta^{T_1} (T_2) \in H^{\epsilon_1}$ . Hence
$\bar{\theta}(T_2) \in H^{\epsilon}$. Continuing in this way it follows that for all $m \geq 1$, $\theta^{T_m}(\cdot)$ lies
inside $H^{\epsilon}$ on $I_m$. Therefore using discrete Gronwall's inequality we can show that $\sup_{t \geq T_0} \bar{\theta}(t) < \infty$.
It is also known (\cite{metivier}, section IIC) that if the sequence of iterates
 $\{\theta_n\}$ remains bounded almost surely on a prescribed set of
 sample points, and if on this set the iterates belongs to a compact set in the domain of 
attraction of any local attractor infinitely often then it converges almost surely on this
 set to that local attractor.
Using this fact gives  the following estimate on the
 probability of convergence, conditioned on $\theta_{n_0}\in B$ (\cite{borkar1}, Lemma 1, p.\ 33):
 \[
 P\left[\bar \theta(t)\rightarrow H | \theta_{n_0}\in B\right] \geq
 P\left[\rho_m < \delta_B \ \forall m \geq 0 | \theta_{n_0}\in B
 \right].
 \]
Let $\mathcal{B}_m$ denote the event that $\theta_{n_0}\in B$ and
 $\rho_k < \delta_B$ for $k=0,1,\ldots,m$. Clearly, $\mathcal{B}_m \in \mathcal{F}_{n_{m+1}}$. The following lower bound
 for the above probability has been obtained in (\cite{borkar1}, Lemma 2, p.\ 33):
 \[
 P\left[\rho_m < \delta_B \ \forall m \geq 0 | \theta_{n_0}\in B
 \right]\geq 1-\sum_{m=0}^\infty P\left[\rho_m \geq \delta_B |\mathcal{B}_{m-1} \right].
 \]

Subsequently the idea is to find an upper bound of $\rho_m$ consisting of errors (asymptotically negligible on $\mathcal{B}_{m-1}$) 
as well as martingale terms. Then for some large $n_0$, one may bound 
$P(\rho_m \geq \delta_B|\mathcal{B}_{m-1})$ using a suitable martingale 
concentration inequality. In the following we describe how to achieve the above in our setting. 

Using the Poisson equation
one can write the recursion (\ref{main_m}) as
\small
\begin{align}
\theta_{n+1}= \theta_{n}+ a(n)h(\theta_n) + a(n)\left[v_{\theta_n}(Y_{n})-\Pi_{\theta_n}v_{\theta_n}(Y_{n})+M_{n+1}\right]\nonumber
\end{align}
\normalsize
where $\Pi_{\theta} \phi(x) = \int \phi(y)\Pi_{\theta}(x;dy)$.
Let $\zeta_{n+1} = v_{\theta_n}(Y_{n}) -\Pi_{\theta_n}v_{\theta_n}(Y_{n})$.
We decompose
\begin{align}
\begin{aligned}
\zeta_{n+1} = v_{\theta_n}(Y_{n+1}) -\Pi_{\theta_n}v_{\theta_n}(Y_{n}) + v_{\theta_n}(Y_{n}) - v_{\theta_{n+1}}(Y_{n+1})+ v_{\theta_{n+1}}(Y_{n+1})- v_{\theta_{n}}(Y_{n+1})\nonumber
\end{aligned}
\end{align}
and set
\begin{gather*}
A_n = \sum_{k=0}^{n-1}a(k)\zeta^{(1)}_{k+1}, \ B_n = \sum_{k=0}^{n-1}a(k)\zeta^{(2)}_{k+1}, \ C_n = \sum_{k=0}^{n-1}a(k)\zeta^{(3)}_{k+1}, \nonumber \\
D_n = \sum_{k=0}^{n-1}a(k)M_{k+1}, n \geq 1\nonumber
\end{gather*}
where
\begin{gather*}
\zeta^{(1)}_{n+1} =  v_{\theta_n}(Y_{n+1}) -\Pi_{\theta_n}v_{\theta_n}(Y_{n}), \ \zeta^{(2)}_{n+1} = v_{\theta_n}(Y_{n}) - v_{\theta_{n+1}}(Y_{n+1}),\nonumber \\
\zeta^{(3)}_{n+1} = v_{\theta_{n+1}}(Y_{n+1})- v_{\theta_{n}}(Y_{n+1}).
\end{gather*}

Then one can easily see that as in the proof of Lemma 3 of \cite[p.~34]{borkar1}
\small
\begin{align}
\label{rho}
\rho_m \leq & (C a(n_0) + K_T CLs(n_0)) + K_T [ \max_{n_m \leq j \leq n_{m+1}} \|A_j - A_{n_m}\| + \nonumber \\
            & \max_{n_m \leq j \leq n_{m+1}} \|B_j - B_{n_m}\|+  \max_{n_m \leq j \leq n_{m+1}} \|C_j - C_{n_m}\| + \nonumber \\ 
            & \max_{n_m \leq j \leq n_{m+1}} \|D_j - D_{n_m}\|],
\end{align}
\normalsize
where $C$ is a bound on $\|h(\Phi_t(\theta)\|$, with $\Phi_t$ the time-$t$ flow map for the o.d.e
(\ref{ode_m}), $0\leq t \leq T+1$ and $\theta \in \bar{B}$. Also, $K_T=e^{LT}$.

Choose an $n_0^{(1)}$ such that
\begin{align}
\label{1}
(C a(n_0^{(1)}) + K_T CLs(n_0^{(1)})) < \delta_B/2.
\end{align}




The following important lemma shows that $\forall m \geq 1$, on $\mathcal{B}_{m-1}$, iterates are stable over $T$-length interval with the stability 
constant independent of $m$. This is enough for our proofs to go through and justifies the importance of assumptions 
\textbf{(A2)} and \textbf{(A5)}. 
\begin{lemma}
\label{T_stability}
On $\mathcal{B}_{m-1}, \|\theta_j\| \leq K''$ for any $n_m \leq j \leq n_{m+1}$ where the constant $K'$ is independent of $m$.
\end{lemma}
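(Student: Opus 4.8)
The plan is to argue entirely on a fixed compact set: the block-starting iterate $\theta_{n_m}$ lies in $\bar B$ on $\mathcal{B}_{m-1}$, within one block the elapsed algorithmic time is at most $T+a(0)$, and a discrete Gronwall estimate built from the linear-growth bounds \textbf{(A2)} and \textbf{(A5)} then keeps the iterate bounded by a constant that depends only on $\sup_{\theta\in\bar B}\|\theta\|$, on $T$, and on the growth constants — none of which depend on $m$.

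First I would pin down the starting point. On $\mathcal{B}_{m-1}$ we have $\rho_k < \delta_B$ for $k=0,1,\dots,m-1$, so the inductive argument recalled just above the lemma (using $\theta^{T_k}(T_{k+1})\in H^{\epsilon_1}$ together with $N_{\delta_B}(H^{\epsilon_1})\subset H^{\epsilon}$) gives $\bar\theta(T_k)\in H^{\epsilon}\subset B$ for $k=1,\dots,m$. In particular $\theta_{n_m}=\bar\theta(T_m)\in\bar B$, so $\|\theta_{n_m}\|\leq R_B:=\sup_{\theta\in\bar B}\|\theta\|<\infty$, a bound independent of $m$.

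Next I would control the growth inside the block. From recursion (\ref{main_m}), assumptions \textbf{(A2)} and \textbf{(A5)} give, almost surely, $\|f(\theta_j,Y_j)+M_{j+1}\|\leq (K+K')(1+\|\theta_j\|)$, so with $\kappa:=K+K'$ one has $1+\|\theta_{j+1}\|\leq(1+a(j)\kappa)(1+\|\theta_j\|)$. Iterating from $j=n_m$ and using $1+u\leq e^u$ yields $1+\|\theta_j\|\leq(1+\|\theta_{n_m}\|)\exp\!\big(\kappa\sum_{i=n_m}^{j-1}a(i)\big)$ for $n_m\leq j\leq n_{m+1}$. The exponent equals $\kappa\,(t(j)-t(n_m))\leq\kappa\,(T_{m+1}-T_m)$, and by the definition of $n_{m+1}$ together with the non-increasing step-sizes, $T_{m+1}-T_m=t(n_{m+1}-1)-t(n_m)+a(n_{m+1}-1)<T+a(0)$.

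Combining the two estimates gives $\|\theta_j\|\leq(1+R_B)e^{\kappa(T+a(0))}-1=:K''$ for all $n_m\leq j\leq n_{m+1}$ on $\mathcal{B}_{m-1}$, with $K''$ manifestly independent of $m$. The only genuinely delicate point is precisely this uniformity in $m$: it is secured by the two structural facts that the block-starting iterate is confined to the fixed compact set $\bar B$ (a consequence of the very definition of $\mathcal{B}_{m-1}$) and that each block spans at most $T+a(0)$ units of algorithmic time, so the Gronwall amplification factor never exceeds $e^{\kappa(T+a(0))}$; the remainder is the routine linear-growth bookkeeping.
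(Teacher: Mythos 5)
Your proof is correct and follows essentially the same route as the paper's: on $\mathcal{B}_{m-1}$ the block-starting iterate $\theta_{n_m}$ lies in the compact closure $\bar B$, the linear-growth bounds \textbf{(A2)} and \textbf{(A5)} give $\|\theta_{j+1}\| \leq \|\theta_j\| + a(j)(K+K')(1+\|\theta_j\|)$, and a discrete Gronwall argument over a block of uniformly bounded algorithmic length yields a bound independent of $m$. The only differences are cosmetic: you inline the Gronwall step via $1+u \leq e^u$, and you note that the block length is at most $T + a(0)$ rather than $T$ — a small imprecision in the paper's proof that only changes the constant.
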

\begin{proof}
From the definition of $\mathcal{B}_{m-1}$, we know that $\theta_{n_m} \in B$ on this event. Let $\|\theta_{n_m}\| \leq \tilde{C}$ $\forall m$.
Clearly, for $n_m \leq j \leq n_{m+1}$,
\begin{align}
 \begin{aligned}
  \|\theta_j\| &\leq \|\theta_{n_m}\| + \sum_{k=n_m}^{j-1} a(k)\left[\|f(\theta_k, Y_{k})\|+\|M_{k+1}\|\right]\nonumber \\
               &\leq \tilde{C} + \tilde{K} \sum_{k=n_m}^{j-1} a(k) (1+ \|\theta_k\|)
 \end{aligned}
\end{align}
where $\tilde{K} =\max(K,K')$.
As $\sum_{k=n_m}^{j-1} a(k) \leq T$, discrete Gronwall inequality gives the result.
\end{proof}

\begin{lemma}
For sufficiently large $n_m$,  $\max_{n_m \leq j \leq n_{m+1}} \|B_j - B_{n_m}\| < \frac{\delta_B}{8K_T}$ a.s.
on the event $\mathcal{B}_{m-1}$.
\end{lemma}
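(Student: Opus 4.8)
The plan is to exploit the fact that, unlike the terms $A_j - A_{n_m}$ and $D_j - D_{n_m}$, the quantity $B_j - B_{n_m}$ is not a martingale increment but rather a step-size-weighted telescoping sum; hence no concentration inequality is needed and a purely pathwise bound suffices. Writing $u_k := v_{\theta_k}(Y_k)$ we have $\zeta^{(2)}_{k+1} = u_k - u_{k+1}$, so that
$$B_j - B_{n_m} = \sum_{k=n_m}^{j-1} a(k)\,(u_k - u_{k+1}).$$
First I would apply summation by parts (Abel summation) to rewrite this as
$$B_j - B_{n_m} = a(n_m)\,u_{n_m} - a(j-1)\,u_j + \sum_{k=n_m+1}^{j-1}\big(a(k)-a(k-1)\big)\,u_k,$$
thereby converting the weighted telescoping sum into two boundary terms plus a sum of consecutive step-size differences.

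The next step is to bound $\|u_k\| = \|v_{\theta_k}(Y_k)\|$ uniformly over $n_m \leq k \leq n_{m+1}$ on $\mathcal{B}_{m-1}$ by a constant independent of $m$. This is where Lemma~\ref{T_stability}, assumption \textbf{(A1)}, and assumption \textbf{(A7)}(a) combine. On $\mathcal{B}_{m-1}$, Lemma~\ref{T_stability} gives $\|\theta_k\| \leq K''$ with $K''$ independent of $m$; by \textbf{(A1)} we have $\|Y_k\| < \bar{C}$ almost surely for all $k$ beyond a (sample-path-dependent) threshold; and \textbf{(A7)}(a) then yields $\|v_{\theta_k}(Y_k)\| \leq C_{K''}(1+\bar{C}) =: C_v$ for all such $k$, uniformly in $m$, provided $n_m$ is large enough.

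With this uniform bound in hand, I would invoke the monotonicity of the step-sizes from \textbf{(A3)}: since $\{a(n)\}$ is non-increasing, $|a(k)-a(k-1)| = a(k-1)-a(k)$, so the sum of differences telescopes,
$$\sum_{k=n_m+1}^{j-1}|a(k)-a(k-1)| = a(n_m) - a(j-1).$$
The triangle inequality together with the uniform bound $C_v$ then gives
$$\|B_j - B_{n_m}\| \leq C_v\big(a(n_m) + a(j-1) + a(n_m) - a(j-1)\big) = 2\,C_v\,a(n_m),$$
uniformly over $n_m \leq j \leq n_{m+1}$. Since $\sum_n a(n)^2 < \infty$ forces $a(n) \to 0$, choosing $n_m$ large enough makes $2 C_v\, a(n_m) < \delta_B/(8K_T)$, which is exactly the desired estimate on the maximum.

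The main obstacle I anticipate is not the algebra but securing the uniformity of the constant $C_v$ across all $m$: one must guarantee that the bound on $\|v_{\theta_k}(Y_k)\|$ holds simultaneously for every interval $[n_m, n_{m+1}]$ with a single constant, which rests on the $m$-independence furnished by Lemma~\ref{T_stability} together with the almost-sure eventual boundedness of $\{Y_n\}$ from \textbf{(A1)}. Care must be taken that the threshold beyond which $\|Y_k\| < \bar{C}$ is sample-path dependent, so the conclusion is of the form ``for sufficiently large $n_m$, almost surely,'' in keeping with the statement of the lemma.
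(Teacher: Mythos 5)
Your proposal is correct and follows essentially the same route as the paper's proof: the same Abel summation of the weighted telescoping sum into boundary terms plus step-size differences, the same use of Lemma~\ref{T_stability}, \textbf{(A1)} and \textbf{(A7)}(a) to bound $\|v_{\theta_k}(Y_k)\|$ uniformly on $\mathcal{B}_{m-1}$, the same telescoping of $\sum (a(k-1)-a(k))$ via the monotonicity in \textbf{(A3)}, and the same final bound of the form $2C_v\,a(n_m)$ with $n_m$ taken large. Your explicit remark about the sample-path dependence of the threshold coming from \textbf{(A1)} is a point the paper leaves implicit, but it does not change the argument.
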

\begin{proof}
Now, if we write
$B_{n_m} = a(0)v_{\theta_0}(Y_0) + \sum_{k=1}^{n_m-1}(a(k) - a(k-1))v_{\theta_k}(Y_k) - a(n_m-1) v_{\theta_{n_m}}(Y_{n_m})$
we obtain
\begin{align}
B_j - B_{n_m} = &\sum_{k=n_m}^{j-1}(a(k) - a(k-1))v_{\theta_k}(Y_k) +\nonumber \\
 &a(n_m-1) v_{\theta_{n_m}}(Y_{n_m})-a(j-1) v_{\theta_j}(Y_j).\nonumber
\end{align}
As $\|\theta_i\| \leq K'$ on $\mathcal{B}_{m-1}$ 
\begin{align}
\|B_j - B_{n_m}\| \leq C_R \sum_{k=n_m}^{j-1}(a(k-1) - a(k))(1+\|Y_k\|) +  \nonumber \\
C_R\left[ a(n_m-1)(1+ \|Y_{n_m}\|) +  a(j-1)(1+ \|Y_{j}\|)\right]\nonumber
\end{align}
using \textbf{(A7a)}.
Now using \textbf{(A1)}, \textbf{(A3)}\footnote{This is the only place where the requirement that step size is non-increasing in \textbf{(A3)} is used.}
we see that
\begin{align}
\|B_j - B_{n_m}\| \leq 2C_R''a(n_m-1),\nonumber
\end{align}
for some $C_R'' >0$.
Now choose $n_0^{(2)}$ such that
\begin{align}
\label{2}
2C_R''a(n_0^{(2)}-1) < \frac{\delta_B}{8K_T}.
\end{align}
The claim follows $\forall n_m \geq n^{(2)}_0$.
\end{proof}

\begin{lemma}
For sufficiently large $n_m$,  $\max_{n_m \leq j \leq n_{m+1}} \|C_j - C_{n_m}\| < \frac{\delta_B}{8K_T}$ a.s. 
on the event $\mathcal{B}_{m-1}$.
\end{lemma}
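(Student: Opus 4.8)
The plan is to control $C_j - C_{n_m}$ purely as a \emph{path-wise additive error} rather than as a martingale term. The key observation is that $\zeta^{(3)}$ already contains one increment of the iterate, hence one extra factor of the step size, so the relevant sum is $O(s(n_m))$ and vanishes as $n_m \to \infty$; no concentration inequality is required. First I would write $C_j - C_{n_m} = \sum_{k=n_m}^{j-1} a(k)\,\zeta^{(3)}_{k+1}$ with $\zeta^{(3)}_{k+1} = v_{\theta_{k+1}}(Y_{k+1}) - v_{\theta_k}(Y_{k+1})$, so that each summand is a difference of the Poisson solution evaluated at the \emph{same} point $Y_{k+1}$ but at consecutive iterates $\theta_{k+1}$ and $\theta_k$.

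The central estimate uses the Lipschitz-in-$\theta$ property of $v_\theta$, namely \textbf{(A7a)}--\textbf{(A7b)}. To apply it with a \emph{uniform} constant I would restrict to the event $\mathcal{B}_{m-1}$ and invoke Lemma~\ref{T_stability}, which guarantees $\|\theta_k\| \leq K''$ for all $n_m \leq k \leq n_{m+1}$ with $K''$ independent of $m$. Taking $R = K''$, \textbf{(A7b)} gives $\|\zeta^{(3)}_{k+1}\| \leq C_R\,\|\theta_{k+1}-\theta_k\|\,(1+\|Y_{k+1}\|)$. The increment is bounded by the recursion together with \textbf{(A2)} and \textbf{(A5)}: $\|\theta_{k+1}-\theta_k\| \leq a(k)\big(\|f(\theta_k,Y_k)\| + \|M_{k+1}\|\big) \leq \tilde K\,a(k)\,(1+\|\theta_k\|) \leq \tilde K\,a(k)\,(1+K'')$ on $\mathcal{B}_{m-1}$, where $\tilde K$ is the constant arising from \textbf{(A2)} and \textbf{(A5)}. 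Finally \textbf{(A1)} gives $\|Y_{k+1}\| < \bar C$ for all $k \geq n_m$ once $n_m$ is large enough, the threshold being sample-path dependent, which is harmless for an almost-sure statement. Combining these, each summand obeys $a(k)\,\|\zeta^{(3)}_{k+1}\| \leq C'\,a(k)^2$ for a constant $C' = C_R\,\tilde K\,(1+K'')(1+\bar C)$ that does not depend on $m$.

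Summing then yields $\|C_j - C_{n_m}\| \leq C' \sum_{k=n_m}^{j-1} a(k)^2 \leq C' \sum_{k=n_m}^{\infty} a(k)^2 = C'\,s(n_m)$, uniformly in $j \in [n_m, n_{m+1})$. Since $\sum_n a(n)^2 < \infty$ by \textbf{(A3)}, we have $s(n_m) \to 0$, so I would choose $n_0^{(3)}$ so large that $C'\,s(n_0^{(3)}) < \delta_B/(8K_T)$; the claim then holds for every $n_m \geq n_0^{(3)}$, exactly paralleling the threshold choices \eqref{1} and \eqref{2} made for the earlier error terms.

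The computation is routine; the only genuine subtlety -- and the step I would watch most carefully -- is the \emph{uniform} applicability of \textbf{(A7b)}. Its constant $C_R$ depends on the radius $R$, so without the $T$-interval stability bound of Lemma~\ref{T_stability} the constant $C'$ could grow with $m$ and destroy the summability argument. This is precisely why that lemma, in conjunction with \textbf{(A2)} and \textbf{(A5)}, is invoked here. In contrast with the martingale contributions $A_n$ and $D_n$, the term $C_n$ needs no martingale concentration inequality, because the additional step-size factor gained from the iterate increment already renders its contribution deterministically negligible on $\mathcal{B}_{m-1}$.
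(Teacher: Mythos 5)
Your proposal is correct and follows essentially the same route as the paper's own proof: apply \textbf{(A7b)} to bound $\|\zeta^{(3)}_{k+1}\|$ by $C_R\|\theta_{k+1}-\theta_k\|(1+\|Y_{k+1}\|)$, use the $T$-interval stability on $\mathcal{B}_{m-1}$ (Lemma~\ref{T_stability}) together with \textbf{(A1)}, \textbf{(A2)} (and \textbf{(A5)} for the martingale increment) to reduce each summand to $O(a(k)^2)$, and then choose $n_0^{(3)}$ via the square-summability in \textbf{(A3)} so that $C'\,s(n_0^{(3)}) < \delta_B/(8K_T)$. Your extra care in tracking the constant $C' = C_R\tilde K(1+K'')(1+\bar C)$ and in flagging the uniform applicability of \textbf{(A7b)} is sound and merely makes explicit what the paper leaves implicit.
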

\begin{proof}
Using \textbf{(A7b)} we see that
\begin{align}
\|\zeta^{(3)}_{k+1}\| \leq C_R\|\theta_{k+1} - \theta_k\|(1+\|Y_{k+1}\|).\nonumber
\end{align}
Again using the stability of the iterates in the $T$ length interval on $\mathcal{B}_{m-1}$ and the assumptions \textbf{(A1)} and
\textbf{(A2)}
we see that \begin{align}
\|\zeta^{(3)}_{k+1}\| \leq C_R\tilde{K}\bar{C}a(k).\nonumber
\end{align}
Therefore \begin{align}
\|C_j - C_{n_m}\| \leq C_R\tilde{K}\bar{C} \sum_{k=n_m}^{j-1} a(k)^2. \nonumber
\end{align}
Now choose $n_0^{(3)}$ such that
\begin{align}
\label{3}
C_R \tilde{K}\bar{C} \sum_{k=n_0^{(3)}}^{j-1} a(k)^2 < \frac{\delta_B}{8K_T}.
\end{align} This is possible 
due to  \textbf{(A3)}. The claim follows for $n_m \geq n^{(3)}_0$.
\end{proof}
\begin{theorem}
\label{main_thm}
Under \textbf{(A1)} - \textbf{(A7)}, for $n_0$ sufficiently large,
\begin{align}
P(\bar{\theta}(t) \to H| \theta_{n_0} \in B) \geq 1-2de^{-\frac{\hat{K}\delta_B^2}{ds(n_0)}}- 2d e^{-\frac{\hat{C}\delta_B^2}{ds(n_0)}}.\nonumber
\end{align}
\end{theorem}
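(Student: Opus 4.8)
The plan is to combine the decomposition (\ref{rho}) with the two probability bounds already recorded, namely
$P[\bar\theta(t)\to H\mid\theta_{n_0}\in B]\ge 1-\sum_{m\ge 0}P[\rho_m\ge\delta_B\mid\mathcal B_{m-1}]$, and to show each summand decays like the two exponential terms in the statement. First I would take $n_0\ge\max(n_0^{(1)},n_0^{(2)},n_0^{(3)})$ so that, on $\mathcal B_{m-1}$, the deterministic piece $Ca(n_0)+K_TCLs(n_0)$ of (\ref{rho}) is below $\delta_B/2$ by (\ref{1}), while $K_T\max_j\|B_j-B_{n_m}\|$ and $K_T\max_j\|C_j-C_{n_m}\|$ are each below $\delta_B/8$ by the two preceding lemmas (whose thresholds (\ref{2}) and (\ref{3}) I fold into the choice of $n_0$). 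These three contributions total at most $3\delta_B/4$, so on $\mathcal B_{m-1}$ the event $\{\rho_m\ge\delta_B\}$ forces $K_T\max_j\|A_j-A_{n_m}\|\ge\delta_B/8$ or $K_T\max_j\|D_j-D_{n_m}\|\ge\delta_B/8$; hence
\begin{align}
P[\rho_m\ge\delta_B\mid\mathcal B_{m-1}] \le\ & P\Big[\max_{n_m\le j\le n_{m+1}}\|A_j-A_{n_m}\|\ge\tfrac{\delta_B}{8K_T}\,\Big|\,\mathcal B_{m-1}\Big]\nonumber\\
&+ P\Big[\max_{n_m\le j\le n_{m+1}}\|D_j-D_{n_m}\|\ge\tfrac{\delta_B}{8K_T}\,\Big|\,\mathcal B_{m-1}\Big].\nonumber
\end{align}
The whole problem is thereby reduced to a maximal tail estimate for the two martingales $\{A_j\}$ and $\{D_j\}$.

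Both are indeed $\{\mathcal F_j\}$-martingales: for $A$ because $\zeta^{(1)}_{k+1}=v_{\theta_k}(Y_{k+1})-\Pi_{\theta_k}v_{\theta_k}(Y_k)$ satisfies $E[\zeta^{(1)}_{k+1}\mid\mathcal F_k]=0$ by the definition of $\Pi_{\theta_k}$, and for $D$ because $\{M_{k+1}\}$ is a martingale difference sequence. The next step is to bound their increments \emph{uniformly in} $m$ on $\mathcal B_{m-1}$. By Lemma \ref{T_stability}, $\|\theta_k\|\le K''$ for $n_m\le k\le n_{m+1}$ on $\mathcal B_{m-1}$; combining this with \textbf{(A1)} ($\|Y_k\|<\bar C$ eventually) and the linear-growth estimate \textbf{(A7a)} on $v_\theta$ gives $\|\zeta^{(1)}_{k+1}\|\le 2C_{K''}(1+\bar C)=:\kappa_A$, so the $k$-th increment of $A$ is at most $a(k)\kappa_A$ on $\mathcal B_{m-1}$. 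Similarly \textbf{(A5)} together with Lemma \ref{T_stability} gives $\|M_{k+1}\|\le K'(1+K'')=:\kappa_D$, so the increment of $D$ is at most $a(k)\kappa_D$ there. The crucial feature is that $\kappa_A,\kappa_D$ do not depend on $m$.

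With uniformly bounded increments in hand I would apply the maximal concentration inequality for martingales coordinate-wise. The structural convenience here is that $\mathcal B_{m-1}\in\mathcal F_{n_m}$, i.e. the conditioning event lies in the very $\sigma$-field at which the increment $A_j-A_{n_m}$ (resp. $D_j-D_{n_m}$) is started, so a conditional Azuma--Hoeffding estimate applies directly and gives, per coordinate, a bound $2\exp\big(-c(\delta_B/8K_T)^2/\sum_{k\ge n_m}a(k)^2\big)$; a union over the $d$ coordinates yields $2d\exp\big(-c'\delta_B^2/(d\,s(n_m))\big)$ for the $A$-term, and likewise for $D$. Finally I would sum over $m$: since $T_m-T_{m-1}\ge T$ forces $n_m\to\infty$ and the block/tail sums $s(n_m)$ to shrink fast (geometrically in the exponent $1/s(n_m)$ for the standard step sizes $a(n)=n^{-k}$, $\tfrac12<k\le1$), the series $\sum_m\exp(-c'\delta_B^2/(d\,s(n_m)))$ is dominated by its $m=0$ term, which carries $s(n_0)$. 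Absorbing the remaining terms and the multiplicative constant into a slightly smaller exponent constant $\hat K$ (resp. $\hat C$) gives $\sum_m[\,A\text{-term}\,]\le 2d\,e^{-\hat K\delta_B^2/(d\,s(n_0))}$ and $\sum_m[\,D\text{-term}\,]\le 2d\,e^{-\hat C\delta_B^2/(d\,s(n_0))}$; adding these yields the claimed estimate.

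The main obstacle is not the Poisson-equation algebra or the deterministic estimates (these are routine once \textbf{(A2)}, \textbf{(A4)}, \textbf{(A5)}, \textbf{(A7)} are in force), but rather securing the \emph{$m$-independent} increment bounds $\kappa_A,\kappa_D$ for the Poisson-induced martingale $A$ (and for $D$) on $\mathcal B_{m-1}$ \emph{without} assuming global stability of the iterates — which is exactly what Lemma \ref{T_stability} buys us, since over a single $T$-length window the iterate stays bounded by a constant independent of $m$. A related technical point is making the concentration inequality rigorous when these bounds hold only on the good event $\mathcal B_{m-1}$ rather than surely; this is handled by replacing each increment with its product by $\mathbf 1_{\mathcal B_{m-1}}$, which is $\mathcal F_k$-measurable for $k\ge n_m$ and hence preserves the martingale-difference property while making the increments surely bounded, after which the conditional estimate (legitimate because $\mathcal B_{m-1}\in\mathcal F_{n_m}$) applies verbatim.
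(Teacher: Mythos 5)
Your proposal is correct and follows essentially the same route as the paper's proof: the same reduction of $\{\rho_m\ge\delta_B\}$ to the $A$- and $D$-martingale maximal deviations via the choice $n_0=\max(n_0^{(1)},n_0^{(2)},n_0^{(3)})$, the same $m$-independent increment bounds obtained from Lemma \ref{T_stability} together with \textbf{(A1)}, \textbf{(A5)}, \textbf{(A7a)}, and the same coordinate-wise conditional Azuma estimate (valid since $\mathcal{B}_{m-1}\in\mathcal{F}_{n_m}$) followed by summation over $m$ as in Theorem 11 of Chapter 4 of \cite{borkar1}. Your explicit indicator-truncation remark for applying the concentration inequality on the good event is a harmless elaboration of what the paper's conditional inequality (proved in the Appendix) already handles.
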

\begin{proof}
Set
\begin{align}
\label{large}
n_0 = \max (n_0^{(1)}, n_0^{(2)}, n_0^{(3)}).
\end{align}
From (\ref{rho}) we see that
for this (large) $n_0$
\small
\begin{align}
P(\rho_m \geq \delta_B |& \mathcal{B}_{m-1}) \leq P(\max_{n_m \leq j \leq n_{m+1}} \|A_j - A_{n_m}\|>\frac{\delta_B}{8K_T}|\mathcal{B}_{m-1})  \nonumber \\
                        & + P(\max_{n_m \leq j \leq n_{m+1}} \|D_j - D_{n_m}\|>\frac{\delta_B}{8K_T}|\mathcal{B}_{m-1}). \nonumber
\end{align}
\normalsize
Again using the stability of the iterates in the $T$ length interval on $\mathcal{B}_{m-1}$ and assumption
\textbf{(A7a)} we see that $\zeta^{(1)}_{k+1}$ is bounded a.s. on $\mathcal{B}_{m-1}$ by
the constant $C_0=2C_R(1+\bar{C})$ for $n_m \leq k \leq j-1$. Therefore each of the components in
this vector is also bounded by the same constant.
Therefore,
\small
\begin{align}
P(&\max_{n_m \leq j \leq n_{m+1}}  \|A_j - A_{n_m}\|>\delta_B/8K_T|\mathcal{B}_{m-1}) \nonumber \\
&\leq P(\max_{n_m \leq j \leq n_{m+1}} \|A_j-A_{n_m}\|_{\infty} > \frac{\delta_B}{8K_T\sqrt{d}}|\mathcal{B}_{m-1})\nonumber \\
&=P(\max_{n_m \leq j \leq n_{m+1}} \max_{1\leq i \leq d} |A^i_j - A^i_{n_m}| > \frac{\delta_B}{8K_T\sqrt{d}} |\mathcal{B}_{m-1})\nonumber \\
&=P(\max_{1\leq i \leq d} \max_{n_m \leq j \leq n_{m+1}}  |A^i_j - A^i_{n_m}| > \frac{\delta_B}{8K_T\sqrt{d}} |\mathcal{B}_{m-1})\nonumber \\
&\leq \sum_{i=1}^{d}P(\max_{n_m \leq j \leq n_{m+1}}  |A^i_j - A^i_{n_m}| > \frac{\delta_B}{8K_T\sqrt{d}} |\mathcal{B}_{m-1})\nonumber \\
&\leq \sum_{i=1}^{d}2\exp\{-\frac{\delta_B^2}{32K_T^2dC_0^2(\sum_{j=n_m}^{n_{m+1}}a(j)^2)}\}\nonumber \\
&\leq 2d \exp\{-\frac{\delta_B^2}{32K_T^2dC_0^2(\sum_{j=n_m}^{n_{m+1}}a(j)^2)}\}\nonumber \\
&= 2d \exp\{-\frac{\delta_B^2}{32K_T^2dC_0^2[s(n_m)-s(n_{m+1})]}\}\nonumber
\end{align}
\normalsize
In the third inequality above we use the conditional version of the martingale concentration inequality \cite[p.~39, chap.~4]{borkar1}.
We give a proof outline of it in Appendix (Chapter \ref{appendix}).
Now it can be shown as in Theorem 11 of \cite[Chapter 4]{borkar1} that for sufficiently large $n_0$,
\begin{align}
P(\rho_m < \delta_B~\forall m\geq 0| \theta_{n_0} \in B) \geq 1-2de^{-\frac{\hat{K}\delta_B^2}{ds(n_0)}} - 2d e^{-\frac{\hat{C}\delta_B^2}{ds(n_0)}}\nonumber
\end{align}
where $\hat{K} = 1/32K_T^2C_0^2$ and $\hat{C}$ is same as in Theorem 11 \cite[p. 40]{borkar1}.
\end{proof}
\subsection{Discussion on the assumptions}
\label{discuss}
\subsubsection{$Y_n$ unbounded}
Even if $Y_n$ is unbounded and iterate-dependent our analysis will go through in the following case by creating 
functional dependency between $\{Y_n\}$ and $\{\theta_n\}$.
\begin{enumerate}[label=\textbf{(A\arabic*)'}]
\item For large $n$, $\ \lVert Y_{n+1} \rVert \le K_0 (1 + \lVert \theta_n \rVert) \text{ for some $0 < K_0 < \infty$}$.
\end{enumerate}


Accordingly we may replace \textbf{(A2)} by
the point-wise boundedness of $f$ \textit{i.e.,}
\begin{enumerate}[label=\textbf{(A\arabic*)'}]
\setcounter{enumi}{1}
\item $\ \|f(\theta, y)\| \le K(1 + \lVert \theta \rVert + \lVert y \rVert)$.
\end{enumerate}

\subsubsection{$Y_n$ point-wise bounded}
Our analysis will also go through (with the addition of an error term) 
for the following relaxation of \textbf{(A1)}:
\begin{enumerate}[label=\textbf{(A\arabic*)''}]
\item $\limsup_n \|Y_n\| < \infty \mbox{~~a.s.}.$ 
\end{enumerate}
In this case the lock-in probability statement in Theorem \ref{main_thm} will be as follows:
For $\nu > 0$, $n_0(\nu)$ sufficiently large,
\begin{align}
P(\bar{\theta}(t) \to H| \theta_{n_0} \in B) \geq 1-2de^{-\frac{\hat{K}(\nu)\delta_B^2}{ds(n_0)}}- 2d e^{-\frac{\hat{C}(\nu)\delta_B^2}{ds(n_0)}} - 2\nu.\nonumber
\end{align}The proof will work by selecting a large compact set $C(\nu)$ s.t. $P(\limsup_n \|Y_n\| < C(\nu)) > 1- \nu$ and doing the 
same calculation as in Section \ref{main_res} on this set with probability at least $1-\nu$. 

\section{Proof of almost sure convergence}
\label{a.s.conv}
\subsection{Almost sure convergence under asymptotic tightness}
\begin{mydef}
A sequence of random variables $\{\theta_n\}$ is called asymptotically tight if for each $\epsilon>0$
there exists a compact set $K_{\epsilon}$ such that
\begin{align}
\label{tight}
\limsup_{n \to \infty} P(\theta_n \in K_{\epsilon}) \geq 1-\epsilon.
\end{align}
\end{mydef}
Clearly, (\ref{tight}) is a much weaker condition than (\ref{stab}).
In the following, we give a sufficient condition to guarantee the above:
\begin{lemma}
If there is a $\phi \geq 0$ so that
$\phi(\theta) \to \infty$ as $\|\theta\| \to \infty$
and
\begin{align}
\label{tight_condn}
\liminf_{n \to \infty} E[\phi(\theta_n)] < \infty,
\end{align}
then $\{\theta_n\}$ is asymptotically tight.
\end{lemma}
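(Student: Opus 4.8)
The plan is to combine Markov's inequality with the coercivity of $\phi$, which is exactly what is needed to convert a bound on the expectation $E[\phi(\theta_n)]$ into mass concentrated on a compact set. First I would set $M := \liminf_{n\to\infty} E[\phi(\theta_n)]$, which is finite by the hypothesis (\ref{tight_condn}), and fix an arbitrary $\epsilon > 0$. Since $\phi \geq 0$, Markov's inequality applies directly: for every $L > 0$ and every $n$,
\begin{equation}
P(\phi(\theta_n) > L) \leq \frac{E[\phi(\theta_n)]}{L}, \quad\text{so that}\quad P(\phi(\theta_n) \leq L) \geq 1 - \frac{E[\phi(\theta_n)]}{L}.\nonumber
\end{equation}

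Next I would manufacture a genuine compact set from the sublevel set $\{\theta : \phi(\theta) \leq L\}$. Because $\phi(\theta) \to \infty$ as $\|\theta\| \to \infty$, there exists $r_L$ with $\phi(\theta) > L$ whenever $\|\theta\| > r_L$, so $\{\theta : \phi(\theta) \leq L\} \subseteq \{\theta : \|\theta\| \leq r_L\}$ is bounded. I would then define $K_{\epsilon,L} := \overline{\{\theta : \phi(\theta) \leq L\}}$, which is closed and bounded, hence compact in $\mathbb{R}^d$, and contains the sublevel set; therefore $P(\theta_n \in K_{\epsilon,L}) \geq P(\phi(\theta_n) \leq L)$. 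Taking $\limsup$ over $n$ and using $\limsup_n(-x_n) = -\liminf_n x_n$ gives
\begin{equation}
\limsup_{n\to\infty} P(\theta_n \in K_{\epsilon,L}) \geq \limsup_{n\to\infty}\Big(1 - \frac{E[\phi(\theta_n)]}{L}\Big) = 1 - \frac{1}{L}\liminf_{n\to\infty} E[\phi(\theta_n)] = 1 - \frac{M}{L}.\nonumber
\end{equation}
Finally, choosing $L := M/\epsilon$ when $M > 0$ (and any $L > 0$ when $M = 0$) yields $\limsup_n P(\theta_n \in K_{\epsilon}) \geq 1 - \epsilon$ with $K_\epsilon := K_{\epsilon,\,M/\epsilon}$, which is precisely the asymptotic tightness condition (\ref{tight}).

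The argument is essentially routine, so I do not expect a genuine obstacle. The only point requiring a little care is that the sublevel set of $\phi$ need not be closed if $\phi$ is not assumed continuous, so one must pass to its closure to obtain an honest compact set; this is harmless, since coercivity already forces that sublevel set to be bounded, and enlarging it to its closure only increases the probability $P(\theta_n \in K_{\epsilon})$.
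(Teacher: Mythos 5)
Your proof is correct. The paper itself gives only a one-line proof, saying it proceeds ``by contradiction and similar to the proof of the sufficient condition for full tightness'' in Durrett's Theorem 3.2.8, so the underlying mechanism is the same as yours: a Markov/Chebyshev-type bound combined with coercivity of $\phi$ to trap probability mass in a sublevel set. What you do differently is run the argument directly rather than by contradiction, and you handle the two weakenings in this lemma explicitly: the hypothesis is only $\liminf_n E[\phi(\theta_n)] < \infty$ (not $\sup_n$, as in Durrett), and the conclusion is only asymptotic tightness, i.e.\ $\limsup_n P(\theta_n \in K_\epsilon) \geq 1-\epsilon$, not full tightness. Your identity $\limsup_n\bigl(1 - E[\phi(\theta_n)]/L\bigr) = 1 - \liminf_n E[\phi(\theta_n)]/L$ is exactly the step that makes these weaker hypotheses and conclusions match up, and it is where a careless adaptation of Durrett's statement could go wrong; the direct route makes this pairing transparent and moreover produces an explicit compact set, namely the closure of the sublevel set $\{\theta : \phi(\theta) \leq M/\epsilon\}$. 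Your remark about taking the closure (since $\phi$ is not assumed continuous, the sublevel set need not be closed) is a legitimate point of care that the paper's sketch does not address.
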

\begin{proof}
Proof by contradiction and similar to the proof of sufficient condition for full tightness as given in Theorem 3.2.8 of \cite[p.~104]{durrett}.
\end{proof}
Next, we show that if the stochastic approximation iterates are asymptotically
tight then we can prove almost sure convergence to $H$ under some reasonable assumptions.
\begin{theorem}
\label{a.s.}
Under \textbf{(A1)-(A7)}, if $\{\theta_n\}$ is asymptotically tight and  $\liminf_n P(\theta_n \in G) =1$ then $P(\theta_n \to H) =1$.
\end{theorem}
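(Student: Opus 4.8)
The plan is to combine the lock-in estimate of Theorem~\ref{main_thm} with the two hypotheses in a ``$\limsup$'' argument, so that the event $\{\theta_n\to H\}$ — whose probability does not depend on the starting index $n_0$ — inherits a lower bound that can then be pushed to $1$.

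First I would record the elementary reduction. Fix any admissible $B$ (open, $H\subset B\subset\bar B\subset G$ with $\bar B$ compact) and set $\alpha(n_0):=2de^{-\hat{K}\delta_B^2/(ds(n_0))}+2de^{-\hat{C}\delta_B^2/(ds(n_0))}$, so that $\alpha(n_0)\to 0$. For $n_0$ large, Theorem~\ref{main_thm} gives $P(\theta_n\to H\mid\theta_{n_0}\in B)\ge 1-\alpha(n_0)$, whence
\[
P(\theta_n\to H)\ \ge\ P\big(\{\theta_n\to H\}\cap\{\theta_{n_0}\in B\}\big)\ \ge\ (1-\alpha(n_0))\,P(\theta_{n_0}\in B).
\]
Since the left-hand side does not depend on $n_0$, taking $\limsup_{n_0\to\infty}$ and using $\alpha(n_0)\to 0$ yields
\[
P(\theta_n\to H)\ \ge\ \limsup_{n_0\to\infty}P(\theta_{n_0}\in B).
\]
Thus it suffices to establish the purely occupational statement: for every $\epsilon>0$ there is an admissible $B_\epsilon$ with $\limsup_n P(\theta_n\in B_\epsilon)\ge 1-\epsilon$; letting $\epsilon\downarrow 0$ then forces $P(\theta_n\to H)=1$.

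Next I would attack this occupational statement. Asymptotic tightness furnishes a compact $K=K_\epsilon$ with $\limsup_n P(\theta_n\in K)\ge 1-\epsilon/2$, and $\liminf_n P(\theta_n\in G)=1$ means $\lim_n P(\theta_n\notin G)=0$, which lets me transfer the mass into $G$: from $P(\theta_n\in K\cap G)\ge P(\theta_n\in K)-P(\theta_n\notin G)$ and $\limsup_n(a_n-b_n)\ge\limsup_n a_n-\limsup_n b_n$ I get $\limsup_n P(\theta_n\in K\cap G)\ge 1-\epsilon/2$. I would then exhaust the open basin $G$ by an increasing family of admissible sets $B_1\subset B_2\subset\cdots$ with $\bar B_j$ compact and $\bigcup_j B_j=G$ (possible since $G$ is open, hence $\sigma$-compact, and $H$ is a compact subset of $G$). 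Any admissible $B$ has $\bar B$ compact inside the open cover $\{B_j\}$, hence $B\subset B_{j}$ for some $j$, so $\sup_{B}\limsup_n P(\theta_n\in B)=\lim_{j}\limsup_n P(\theta_n\in B_j)$, and the goal becomes showing this limit equals $1$.

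The hard part is precisely this last step, which amounts to controlling probability mass near the boundary $\partial G$. The naive hope that $\lim_j\limsup_n P(\theta_n\in B_j)\ge\limsup_n P(\theta_n\in K\cap G)$ fails in general, because $K\cap G$ need not have compact closure inside $G$: the relatively closed sets $K\cap(G\setminus B_j)$ collapse onto $K\cap\partial G$ but are not compact, so a fixed amount of mass can hide arbitrarily close to $\partial G$ while still lying in $G$. (For an \emph{arbitrary} tight sequence with $\theta_n\to z\in\partial G$ the conclusion is false, so the dynamics must genuinely be used.) To close this gap I would invoke the flow structure of the basin: every point of $G$ is carried by the o.d.e.~(\ref{ode_m}) into any prescribed interior compact set in finite time, and $H$ is asymptotically stable, so an iterate sitting in a boundary collar $\{x\in G:d(x,\partial G)<\delta\}$ is, with high probability over a bounded time window, transported into a fixed $B_j$ by the same tracking/lock-in mechanism used in Theorem~\ref{main_thm}; this should upgrade asymptotic tightness to asymptotic tightness \emph{relative to} $G$, i.e.\ $\limsup_n P(\theta_n\in G\setminus B_j)\to 0$ as $j\to\infty$. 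Once that is in hand, $\lim_j\limsup_n P(\theta_n\in B_j)=\limsup_n P(\theta_n\in K\cap G)\ge 1-\epsilon/2\ge 1-\epsilon$, finishing the occupational statement and hence the theorem. I expect this boundary-collar estimate to be the main obstacle, whereas the reduction and the transfer-of-mass steps are routine; note that the entire difficulty vanishes in the global-attractor case $G=\mathbb{R}^d$ of \cite{sameer}, where $\partial G=\emptyset$.
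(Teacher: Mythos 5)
Your first half reproduces the paper's argument exactly: the paper likewise deduces $P(\theta_n \to H) \geq \limsup_{n_0} P(\theta_{n_0}\in B)$ from Theorem \ref{main_thm} (it does so along a subsequence $n_0(k)$ with $P(\theta_{n_0(k)}\in B)>0$ rather than via your $\alpha(n_0)$ bookkeeping, but the content is identical), and its transfer-of-mass step is the same inclusion--exclusion computation $\limsup_n P(\theta_n\in K_\epsilon\cap G)\geq 1-\epsilon$ using $\limsup$/$\liminf$ superadditivity. Where you part ways is the closing step: the paper finishes in one line by \emph{choosing} the admissible set so that $H,\ K_\epsilon\cap G \subset B \subset \bar{B}\subset G$ with $\bar B$ compact --- no exhaustion $B_j\uparrow G$ and no boundary analysis ever appear. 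You are right to be suspicious of that choice as a general topological fact (take $G=(0,1)$ and $K_\epsilon\supset \bar G$: then $K_\epsilon\cap G=G$ and no such $B$ exists); the paper's proof implicitly presumes $\overline{K_\epsilon\cap G}\subset G$, i.e., it never confronts the boundary mass you worry about.

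The problem is that your substitute for that step is not a proof. The ``boundary-collar'' transport --- upgrading asymptotic tightness to asymptotic tightness relative to $G$, i.e.\ $\limsup_n P(\theta_n\in G\setminus B_j)\to 0$ --- is asserted with ``should'' and never established, and it does not follow from the machinery you invoke. All constants in Theorem \ref{main_thm} are tied to a \emph{fixed} $B$ with compact closure in $G$: the uniform transit time $T$ into $H^{\epsilon_1}$ exists only for initial conditions ranging over a compact subset of $G$ and degenerates as that subset exhausts $G$, and $\delta_B$, $K_T=e^{LT}$, $\hat K$, $\hat C$ and the threshold $n_0$ in (\ref{large}) all move with it, so the lock-in/tracking estimates give you nothing uniform over a collar $\{x\in G: d(x,\partial G)<\delta\}$ as $\delta\downarrow 0$. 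Moreover, even granting a high-probability transport out of a fixed collar over a bounded time window, you would still have to exclude mass re-entering the collar at later times before you could conclude the $\limsup$ statement; the proposal contains no mechanism for this. So the proof is incomplete at precisely the step you yourself flag as the main obstacle. The minimal repair consistent with the paper is to adopt its (implicit) assumption that the tightness compacts can be chosen with $\overline{K_\epsilon\cap G}\subset G$ --- then the single admissible $B$ of the paper exists, your exhaustion and collar analysis become unnecessary, and your remaining steps already constitute the paper's proof.
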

\begin{proof}
Choose an open $B$ with compact closure such that $H, K_{\epsilon} \cap G \subset B \subset \bar{B}\subset G$.
Therefore
\small
\begin{align}
&\limsup_{n_0 \to \infty}P(\theta_{n_0} \in B)\nonumber \\
                  &\geq \limsup_{n_0\to \infty}P(\theta_{n_0} \in G \cap K_{\epsilon})\nonumber \\
                  &= \limsup_{n_0\to \infty}\left[P(\theta_{n_0} \in K_{\epsilon}) + P(\theta_{n_0} \in G) - P(\theta_{n_0} \in G\cup K_{\epsilon})\right]\nonumber \\
                  &\geq \limsup_{n_0 \to \infty} P(\theta_{n_0} \in K_{\epsilon}) + \liminf_{n_0 \to \infty}P(\theta_{n_0} \in G) - \limsup_{n_0 \to \infty} P(\theta_{n_0} \in G\cup K_{\epsilon})\nonumber \\
                  &\geq 1 - \epsilon +1 - 1\nonumber.
\end{align}
\normalsize
Thus there exists a subsequence $n_0(k)$ s.t. $P(\theta_{n_0(k)} \in B)>0$. Now, 
\small
\begin{align}
&\lim_{k \to \infty} P(\theta_{n_0(k)} \in B, \theta_n \to H) = \lim_{k \to \infty} P(\theta_{n_0(k)} \in B) P(\theta_n \to H | \theta_{n_0(k)} \in B) \nonumber \\
&= \lim_{k \to \infty} P(\theta_{n_0(k)} \in B) \mbox{~~using Theorem \ref{main_thm}}\nonumber
\end{align}
\normalsize
Therefore, 
\begin{align}
P(\theta_n \to H) &\geq \limsup_{n_0 \to \infty} P(\theta_n \to H, \theta_{n_0} \in B) \nonumber \\
                  &= \limsup_{n_0 \to \infty} P(\theta_{n_0} \in B) \geq 1-\epsilon. \nonumber  
\end{align}
\normalsize 
Now let $\epsilon \to 0$.
\end{proof}
\begin{remark}
We compare Theorem \ref{a.s.} to the main convergence result (Kushner-Clark Lemma) from \cite[Section II~C]{benveniste}
where stability of the iterates was assumed. Note that in that case much weaker condition, namely $\theta_n \in A$ infinitely often
where $A$ is some compact subset of $G$ was sufficient to draw
the conclusion. Here we need a much stronger condition such as $\liminf_{n_0}P(\theta_{n_0} \in G)=1$.
\end{remark}
\begin{remark}
Theorem  \ref{a.s.} is valid for any `local' attractor $H$ whereas in \cite[Theorem 1]{sameer} $H$ was a `global' attractor.
\end{remark}

There are sufficient conditions to guarantee tightness (\cite[Chapter 6, Theorem 7.4]{kushner}) of the iterates in literature. 
In the following we describe another set of sufficient conditions which guarantee (\ref{tight_condn}):
\begin{lemma}
Suppose there exists a $\phi \geq 0$ and $\phi(\theta) \to \infty$ as $\|\theta\| \to \infty$ with the following properties:
Outside the unit ball
\begin{enumerate}[label=\textbf{(S\arabic*)}]
 \item $\phi$ is twice differentiable and all second order derivatives are bounded
by some constant $c$.
 \item for every $\theta$, $K \subset \mathbb{R}^k$ compact, $\langle \nabla \phi(\theta), f(\theta,y)\rangle \leq 0$ for all $y\in K$.
\end{enumerate}
 Then for the step size sequences of the form $a(n)=\frac{1}{n(\log n)^p}$ with $0< p \leq 1$, we have (\ref{tight_condn}).
\end{lemma}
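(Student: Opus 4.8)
The plan is to run a stochastic Lyapunov (Foster--Lyapunov) argument with $\phi$ as the Lyapunov function, tracking the scalar sequence $u_n := E[\phi(\theta_n)]$. First I would Taylor-expand $\phi$ along one step of (\ref{main_m}): writing $\Delta_n = \theta_{n+1}-\theta_n = a(n)(f(\theta_n,Y_n)+M_{n+1})$, whenever the segment $[\theta_n,\theta_{n+1}]$ stays outside the unit ball one has $\phi(\theta_{n+1}) = \phi(\theta_n) + \langle\nabla\phi(\theta_n),\Delta_n\rangle + \tfrac12 \Delta_n^{\top}\nabla^2\phi(\xi_n)\Delta_n$ for some $\xi_n$ on that segment. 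Taking $E[\,\cdot\,|\mathcal F_n]$ and using that $Y_n$ is $\mathcal F_n$-measurable while $M_{n+1}$ is a martingale difference, the first-order martingale part drops and the first-order drift part is $a(n)\langle\nabla\phi(\theta_n),f(\theta_n,Y_n)\rangle$, which is $\le 0$ once $\|\theta_n\|>1$ and $Y_n$ lies in the compact set $K$ furnished by \textbf{(A1)} --- this is exactly \textbf{(S2)}. The Hessian term is bounded via \textbf{(S1)} (entrywise bound $c$, hence operator norm $\le cd$) together with \textbf{(A2)} and \textbf{(A5)}, which give $E[\|\Delta_n\|^2\,|\,\mathcal F_n]\le a(n)^2(K^2+{K'}^2)(1+\|\theta_n\|)^2$. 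Thus on $\{\|\theta_n\|>1,\ Y_n\in K\}$,
\[
E[\phi(\theta_{n+1})\,|\,\mathcal F_n]\le \phi(\theta_n) + a(n)\langle\nabla\phi(\theta_n),f(\theta_n,Y_n)\rangle + \tfrac{cd}{2}(K^2+{K'}^2)a(n)^2(1+\|\theta_n\|)^2 .
\]

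The quantitative heart of the proof is the control of the second-order term, and this is where the precise step-size form enters. Set $\kappa=K+K'$. From \textbf{(A2)},\textbf{(A5)} the pathwise estimate $1+\|\theta_{n+1}\|\le(1+\|\theta_n\|)(1+\kappa a(n))$ yields, by discrete Gronwall,
\[
1+\|\theta_n\|\le(1+\|\theta_0\|)\,e^{\kappa t(n)}\quad\text{a.s.}
\]
The decisive observation is that for $a(n)=1/(n(\log n)^p)$ with $0<p\le1$ one has $\sum_n a(n)^2 e^{2\kappa t(n)}<\infty$: when $p=1$, $t(n)\sim\log\log n$, so $e^{2\kappa t(n)}$ is merely polylogarithmic and $a(n)^2e^{2\kappa t(n)}\le (\log n)^{2\kappa-2}n^{-2}$; when $0<p<1$, $t(n)\sim(\log n)^{1-p}/(1-p)$, so $e^{2\kappa t(n)}=o(n^{\varepsilon})$ for every $\varepsilon>0$ and $a(n)^2 e^{2\kappa t(n)}\le C_\varepsilon\, n^{-(2-\varepsilon)}(\log n)^{-2p}$; both are summable. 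This is exactly why the polylogarithmically corrected step sizes are needed and why $a(n)=1/n$ (for which $e^{2\kappa t(n)}\sim n^{2\kappa}$ can beat $n^{2}$) is excluded. Consequently $e_n:=\tfrac{cd}{2}(K^2+{K'}^2)a(n)^2 E[(1+\|\theta_n\|)^2]$ satisfies $\sum_n e_n<\infty$.

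Assembling, I would take total expectations and write, for $n$ past the a.s.\ finite time after which $Y_n\in K$,
\[
u_{n+1}\le u_n - a(n)\Phi_n + e_n + r_n,\qquad \Phi_n:=-E\big[\langle\nabla\phi(\theta_n),f(\theta_n,Y_n)\rangle\mathbf 1_{\{\|\theta_n\|>1\}}\big]\ge 0,
\]
where $r_n$ collects the contribution of the unit-ball region. Since $u_n\ge0$, $\sum_n e_n<\infty$, and the drift term is nonpositive, summing the inequality shows that $u_n$ cannot diverge to $+\infty$ once $\sum_n r_n$ is suitably dominated, which gives $\liminf_n u_n<\infty$, i.e.\ (\ref{tight_condn}); feeding this into the preceding lemma then yields asymptotic tightness. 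To keep the almost-sure statement \textbf{(A1)} compatible with the expectation bookkeeping I would, as in the \textbf{(A1)}$''$ relaxation, fix a compact $K$ with $P(Y_n\in K\ \forall\,n\ge M)>1-\nu$, carry out the estimate on that event, and let $\nu\downarrow0$ at the end.

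The hard part will be the unit-ball region $\{\|\theta_n\|\le1\}$, where neither \textbf{(S1)} nor \textbf{(S2)} is available and $\phi$ need not even be differentiable. There the one-step increment of $\phi$ is only $O(a(n))$ (via local Lipschitz continuity of $\phi$ on a fixed compact ball into which the iterate cannot escape in one step), not $O(a(n)^2)$, so $r_n$ is of order $a(n)\,P(\|\theta_n\|\le1)$ and is \emph{not} summable. Showing that this non-summable positive term still cannot drive $u_n$ to infinity --- intuitively because $\phi(\theta_n)$ is small precisely when $\theta_n$ is in the unit ball, so these kicks are negligible wherever $u_n$ is large --- is the delicate step. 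It is also the structural reason the conclusion is the $\liminf$ statement (\ref{tight_condn}) rather than the stronger $\sup_n E[\phi(\theta_n)]<\infty$ or almost sure convergence of $\phi(\theta_n)$.
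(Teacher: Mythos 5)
Your core quantitative argument coincides with the paper's: the one-step Lyapunov inequality $E[\phi(\theta_{n+1})\mid\mathcal F_n]\le\phi(\theta_n)+ca(n)^2(1+\|\theta_n\|^2)$ (which the paper imports as (\ref{sameer}) from Theorem 3 of \cite{sameer} rather than re-deriving by Taylor expansion), the deterministic Gronwall bound $\|\theta_n\|\le\left[\|\theta_0\|+\tilde K\sum_{k<n}a(k)\right]e^{\tilde K\sum_{k<n}a(k)}$ obtained from \textbf{(A2)} and \textbf{(A5)}, and the summability of $a(n)^2e^{2\tilde K t(n)}$ for $a(n)=\frac{1}{n(\log n)^p}$, $0<p\le1$ --- including your asymptotics $t(n)\sim\log\log n$ for $p=1$ and $t(n)\sim(\log n)^{1-p}/(1-p)$ for $p<1$, and your observation that $a(n)=1/n$ is excluded because $e^{2\tilde K t(n)}$ then grows polynomially --- are exactly the paper's computations surrounding (\ref{series}).

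However, as submitted your proof has a genuine gap, and it is one of your own making. You read both \textbf{(S1)} and \textbf{(S2)} as available only outside the unit ball, conclude that the inner-region remainder $r_n=O(a(n)P(\|\theta_n\|\le1))$ is not summable, and then explicitly leave open ``the delicate step'' of showing these kicks cannot drive $u_n$ to infinity; your heuristic that the kicks are negligible where $u_n$ is large is not an argument, and the recursion $u_{n+1}\le u_n+e_n+r_n$ with non-summable $r_n$ does not yield (\ref{tight_condn}). The paper faces no such problem: \textbf{(S2)} is stated ``for every $\theta$'', and since it quantifies over every compact $K\subset\mathbb R^k$ it in fact holds for every $y$, so the drift term is nonpositive throughout, (\ref{sameer}) holds for all $n$ with no inner-region remainder, and the inequality telescopes directly to (\ref{series}). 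For the same reason your compact-truncation of $\{Y_n\}$ via \textbf{(A1)} (fixing $K$ with $P(Y_n\in K\ \forall n\ge M)>1-\nu$ and working on that event) is unnecessary --- and, as set up, unsound: that tail event is not $\mathcal F_n$-measurable, so conditioning on it destroys the identity $E[M_{n+1}\mid\mathcal F_n]=0$ that you use to cancel the first-order noise term (compare the paper's conditional Azuma argument, which requires the conditioning event to lie in the initial $\sigma$-field). Deleting the truncation and reading \textbf{(S2)} globally, your Taylor-expansion derivation together with your Gronwall and series estimates reproduces the paper's proof.
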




\begin{proof}
Following similar steps as in \cite[Theorem 3]{sameer} and \textbf{(S2)} we get
\begin{align}
\label{sameer}
E[\phi(\theta_{n+1})|\mathcal{F}_n] \leq \phi(\theta_n) +ca(n)^2(1+\|\theta_n\|^2)\mbox{~a.s.}.
\end{align}
Now we know that, for $n \geq 1$
\begin{align}
 \begin{aligned}
  \|\theta_n\| &\leq \|\theta_{0}\| + \sum_{k=0}^{n-1} a(k)\left[\|f(\theta_k, Y_{k})\|+\|M_{k+1}\|\right]\nonumber \\
               &\leq \|\theta_{0}\| + \tilde{K} \sum_{k=0}^{n-1} a(k) + \tilde{K} \sum_{k=0}^{n-1} a(k)\|\theta_k\|.
 \end{aligned}
\end{align}
Therefore using a general version of discrete Gronwall inequality (See Appendix i.e. Chapter \ref{appendix})
and the fact that
$\|\theta_{0}\| + \tilde{K} \sum_{k=0}^{n-1} a(k)$ is an increasing function of $n$, we get that
\begin{align}
\|\theta_n\| \leq  \left[ \|\theta_{0}\| + \tilde{K} \sum_{k=0}^{n-1} a(k)\right]\exp(\tilde{K} \sum_{k=0}^{n-1} a(k)). \nonumber
\end{align}
Therefore
\begin{align}
\label{series}
\nonumber
\liminf_n &E[\phi(\theta_n)] < \phi(\theta_0) +ca(0)^2(1+\|\theta_0\|^2) + c\sum_{n=1}^{\infty} a(n)^2 + \\
&c\sum_{n=1}^{\infty} a(n)^2 [\|\theta_{0}\| + \tilde{K} \sum_{k=0}^{n-1} a(k)]^2\exp(2\tilde{K} \sum_{k=0}^{n-1} a(k)).
\end{align}
In the following, we show that for the mentioned step-size sequence the R.H.S converges.
Assume $0< p < 1$.
Then

\begin{align}
\sum_{i=2}^{n-1}a(i) &\leq \int_{1}^{n-1} \frac{1}{i(logi)^p}di \leq \frac{1}{1-p}(\log n)^{1-p}.\nonumber
\end{align}
Then,
\begin{align}
\sum_{n=2}^{\infty} \frac{(\log n)^{2(1-p)}}{n^2(\log n)^{2p}} \exp\left[\frac{2\tilde{K}}{1-p}(\log n)^{1-p}\right] = \sum_{n=2}^{\infty} \frac{(\log n)^{2-4p}}{n^{2+\frac{2\tilde{K}}{(p-1)(\log n)^p}}}.\nonumber
\end{align}
This is a convergent series for $0<p<1$ as there exists an $\epsilon >0$ such that for large $n$
\begin{align}
(\log n)^{2-4p} \leq n^{1- \frac{2\tilde{K}}{(1-p)(\log n)^p}-\epsilon}.\nonumber
\end{align}
Also, the following series converges
\begin{align}
\sum_{n=2}^{\infty} \frac{\exp\left[\frac{2\tilde{K}}{1-p}(\log n)^{1-p}\right]}{n^2(\log n)^{2p}}.\nonumber \\
\end{align}
Moreover, it is easy to check that the above arguments also hold for $p=1$.
\end{proof}
Thus we show that \textbf{(A5)} in Theorem 3 in \cite{sameer} is not required for the step size sequence of the form
$a(n)=\frac{1}{n(\log n)^p}$ with $0<p\leq 1$ which is clearly a divergent series but $\sum_{n} a(n)^2 < \infty$.

\begin{remark}
Theorem 3 of \cite{sameer} imposes assumptions on the strict Lyapunov function $V(\cdot)$
for the attractor $H$ to ensure tightness of the iterates.
For that reason $H$ is required to be a global attractor there. 
However, we observe that $\phi(\cdot)$ can be different from $V(\cdot)$ because
we only require properties like \textbf{(S2)} to ensure tightness of the iterates.
\end{remark}
\begin{remark}
Note that the series in R.H.S of (\ref{series}) won't converge if $a(n) = \frac{1}{n^k}$ with $1/2 < k \leq 1$.
In such a case \textbf{(A5)} from  \cite{sameer} will be required.
\end{remark}

\subsection{Proof of stability and a.s. convergence using our results}
Note that if the iterates belong to some arbitrary compact set (depending on the sample point) infinitely often,
it may not imply stability if the time interval between successively visiting it runs to infinity.
 We show that this does not happen if the compact set and the step-size have special properties.
Using the lock-in probability results from Section \ref{main_res}, we prove stability and therefore convergence
of the iterates
on the set $\{\theta_n \in B \mbox{~i.o.}\}$ when the step-size is $a(n) = \frac{1}{n^k}, \frac{1}{2} <k \leq 1$.

Consider the settings described in Section \ref{main_res}.
Let $A = \{\omega: \exists m \geq 0 \mbox{~~s.t.~~} \rho_m(\omega) \geq \delta\}$.
Then Theorem \ref{main_thm} shows that for sufficiently large $n_0$,
\begin{align}
P(A | \theta_{n_0}\in B) < 4de^{-\frac{C}{s(n_0)}} \nonumber \\
\implies P(A \cap \{\theta_{n_0}\in B\}) < 4de^{-\frac{C}{s(n_0)}} \nonumber \\
\implies \sum_{n_0 =1}^{\infty} P(A \cap \{\theta_{n_0}\in B\}) < \sum_{n_0 =1}^{\infty} 4de^{-\frac{C}{s(n_0)}}.\label{rhs}
\end{align}
Now, for $n \geq 2$
\begin{align}
s(n) = \sum_{i=n}^{\infty} \frac{1}{i^{2k}} &< \int_{i=n-1}^{\infty} \frac{1}{i^{2k}} di = \frac{1}{(2k-1)(n-1)^{2k-1}} \nonumber \\
                                         &\leq \frac{1}{(2k-1)(\frac{n}{2})^{2k-1}}\nonumber
\end{align}
Now, for large $n$, $e^{(2k-1) (\frac{n}{2})^{2k-1}} > n^2.$
Therefore R.H.S in (\ref{rhs}) is finite for the mentioned step-size. The same argument follows for the
step-size $\frac{1}{n (\log n)^{k}}, k\leq 1$ as for large $n$, $(\log n) ^{2k} \geq 1$.
Therefore,
\begin{align}
E[\sum_{n_0 =1}^{\infty} I_{A \cap \{\theta_{n_0}\in B\}}] < \infty \implies  I_{A} \sum_{n_0 =1}^{\infty}  I_{\{\theta_{n_0}\in B\}} < \infty \mbox{~a.s.}\nonumber
\end{align}
Therefore on the event $\{\theta_{n_0}\in B \mbox{~i.o}\}$, $I_A =0 \mbox{~a.s.}$ which is nothing but
$\sup_n \|\theta_n\| < \infty$ a.s. The result can be summarized as follows:
\begin{corollary}
Under the assumptions made in Section \ref{sec_def} and the following assumptions:
\begin{enumerate}[label=\textbf{(W\arabic*)}]
 \item $\forall N~~~\exists n \geq N$ s.t. $P(\theta_n \in B) > 0$ where $B$ is chosen as in Section \ref{sec_def}, 
 \item $\sum_{n=1}^{\infty} P(\theta_n \in B | \mathcal{F}_{n-1}) = \infty \mbox{~~a.s.}$,
\end{enumerate}
we have 
\begin{align}
\sup_n \|\theta_n\| < \infty \mbox{~~ a.s. and ~~}  \theta_n \to H \mbox{~~ a.s.} \nonumber
\end{align}
for the step-size sequence of the form $a(n)=\frac{1}{n^k}, 0.5 < k \leq 1$ and $\frac{1}{n(logn)^k}, k \leq 1$. 
\end{corollary}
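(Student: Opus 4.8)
The plan is to use the two weak conditions \textbf{(W1)} and \textbf{(W2)} to upgrade the \emph{conditional} lock-in estimate of Theorem \ref{main_thm} into the \emph{unconditional} almost sure statements claimed. The key structural observation is that $\{\theta_n \in B\} \in \mathcal{F}_n$ for each $n$, since $\theta_n$ is $\mathcal{F}_n$-measurable and $B$ is an (open, hence Borel) set. First I would invoke the conditional (Lévy) form of the second Borel--Cantelli lemma: for a filtration-adapted family of events $E_n \in \mathcal{F}_n$ one has $\{E_n \ \mbox{i.o.}\} = \{\sum_n P(E_n \mid \mathcal{F}_{n-1}) = \infty\}$ almost surely. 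Applying this with $E_n = \{\theta_n \in B\}$ and invoking \textbf{(W2)} gives $P(\theta_n \in B \ \mbox{i.o.}) = 1$. Here \textbf{(W1)} guarantees that the conditioning in Theorem \ref{main_thm} is non-degenerate, i.e. that one can pick arbitrarily large $n_0$ with $P(\theta_{n_0} \in B) > 0$, so the lock-in bound is meaningful rather than vacuous.

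Next I would run the summability computation already sketched immediately before the corollary. With $A = \{\exists m \geq 0 : \rho_m \geq \delta\}$, Theorem \ref{main_thm} gives, for all sufficiently large $n_0$, the bound $P(A \cap \{\theta_{n_0} \in B\}) \leq P(A \mid \theta_{n_0} \in B) \leq 4d\, e^{-C/s(n_0)}$ (the inequality being trivial when $P(\theta_{n_0}\in B)=0$). The crucial analytic step is to verify $\sum_{n_0} e^{-C/s(n_0)} < \infty$ for the prescribed step sizes: for $a(n) = n^{-k}$ with $1/2 < k \leq 1$ one bounds $s(n_0) = \sum_{i \geq n_0} i^{-2k} \leq \big((2k-1)(n_0/2)^{2k-1}\big)^{-1}$, so that $e^{-C/s(n_0)}$ decays faster than any power of $n_0$; the case $a(n) = (n(\log n)^k)^{-1}$, $k\leq 1$, is handled identically since $(\log n)^{2k} \geq 1$ for large $n$. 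Monotone convergence then yields $E\big[\sum_{n_0} I_{A \cap \{\theta_{n_0}\in B\}}\big] < \infty$, whence $I_A \sum_{n_0} I_{\{\theta_{n_0}\in B\}} < \infty$ almost surely.

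Combining the two pieces is then immediate. On the probability-one event $\{\theta_n \in B \ \mbox{i.o.}\}$ the sum $\sum_{n_0} I_{\{\theta_{n_0}\in B\}}$ diverges, so the almost-sure finiteness of $I_A\sum_{n_0} I_{\{\theta_{n_0}\in B\}}$ forces $I_A = 0$, i.e. $\rho_m < \delta$ for every $m$. The discrete Gronwall argument recalled in Section \ref{main_res} (that $\theta_{n_0}\in B$ together with $\rho_m < \delta_B$ for all $m$ implies $\sup_{t \geq T_0}\|\bar{\theta}(t)\| < \infty$) then delivers $\sup_n \|\theta_n\| < \infty$ a.s. Finally, having established stability and knowing that the iterates lie in the compact set $\bar{B} \subset G$ infinitely often, I would appeal to the Kushner--Clark result cited from \cite[Section IIC]{metivier}: bounded iterates entering a compact subset of the domain of attraction of the local attractor $H$ infinitely often converge almost surely to $H$. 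This yields $\theta_n \to H$ a.s.

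The main obstacle I anticipate is not any single estimate but the correct orchestration of the limit theorems: one must use the \emph{conditional} form of Borel--Cantelli (the ordinary second Borel--Cantelli requires independence, which fails here for $\{\theta_n \in B\}$) to obtain $\{\theta_n\in B \ \mbox{i.o.}\}$ with full probability, while \emph{simultaneously} using the summability of the lock-in failure probabilities to exclude $A$. The delicate quantitative input is the super-polynomial decay $\sum_{n_0} e^{-C/s(n_0)} < \infty$, which is exactly where the restriction to the stated step-size forms is used, and where the uniformity of the exponent constant $C$ across all large $n_0$ (guaranteed by Theorem \ref{main_thm}) is essential.
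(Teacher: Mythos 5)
Your proposal is correct and takes essentially the same route as the paper: the pre-corollary computation $\sum_{n_0} P(A\cap\{\theta_{n_0}\in B\}) \leq \sum_{n_0} 4d\,e^{-C/s(n_0)} < \infty$ for the stated step sizes, combined with $P(\theta_n \in B \mbox{~i.o.}) = 1$, forces $I_A = 0$ almost surely, yielding stability over the interpolated trajectory and then convergence to $H$ via the Kushner--Clark result cited from \cite{metivier}. Your explicit invocation of the conditional (L\'evy) second Borel--Cantelli lemma to convert \textbf{(W2)} into $P(\theta_n \in B \mbox{~i.o.}) = 1$, with \textbf{(W1)} ensuring the conditioning in the lock-in estimate is non-degenerate, is exactly the step the paper leaves implicit in its choice of hypotheses.
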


\section{On the tracking ability of ``general'' adaptive algorithms using lock-in probability}
\label{track}
In this section we investigate the tracking ability of algorithms of the type:
\begin{align}
\label{fast_alg}
w_{n+1} &= w_n + b(n)\left[g(\theta_n, w_n, Z^{(2)}_n) + M^{(2)}_{n+1}\right],
\end{align}
that are driven by a ``slowly'' varying single timescale stochastic approximation process:
\begin{align}
\label{slow_process}
\theta_{n+1} = \theta_n + a(n)\left[h(\theta_n,Z^{(1)}_n) + M^{(1)}_{n+1}\right], 
\end{align}
when none of the iterates are known to be stable. Here, 
$\theta_n \in \mathbb{R}^d, w_n \in \mathbb{R}^k, Z^{(1)}_n \in \mathbb{R}^l, Z^{(2)}_n \in \mathbb{R}^m$.
Note that there is a unilateral coupling between (\ref{fast_alg}) and (\ref{slow_process}) in that 
(\ref{fast_alg}) depends on (\ref{slow_process}) but not the other way.
Suppose $w_n$ converges to a function $\lambda(\theta)$ in case $\theta_n$ is kept constant at $\theta$, then an 
interesting  question is that if $\theta_n$ changes slowly 
can $w_n$ track the changes in $\theta_n$ i.e. what can we say about the quantity $\|w_n-\lambda(\theta_n)\|$ in the limit.
As mentioned in \cite{konda} such algorithms may arise in the context of adaptive algorithms.
However, in that work tracking was proved under the restrictive assumption that the stochastic approximation driven by the slowly varying process 
is linear (see (1) in the same paper) and the underlying Markov process in the faster iterate 
is driven by only the slow iterate. 
Using the lock-in probability results of Section \ref{main_res} 
we prove convergence as well as tracking ability of much general algorithms such as 
(\ref{fast_alg})-(\ref{slow_process}) under the following assumptions (we also 
give a detailed comparison with  the assumptions of \cite{konda}):
\begin{enumerate}[label=\textbf{(B\arabic*)}]
 \item $h, Z^{(1)}_n$ and $M^{(1)}_{n+1}$ satisfy the same assumptions satisfied by  
similar quantities ($f, Y_n, M_n$ respectively) of Section \ref{sec_def}.
$g$ satisfies the following assumption:
$\sup_{z}\|g(\theta,w,z)\| \leq K_1 (1+\|\theta\|+ \|w\|+ \|z\|)$ for all $\theta,w,z$ where $K_1 >0$.
Additionally, $\hat{g}(\theta,w) = \int g(\theta, w,z)\Gamma^{(2)}_{\theta,w}(dz)$ is Lipschitz continuous, 
$\Gamma^{(2)}_{\theta,w}$ being the unique stationary distribution of $Z^{(2)}_n$ for a fixed $(\theta,w)$. 
\begin{remark} In 
(1) of \cite{konda}, the vector field in the faster iterate is linear in the faster iterate variable. Also, the slower
iterate is not a stochastic approximation iteration there.
\end{remark}
 \item $\{a(n)\}$ is as in \textbf{(A3)}. $\{b(n)\}$ satisfies the similar assumptions as $\{a(n)\}$. 
Additionally, $a(n) < b(n)$ for all $n$ and $\frac{a(n)}{b(n)} \to 0$. Also, $b(n) < 1$ for all $n$.
\begin{remark}
The latter is much 
weaker than Assumption 4 of \cite{konda}. 
\end{remark}
 \item The dynamics of $Z^{(2)}_n$ is specified by
\begin{align}
P(Z^{(2)}_{n+1} \in B |Z^{(2)}_m, \theta_m, w_m, m\leq n) = \int_{B} \Pi^{(2)}_{\theta_n, w_n}(Z^{(2)}_n; dz), \mbox{~a.s.} n\geq 0, \nonumber 
\end{align} 
for $B$ Borel in $\mathbb{R}^m$. 
Assumptions similar to \textbf{(A1)}, \textbf{(A4)}, \textbf{(A6)} and \textbf{(A7)} will be true in case of $Z^{(2)}_{n}$ also 
with the exception that now $\theta$ will be replaced by the tuple $(\theta,w)$. 
\begin{remark}
In \cite{konda}, 
the Markov process depends on only the slow parameter.
\end{remark}
 \item $\{M^{(i)}_n\}, i=1, 2$ are martingale difference sequences
w.r.t increasing $\sigma$-fields
\begin{align}
\mathcal{F}_n = \sigma(\theta_m, w_m, M^{(i)}_{m}, Z^{(i)}_m, m \leq n, i=1,2), n \geq 0,\nonumber 
\end{align}
where $M^{(2)}_n$ satisfies the following:
\begin{align}
\|M^{(2)}_{n+1}\| \leq K_2(1 + \|\theta_n\| + \|w_n\|), K_2>0.\nonumber 
\end{align} 
\begin{remark}
Our assumptions on martingale difference noise 
is stronger than the same in \cite{konda}(See Assumption 5). 
\end{remark}
\item The o.d.e
\begin{align}
\dot{w}(t) = \hat{g}(\theta,w(t)) \nonumber
\end{align}
has a global attractor $\lambda(\theta)$ with $\lambda: \mathbb{R}^d \to \mathbb{R}^k$ Lipschitz 
continuous.

The o.d.e 
\begin{align}
\label{slow}
\dot{\theta}(t)=\hat{h}(\theta(t)) 
\end{align}
has an asymptotically stable set $H^s$ with domain of attraction $G^s$ 
where $\hat{h}(\theta) = \linebreak \int h(\theta,y) \Gamma^{(1)}_\theta(dy)$ is Lipschitz continuous with 
$\Gamma^{(1)}_\theta$ same as $\Gamma_\theta$ in \textbf{(A4)}.

For every compact set $C_1 \subset \mathbb{R}^d$ the set 
$\{(\theta,\lambda(\theta)): \theta \in C_1\}$ is Lyapunov stable set of the coupled o.d.e. 
\begin{align}
\dot{w}(t) = \hat{g}(\theta(t),w(t)), \dot{\theta(t)} =0 \nonumber
\end{align}

\item The iterates $\{\theta_n,w_n\}$ are asymptotically tight (for which a sufficient condition is stated latter). 
\begin{remark}
In 
\cite{konda} one important step in the proof is the proof of the stability of the iterates.
\end{remark}
\end{enumerate}
Let there be an open set $B_1$ with compact closure such that $H^s \subset B_1 \subset \bar{B_1} \subset G^s$.
From the results of Section \ref{main_res}, we can find a $T^s$ such 
that any trajectory for the o.d.e (\ref{slow}) starting in $\bar{B_1}$ will be within some $\epsilon_1$ 
neighborhood of $H^s$ after time $T^s$. Let,  $S_1 = \left[\sup_{\theta \in \bar{B_1}}\|\theta\| +\tilde{K}\right] e^{\tilde{K}T^s}$ 
and $C_1=\{\theta: \|\theta\| \leq S_1\}$. 
Let there be an open set $B_2$ with compact closures such that 
$\lambda(C_1) \subset B_2 \subset \bar{B_2} \subset \mathbb{R}^k$. Choose $\delta_{B_1}$ 
in the same way $\delta_B$ is chosen in Section \ref{main_res}. Choose $\delta_{B_2}, 0<\epsilon''_1< \epsilon''$ such that 
$N_{\delta_{B_2} + \epsilon''_1}(\lambda(C_1)) \subset  N_{\epsilon''}(\lambda(C_1)) \subset B_2$. 
If the coupled o.d.e starts at a point such that its $\theta$ and $w$ co-ordinates are in $C_1$ and $\bar{B_2}$ 
respectively then as in Section \ref{main_res} one can find a $T^f >0$ (independent of the starting point) 
such that after that time 
the o.d.e will be in the $\epsilon''_1$ neighbourhood of $\{(\theta,\lambda(\theta)): \theta \in C_1\}$. 
Now, 
let $T^c = \max(T^f, T^s+1)$ and 
for $m \geq 1$ define,   
\begin{align}
n^c_0 = n^s_0 = n_0. \nonumber \\
t^c(n) = \sum_{i=0}^{n-1} b(i),  n^c_m = \min\{n: t^c(n) \geq t^c(n^c_{m-1}) + T^c\}. \nonumber \\
t^s(n) = \sum_{i=0}^{n-1} a(i),  n^s_m = \min\{n: t^s(n) \geq t^s(n^s_{m-1}) + T^s\}. \nonumber
\end{align}
Similarly, for $m \geq 0$ define 
\begin{align}
T^c_m = t^c(n^c_m), I^c_m = [T^c_m, T^c_{m+1}], \\
T^s_m = t^s(n^s_m), I^s_m = [T^s_m, T^s_{m+1}], \\
l_m = \max(k: t^s(n^s_k) \leq t^c(n^c_m)). \nonumber
\end{align}
Now define, 
\begin{align}
\rho^c_m:=\sup_{t\in I^c_m}\|\bar{\alpha}(t) - \alpha^{T^c_m}(t)\| \nonumber
\end{align}
where $\bar{\alpha}(\cdot)$ is the interpolated trajectory for the coupled iterate 
\begin{align}
\label{coupled}
\alpha_{n+1} = \alpha_n + b(n)\left[G(\alpha_n,Z^{(2)}_n) + \epsilon'_n +  M^{(4)}_{n+1}\right]
\end{align}
where $\alpha_n=(\theta_n, w_n), \epsilon_n = \frac{a(n)}{b(n)}h(\theta_n, Z^{(1)}_n)$ and $M^{(3)}_{n+1} = \frac{a(n)}{b(n)} M^{(1)}_{n+1}$ for $n\geq 0$. 
Let , $\alpha=(\theta,w) \in \mathbb{R}^{d+k}, G(\alpha,z)=(0, g(\alpha,z)), \epsilon'_n=(\epsilon_n, 0), 
M^{(4)}_{n+1}= (M^{(3)}_{n+1}, M^{(2)}_{n+1})$,
and $\alpha^{T^c_m}(\cdot)$ is the solution of the o.d.e 
\begin{align}
\dot{w}(t) = \hat{g}(\theta(t),w(t)), \dot{\theta}(t) = 0,\nonumber 
\end{align} on $I^c_m$ with the initial point $\alpha^{T^c_m}(T^c_m) = \bar{\alpha}(T^c_m)$.
Also, define 
\begin{align}
\rho^s_m:=\sup_{t\in I^s_m}\|\bar{\theta}(t) - \theta^{T^s_m}(t)\| \nonumber
\end{align}
where $\theta^{T^s_m}(\cdot)$ denotes the solution of the o.d.e (\ref{slow}) on $I^s_m$ with the initial point $\theta^{T^s_m}(T^s_m)=\bar{\theta}(T^s_m)$. 
Let us assume for the moment that $\theta_{n_0} \in B_1, w_{n_0} \in B_2$, and that $\rho^s_m < \delta_{B_1}$ and 
$\rho^c_m < \delta_{B_2}$ for all $m\geq 0$.  
 Then using similar arguments as in Section \ref{main_res}, one can show that 
$\sup_{t \geq T^c_0} (\bar{\theta}(t),\bar{w}(t))< \infty \mbox{~~a.s.}$. 
Further, $(\theta_n,w_n)$ infinitely often visits the 
compact set $C_1 \times \bar{B_2}$ which is in the domain of attraction  
$C_1 \times \mathbb{R}^d$ of the set $\{(\theta, \lambda(\theta)) :  \theta \in C_1\}$. Therefore,
\begin{align}
(\theta_n, w_n) \to \{(\theta, \lambda(\theta)) :  \theta \in \mathbb{R}^d\} \mbox{~~a.s.~~} \nonumber 
\end{align}This, in turn, implies that $\|w_n - \lambda(\theta_n)\| \to 0$ a.s. which implies that 
$(\theta_n, w_n) \to \bigcup_{\theta \in H^s} (\theta, \lambda(\theta))$. 
Let $\mathcal{B}^s_{m}$ denote the event that $\theta_{n_0} \in B_1, w_{n_0} \in B_2$ and $\rho^s_k < \delta_{B_1}$ for $k=0,1,\dots, 
m$. Also, let $\mathcal{B}'_{m,k}$ denote the event that  $\theta_{n_0} \in B_1, w_{n_0} \in B_2$, $\rho^c_j < \delta_{B_2}$ for $j=0,1,\dots, 
m$ and $\rho^s_j < \delta_{B_1}$ for $j=0,1,\dots,
k$.
Therefore, 
\begin{align}
&P((\theta_n,w_n) \to \bigcup_{\theta \in H^s} (\theta, \lambda(\theta))|\theta_{n_0} \in B_1, w_{n_0} \in B_2) \nonumber \\
&\geq  P\left[\rho^c_m < \delta_{B_2} \forall m \geq 0, \rho^s_m < \delta_{B_1} \forall m \geq 0  | \theta_{n_0}\in B_1, w_{n_0} \in B_2
 \right] \nonumber \\
& \geq P\left[\rho^s_m < \delta_{B_1} \forall m \geq 0 | \theta_{n_0}\in B_1, w_{n_0} \in B_2\right] P\left[\rho^c_m < \delta_{B_2} \forall m \geq 0 | \theta_{n_0}\in B_1, w_{n_0} \in B_2, \rho^s_m < \delta_{B_1} \forall m \geq 0\right] \nonumber \\
& \geq \left[1 - \sum_{m=0}^{\infty} P(\rho^s_m > \delta_{B_1}| \mathcal{B}^s_{m-1})\right] P\left[\rho^c_m < \delta_{B_2} \forall m \geq 0 | \theta_{n_0}\in B_1, w_{n_0} \in B_2, \rho^s_m < \delta_{B_1} \forall m \geq 0\right]\label{lockin}.
\end{align}
Now, using the simple fact that $P(A|BC) \leq \frac{P(A|B)}{P(C|B)}$,
\begin{align}
&P\left[\rho^c_m < \delta_{B_2} \forall m \geq 0 | \theta_{n_0}\in B_1, w_{n_0} \in B_2, \rho^s_m < \delta_{B_1} \forall m \geq 0\right] \nonumber \\
&\geq \left[1- \sum_{m=0}^{\infty}\frac{P(\rho^c_m > \delta_{B_2}|\mathcal{B}'_{m-1,l_m-1})}{P\left[\rho^s_k < \delta_{B_1} \forall k \geq l_m | 
\mathcal{B}'_{m-1,l_m-1}\right]}\right] \nonumber \\
&=\left[1- \sum_{m=0}^{\infty}\frac{P(\rho^c_m > \delta_{B_2}|\mathcal{B}'_{m-1,l_m-1})}{1- f(m)- g(m)}\right]\label{fastrho}
\end{align}
where $f(m) = P(\rho^s_{l_m} > \delta_{B_1}|\mathcal{B}'_{m-1,l_m-1})$ 
and $g(m) = \sum_{k=l_m+1}^{\infty} P\left[\rho^s_k > \delta_{B_1} | \mathcal{B}'_{m-1,k-1}\right].$
Clearly, $\mathcal{B}'_{m-1,l_m-1} \in \mathcal{F}_{n^c_m}$ and $\mathcal{B}'_{m-1,k-1} \in \mathcal{F}_{n^s_k}$ for all 
$k\geq l_m+1$. However, $\mathcal{B}'_{m-1,l_m-1} \notin \mathcal{F}_{l_m}$. 
Therefore, the tedious task is to calculate upper bound of $f(m)$. We describe the procedure in detail.
Now, due to the way $T^c$ is chosen 
\begin{align}
&f(m) \leq \frac{P(\rho^s_{l_m} > \delta_{B_1}|\mathcal{B}'_{m-2, l_m-1})}
{1- \frac{P(\rho^c_{m-1} > \delta_{B_2} | \mathcal{B}'_{m-2, l_{m-1}-1})}{
1- f(m-1) -
\sum_{k=l_{m-1}+1}^{l_m-1} h(k)}} \label{recurse} 
\end{align}
where $h(k) = P(\rho^s_{k} > \delta_{B_1} | \mathcal{B}'_{m-2,k-1})$.

Let $S_1(n_0) = \sum_{i=n_0}^{\infty} a(i)^2$ and $S_2(n_0) = \sum_{i=n_0}^{\infty} b(i)^2$.
From (\ref{recurse}) we can see that 
\begin{align}
f(m) \leq \frac{o(S_1(n_0))}{1- \frac{o(S_2(n_0))}{1-f(m-1) - o(S_1(n_0))}}. \nonumber  
\end{align}
 
One can recursively calculate the expression. At the bottom level one calculates the following expression:
\begin{align}
1-P(\rho^s_{l_{1}-1} > \delta_{B_1} |\mathcal{B}^s_{l_{1}-2})\nonumber 
\end{align}

Using the fact that $S_1(n_0) < S_2(n_0)$ we see from the above that for all $m \geq 0$, $f(m) \leq o(S_2(n_0))$. 
One can easily show using the technique of Section \ref{main_res} 
that for all $m \geq 0$, $g(m) \leq o(S_1(n_0))$. 
\begin{lemma}
Under \textbf{(B1)-(B6)}, for sufficiently large $n_0$,
\begin{align}
&P((\theta_n,w_n) \to \bigcup_{\theta \in H^s} (\theta, \lambda(\theta))|\theta_{n_0} \in B_1, w_{n_0} \in B_2) \nonumber \\ 
&\geq \left(1-o(S_1(n_0))\right)\left(1-\frac{o(S_2(n_0))}{1-o(S_1(n_0))-o(S_2(n_0))}\right)\nonumber
\end{align} 
\end{lemma}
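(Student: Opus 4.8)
The plan is to start from the factorization already obtained in (\ref{lockin}), which lower bounds the conditional convergence probability by the product of a ``slow'' factor $1-\sum_{m\geq 0}P(\rho^s_m>\delta_{B_1}\mid\mathcal{B}^s_{m-1})$ and the conditional ``coupled'' factor $P[\rho^c_m<\delta_{B_2}\ \forall m\mid\theta_{n_0}\in B_1,w_{n_0}\in B_2,\rho^s_m<\delta_{B_1}\ \forall m]$. For the slow factor I would apply the single-timescale lock-in estimate of Section \ref{main_res} (Theorem \ref{main_thm}) verbatim to the recursion (\ref{slow_process}): assumption \textbf{(B1)} guarantees that $h$, $Z^{(1)}_n$ and $M^{(1)}_{n+1}$ satisfy \textbf{(A1)}--\textbf{(A7)}, so the Poisson-equation decomposition of $\rho^s_m$ together with the conditional martingale concentration inequality give $\sum_{m\geq 0}P(\rho^s_m>\delta_{B_1}\mid\mathcal{B}^s_{m-1})=o(S_1(n_0))$, whence the slow factor is $\geq 1-o(S_1(n_0))$.

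For the coupled factor I would use the bound (\ref{fastrho}), so the task reduces to controlling three quantities: $\sum_m P(\rho^c_m>\delta_{B_2}\mid\mathcal{B}'_{m-1,l_m-1})$, $f(m)$, and $g(m)$. To bound the first of these I would view (\ref{coupled}) as a single-timescale stochastic approximation driven by $Z^{(2)}_n$ with the additional asymptotically vanishing drift $\epsilon'_n\to 0$; since $\mathcal{B}'_{m-1,l_m-1}\in\mathcal{F}_{n^c_m}$ lies in the correct $\sigma$-field, the Poisson-equation decomposition over the $T^c$-length intervals, together with a $T^c$-interval stability estimate (the analogue of Lemma \ref{T_stability} for the coupled iterate, using \textbf{(B1)} and \textbf{(B4)}) and the conditional concentration inequality, yields $\sum_m P(\rho^c_m>\delta_{B_2}\mid\mathcal{B}'_{m-1,l_m-1})=o(S_2(n_0))$. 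The term $g(m)$, being a tail sum of slow-interval probabilities whose conditioning events $\mathcal{B}'_{m-1,k-1}\in\mathcal{F}_{n^s_k}$ are correctly placed, is bounded by $o(S_1(n_0))$ by the same single-timescale argument.

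The main obstacle is $f(m)=P(\rho^s_{l_m}>\delta_{B_1}\mid\mathcal{B}'_{m-1,l_m-1})$, because the conditioning event sits in $\mathcal{F}_{n^c_m}$ rather than in the $\sigma$-field at the start of the slow interval $I^s_{l_m}$, so the martingale concentration inequality does not apply to it directly. To get around this I would exploit the elementary inequality $P(A\mid BC)\leq P(A\mid B)/P(C\mid B)$ to derive the recursion (\ref{recurse}), which expresses $f(m)$ in terms of $f(m-1)$ and lower-level conditional probabilities whose conditioning events lie one coupled-interval earlier. Unwinding this recursion down to the base level $1-P(\rho^s_{l_1-1}>\delta_{B_1}\mid\mathcal{B}^s_{l_1-2})$, where the conditioning is finally of the single-timescale type, and using $S_1(n_0)<S_2(n_0)$ from \textbf{(B2)}, I expect to conclude $f(m)\leq o(S_2(n_0))$ uniformly in $m$. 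Substituting $f(m)\leq o(S_2(n_0))$, $g(m)\leq o(S_1(n_0))$ and the numerator bound into (\ref{fastrho}) bounds the coupled factor below by $1-o(S_2(n_0))/(1-o(S_1(n_0))-o(S_2(n_0)))$; multiplying by the slow factor $1-o(S_1(n_0))$ then gives the stated estimate.
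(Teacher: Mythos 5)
Your proposal is correct and follows essentially the same route as the paper's own proof: the factorization in (\ref{lockin}), the single-timescale lock-in estimate of Theorem \ref{main_thm} for the slow factor and for $g(m)$, the treatment of the coupled recursion (\ref{coupled}) with the vanishing drift $\epsilon'_n$ controlled by a $T^c$-interval stability argument in the spirit of Lemma \ref{T_stability}, and the identification of $f(m)$ as the problematic term handled through the recursion (\ref{recurse}) unwound to the base level using $S_1(n_0)<S_2(n_0)$. No gaps beyond those already present in the paper's own proof outline.
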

\begin{remark}
For the case of Section \ref{sec_def} i.e. $\theta_n = \theta$ for all $n$, either 1) $a(n)=0$ or 2) $h(\theta_n, Z^{(1)}_{n}) + M^{(1)}_{n+1} =0$ for all $n$. 
Further, all the assumptions $(\mathbf{B1}) -  (\mathbf{B6})$
are satisfied and we can recover the results of Section \ref{main_res} by observing that either 1) $S_1(n_0) =0$ or 2) $M^{(1)}_n =0$ for all $n$ (follows 
from the fact that $\{M^{(1)}_n\}$ is a martingale difference sequence). 
\end{remark}

\begin{proof}[Proof Outline]
Handling the first term in the last inequality of  (\ref{lockin}) is exactly same as in Section \ref{main_res}. 
The numerator of the term inside the summation in (\ref{fastrho}) can also be handled 
in a similar manner except the fact that the additional error $\epsilon'_n$ in (\ref{coupled}) can be made negligible on $\mathcal{B}'_{m-1,l_m-1}$
using the stability of the iterates there  over 
$T^c$ length intervals (the latter can be proved as in Lemma \ref{T_stability}). $n_0$ will be the maximum of 
its versions arising to handle these two parts.  
\end{proof}

From this one can easily prove almost sure convergence under tightness
\begin{theorem}
	\label{adapt}
	Under \textbf{(B1)-(B6)}, if $\{\alpha_n\}$ is asymptotically tight and  $\liminf_n P(\theta_n \in G^s) =1$ then $P((\theta_n,w_n) \to \bigcup_{\theta \in H^s} (\theta, \lambda(\theta))) =1$ i.e.  
	 $\|w_n - \lambda(\theta_n)\| \to 0$ a.s.
\end{theorem}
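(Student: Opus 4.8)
The plan is to replicate the single-timescale argument of Theorem \ref{a.s.} almost verbatim, with the scalar lock-in estimate of Theorem \ref{main_thm} replaced by the two-timescale lock-in bound established immediately above this statement, and with careful attention to the asymmetry between the slow coordinate $\theta$ and the fast coordinate $w$. First I would fix $\epsilon>0$ and use asymptotic tightness of $\{\alpha_n\}=\{(\theta_n,w_n)\}$ to obtain a compact $K_\epsilon\subset\mathbb{R}^{d+k}$ with $\limsup_n P(\alpha_n\in K_\epsilon)\geq 1-\epsilon$. Writing $\pi_1,\pi_2$ for the coordinate projections, the compact sets $\pi_1(K_\epsilon),\pi_2(K_\epsilon)$ satisfy $K_\epsilon\subset\pi_1(K_\epsilon)\times\pi_2(K_\epsilon)$. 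I would then select the sets $B_1,B_2$ of the tracking setup so that, on top of the structural constraints $H^s\subset B_1\subset\bar{B_1}\subset G^s$ and $\lambda(C_1)\subset B_2\subset\bar{B_2}\subset\mathbb{R}^k$, one also has $\pi_1(K_\epsilon)\cap G^s\subset B_1$ and $\pi_2(K_\epsilon)\subset B_2$. The $w$-side enlargement is unobstructed, since by \textbf{(B5)} the point $\lambda(\theta)$ is a \emph{global} attractor of the fast o.d.e.\ for every $\theta$, so $B_2$ carries no domain-of-attraction restriction; the choice of $B_1$ is exactly the one made in Theorem \ref{a.s.}, because $\theta$ has only the proper domain of attraction $G^s$. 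Since $C_1$, and hence $\lambda(C_1)$, is determined once $B_1$ is fixed, I would fix $B_1$ first and only afterwards enlarge $B_2$ to absorb $\pi_2(K_\epsilon)$.

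With these choices the inclusion $\{\alpha_{n_0}\in K_\epsilon\}\cap\{\theta_{n_0}\in G^s\}\subset\{\theta_{n_0}\in B_1,\,w_{n_0}\in B_2\}$ holds, so that
\begin{align}
P(\theta_{n_0}\in B_1,\,w_{n_0}\in B_2)\geq P(\alpha_{n_0}\in K_\epsilon)+P(\theta_{n_0}\in G^s)-1.\nonumber
\end{align}
Taking $\limsup_{n_0}$ and using $\limsup_{n_0}P(\alpha_{n_0}\in K_\epsilon)\geq 1-\epsilon$ together with $\liminf_{n_0}P(\theta_{n_0}\in G^s)=1$ yields $\limsup_{n_0}P(\theta_{n_0}\in B_1,\,w_{n_0}\in B_2)\geq 1-\epsilon$. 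Hence there is a subsequence $n_0(j)\uparrow\infty$ along which $P(\theta_{n_0(j)}\in B_1,\,w_{n_0(j)}\in B_2)\to c\geq 1-\epsilon$, in particular positive for all large $j$.

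Finally I would feed this into the two-timescale lock-in bound proved just above. Since $\sum_n a(n)^2<\infty$ and $\sum_n b(n)^2<\infty$, we have $S_1(n_0),S_2(n_0)\to 0$, so the lower bound $\bigl(1-o(S_1(n_0))\bigr)\bigl(1-o(S_2(n_0))/(1-o(S_1(n_0))-o(S_2(n_0)))\bigr)$ on the conditional convergence probability tends to $1$ as $n_0\to\infty$. Because $\{(\theta_n,w_n)\to\bigcup_{\theta\in H^s}(\theta,\lambda(\theta))\}$ is a tail event, for each $j$
\begin{align}
P\Bigl((\theta_n,w_n)\to\textstyle\bigcup_{\theta\in H^s}(\theta,\lambda(\theta))\Bigr)\geq P(\theta_{n_0(j)}\in B_1,\,w_{n_0(j)}\in B_2)\,q_j,\nonumber
\end{align}
where $q_j$ is the conditional probability bounded below by that lock-in estimate. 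Letting $j\to\infty$ gives $q_j\to 1$ and the right side $\to c\geq 1-\epsilon$. As $\epsilon>0$ is arbitrary, $P\bigl((\theta_n,w_n)\to\bigcup_{\theta\in H^s}(\theta,\lambda(\theta))\bigr)=1$, and the Lipschitz continuity of $\lambda$ from \textbf{(B5)} upgrades this to $\|w_n-\lambda(\theta_n)\|\to 0$ a.s.

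The hard part will not be the lock-in estimate itself, which is already in hand, but the bookkeeping that routes the \emph{joint} tightness of $\alpha_n$ and the \emph{marginal} condition $\liminf_n P(\theta_n\in G^s)=1$ through the two distinct conditioning sets $B_1$ and $B_2$, whose roles are asymmetric: $B_1$ must be squeezed inside the proper domain of attraction $G^s$ exactly as in Theorem \ref{a.s.}, while $B_2$ must simultaneously contain the $\theta$-dependent target graph $\lambda(C_1)$ and absorb the fast-coordinate tightness mass $\pi_2(K_\epsilon)$; the dependence of $C_1$ on $B_1$ is what dictates the order in which these sets are chosen.
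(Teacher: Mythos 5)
Your proposal is correct and is essentially the argument the paper intends: the paper gives no separate proof of Theorem~\ref{adapt}, saying only that it follows ``from this'' (the two-timescale lock-in lemma) in the same way Theorem~\ref{a.s.} follows from Theorem~\ref{main_thm}, and your adaptation --- joint tightness plus inclusion--exclusion to get $\limsup_{n_0} P(\theta_{n_0}\in B_1,\, w_{n_0}\in B_2)\geq 1-\epsilon$, a subsequence with positive probability along which the conditional lock-in bound tends to $1$, and then $\epsilon\to 0$ --- is precisely that argument. Your extra bookkeeping (projecting $K_\epsilon$, fixing $B_1$ before enlarging $B_2$ because $C_1$ depends on $B_1$, and using the Lipschitz continuity of $\lambda$ to pass from set convergence to $\|w_n-\lambda(\theta_n)\|\to 0$ a.s.) simply fills in details the paper leaves implicit.
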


The sufficient conditions for tightness can be derived in the exact
similar way as in Section \ref{a.s.conv}. 
\begin{lemma}
Suppose there exists a $V': \mathbb{R}^{d+k} \to [0,\infty)$ and $V'(\alpha) \to \infty$ as $\|\alpha\| \to \infty$ with the following properties:
Outside the unit ball
\begin{enumerate}[label=\textbf{(S'\arabic*)}]
 \item $V'$ is twice differentiable and all second order derivatives are bounded
by some constant $c$.
 \item for every $\alpha$, $K \subset \mathbb{R}^l$ compact, $\langle {(\nabla V'(\alpha))}_{1 \dots d}, h({(\alpha)}_{1 \dots d},z)\rangle \leq 0$ for all $z\in K$.
 \item for every $\alpha$, $K \subset \mathbb{R}^m$ compact, $\langle {(\nabla V'(\alpha))}_{d+1 \dots d+k}, g(\alpha,z)\rangle \leq 0$ for all $z\in K$.
\end{enumerate}
where the notation $(v)_{m \dots n}$ is the vector $(v_m, \dots, v_n)$ with $v =(v_1,v_2,\dots,v_{d+k}) \in \mathbb{R}^{d+k}$.

Then for the step size sequences of the form $b(n)=\frac{1}{n(\log n)^p}$ with $0< p \leq 1$, the iterate 
$\{\alpha_n\}$ is asymptotically tight.
\end{lemma}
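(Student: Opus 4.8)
The plan is to reproduce, for the coupled iterate $\alpha_n = (\theta_n, w_n)$ and the function $V'$, the Lyapunov-drift argument used to prove the single-timescale tightness lemma, now treating the two step-size sequences together. First I would establish a drift inequality of the same form as (\ref{sameer}), namely $E[V'(\alpha_{n+1}) \mid \mathcal{F}_n] \leq V'(\alpha_n) + c' b(n)^2(1 + \|\alpha_n\|^2)$ a.s.\ for all sufficiently large $n$. To get this I Taylor-expand $V'(\alpha_{n+1})$ around $\alpha_n$ to second order. The first-order term splits, because of the block structure of the increment $\alpha_{n+1}-\alpha_n = (a(n)[h+M^{(1)}_{n+1}],\, b(n)[g+M^{(2)}_{n+1}])$, into $a(n)\langle {(\nabla V'(\alpha_n))}_{1\dots d},\, h(\theta_n,Z^{(1)}_n)+M^{(1)}_{n+1}\rangle + b(n)\langle {(\nabla V'(\alpha_n))}_{d+1\dots d+k},\, g(\alpha_n,Z^{(2)}_n)+M^{(2)}_{n+1}\rangle$. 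Taking the conditional expectation annihilates the two martingale-difference contributions, and the surviving drift terms are each non-positive by \textbf{(S'2)} and \textbf{(S'3)} respectively, provided the realized noise values $Z^{(1)}_n, Z^{(2)}_n$ lie in a compact set, which holds a.s.\ eventually by the \textbf{(A1)}-type hypotheses built into \textbf{(B1)} and \textbf{(B3)} (so that $\limsup_n\|Z^{(i)}_n\| < \bar{C}$). The second-order term is controlled by \textbf{(S'1)}: since every second derivative is bounded, it is at most $\tfrac{c}{2}\|\alpha_{n+1}-\alpha_n\|^2$, and using the linear-growth bounds on $h,g,M^{(1)},M^{(2)}$ together with $a(n)\leq b(n)$ and the eventual boundedness of the noise, $\|\alpha_{n+1}-\alpha_n\|^2 \leq C'' b(n)^2(1+\|\alpha_n\|^2)$. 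This yields the claimed drift inequality.

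Next I would bound $\|\alpha_n\|$ pathwise, exactly as in the single-timescale proof. Writing $\|\alpha_n\| \leq \|\theta_n\|+\|w_n\|$ and iterating each recursion gives $\|\alpha_n\| \leq \|\alpha_0\| + \hat{K}\sum_{k=0}^{n-1} b(k)(1+\|\alpha_k\|)$, where I again use $a(k)\leq b(k)$ and the eventual noise bound to absorb the $\|Z^{(i)}_k\|$ terms appearing in the growth estimate for $g$. The general discrete Gronwall inequality (Chapter~\ref{appendix}) then gives $\|\alpha_n\| \leq [\|\alpha_0\| + \hat{K}\sum_{k=0}^{n-1} b(k)]\exp(\hat{K}\sum_{k=0}^{n-1} b(k))$. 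Substituting this into the drift inequality and summing, exactly as in the derivation of (\ref{series}), I obtain that $\liminf_n E[V'(\alpha_n)]$ is dominated by $V'(\alpha_0)$ plus a constant multiple of $\sum_n b(n)^2$ and of $\sum_n b(n)^2 \,[\sum_{k<n} b(k)]^2 \exp(2\hat{K}\sum_{k<n} b(k))$.

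Finally, for $b(n) = \frac{1}{n(\log n)^p}$ with $0 < p \leq 1$, the partial sums obey $\sum_{k=2}^{n-1} b(k) \leq \frac{1}{1-p}(\log n)^{1-p}$ (with the analogous logarithmic bound at $p=1$), so the two series above are precisely the ones already shown to converge in the proof of the single-timescale lemma; hence $\liminf_n E[V'(\alpha_n)] < \infty$. This is condition (\ref{tight_condn}) for $V'$ and $\alpha_n$, so the earlier sufficient-condition lemma for asymptotic tightness immediately gives tightness of $\{\alpha_n\}$. The main obstacle is the first step: arranging the combined second-order expansion so that \emph{both} drift terms are simultaneously non-positive after conditioning, and handling the fact that \textbf{(S'2)}--\textbf{(S'3)} hold only for noise in a compact set while $\{Z^{(i)}_n\}$ is merely a.s.\ eventually bounded (so the supermartingale-type comparison is valid only from a sample-path-dependent time onward). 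This is resolved, as in \cite{sameer}, by working on the probability-one event $\{\limsup_n(\|Z^{(1)}_n\|+\|Z^{(2)}_n\|) < \bar{C}\}$ and discarding the random initial segment, which does not affect the $\liminf$.
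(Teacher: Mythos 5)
Your proposal is correct and is essentially the paper's own argument: the paper gives no separate proof for this lemma, stating only that the sufficient conditions for tightness ``can be derived in the exact similar way as in Section \ref{a.s.conv}'', and your write-up is precisely that single-timescale proof (drift inequality via second-order Taylor expansion and \textbf{(S'1)}--\textbf{(S'3)}, pathwise Gronwall bound, and convergence of the resulting series for $b(n)=\frac{1}{n(\log n)^p}$) transplanted to the coupled iterate $\alpha_n=(\theta_n,w_n)$ with step size $b(n)$ dominating $a(n)$. Your additional care about the a.s.\ eventual boundedness of the Markov noise and the block splitting of the first-order term fills in details the paper leaves implicit, but it does not change the route.
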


\section{Sample Complexity}
\label{sample}
It is easy to check that using the results in the previous section one can get a similar
probability estimate for sample complexity as in \cite[Chapter 4, Corollary 14]{borkar1}. 
Note that here $T$ can be any positive real
number unlike in the lock-in probability calculation where we need to choose $T$ appropriately. Therefore
we can extend the sample complexity calculation for stochastic fixed point point iteration
in the setting of  Markov iterate-dependent noise as follows:

Consider the example as shown in \cite[p.~43]{borkar1}. Let $u(\theta) = \int f(\theta, y)\Gamma_\theta(dy)$ with
$u$ being a contraction, so that $\|u(\theta) - u(\theta')\| < \alpha \|\theta-\theta'\|$ for some $\alpha \in (0,1)$.
$\theta^*$ be the unique fixed point of $u(\cdot)$. Let $T>0$.  $B$ can be chosen to be  $\{\theta: \|\theta - \theta^*\| < r\}$
with $r \geq \frac{3\epsilon}{2}$. For the analysis next choose $r=\frac{3\epsilon}{2}$.
Therefore the sample complexity estimate can be stated as follows:
\begin{corollary}
Let a desired accuracy $\epsilon >0$ and confidence $0<\gamma<1$ be given. Let $\bar{\theta}$ be the value at iteration $n_0$
with $n_0$ satisfying:
\begin{enumerate}
 \label{b_n}
 \item $n_0$ sufficiently large as in (\ref{large}), $s(n_0) < \frac{\hat{C}\epsilon^2}{4}$ and
$a(n_0) < \frac{\hat{C}\epsilon^2}{4}$ (Theorem 11 of \cite[Chapter 4]{borkar1}).
 \item $s(n_0) < \frac{c\epsilon^2}{\ln(\frac{4d}{\gamma})}.$
\end{enumerate}
Then on the event $\{\bar{\theta} \in B\}$, one needs
\begin{align}
N_0 := \min \left[n: \sum_{i=n_0 + 1}^{n}a(i) \geq \frac{(T+1)}{(1-e^{-(1-\alpha)T})}\right]-n_0\nonumber
\end{align}
more iterates to get within $2\epsilon$ of $\theta^*$ with probability at least $1-\gamma$.
\end{corollary}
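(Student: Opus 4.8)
The plan is to follow the sample-complexity argument of \cite[Chapter 4, Corollary 14]{borkar1}, replacing the pure martingale concentration used there by the Markov-noise (Poisson-equation) tracking estimate \eqref{rho} established in Section \ref{main_res}. The single ingredient special to this example is a deterministic, explicit rate of contraction of the limiting dynamics. First I would record that, in the stochastic fixed-point setting of \cite[p.~43]{borkar1}, the relevant limiting o.d.e.\ is $\dot{\theta}(t)=u(\theta(t))-\theta(t)$, whose unique globally asymptotically stable equilibrium is the fixed point $\theta^*$ of $u$, so that $H=\{\theta^*\}$. Differentiating $\tfrac12\|\theta(t)-\theta^*\|^2$ along this flow and using $\theta^*=u(\theta^*)$ together with the $\alpha$-contraction of $u$ gives $\tfrac{d}{dt}\|\theta(t)-\theta^*\|^2\le -2(1-\alpha)\|\theta(t)-\theta^*\|^2$, hence the explicit estimate $\|\Phi_t(\theta)-\theta^*\|\le e^{-(1-\alpha)t}\|\theta-\theta^*\|$ for every $\theta$ and $t\ge 0$. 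This is what makes $T$ a free design parameter: unlike the lock-in proof, where $T$ was forced to be an upper bound on the hitting time of $H^{\epsilon_1}$ from $\bar B$, here any $T>0$ already delivers a quantitative per-block contraction factor $e^{-(1-\alpha)T}$.

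Next I would run the lock-in apparatus of Section \ref{main_res} on $B=\{\theta:\|\theta-\theta^*\|<3\epsilon/2\}$ and split the trajectory after time $t(n_0)$ into blocks $I_m$ of o.d.e.-length about $T+1$, as in the definition of $T_m$ and $\rho_m$. On the event $\mathcal{B}_m$ that all tracking errors $\rho_0,\dots,\rho_m$ stay below $\delta_B$, the interpolated iterate $\bar\theta(\cdot)$ shadows the contracting flow, so writing $r_m$ for a bound on $\|\bar\theta(T_m)-\theta^*\|$ one obtains the scalar recursion $r_{m+1}\le e^{-(1-\alpha)T}r_m+\rho_m$ with $r_0\le 3\epsilon/2$. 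Condition~(1) (with $s(n_0),a(n_0)<\hat C\epsilon^2/4$, via Theorem~11 of \cite[Chapter 4]{borkar1} combined with \eqref{rho}) forces the per-block error contribution small enough that the decayed initial term together with the accumulated error remains below $2\epsilon$; summing the resulting geometric series gives the total o.d.e.-time $(T+1)/(1-e^{-(1-\alpha)T})$ over which the iterate must be tracked. Converting o.d.e.-time into iteration count through $t(n)-t(n_0)=\sum_{i=n_0+1}^{n}a(i)$ then yields exactly $N_0=\min\{n:\sum_{i=n_0+1}^{n}a(i)\ge (T+1)/(1-e^{-(1-\alpha)T})\}-n_0$. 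Finally, condition~(2), $s(n_0)<c\epsilon^2/\ln(4d/\gamma)$, is precisely what makes the lock-in lower bound of Theorem \ref{main_thm} (of the form $1-O(e^{-c/s(n_0)})$, conditioned on $\{\bar\theta\in B\}$) exceed $1-\gamma$, so the good event on which the deterministic reasoning above is valid has conditional probability at least $1-\gamma$.

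The main obstacle is the one that Section \ref{main_res} was built to overcome, now carried through a finite multi-block horizon: controlling the Markov-noise error over each block through the Poisson-equation decomposition (the $A_n,B_n,C_n,D_n$ terms of \eqref{rho}) and then summing these contributions against the geometric contraction, so that it is the \emph{accumulated} error rather than a single-block error that must be kept below the $\epsilon$-scale. Two points need care. First, $T$ is now decoupled from the convergence geometry, so the constants $\hat C$, $K_T=e^{LT}$, and hence the thresholds in conditions~(1)--(2) depend on $T$; one must verify that the claimed inequalities survive with these $T$-dependent constants and that the over-a-$T$-block stability estimate (Lemma \ref{T_stability}) applies uniformly across all blocks inside $[n_0,n_0+N_0]$. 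Second, because the statement is a finite-horizon \emph{conditional} estimate, the union bound over the blocks contributing to the failure probability must be absorbed into the single exponential $O(e^{-c/s(n_0)})$ of Theorem \ref{main_thm}; this is exactly where the non-increasing step size and assumptions \textbf{(A1)}, \textbf{(A2)}, \textbf{(A5)}, \textbf{(A7)} are invoked, and verifying it delivers the stated confidence $1-\gamma$.
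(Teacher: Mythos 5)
Your proposal is correct and follows essentially the same route as the paper, which proves this corollary simply by invoking the lock-in estimates of Section \ref{main_res} (Theorem \ref{main_thm} supplying condition (2), the tracking bound behind condition (1)) inside the contraction argument of \cite[Chapter 4, Corollary 14]{borkar1}: the flow estimate $\|\Phi_t(\theta)-\theta^*\|\le e^{-(1-\alpha)t}\|\theta-\theta^*\|$, per-block tracking via $\rho_m<\delta_B$, geometric-series bookkeeping yielding the o.d.e.-time $(T+1)/(1-e^{-(1-\alpha)T})$, and conversion to iterates through $\sum_{i=n_0+1}^{n}a(i)$. You also correctly identified the one feature the paper itself highlights, namely that $T$ is a free parameter here rather than a hitting-time bound as in the lock-in calculation.
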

\begin{remark}
The results clearly show large vs. small step-size trade-off for non-asymptotic rate of convergence
well-known in the stochastic convex optimization literature \cite{nemro}.
For large step-size, the algorithm will make fast progress whereas the errors due
to noise/discretization  will be much higher simultaneously.
However,
our results show the quantitative estimate of this progress and the error. For large
step-size, $n_0$ satisfying the hypothesis in Corollary 6.1 will be 
higher whereas $N_0$ will be lower compared to small-step size and the opposite is true
for small step-size. Therefore the optimal step-size should be somewhere in between.
\end{remark}

However, it is not possible to calculate accurately the threshold $n_0$ as the constants such as 
$C, \hat{K}$ depend on $B$ which
indeed depends on $\theta^*$. If we consider some special cases where the range for $\theta^*$
is given although the actual $\theta^*$ is unknown, we can replace the terms involving constants in (\ref{large}) by a
single constant $M$. For those
cases the following analysis will be useful.

In the following we state an upper bound $N'_0$ of
$N_0 + n_0$ when $a(n) = \frac{1}{n^k}, \frac{1}{2} < k < 1$ under the following crucial assumption:
\begin{enumerate}[label=\textbf{(T\arabic*)}]
 \item $P(\theta_{n_0} \in B) =1$.
\end{enumerate}

Let $\alpha = 0.9$. Under the assumptions made, the estimates of $n_0$ and $N'_0$ are
\footnotesize
\begin{align}
n_0  &= \max ((\frac{M}{\epsilon})^{\frac{1}{k}},\ (\frac{M}{\epsilon(2k-1)})^{\frac{1}{2k-1}},(\frac{M}{\epsilon^2(2k-1)})^{\frac{1}{2k-1}},(\frac{M}{\epsilon})^{\frac{2}{k}},\ (\frac{M(\ln (\frac{1}{\gamma}))}{\epsilon^2(2k-1)})^{\frac{1}{2k-1}},\ (\frac{2Mk}{\epsilon(2k-1)})^{\frac{1}{2k-1}}),\nonumber \\
N'_0 &= \left(\left(n_0\right)^{\left(1-k\right)}+ 15.16(1-k)\right)^{\frac{1}{1-k}}.\nonumber
\end{align}
\normalsize
Then from $N'_0$ onwards the iterates will be within $2\epsilon$ of $\theta^*$ with probability at least $1-\gamma$.
Note that the minimum value of the quantity $\frac{2(T+1)}{(1-e^{-(1-\alpha)T})}$ for $\alpha =0.9$ is  $15.16$.

To understand what should be the optimal step-size i.e. the value of $k$ for which $N'_0$ will be minimum, we plot $N'_0$ as
a function of $k$ for two different values of $M$ each with two different values of $\epsilon$ (Fig. \ref{fig1a}, Fig. \ref{fig1b}, Fig. \ref{fig2a} and Fig. \ref{fig2b}).

\begin{figure}
\centering
  \begin{minipage}[b]{0.49\textwidth}
    \includegraphics[width=\textwidth]{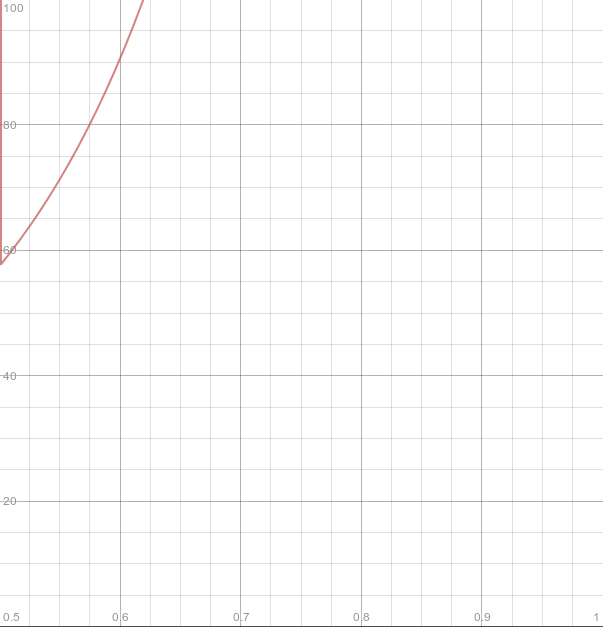}
    \caption{Sample complexity vs. step-size parameter; $y: N'_0, x: k, \gamma =0.1$, $M$ = 1E-07: $\epsilon =0.01$}
     \label{fig1a}
\end{minipage}
  \hfill
\begin{minipage}[b]{0.49\textwidth}
    \includegraphics[width=\textwidth]{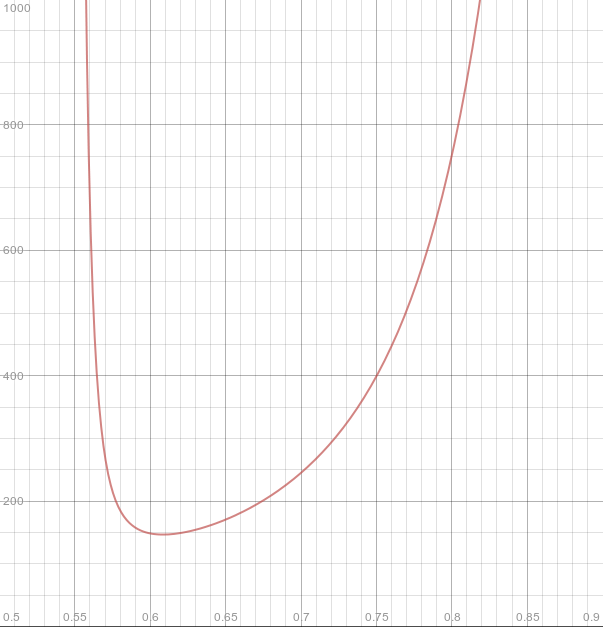}
    \caption{Sample complexity vs. step-size parameter; $y: N'_0, x: k, \gamma =0.1$, $M$ = 1E-07: $\epsilon =0.001$}
    \label{fig1b}
    \end{minipage}
\end{figure}

\begin{figure}
\centering
  \begin{minipage}[b]{0.49\textwidth}
    \includegraphics[width=\textwidth]{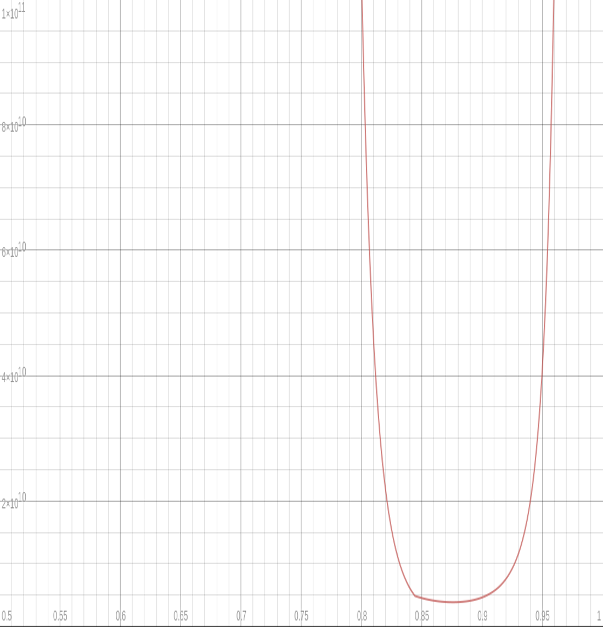}
    \caption{Sample complexity vs. step-size parameter; $y: N'_0, x: k, \gamma =0.1$, $M$ = 100: $\epsilon =0.01$}
     \label{fig2a}
\end{minipage}
  \hfill
\begin{minipage}[b]{0.49\textwidth}
    \includegraphics[width=\textwidth]{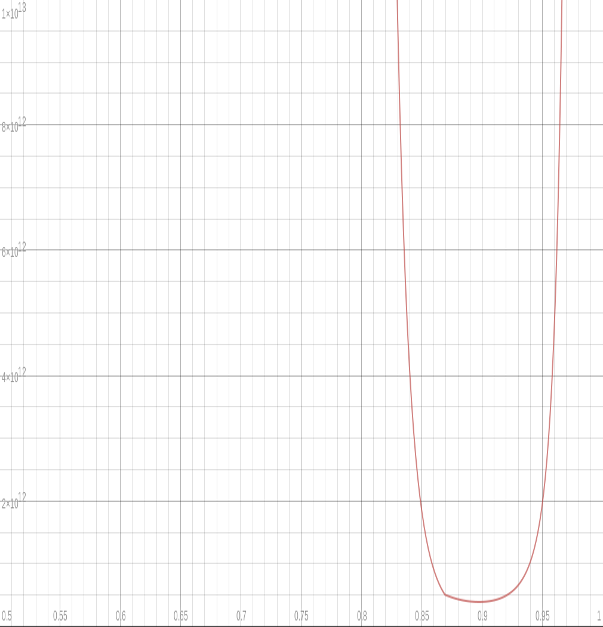}
    \caption{Sample complexity vs. step-size parameter; $y: N'_0, x: k, \gamma =0.1$, $M$ = 100: $\epsilon =0.001$}
    \label{fig2b}
    \end{minipage}
\end{figure}

\section{Conclusion}
In this chapter, we describe asymptotic and non-asymptotic convergence analysis of
stochastic approximation recursions with  Markov  iterate-dependent noise using the
lock-in probability framework.
Our results show that we are able to recover the same bound available for lock-in probability in the literature
for the case of i.i.d noise. Such results are used to calculate sample complexity estimate
of such stochastic approximation recursions which are then used for predicting the optimal step size.
Moreover, our results are extremely useful to prove almost sure convergence to specific attractors
in cases where asymptotic tightness
of the iterates can be proved easily.
An interesting future direction will be to extend this
analysis for two-timescale scenarios, both with and without Markov iterate-dependent noise.
We now provide a couple of appendices on a couple of results referred in the analysis.
\chapter{On the function approximation error for risk-sensitive reinforcement learning}
\label{chap:risk}





\section{Brief Introduction and Organization}
As described in Chapter \ref{chap:introduction}, risk-sensitive cost takes care of other moments except the average thus making it more realistic. In this chapter we provide several informative error bounds on function approximation error for the policy evaluation algorithm in this setting. The main idea is to use Bapat's inequlaity \cite{bapat} and to use Perron-Frobenius eigenvectors to get the new bounds.  With this new bounds we got better results compared to earlier spectral variation bound used in \cite{basu} in some cases. In our new bounds we use the irreducibility of the transition probability matrix whereas the earlier used spectral variation bound is true for any matrix. 

This chapter is organized as follows: Section \ref{rsec2} describes the preliminaries and background of the problem 
considered. Section \ref{rsec3} discusses the shortcomings of the bound proposed by \cite{basu}. Section \ref{rsec4} 
shows the theoretical conditions under which there is no error. This section also describes verifiable 
conditions when the transition kernel is doubly stochastic. Section \ref{rsec5} describes the new error bounds as well as how 
they compare with each other and with the state of the art bound. Section \ref{rsec6} presents conclusions and some future 
research directions.
\section{Preliminaries and Background}
\label{rsec2}

We begin by recalling the risk-sensitive framework. Consider an irreducible
aperiodic Markov chain $\{X_n\}$ on a finite state space $S = \{1, 2, \dots, s\}$, with transition matrix 
$P = [[p(j|i)]] i,j \in S$. 
While our real concern is a controlled Markov chain, we aim at a policy evaluation algorithm for a fixed
stationary policy. Thus we have suppressed the explicit control dependence. 
Let $c:S \times S \to \mathbb{R}$ denote a prescribed `running cost' function and $C$ be the $s \times s$ matrix
whose $(i,j)$-th 
entry is $e^{c(i,j)}$. The aim is to evaluate 
\begin{align}
\limsup_{n \to \infty} \frac{1}{n}\ln\left(E[e^{\sum_{m=0}^{n-1}c(X_m, X_{m+1})}]\right). \nonumber 
\end{align}That this limit exists follows from the
multiplicative ergodic theorem for Markov chains (see Theorem 1.2 of Balaji and Meyn (2000) \cite{bmrisk}, the
sufficient condition (4) therein is trivially verified for the finite state case here). Associated with
this is the multiplicative Poisson equation (see, e.g., Balaji and Meyn (2000) \cite[Theorem 1.2 (ii)]{bmrisk}):
We know from \cite{bmrisk} that there exists $\lambda >0$ and $V : S \to \mathbb{R^+}$ such that the  multiplicative Poisson 
equation holds as follows: 
\begin{align}
V(i) = \frac{\sum_j p(j|i)e^{c(i,j)}V(j)}{\lambda}. \nonumber 
\end{align}

For an explicit expression for $V(\cdot)$ see (5) in \cite{bmrisk}.

Thus $\lambda$ and  $V$ are respectively the Perron-Frobenius eigenvalue and eigenvector of the non-negative matrix 
$[[e^{c(i,j)}p(j|i)]]_{i,j \in S}$, whose existence is guaranteed by the Perron-Frobenius theorem (See Appendix i.e. 
Chapter \ref{appendix}). 
Furthermore,
under our irreducibility assumption, $V$ is specified uniquely up to a positive multiplicative scalar and $\lambda$ is
uniquely specified.
Also, the risk-sensitive 
cost defined as above
is $\ln\lambda$.

We know from \cite{bormn, borkarq,borkar_actor} that in case of both value iteration
and reinforcement learning algorithms based on value
iteration, the $i_0$-th component of the sequence of iterates will converge to $\lambda$.
The linear function approximation version in \cite{basu} provides the following parameter update for $n \geq 0$:
\begin{align}
\label{lspe}
r_{n+1} = r_n + a(n)\left(\frac{B_n^{-1}A_n}{\max(\phi^T(i_0)r_n, \epsilon)}-I \right)r_n, 
\end{align}
where 
\begin{align}
\nonumber
& \epsilon > 0, \mbox{~~ is fixed, ~~} \\ \nonumber
&V(i) \simeq \sum_{k=1}^M r^k\phi_k(i) = \phi^T(i) r, \\ \nonumber
&r = \left(r^1, \dots, r^M\right)^T \text{~is a vector of coefficients}, \\ \nonumber
&\phi^k(\cdot), 1 \leq k \leq M, \text{~are the basis functions or features chosen a priori}, \\ \nonumber
&\phi(i) = \left(\phi^1(i), \dots, \phi^M(i)\right)^T, \\ \nonumber
&\Phi = \mbox{an~}s \times M \text{~matrix whose~} (i,k)\text{~-th entry is~} \phi^k(i) \text{~for} \\ \nonumber  
&1 \leq i \leq s \text{~and~} 1 \leq k \leq M, \\ \nonumber
&\phi(i) = \text{~feature of state $i$}, \\ \nonumber
&A_n = \sum_{m=0}^n e^{c(X_m, X_{m+1})}\phi(X_m)\phi^T(X_{m+1}), \\ \nonumber
&B_n = \sum_{m=0}^n \phi(X_m)\phi^T(X_{m}), \\ \nonumber
&I = M \times M \mbox{~~identity matrix~~}.
\end{align}

We also know from \cite[Theorem 5.3]{basu} that under a crucial assumption (see ($\dagger$) in p~883 there)  on the feature matrix, 
the iterates 
$r_n$ 
satisfy the following: 
\begin{align}
\phi^T(i_0)r_n \to \mu,  \nonumber  
\end{align}
where $\mu >0$ is a Perron-Frobenius eigenvalue of the non-negative matrix $Q=\Pi \mathcal{M}$ with
$\Pi = \Phi(\Phi^T D \Phi)^{-1}\Phi^TD$ and $\mathcal{M} = C\circ P$ (unlike \cite{basu} we consider only a synchronous 
implementation for ease of understanding). Here
$D$ is a diagonal matrix with the $i$-th diagonal entry being $\pi_i$ where 
$\pi= \left(\pi_1, \pi_2, \dots, \pi_s\right)^T$ is the stationary distribution of $\{X_n\}$. 
Also, $e^{c(i,j)}p(j|i)$ is the $(i,j)$-th entry of $C \circ P$ where `$\circ$' denotes the component-wise product of two matrices with identical row and column dimensions. 
Assume that $\gamma_{ij}$ and $\delta_{ij}$ are the $(i,j)$-th entries of the matrix $C \circ P$ and $\Pi \mathcal{M}$ matrices respectively.

Therefore 
$\ln\mu$ serves as an approximation to the original risk-sensitive cost $\ln \lambda$. 
Our aim is to investigate the difference between these two, i.e., $\ln(\frac{\lambda}{\mu})$.

\begin{remark}
Throughout this chapter the results are stated in general for matrices $A$ and $B$ with largest eigenvalues of $A$ and $B$ as 
$\lambda >0$ 
and $\mu >0$ respectively. It should be clear from the context 
what the entries of $A$ and $B$ are. 
\end{remark}

\section{Related work and shortcomings}
\label{rsec3}

Let $\|A\|$ be the operator norm of a matrix defined by $\|A\| =\inf\{c >0: \|Av\| \leq c\|v\| ~~\forall v\}$ where 
$\|v\|=\sum_{i=1}^s|v_i|$. Let $A = C\circ P$ and $B= \Pi \mathcal{M}$. 
The following bound was given in \cite{basu}: 
\begin{align}
\label{spect}
\ln\left(\frac{\lambda}{\mu}\right) \leq \ln \left(1+ \frac{(\|A\| + \|B\|)^{1-\frac{1}{s}} \|A - B\|^{\frac{1}{s}}}{\mu}\right),
\end{align}
using the spectral variation bound from \cite[Theorem VIII.1.1]{bhatia}, namely that if 
$A$ and $B$ are two $s \times s$ matrices with 
eigenvalues $\alpha_1, \dots, \alpha_s$ and $\beta_1, \dots, \beta_s$ respectively, then 
\begin{align}
\label{spect_bound}
\max_j \min_i |\alpha_i - \beta_j| \leq (\|A\| + \|B\|)^{1-\frac{1}{s}} (\|A-B\|)^{\frac{1}{s}}. 
\end{align}
This follows from the observation that if $\alpha_1 > 0$ and $\beta_1 >0$ 
are the leading eigenvalues of $A$ and $B$ respectively  
and $\alpha_1 \leq \beta_1$, then $|\alpha_1 - \beta_1| < \max_j \min_i |\alpha_i - \beta_j|$. A similar thing
happens for the case $\alpha_1 > \beta_1$ except the fact that the roles of $\alpha_i$ and $\beta_i$ and hence the roles of 
$A$ and $B$ get reversed thus keeping the right hand side (R.H.S) of (\ref{spect_bound}) the same.

An important point to note is that when  $\alpha_1 \leq \beta_1$, the fact that $\beta_1$ is the leading eigenvalue of $B$ 
is not used. Same thing happens for the other case where $\alpha_1$ replaces $\beta_1$. 

Another important point above is that for large $s$ the bound given above cannot differentiate between the cases with two 
pairs of matrices $(A_1,B_1)$ and $(A_2,B_2)$ such that $\|A_1\| + \|B_1\| = \|A_2\| + \|B_2\|$ 
but $\|A_1 - B_1\|$ and $\|A_2 - B_2\|$ vary dramatically. 
This will be clear from the next toy example: Consider 
$A_1 = (x_{ij})_{s \times s}, B_1 = (y_{ij})_{s \times s}, A_2= (z_{ij})_{s \times s}, B_2 = (w_{ij})_{s \times s}$. Suppose 
$x_{ij} =p, y_{ij} =q, z_{ij} = p', w_{ij} = q'$ for $i,j \in {1,2,\dots, s}$ with $p + q = p' + q'$, $p'-q' > 0$ and 
$p-q > 0$. It is easy to see that $\|A_1\| + \|B_1\| = \|A_2\| + \|B_2\|$ and $r(A_1) = ps, r(B_1) = qs, r(A_2) = p's, r(B_2) = q's$.
Clearly, $p - q \neq p'-q'$ unless $pq = p'q'$. Here $r(A)$ denotes the Perron-Frobenius eigenvalue of matrix $A$.

In summary, when one is giving a bound between two quantities, the R.H.S 
should have terms involving the difference. However this 
does not occur while using spectral variation bound in the above example  as $(p-q)^{\frac{1}{s}}$ 
will converge to $1$ as $s \to \infty$. In Sections \ref{rsec5}, using the above example we show that the new 
error bounds that we obtain contain always the difference terms irrespective of the state space size $s$. 
\section{Conditions under which error is zero}
\label{rsec4}
We provide here a couple of conditions under which the error is zero and the corresponding results.
\subsection{Theoretical Conditions}

\subsubsection{Condition 1}
\begin{lemma}
Let $\mathbf{x}$ be the left Perron eigen vector of the non-negative matrix $C \circ P$ i.e. $\mathbf{x}^T C \circ P = \lambda \mathbf{x}^T$.  
If $\Phi$ is an $s \times 1$ matrix  and  $\phi_i = y_i$ where $y_i = \frac{x_i}{\pi_i}$,
then $\mu = \lambda$, i.e., there will be no error when function approximation is deployed. 
\end{lemma}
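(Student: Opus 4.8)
The plan is to exploit the fact that with $M=1$ the projection $\Pi$, and hence $Q=\Pi\mathcal{M}$, is a rank-one matrix, so that its unique nonzero eigenvalue can be read off directly and identified with $\lambda$. First I would write $\Pi$ out explicitly. Since $\Phi$ is now the $s\times 1$ column vector $\phi=(\phi_1,\dots,\phi_s)^T$, the quantity $\Phi^T D\Phi$ is the positive scalar $d:=\sum_i \pi_i\phi_i^2=\sum_i x_i^2/\pi_i$, so that
\begin{equation}
\Pi = \frac{1}{d}\,\Phi\,\Phi^T D, \qquad Q = \Pi\mathcal{M} = \frac{1}{d}\,\Phi\,\bigl(\Phi^T D\mathcal{M}\bigr).\nonumber
\end{equation}
Thus $Q$ is of the rank-one form $\Phi\,w^T$ with $w^T=\tfrac{1}{d}\Phi^T D\mathcal{M}$.

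The key computation, and the step where the specific choice $\phi_i=x_i/\pi_i$ is used, is to evaluate the row vector $\Phi^T D\mathcal{M}$. The $(i,j)$ entry of $D\mathcal{M}$ is $\pi_i\mathcal{M}_{ij}$, so the $j$-th component of $\Phi^T D\mathcal{M}$ is $\sum_i \phi_i\pi_i\mathcal{M}_{ij}=\sum_i x_i\mathcal{M}_{ij}$; the weighting by $\pi_i$ is exactly cancelled by the choice of feature. By the left-eigenvector relation $\mathbf{x}^T\mathcal{M}=\lambda\mathbf{x}^T$ this equals $\lambda x_j$, whence
\begin{equation}
\Phi^T D\mathcal{M} = \lambda\,\mathbf{x}^T.\nonumber
\end{equation}

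Next I would exhibit $\Phi$ itself as an eigenvector of $Q$. Using $\mathbf{x}^T\Phi=\sum_j x_j(x_j/\pi_j)=d$,
\begin{equation}
Q\Phi = \frac{1}{d}\,\Phi\,\bigl(\Phi^T D\mathcal{M}\bigr)\Phi = \frac{\lambda}{d}\,\Phi\,\bigl(\mathbf{x}^T\Phi\bigr) = \lambda\,\Phi,\nonumber
\end{equation}
so $\lambda$ is an eigenvalue of $Q$ with eigenvector $\Phi$. Since the Perron eigenvector $\mathbf{x}$ of the irreducible nonnegative matrix $\mathcal{M}$ has strictly positive entries and $\pi_i>0$, the vector $\Phi$ is strictly positive; and since $Q$ is rank one its only nonzero eigenvalue is $\lambda$, attached to the positive eigenvector $\Phi$. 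Hence $\lambda$ is the eigenvalue singled out as the Perron--Frobenius value $\mu$ to which the iteration (\ref{lspe}) converges, giving $\mu=\lambda$ and a vanishing approximation error $\ln(\lambda/\mu)=0$.

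The computations above are all routine; the only point requiring genuine care is the last one, namely confirming that the single nonzero eigenvalue $\lambda$ is the \emph{Perron--Frobenius} eigenvalue of $Q$ rather than merely some eigenvalue. This is why I would emphasize the strict positivity of $\Phi$ (inherited from positivity of $\mathbf{x}$ and of $\pi$) and the rank-one structure, so that $\lambda$ is simultaneously the spectral radius of $Q$ and the eigenvalue carried by a positive eigenvector, matching the quantity estimated by the algorithm.
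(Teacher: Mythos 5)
Your proposal is correct and takes essentially the same route as the paper: both arguments verify that the feature vector $\Phi$ (i.e.\ $y_i = x_i/\pi_i$) is a right eigenvector of $Q=\Pi\mathcal{M}$ with eigenvalue $\lambda$, the paper doing the computation entry-wise via $\delta_{ij}$ while you package it through the rank-one factorization $Q=\tfrac{1}{d}\Phi(\Phi^T D\mathcal{M})$. Your final step --- using the rank-one structure and strict positivity of $\Phi$ to conclude that $\lambda$ is genuinely the Perron--Frobenius eigenvalue $\mu$ rather than just some eigenvalue --- is actually spelled out more carefully than in the paper, which leaves that identification implicit.
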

\begin{proof}
It is easy to check that $\delta_{ij} = \frac{\phi^{k(i)}(i) \sum_{l =1}^{s} \phi ^{k(i)}(l) \pi_l \gamma_{lj}}{\sum_{m=1}^{s}{\phi^{k(i)}(m)}^2 \pi_m}$, 
where $\gamma_{ij} = e^{c(i,j)} p(j|i)$.

We claim that with the choice of feature matrix as stated in the theorem, $\lambda$ is the eigenvalue of $B$ with eigenvector being $\mathbf{y}=(y_i)_{i \in \{1,2,\dots s\}}$.
\begin{align}
(\Pi \mathcal{M} y)_i = \sum_{k=1}^s \delta_{ik} y_k = \sum_{k=1}^s \frac{y_i \sum_{l=1}^s x_l \gamma_{lk}}{\sum_{m=1}^s \frac{{x_m}^2}{\pi_m}} y_k = 
\sum_{k=1}^s \frac{y_i \sum_{l=1}^s x_l \gamma_{lk}}{\sum_{m=1}^s \frac{{x_m}^2}{\pi_m}}y_k= \lambda y_i \frac{\sum_{k=1}^s \frac{{x_k}^2}{\pi_k}}{\sum_{m=1}^s \frac{{x_m}^2}{\pi_m}}  
\end{align}

\end{proof}
The claim follows.
\subsubsection{Condition 2}
Recall the assumption ($\dagger$) on the feature matrix $\Phi$ from \cite{basu} which says that the feature matrix $\Phi$ has all non-negative entries 
and any two columns are orthogonal to each other. In this work we strengthen the later part as follows: 

$(\star)$ Every row of the feature matrix $\Phi$ has exactly one positive entry, i.e., for all $i$ there exist 
$1\leq k(i) \leq M$ such that $\phi^{j}(i) > 0$ if $j=k(i)$, otherwise $\phi^{j}(i)=0$.






From \cite[Theorem 1]{bapat} it is easy to see that (this theorem is applicable due to Lemma 5.1 (ii) of \cite{basu} and $(\star)$)
the error can be zero even if 
$C\circ P \neq \Pi \mathcal{M}$, namely under the following conditions:
\begin{enumerate}
 \item there exists positive $\lambda_0, \beta_i, i=1,2, \dots, s$  such that 
 \begin{align}
  \delta_{ij} = \frac{\lambda_0 \gamma_{ij} \beta_i}{\beta_j}, i,j =1,2,\dots,s. \nonumber 
 \end{align}
\item $\Pi_{i,j=1}^s {\delta_{ij}}^{\gamma_{ij}x_i y_j} = \Pi_{i,j=1}^s {\gamma_{ij}}^{\gamma_{ij}x_i y_j}.$
\end{enumerate}

\begin{remark}
Note that if the matrix $\Phi$ has a row $i$ with all $0$s, then $\delta_{ij} =0$ for all $j=1,2,\dots,s$ whereas $\gamma_{ij} >0$ for 
at least one $j \in \{1,2,\dots,s\}$ which violates the conditions for zero error stated above.
\end{remark}

\subsection{Verifiable Condition with doubly stochastic transition kernel}

Note that if the transition kernel is a doubly stochastic matrix then 
it is very hard to find easily verifiable condition on the feature matrix such that $C \circ P = \Pi \mathcal{M}$. The reason is that 
this requires to find a feature matrix $\Phi$ which under $(\star)$ satisfies $\Phi(\Phi^T\Phi)^{-1}\Phi^T = I$. 
This will not be true under $(\star)$ as this requires  $k(i) \neq k(j)$ to hold if $i \neq j$. 
This problem can be alleviated 
by the temporal difference learning algorithm for this setting as under:
\small
\begin{equation}
\label{rtd}
\theta_{n+1}=\theta_n + a(n)\left[\frac{e^{c(X_n, X_{n+1})} \phi^T(X_{n+1})\theta_n}{\phi^T(i_0)\theta_n \lor \epsilon} -\phi^T(X_n) \theta_n\right]\phi(X_n). 
\end{equation}
\normalsize
The following theorem shows its convergence. 
\begin{theorem}
If $\Phi\Phi^T = D^{-1}$ and $\sup_n \|\theta_n\| < \infty \mbox{~a.s.}$ 
then $\phi^T(i_0)\theta_n \to \lambda$.
\end{theorem}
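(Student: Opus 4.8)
The plan is to pass from the parameter iteration (\ref{rtd}) to an equivalent recursion for the \emph{value-function estimate} $V_n := \Phi\theta_n \in \mathbb{R}^s$, whose $i$-th coordinate is $V_n(i) = \phi^T(i)\theta_n$, and to exploit the hypothesis $\Phi\Phi^T = D^{-1}$ to trivialize the feature geometry. The crucial observation is that $\phi(i) = \Phi^T e_i$, where $e_i$ is the $i$-th coordinate vector, so
$$\Phi\phi(X_n) = \Phi\Phi^T e_{X_n} = D^{-1} e_{X_n} = \frac{1}{\pi_{X_n}} e_{X_n}.$$
Multiplying (\ref{rtd}) on the left by $\Phi$ therefore collapses the update to a single coordinate,
$$V_{n+1}(X_n) = V_n(X_n) + \frac{a(n)}{\pi_{X_n}}\left[\frac{e^{c(X_n,X_{n+1})}V_n(X_{n+1})}{V_n(i_0)\vee\epsilon} - V_n(X_n)\right],$$
with all other coordinates left unchanged. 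Since $\sup_n\|V_n\| \leq \|\Phi\|\sup_n\|\theta_n\| < \infty$ almost surely, the boundedness hypothesis transfers directly to $\{V_n\}$.

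Next I would recognize this as a single-timescale stochastic approximation driven by the Markov noise $(X_n,X_{n+1})$, fitting the framework of (\ref{many})--(\ref{main_m}) with vector field $F(V,i,j) = \frac{1}{\pi_i}e_i\bigl(\tfrac{e^{c(i,j)}V_j}{V(i_0)\vee\epsilon} - V_i\bigr)$. Because $\{X_n\}$ is ergodic with stationary law $\pi$, the averaged field is
$$\bar F(V) = \sum_i \pi_i \sum_j p(j|i) F(V,i,j) = \frac{\mathcal{M}V}{V(i_0)\vee\epsilon} - V,$$
the weight $1/\pi_i$ exactly cancelling the asymptotic visitation frequencies of the states. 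Invoking the o.d.e.\ method for stochastic approximation with Markov noise recalled in Section \ref{markov} (around (\ref{ode_m})), boundedness of $\{V_n\}$ yields a.s.\ convergence of $V_n$ to an internally chain transitive invariant set of
$$\dot V(t) = \frac{\mathcal{M}V(t)}{V(i_0)(t)\vee\epsilon} - V(t).$$

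Finally I would identify this invariant set. At equilibrium $\mathcal{M}V^* = (V^*(i_0)\vee\epsilon)V^*$, so $V^*$ is an eigenvector of the nonnegative irreducible matrix $\mathcal{M} = C\circ P$; by Perron--Frobenius the only eigenvector admitting a consistent positive scaling is the Perron eigenvector, whose eigenvalue is $\lambda$, normalized so that $V^*(i_0) = \lambda$ (which exceeds $\epsilon$ once $\epsilon$ is small). I would also note that $\Phi\Phi^T = D^{-1}$ forces $\Phi$ to be square and invertible, whence a direct computation gives $\Pi = \Phi(\Phi^T D\Phi)^{-1}\Phi^T D = I$ and $Q = \Pi\mathcal{M} = \mathcal{M}$, so no projection error is incurred and the equilibrium carries the true eigenvalue $\lambda$ rather than $\mu$. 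Reading off the $i_0$-th coordinate then gives $\phi^T(i_0)\theta_n = V_n(i_0)\to\lambda$.

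The main obstacle is establishing that the limiting o.d.e.\ above has a globally asymptotically stable equilibrium, equivalently that its only internally chain transitive invariant set is the scaled Perron eigenvector, so that the conclusion holds along every sample path; this flow is the normalized power iteration underlying risk-sensitive value iteration, and I would import its global stability from the companion analyses of \cite{basu} and \cite{bormn, borkarq, borkar_actor}. A secondary point to verify is that the iterates eventually enter and remain in the region $V_n(i_0)>\epsilon$, so that the truncation $\vee\,\epsilon$ is inactive near the limit and the limiting field is smooth there.
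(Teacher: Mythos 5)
Your proposal is correct and follows essentially the same route as the paper: under $\Phi\Phi^T = D^{-1}$ one has $\Phi^T D\Phi = I$ and $\Phi^T D(C\circ P)\Phi = \Phi^{-1}\mathcal{M}\Phi$, so your $V_n = \Phi\theta_n$ recursion and limiting flow $\dot V = \mathcal{M}V/(V(i_0)\vee\epsilon) - V$ are exactly the similarity transform of the paper's mean o.d.e.\ $\dot\theta = \bigl(A'/\phi^T(i_0)\theta - B'\bigr)\theta$ with $A' = \Phi^T D (C\circ P)\Phi$ and $B' = \Phi^T D\Phi$. Like you, the paper establishes tracking of this o.d.e.\ (invoking the convergence theorem for differential inclusions with Markov noise, since the vector field is merely continuous) and then imports the global convergence of the normalized flow to the Perron--Frobenius eigenvalue from Theorem 5.3 of \cite{basu} --- precisely the step you identified as the main obstacle and likewise deferred to \cite{basu}.
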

\begin{proof}
First, we analyze the $\epsilon=0$ case.
Note that the algorithm tracks  the o.d.e
\begin{align}
\dot{\theta}(t) = \left(\frac{A'}{\phi^T(i_0)\theta(t)} - B'\right)\theta(t), \nonumber
\end{align}
where $A' = \Phi^TDC\circ P \Phi$ and $B' = \Phi^TD\Phi$.

This follows because it is easy to see that the algorithm tracks the o.d.e
\begin{align}
\dot{\theta}(t) = h(\theta(t)), \nonumber
\end{align}
where $h(\theta)= \sum_i \sum_j \pi(i)p(j|i)\left[\frac{\phi^T(j)\theta}{\phi^T(i_0)\theta} - \phi^T(i)\theta\right]\phi(i)$.

The above statement follows from the 
convergence theorem for differential inclusion with Markov noise \cite{yaji_m} as 
the vector field in (\ref{rtd}) is merely continuous.

Now, the $k$-th entry of $A'\theta$ is 
\begin{align}
\langle \left(\sum_{i=1}^s \phi^k(i)  \sum_{j=1}^s e^{c(i,j)} p(j|i) \phi(j)\right), \theta \rangle.
\end{align}
Similarly, the $k$-th entry of $B'\theta$ can be shown to be the $k$-th entry of $\sum_i \sum_j \pi(i)p(j|i)\left[\phi^T(i)\theta\right]\phi(i)$.

Now, the claim follows directly from \cite[Theorem 5.3]{basu} (the synchronous implementation).
\end{proof}

\section{New error bounds}
\label{rsec5}

\subsection{Bound based on Bapat and Lindqvist's inequality}
Motivated by the discussion in Section \ref{rsec3} and the fact that 
risk-sensitive cost is $\ln\lambda$ rather than $\lambda$ we need to find an 
upper bound
for $\ln \frac{\lambda}{\mu}$. Let $r(A)$ denote the Perron-Frobenius eigenvalue of matrix $A =(a_{ij})_{s\times s}$. In the following we obtain three different bounds for the same quantity under the assumptions that 
a) $\lambda > \mu$, b) the matrix $P = p(j|i)$ has positive entries and impose conditions under
which one is better than the other.
Suppose $A$
admits left and right Perron eigenvectors $\textbf{x}, \textbf{y}$ respectively, with $\sum_i x_iy_i = 1$ (this is
satisfied, for example, if $A$ is irreducible).
The three upper bounds of $\ln \lambda - \ln \mu$ are (\ref{first}) -(\ref{third}).

\small
\begin{empheq}[box=\fbox]{align}
        &\frac{1}{\lambda}\sum_{i,j=1}^s                                      e^{c(i,j)}p(j|i)x_i y_j \left[c(i,j) + \ln p(j|i)- 
       \ln\phi^{k(i)}(i) -\ln\left(\sum_{l=1}^s \phi^{k(i)}(l)\pi_la_{lj}         \right)+\right. \nonumber \\
       & \left. \ln\left(\sum_{m=1}^s{\phi^{k(i)}(m)}^2\pi_m\right)\right],        \label{first} \\
       \nonumber \\
&\ln\left(\lambda\right) -\ln \left( \lambda - \sum_{i=1}^{s}x_i y_i \left(e^{c(i,i)}p(i|i)- \frac{\phi^{k(i)}(i)\sum_{l=1}^s \phi^{k(i)}(l)\pi_le^{c(l,i)}p(i|l)}{\sum_{m=1}^s{\phi^{k(i)}(m)}^2\pi_m}\right) - \right. \nonumber \\ 
&\left. \sum_{i \neq j} e^{c(i,j)}p(j|i)x_i y_j \left(c(i,j) + \ln p(j|i)  + \ln\left(\frac{\sum_{m=1}^{s} {\phi^{k(i)}(m)}^2 \pi_m}{\sum_{l=1}^s \phi^{k(i)}(l) \pi_l e^{c(l,j)}p(j|l)}\right) - \ln \phi^{k(i)}(i) \right)\right), \label{second} \\
\nonumber \\
&\ln \left(1+ \frac{1}{\mu}\left(\sum_{i=1}^{s}x_i y_i \left(e^{c(i,i)}p(i|i)- \frac{\phi^{k(i)}(i)\sum_{l=1}^s \phi^{k(i)}(l)\pi_le^{c(l,i)}p(i|l)}{\sum_{m=1}^s{\phi^{k(i)}(m)}^2\pi_m}\right) + \right.\right. \nonumber \\ 
&\left. \left. \sum_{i \neq j} e^{c(i,j)}p(j|i)x_i y_j \left(c(i,j) + \ln p(j|i)  + \ln\left(\frac{\sum_{m=1}^{s} {\phi^{k(i)}(m)}^2 \pi_m}{\sum_{l=1}^s \phi^{k(i)}(l) \pi_l e^{c(l,j)}p(j|l)}\right)- \ln \phi^{k(i)}(i) \right)\right)\right) \label{third}.
\end{empheq}
\normalsize

The bounds (\ref{first}) -(\ref{third}) of $\frac{\ln \lambda}{\ln \mu}$ follow from (\ref{ineq1})-(\ref{ineq3}).
\small
\begin{empheq}[box=\fbox]{align}
&\frac{r(A)}{r(B)} \leq  \Pi_{i,j=1} ^ s \left(\frac{a_{ij}}{b_{ij}}\right)^{\frac{a_{ij}x_i y_j}{r(A)}}, \label{ineq1} \\
&\frac{r(A)}{r(B)} \leq  \frac{r(A)}{r(A) + \sum_{i=1}^{s}x_i y_i (b_{ii}-a_{ii}) + \sum_{i \neq j} a_{ij}x_i y_j \ln\left(\frac{b_{ij}}{a_{ij}}\right)}, \label{ineq2} \\
&\frac{r(A)}{r(B)} \leq  1 + \frac{1}{r(B)}\left[\sum_{i=1}^{s}x_i y_i (a_{ii}-b_{ii}) + \sum_{i \neq j} a_{ij}x_i y_j \ln\left(\frac{a_{ij}}{b_{ij}}\right)\right]. \label{ineq3} \end{empheq}
\normalsize

\begin{remark}
 Note that in general it is hard to compare the bound given in  (\ref{spect}) 
 with the same in (\ref{first}) -(\ref{third}). We will only show that for the toy example of Section \ref{rsec3} 
 the bounds given in (\ref{ineq1})-(\ref{ineq3}) are much better than the spectral variation bound when the state 
 space is large. Therefore $A$ and $B$ 
 will refer to matrices $A_1$ and $B_1$ respectively. It is easy to calculate
 $\|A\|, r(A), \mathbf{x}$ with this choice of $A$ and $B$. 
 Note that the actual error is $\ln (1 + \frac{\epsilon}{q})$ where $p = q+ \epsilon$ where $\epsilon << q$.
If we use (\ref{spect}) then the error is bounded by $\ln (3 + \frac{\epsilon}{q})$.
However, if we use (\ref{ineq1}) the error is bounded by $\ln (1 + \frac{\epsilon}{q})$ i.e., the actual error. 
If we use (\ref{ineq3}) the error 
is bounded by $\ln\left(1+\left(1+\frac{\epsilon}{q}\right) \ln \left(1 + \frac{\epsilon}{q}\right)\right)$ which 
reduces to $\ln\left(1 + \frac{\epsilon}{q}\right)$ (using $x+ x^2 \sim x \mbox{~~if~~} x<<1$). If we 
use (\ref{ineq2}) the error is bounded by $\ln\left(1 + \frac{\epsilon}{q}\right)$ (using Binomial approximation theorem). 

If $A$ is such that all its diagonal elements are $p$ and the off-diagonal elements are $q$ then
for large state space the actual error is zero. If we use (\ref{ineq1}) then the bound is also zero whereas 
the right hand side of (\ref{spect}) is $\ln 3$. 
 
If $A$ is such that the entry in the first row and first column is $p$ and the rest are all $q$, then also similar 
thing happens except the fact that now the right hand side of (\ref{spect}) is $\ln \left(1 + 2e^{-\frac{4q}{3}}\right)$.

Note that here $a_{ij} > b_{ij} \forall i,j \in \{1,2,\dots, s\}$ in the above example. Our bound will be much more 
useful in cases where there will be $i,j$ such that $b_{ij} > a_{ij}$. From the definition of $\delta_{ij}, \gamma_{ij}$ 
it is clear that for all $j$ there  exists $i$ such that $\delta_{ij} > \gamma_{ij}$. In such a case, for 
every $j$ there will be at least one non-positive term inside the summation 
over $i$ which will make the bound small. The bound given 
in (\ref{spect}) does not capture such cases for large $s$.

Also, from the proof of \cite[Theorem VIII.1.1]{bhatia} we know that the bound (\ref{spect}) is a strict inequality if the 
matrix $P$ has all the entries positive whereas 
from the main theorem in \cite{bapat} we see that there are cases when equality condition holds in (\ref{ineq1}).
\end{remark}

Here (\ref{second}) holds under (\ref{as1}) which follows from the fact that the following condition is necessary and 
sufficient for 
(\ref{ineq2}) to be true:
\begin{align}
\label{condn}
r(A) >  \sum_{i=1}^{s}x_i y_i (a_{ii}-b_{ii}) + \sum_{i \neq j} a_{ij}x_i y_j \ln\left(\frac{a_{ij}}{b_{ij}}\right)
\end{align}
and $\min_i \sum_j a_{ij} \leq r(A)$. Later in the proof of Lemma \ref{neq} we will see that,
in our setting, under $(\star)$, (\ref{condn}) gets
satisfied if the assumptions in Lemma \ref{neq} are true.

\small

\begin{empheq}[box=\fbox]{align}
&\min_i \sum_j e^{c(i,j)}p(j|i) > \sum_{i}x_i y_i\left(e^{c(i,i)}p(i|i)- \frac{\phi^{k(i)}(i)\sum_{l=1}^s \phi^{k(i)}(l)\pi_le^{c(l,i)}p(i|l)}{\sum_{m=1}^s{\phi^{k(i)}(m)}^2\pi_m}\right) +   \nonumber \\ 
& \sum_{i \neq j} x_i y_j e^{c(i,j)}p(j|i) \left(c(i,j) + \ln p(j|i)  +  \ln\left(\frac{\sum_{m=1}^{s} {\phi^{k(i)}(m)}^2 \pi_m}{\sum_{l=1}^s \phi^{k(i)}(l) \pi_l e^{c(l,j)}p(j|l)}\right)- \ln \phi^{k(i)}(i) \right) \label{as1}. 
\end{empheq}

\normalsize

(\ref{ineq1})-(\ref{ineq3}) immediately follow from the classic results of \cite[Theorem 1]{bapat} and \cite[Theorem 2]{henry}. In \cite[Theorem 3]{henry}, it 
is shown that under one condition on matrix entries, (\ref{first}) is better than (\ref{second}) whereas under some 
other condition, it is opposite. In the following we investigate how (\ref{third}) compares to the other two.

\begin{lemma}
The bound (\ref{third}) is always better than (\ref{second}).
\end{lemma}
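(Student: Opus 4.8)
We need to compare two upper bounds for the same quantity $\frac{r(A)}{r(B)}$. Let me denote
$$S := \sum_{i=1}^{s} x_i y_i (a_{ii} - b_{ii}) + \sum_{i \neq j} a_{ij} x_i y_j \ln\left(\frac{a_{ij}}{b_{ij}}\right).$$
Bound (8) (the one labeled (\ref{ineq2})) states $\frac{r(A)}{r(B)} \le \frac{r(A)}{r(A) - S}$, while bound (9) (labeled (\ref{ineq3})) states $\frac{r(A)}{r(B)} \le 1 + \frac{1}{r(B)}(-S) = 1 - \frac{S}{r(B)}$. After taking logarithms these become the bounds (\ref{second}) and (\ref{third}) on $\ln\lambda - \ln\mu$. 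So the lemma reduces to showing that the second bound (\ref{third}) is numerically smaller, i.e. that
$$1 - \frac{S}{r(B)} \;\le\; \frac{r(A)}{r(A) - S}.$$

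I will assume my plan is to work entirely at the level of the eigenvalue-ratio inequalities (\ref{ineq2}) and (\ref{ineq3}), since the logarithm is monotone increasing and the transition to (\ref{second})/(\ref{third}) is just applying $\ln$ to each side.

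**The approach.** First I would set up the clean algebraic comparison. Writing $\alpha = r(A)$ and $\beta = r(B)$, I recall that $S = \alpha - \beta$ is exactly the defining relationship here: indeed, examining (\ref{ineq2}) and (\ref{ineq3}) together, both arise from the Bapat--Lindqvist type estimate where $S$ represents the first-order difference term, and by construction $r(A) - S$ and $r(B) + (-S)$ are the two candidate denominators/expressions. The key observation I would verify is that the quantity subtracted in (\ref{ineq3}), namely $\sum x_iy_i(a_{ii}-b_{ii}) + \sum_{i\ne j} a_{ij}x_iy_j\ln(a_{ij}/b_{ij})$, is the same $S$ appearing in (\ref{ineq2}). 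Then the two claimed upper bounds for $\frac{\alpha}{\beta}$ are $\frac{\alpha}{\alpha - S}$ and $1 + \frac{1}{\beta}(-S)$, where the sign conventions differ (one uses $a_{ii}-b_{ii}$, the other $b_{ii}-a_{ii}$). I would carefully reconcile the signs, then reduce the desired inequality to
$$1 - \frac{S}{\beta} \le \frac{\alpha}{\alpha - S}.$$

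**The core inequality.** The plan is then to prove this by clearing denominators. Multiplying out (noting $\alpha - S > 0$ by the validity condition (\ref{condn}) for bound (\ref{ineq2}), and $\beta > 0$), the inequality $1 - \frac{S}{\beta} \le \frac{\alpha}{\alpha - S}$ is equivalent to $(\beta - S)(\alpha - S) \le \alpha\beta$, i.e. to $-S\alpha - S\beta + S^2 \le 0$, i.e. $S^2 \le S(\alpha + \beta)$. If $S = \alpha - \beta \ge 0$ (which holds under the standing assumption $\lambda > \mu$, i.e. $\alpha > \beta$), this reduces to $S \le \alpha + \beta$, equivalently $\alpha - \beta \le \alpha + \beta$, i.e. $0 \le 2\beta$, which holds since $\beta = r(B) > 0$. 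So the comparison follows from nonnegativity of the Perron root $r(B)$ and the sign of $S$. The one delicate point I would check is the sign of $S$ itself and whether $S = \alpha - \beta$ exactly or only $S \le \alpha - \beta$; since (\ref{ineq2}) and (\ref{ineq3}) are both derived from the same first-order expansion, I expect $S$ to be a single common quantity, and the argument goes through provided $S \ge 0$.

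**Main obstacle.** The hard part will be bookkeeping the sign conventions: (\ref{ineq2}) is written with $(a_{ii}-b_{ii})$ and $\ln(a_{ij}/b_{ij})$ while (\ref{ineq3}) is written with $(a_{ii}-b_{ii})$ and $\ln(a_{ij}/b_{ij})$ inside a $1 + \frac{1}{r(B)}[\cdots]$ — I must confirm these are literally the same $S$ and that the reduction to $S^2 \le S(\alpha+\beta)$ is correct rather than an inequality in the wrong direction. Once the identification $S = r(A) - r(B) \ge 0$ is pinned down (using $\lambda > \mu$), the remaining step is the elementary manipulation above, and I would present it as: reduce to $(\beta - S)(\alpha - S) \le \alpha\beta$, expand, cancel $\alpha\beta$, factor out $S \ge 0$, and conclude using $\alpha - \beta \le \alpha + \beta$. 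I expect no analytic difficulty beyond this algebra.
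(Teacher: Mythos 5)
Your overall strategy---clear denominators and compare the two eigenvalue-ratio bounds algebraically---is exactly the paper's, but two concrete errors break the execution. First, a sign error: with your $S$ (the paper's $L$), bound (\ref{ineq3}) reads $\frac{r(A)}{r(B)} \le 1 + \frac{S}{r(B)}$, not $1 - \frac{S}{r(B)}$; the bracketed term in (\ref{ineq3}) carries $(a_{ii}-b_{ii})$ and $\ln(a_{ij}/b_{ij})$, i.e.\ it is $+S$ verbatim. The inequality you must prove is therefore
\[
1 + \frac{S}{r(B)} \;\le\; \frac{r(A)}{r(A)-S},
\]
which upon clearing denominators becomes $S\bigl(S-(r(A)-r(B))\bigr)\ge 0$---not your $S^2 \le S\bigl(r(A)+r(B)\bigr)$. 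The mis-stated target $1 - S/r(B) \le r(A)/(r(A)-S)$ is trivially true whenever $0<S<r(A)$ (the left side is below $1$, the right side above, using the validity condition (\ref{condn})), so proving it establishes nothing about the lemma; and your reduction would in any case need $S \le r(A)+r(B)$, which is unjustified, since $S$ contains $\ln(a_{ij}/b_{ij})$ terms with no a priori upper bound.

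Second, the ``delicate point'' you flagged is resolved the wrong way: $S$ is not equal to $r(A)-r(B)$, nor is it $\le r(A)-r(B)$---the correct relation, and the actual content of Lindqvist's estimate \cite[Theorem 2]{henry} (of which (\ref{ineq3}) is merely a rearrangement), is $S \ge r(A)-r(B)$; notably this is the one option you did not consider. It is exactly the input the paper uses: under the standing assumption $\lambda > \mu$ one has $r(A)-r(B)>0$, hence $S \ge r(A)-r(B) > 0$, so both factors of $S\bigl(S-(r(A)-r(B))\bigr)$ are nonnegative, which rearranges to $\frac{r(B)+S}{r(B)} \le \frac{r(A)}{r(A)-S}$, i.e.\ (\ref{third}) is better than (\ref{second}). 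Note that your fallback hypothesis $S \ge 0$ alone would not close the argument even after fixing the sign: the correct factorization genuinely requires the full strength of $S \ge r(A)-r(B)$.
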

\begin{proof}
Let $L = \sum_{i=1}^s x_i y_i (a_{ii} - b_{ii}) + \sum_{i,j=1, i \neq j}^s a_{ij} x_i y_j \ln\left(\frac{a_{ij}}{b_{ij}}\right)$. 
Now, from \cite[Theorem 2]{henry} we know that
$L \geq r(A) - r(B)$ which implies that 
\begin{align}
L(L - r(A) + r(B)) \geq 0. \nonumber 
\end{align}
which again implies that 
\begin{align}
\frac{r(B) + L}{r(B)} \leq \frac{r(A)}{r(A) - L}. \nonumber
\end{align}
This means that  (\ref{third}) is better than (\ref{second}). 
\end{proof}

\subsubsection{Some conditions}    
In this section we describe some conditions. They are sufficient conditions under which (\ref{first})-(\ref{second})
compare with each other. They will be referred in the next two lemmas.

\small
\begin{empheq}[box=\fbox]{align}
& \forall i, e^{c(i,i)} p(i|i)\left(\sum_{m=1}^s {\phi^{k(i)}(m)}^2 \pi_m  - {\phi^{k(i)}(i)}^2 \pi_i\right) = \phi^{k(i)}(i) \sum_{l=1, l\neq i}^{s} \phi^{k(i)}(l) \pi_l a_{li} \label{as2} \\
& \forall i \neq j, e^{c(i,j)} p(j|i)\left(\sum_{m=1}^s {\phi^{k(i)}(m)}^2 \pi_m  - {\phi^{k(i)}(i)}^2 \pi_i\right) = \phi^{k(i)}(i) \sum_{l=1, l\neq i}^{s} \phi^{k(i)}(l) \pi_l a_{lj} \label{as3} \\
& \exists i \mbox{~~s.t~~} e^{c(i,i)} p(i|i)  > \max_{1 \leq l \leq s, l\neq i} e^{c(l,i)} p(i|l) \mbox{~~or,} \label{as5} \\
& \exists i \mbox{~~s.t~~} e^{c(i,i)} p(i|i)  < \min_{1 \leq l \leq s, l\neq i} e^{c(l,i)} p(i|l) \label{as6}
\end{empheq}
%
\normalsize

\begin{lemma}
\label{eq}
Assume that for all $i$, $b_{ii} = a_{ii}$ \cite[Theorem 3 (i)]{henry}. Then (\ref{first}) is better than (\ref{second}) 
\end{lemma}
\begin{proof}
Under the condition mentioned in  \cite[Theorem 3 (i)]{henry},
\begin{align}
 r(A) \Pi_{i \neq j} \left(\frac{b_{ij}}{a_{ij}}\right)^{\frac{a_{ij}x_i y_j}{r(A)}} \geq r(A) - L. \nonumber
\end{align}
Therefore (\ref{first}) is better than (\ref{second}). 
\end{proof}
\begin{remark}
One such example where the condition of  Lemma \ref{eq} gets satisfied is:
$A= (a_{ij})_{s \times  s}$ with $a_{ij} = q$ if $i=j$ and $a_{ij}=p$ otherwise, 
  and $B= (b_{ij})_{s \times  s}$ with $b_{ij}=q$ for all $1 \leq i,j \leq s$ with $p-q \leq q$. 
It is easy to check that 
(\ref{condn}) gets satisfied for this example.
\end{remark}  
\begin{remark}
In our setting the condition mentioned in Lemma \ref{eq} gets satisfied if (\ref{as2}) is true. If the 
feature matrix is a single column matrix with all entries equal then  a sufficient condition for (\ref{as3}) 
is that for every $j$, $e^{c(i,j)}p(j|i)$ is same for all $i$ (for example, the transition probability satisfies
$p(j|i) = e^{-c(i,j)}$ with the cost function $c(.,.)$ being non-negative).
\end{remark}

\begin{lemma}
\label{neq}
Assume that  for all $i \neq j$, $b_{ij} = a_{ij}$ and there is at least one $i$ 
such that $b_{ii} \neq a_{ii}$ \cite[Theorem 3 (ii)]{henry}. Then (\ref{second}) is better than first (\ref{first}).
\end{lemma}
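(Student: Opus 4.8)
The plan is to reduce the comparison to a single application of the weighted arithmetic--geometric mean inequality. Since the logarithm is monotone, comparing the bounds (\ref{first}) and (\ref{second}) on $\ln(\lambda/\mu)$ is equivalent to comparing the underlying multiplicative bounds (\ref{ineq1}) and (\ref{ineq2}) on $r(A)/r(B)$, so I would work with the latter. First I would substitute the hypothesis $b_{ij}=a_{ij}$ for all $i\neq j$ into both: every off-diagonal factor $(a_{ij}/b_{ij})^{a_{ij}x_iy_j/r(A)}$ in (\ref{ineq1}) equals $1$, and every off-diagonal logarithmic term in $L:=\sum_{i} x_iy_i(a_{ii}-b_{ii})+\sum_{i\neq j}a_{ij}x_iy_j\ln(a_{ij}/b_{ij})$ vanishes, leaving $L=\sum_i x_iy_i(a_{ii}-b_{ii})$ and reducing (\ref{ineq1}) to a product over the diagonal only.

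Next I would introduce the nonnegative quantities $w_i:=a_{ii}x_iy_i/r(A)$ for $1\le i\le s$, the aggregated off-diagonal weight $w_0:=\sum_{i\neq j}a_{ij}x_iy_j/r(A)$, and the ratios $s_i:=b_{ii}/a_{ii}>0$. Using that $\mathbf{x},\mathbf{y}$ are left and right Perron eigenvectors normalized by $\sum_i x_iy_i=1$, one has $\sum_{i,j}a_{ij}x_iy_j=r(A)$, whence $w_0+\sum_{i=1}^s w_i=1$. In these terms (\ref{ineq1}) reads $r(A)/r(B)\le \Pi_{i=1}^s s_i^{-w_i}$, while $L/r(A)=\sum_i w_i(1-s_i)$ gives $r(A)-L=r(A)\big(w_0+\sum_i w_i s_i\big)>0$ (which is exactly condition (\ref{condn}), here automatic since each $s_i>0$ and the weights sum to one), so (\ref{ineq2}) reads $r(A)/r(B)\le \big(w_0+\sum_i w_i s_i\big)^{-1}$.

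The desired inequality ``(\ref{second}) is better than (\ref{first})'' is then equivalent to $\Pi_{i=1}^s s_i^{w_i}\le w_0\cdot 1+\sum_{i=1}^s w_i s_i$, which is precisely the weighted AM--GM inequality for the values $\{1,s_1,\dots,s_s\}$ with the probability weights $\{w_0,w_1,\dots,w_s\}$. Because $b_{ii}\neq a_{ii}$ for at least one $i$, the value $s_i\neq 1$ carries positive weight, so the values with positive weight are not all equal and AM--GM is strict; hence (\ref{second}) is a strictly smaller, i.e.\ tighter, upper bound than (\ref{first}). This recovers \cite[Theorem 3 (ii)]{henry}. The only delicate points are recognizing the entire off-diagonal contribution as one extra AM--GM atom of value $1$ carrying weight $w_0$, and confirming positivity of $r(A)-L$ so that bound (\ref{second}) is finite and exceeds one; once the weights are shown to sum to unity, the conclusion is immediate.
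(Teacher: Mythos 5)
Your proof is correct, but it takes a genuinely more self-contained route than the paper. The paper disposes of this lemma in one line: it cites Lindqvist \cite[Theorem 3 (ii)]{henry} for the inequality $r(A)\,\prod_{i=1}^{s}\left(b_{ii}/a_{ii}\right)^{a_{ii}x_iy_i/r(A)} \le r(A)-L$ and then checks that condition (\ref{condn}) holds (using $b_{ii}\neq 0$, valid under $(\star)$ and assumption b) of Section \ref{rsec5}), whereas you prove the cited inequality from scratch. Your normalization is the step the paper never writes down: since $\sum_{i,j}a_{ij}x_iy_j=r(A)$, the numbers $w_0=\sum_{i\neq j}a_{ij}x_iy_j/r(A)$ and $w_i=a_{ii}x_iy_i/r(A)$ form a probability vector, and the comparison of (\ref{ineq1}) with (\ref{ineq2}) collapses to weighted AM--GM on the values $\{1,s_1,\dots,s_s\}$, with the entire off-diagonal mass correctly bundled into a single atom of value $1$. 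This buys three things the citation leaves invisible: condition (\ref{condn}) is verified directly, since $r(A)-L=r(A)\left(w_0+\sum_i w_i s_i\right)>0$, rather than inferred from the cited inequality together with $b_{ii}\neq 0$; you obtain strictness of the comparison; and the whole argument is elementary and checkable in place. Two small points you should make explicit: positivity of the weights $w_i$ requires $a_{ii}>0$, which holds here because $A=C\circ P$ with $P$ having positive entries, and your strictness claim needs a second positive-weight value different from the exceptional $s_i$ --- this is supplied by $w_0>0$, which follows from irreducibility (indeed positivity) of $A$ for $s\ge 2$; if $w_0$ could vanish and all diagonal ratios equalled a common $c\neq 1$, AM--GM would be an equality. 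Since the lemma only asserts that (\ref{second}) is better (non-strictly), the non-strict form of weighted AM--GM already suffices for the stated claim.
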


\begin{proof}
Under the condition mentioned in  \cite[Theorem 3 (ii)]{henry},
\begin{align}
 r(A) \Pi_{i=1}^s \left(\frac{b_{ii}}{a_{ii}}\right)^{\frac{a_{ii}x_i y_i}{r(A)}} \leq r(A) - L. \nonumber
\end{align}
Therefore (\ref{condn}) gets satisfied trivially if for all $i$, $b_{ii} \neq 0$ (which is true in our setting under 
$(\star)$ and $b)$). Therefore (\ref{second}) is better than (\ref{first}). 
\end{proof}
\begin{remark}
In our setting the condition mentioned in Lemma \ref{neq} gets satisfied if (\ref{as3}) is true and there exist at least one 
$i$ for which either (\ref{as5}) or (\ref{as6}) is true (assuming that 
feature matrix is a single column matrix with all entries equal). If the 
feature matrix is a single column matrix with all entries equal then  a necessary and sufficient condition for (\ref{as5}) 
is that for every $j$, $e^{c(i,j)}p(j|i)$ is same for all $i \neq j$. 
\end{remark}
\begin{remark}
Similar bounds can be derived in the same way if $\lambda < \mu$. 
\end{remark}

\subsection{Another bound when $A$ is invertible}

Let, $\alpha(A) = \max_i (x_A)^{-1}_i$ where $x_A$ is Perron-Frobenius eigenvector of $A$
which has positive components if $A$ is irreducible.
\begin{theorem}
Under the assumptions
\begin{enumerate}[label=\textbf{(A\arabic*)}]
 \item $A$ is invertible 
 \item $A$ positive semidefinite 
\end{enumerate}
\begin{align}
\label{invert}
 \ln\left(\frac{\lambda}{\mu}\right) \leq \ln\left(\frac{det(A)}{\mu(\mu-\alpha(A)\|A- B\|)}\right).
\end{align}
\end{theorem}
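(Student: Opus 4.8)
The plan is to strip away the logarithms first and then split the task into a Perron-root perturbation estimate and a determinant--eigenvalue comparison. Since both $\lambda/\mu$ and $\det(A)/(\mu(\mu-\alpha(A)\|A-B\|))$ are positive and $\ln$ is increasing, the claim is equivalent to $\lambda/\mu \le \det(A)/(\mu(\mu-\alpha(A)\|A-B\|))$; cancelling the common factor $\mu>0$ reduces the whole statement to the algebraic inequality $\lambda\,(\mu-\alpha(A)\|A-B\|)\le \det(A)$. Note that the right-hand logarithm is defined only when $\mu-\alpha(A)\|A-B\|>0$, so part of the job is to show this quantity is positive; this already signals that the bound is meaningful only in the regime where the perturbation $\|A-B\|$ is small compared with $\mu$.

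First I would control $\mu=r(B)$ against $\lambda=r(A)$ by a Collatz--Wielandt argument. Normalising the Perron eigenvector $x_A$ of $A$ so that $\|x_A\|_1=1$ (which makes the product $\alpha(A)\|A-B\|$ well defined, since rescaling $x_A$ scales $\alpha(A)$ and the entrywise estimate oppositely), I would feed $x_A$ into the characterisation $r(B)=\inf_{x>0}\max_i (Bx)_i/x_i$ and its dual $\sup$--$\min$ form for the nonnegative irreducible matrix $B$. Writing $Bx_A=\lambda x_A+(B-A)x_A$ and using $1/(x_A)_i\le \alpha(A)$ together with $\|(B-A)x_A\|_\infty\le \|(B-A)x_A\|_1\le \|A-B\|\,\|x_A\|_1=\|A-B\|$ (the last step matching the $\ell^1$-induced operator norm $\|\cdot\|$ of the paper to the entrywise bound), I obtain $|\mu-\lambda|\le \alpha(A)\|A-B\|$. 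In particular $\mu-\alpha(A)\|A-B\|\le \lambda$, and this factor is positive once $\|A-B\|$ is small, disposing of the definedness issue above.

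It then remains to close $\lambda\,(\mu-\alpha(A)\|A-B\|)\le \det(A)$ using (A1)--(A2). Because $A$ is positive semidefinite and invertible it is positive definite, so its eigenvalues are (real and) positive and $\det(A)=\lambda\prod_{i\ge 2}\alpha_i>0$ with $\lambda=\alpha_1$ the largest. Combining the previous paragraph, $\lambda\,(\mu-\alpha(A)\|A-B\|)\le \lambda^2$, so the target follows as soon as one establishes the determinant--spectral-radius comparison $\lambda^2\le \det(A)$, equivalently $\lambda\le \prod_{i\ge 2}\alpha_i$: the Perron root must be dominated by the product of the remaining eigenvalues. This is precisely the step where positive definiteness, and through the structure $A=C\circ P$ the particular spread of the spectrum of a nonnegative irreducible matrix, must be exploited.

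The hard part will be this determinant comparison. The perturbation estimate is routine and the reduction is purely formal, but $\lambda^2\le\det(A)$ is \emph{not} a consequence of positive definiteness alone, so either a sharper intermediate bound on $\mu$ or the special arithmetic of the entries $e^{c(i,j)}p(j|i)$ must be invoked. A cleaner target I would chase is the single inequality $\det(A)\ge \lambda\mu$ under (A1)--(A2): it would let me avoid the crude step $\mu-\alpha(A)\|A-B\|\le\lambda$ and instead chain $\det(A)/(\mu(\mu-\alpha(A)\|A-B\|))\ge \lambda\mu/(\mu\cdot\mu)=\lambda/\mu$, using only $\mu-\alpha(A)\|A-B\|\le\mu$ in the denominator. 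I would therefore spend most of the effort trying to prove $\det(A)\ge\lambda\mu$ (or to locate the extra hypothesis the bound tacitly needs), since that is the genuine crux while everything else is standard.
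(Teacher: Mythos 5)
Your reduction to $\lambda\,(\mu-\alpha(A)\|A-B\|)\le \det(A)$ and the Collatz--Wielandt estimate $\mu-\lambda\le\alpha(A)\|A-B\|$ are both sound, but the route then dead-ends exactly where you predicted: the inequality $\lambda^2\le\det(A)$ (equivalently $\det(A)\ge\lambda\mu$ when $B$ is close to $A$) is not merely unproved, it is false under \textbf{(A1)}--\textbf{(A2)}. Take $A$ to be the $2\times 2$ matrix with diagonal entries $2/3$ and off-diagonal entries $1/3$: it is symmetric positive definite, invertible, with positive entries, yet $\lambda=1$ and $\det(A)=1/3$. So no argument that first bounds $\mu$ by $\lambda$ and then compares $\lambda^2$ with $\det(A)$ can succeed. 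The idea you are missing --- and it is the paper's entire mechanism --- is to avoid comparing $\lambda$ with $\mu$ altogether and instead work with the adjugate $A^{*}=\det(A)A^{-1}$: from $Ax_A=\lambda x_A$ one gets $A^{*}x_A=(\det(A)/\lambda)\,x_A$, and the paper pairs this against $Bx_B=\mu x_B$ to write $\bigl(\det(A)/\lambda-\mu\bigr)\langle x_A,x_B\rangle=\langle A^{*}x_A,x_B\rangle-\langle x_A,Bx_B\rangle$, then bounds $\langle x_A,x_B\rangle\ge\alpha(A)^{-1}$ (both Perron vectors nonnegative and $\ell^1$-normalised) and the pairing term by $\|A-B\|$, arriving at $|\det(A)/\lambda-\mu|\le\alpha(A)\|A-B\|$, from which the stated bound follows by rearrangement. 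The determinant enters through the adjugate's eigenvalue $\det(A)/\lambda$, never through a determinant-versus-$\lambda^2$ comparison.

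That said, your closing remark that one should ``locate the extra hypothesis the bound tacitly needs'' is exactly right, and your stalled step is the proof of it: taking $B=A$ (the intended perfect-approximation case $\Pi=I$), the claimed bound reduces precisely to $\lambda^2\le\det(A)$, which the matrix above violates, so the theorem cannot be a consequence of \textbf{(A1)}--\textbf{(A2)} alone. Correspondingly, the paper's proof has a flaw at its middle step: $\langle A^{*}x_A,x_B\rangle=\langle x_A,Ax_B\rangle$ requires $(A^{*})^{T}=A$, i.e.\ $AA^{T}=\det(A)I$, which positive semidefiniteness and invertibility do not supply (it conflates the adjugate with the transpose adjoint). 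For symmetric $A$ the correct identity is $\langle A^{*}x_A,x_B\rangle=\langle x_A,A^{*}x_B\rangle$, and the argument then yields $|\det(A)/\lambda-\mu|\le\alpha(A)\|\mathrm{adj}(A)-B\|$, i.e.\ with the adjugate rather than $A$ in the difference term. So your proposal does have a genuine gap (its crux inequality is false), but it is a gap the paper itself never legitimately crosses; your diagnosis of where the difficulty sits is accurate.
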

\begin{proof}
If $\lambda$ is an eigenvalue of matrix $A$ with eigenvector $x_A$, then $\frac{det(A)}{\lambda}$ is an eigenvalue of the 
adjoint 
$A^*$ with the same eigenvector.
\begin{align}
&\left(\frac{det(A)}{\lambda} - \mu\right)\langle x_{A}, x_{B} \rangle \nonumber \\
&=\langle A^* x_{A}, x_{B}\rangle - \langle x_{A},B x_B\rangle \nonumber \\
&=\langle x_{A}, Ax_{B}\rangle - \langle x_{A},B x_B\rangle \nonumber \\
&= \langle  x_{A}, (A-B)x_B\rangle. \nonumber 
\end{align}
Moreover, 
\begin{align}
\langle x_{A}, x_{B} \rangle \geq {\alpha(A)}^{-1}\|x_{B}\| = {\alpha(A)}^{-1}. \nonumber
\end{align}
Then the proof follows from the observation that 
\begin{align}
|\langle  x_{A}, (A-B)x_B\rangle | \leq \|A-B\|. \nonumber
\end{align}
Here all the eigenvectors are normalized so that their norm is 1.

From the above one can easily see that 
\begin{align}
|\frac{det(A)}{\lambda} - \mu| \leq \alpha(A)\|A - B\| .\nonumber 
\end{align}
The result trivially follows from the above.
\end{proof}
\begin{remark}
If $A=C\circ P$, 
\begin{enumerate}
 \item Using Oppenheim's Inequality \cite[p~144]{bapat_b} \textbf{(A1)} is satisfied if $P$ is positive definite.  
 \item \textbf{(A2)} is satisfied if $C$ and $P$ are both positive semidefinite.
\end{enumerate} 
\end{remark}

\begin{remark}
Let us take $A = (a_{ij})_{s\times s}$ with $a_{ij} = p$ if $i=j$ and $a_{ij} =q$ otherwise and $b_{ij} = q$ for all $i,j$ with $p >q$. Also, 
let $\epsilon :=p-q$.
Then $det(A) = (p + (s-1)q)(p-q)^{s-1}$. $\alpha(A) = s$. The right hand side of (\ref{invert}) becomes $\ln\left(\frac{(p-q + s q) (p-q)^{s-1}}{qs(qs-(p-q) s)}\right)$.

\begin{align}
\nonumber
f(s)
&=\ln\left(\frac{(p + (s-1)q)(p-q)^{s-1}}{qs(qs-(p-q)s)}\right)\nonumber\\
&=\ln\left(\frac{(p + (s-1)q)(p-q)^{s-1}}{qs^2(q-(p-q))}\right)\nonumber\\
&=\ln\left(\frac{(p + (s-1)q)(p-q)^{s-1}}{qs^2(2q-p)}\right)\nonumber\\
&=\ln\left(\frac{(p-q + sq)(p-q)^{s-1}}{qs^2(2q-p)}\right)\nonumber\\
&=\ln\left(\frac{(\epsilon + sq)\epsilon^{s-1}}{qs^2(q-\epsilon)}\right)
\quad \epsilon=p-q > 0 \nonumber\\
&=\ln\left(\frac{(\epsilon + sq)\epsilon^{s-1}}{qs^2(q-\epsilon)}\right)\nonumber\\
&=\ln\left(\frac{(\epsilon + sq)\epsilon^s}{q\epsilon s^2(q-\epsilon)}\right)\nonumber\\
&=\ln((\epsilon + sq)\epsilon^s)-\ln(q\epsilon s^2(q-\epsilon))\nonumber\\
&=\ln(\epsilon + sq)+s\ln(\epsilon)-\ln(q\epsilon(q-\epsilon))+\ln(s^2)\nonumber\\
&=\ln(s(\epsilon/s + q))+s\ln(\epsilon)-\ln(q\epsilon(q-\epsilon))+2\ln(s)\nonumber\\
&=\ln(s)+\ln(\epsilon/s + q)+s\ln(\epsilon)-\ln(q\epsilon(q-\epsilon))+2\ln(s)\nonumber\\
&=s\ln(\epsilon)+3\ln(s)+\ln(\epsilon/s + q)-\ln(q\epsilon(q-\epsilon))\nonumber\\
&=s\ln(\epsilon)+3\ln(s)+\ln(q)+\ln(\epsilon/(sq) + 1)-\ln(q\epsilon(q-\epsilon))\nonumber
\end{align}

Since
$2q> p > q$,
$q> p-q > 0
$
so $q > \epsilon > 0$.

Since
$\ln(s)/s \to 0$
and
$\ln(1+c/s)
\approx c/s$,

\begin{align}
\nonumber
\frac{f(s)}{s}
&=\ln(\epsilon)+3\frac{\ln(s)}{s}+\frac{\ln(q)+\ln(1+\epsilon/(sq)))-\ln(q\epsilon(q-\epsilon))}{s}\nonumber\\
&\approx \ln(\epsilon)+3\frac{\ln(s)}{s}+\frac{\ln(q)+\epsilon/(sq)-\ln(q\epsilon(q-\epsilon))}{s}\nonumber\\
&\approx \ln(\epsilon)+3\frac{\ln(s)}{s}+\frac{-\ln(\epsilon(q-\epsilon))}{s}
+\frac{\epsilon}{s^2q}\nonumber\\
&\to \ln(\epsilon)\nonumber
\end{align}

Therefore
if $\epsilon=1$ then
$f(s)
\approx 3\frac{\ln(s)}{s}+\frac{-\ln(q-1))}{s}
+\frac{1}{s^2q}
\to 0
$.
The actual error for large $s$ becomes zero.  The right hand side of (\ref{spect}) becomes
$\ln 3$.
\end{remark}

\begin{remark}
Note that $B$ need not be irreducible under the assumption $(\dagger)$ in \cite{basu}. Therefore, 
$x_B$ need not have all the components positive. 
\end{remark}

\section{Conclusion}
\label{rsec6}
In this chapter we gave several new bounds on the function approximation error for policy evaluation 
algorithm in the context of risk-sensitive reinforcement learning. An important future 
direction will be to design and analyze suitable learning algorithms to find 
the optimal policy with the accompanying error bounds. It will be interesting to see whether one can  
use our bounds for policy evaluation problem to provide error bounds for the full control problem.

\chapter{Conclusions and Future work}
\label{chap:con}

In this thesis we analyze several kinds of stochastic approximation algorithms with Markov noise under general conditions that 
was previously not done. This allows one to apply such results for convergence analysis of  several reinforcement learning
 algorithms under general conditions. One such interesting application described in detail is the 
 convergence analysis of off-policy learning algorithms 
 in an on-line learning
environment. Finally, we provide several informative error bounds for function approximation 
for a policy evaluation based algorithm in the case of risk-sensitive reinforcement learning.

Several interesting future directions (to our understanding that are not natural extensions of earlier work) are as follows:
\begin{itemize}
 \item In our analysis of two time-scale stochastic approximation we have assumed pointwise boundedness of the iterates. 
 Certain sufficient conditions for this assumption which is verifiable (also called the Borkar-Meyn theorem \cite{meyn}) 
 has already been proposed by Borkar and Meyn in the case of single time-scale stochastic approximation.
 It may 
 not be possible to `naturally' extend the sufficient 
 condition (which is verifiable) for single timescale stochastic approximation 
 to two time-scale recursions as well as recursion with Markov noise under classical assumptions such as in \cite{metivier}. 
 One needs to check whether  the 
 Lyapunov function based method \cite{andrieu1,kushner} is useful here. Additionally, the dependency of Lipschitz constant 
 on the state space of the Markov process needs to 
 be incorporated in case of two time-scale recursions also.
 \item Another interesting future direction is to extend the lock-in probability results to two time-scale recursions.
 \item Note that the results of Chapter \ref{chap:intro} are true if we assume that the range of the underlying controlled 
       Markov process is a compact metric space. Similarly, in Chapter \ref{chap:lock} we assume the range of the Markov 
        process to be a bounded subset of the Eucledian space. Therefore an interesting future direction is to extend these 
        results for general state space such as Polish space.
 \item Finally, in case of risk-sensitive reinforcement learning, we need to find whether our bound is better than the 
 spectral variation bound in general. Also, one needs to find whether the assumption of knowing stationary distribution apriori 
 can be relaxed in case of convergence analysis of temporal-difference learning in this setting.
\end{itemize}

\chapter{Appendix}
\label{appendix}
\section{Azuma's inequality}
Let $(\Omega, \mathcal{F},P)$ be a probability space and $\{\mathcal{F}_n\}$ a family of increasing sub $\sigma$- fields 
of $\mathcal{F}$. Let $X_n$ be a martingale with respect to the filtration $\mathcal{F}_n$.  Suppose that 
\begin{align}
|X_n - X_{n-1}| \leq k_n < \infty \nonumber 
\end{align}
for some deterministic constants $\{k_n\}$. Then for $\lambda>0$, 
\begin{align}
P(\sup_{m \leq n} |X_m| > \lambda) \leq 2 e^{-\frac{\lambda^2}{\sum_{m\leq n}k_m^2}} \nonumber
\end{align}

\subsection{Proof of conditional and maximal version of Azuma's inequality}
Let $P_B$ denote probability measure defined by $ P_B(A)=\frac{P(A\cap B)}{P(B)}$ where $B \in \mathcal{F}_1$.
If we can show that with this new probability measure $\{S_n\}$ is a martingale, then we can follow the steps in
\cite[(3.30), p. 227]{mcdarmiad} to conclude the proof.

Let us denote by $ E_B$ the expectation with respect to $ P_B$. Clearly,
$E_B(X)= \frac{\int_BX dP}{P(B)}$.
Let $G \in \mathcal{F}_n$.
Now,
\begin{align}
\int_G E_B[S_{n+1}|&\mathcal{F}_n]dP_B = E_B[E_B[I_G S_{n+1}|\mathcal{F}_n]] =E_B[I_G S_{n+1}] = \frac{E[I_{G \cap B} S_{n+1}]}{P(B)} \nonumber \\
                   &=\frac{E[I_{G \cap B}E[S_{n+1}|\mathcal{F}_n]]}{P(B)}=\frac{E[I_{G \cap B} S_n]}{P(B)} = \int_G S_n dP_B. \nonumber
\end{align}
\section{Gronwall's inequality}
\subsection{Continuous version}
For continuous $u(\cdot), v(\cdot) \geq 0$ and scalars $C,K,T \geq 0$,
\begin{align}
u(t) \leq C + K \int_{0}^t u(s)v(s)ds ~~ \forall ~~t \in [0,T], \nonumber  
\end{align}
implies
\begin{align}
u(t) \leq C e^{K \int_{0}^T v(s) ds}, t \in [0,T] \nonumber 
\end{align}
\subsection{Discrete version}
Let $\{x_n, n\geq 0\}$ (resp. $\{a_n, n\geq 0\}$) be non-negative (resp. positive) sequences and 
$C,L \geq 0$ scalars such that for all $n$, 
\begin{align}
x_{n+1} \leq C + L\left(\sum_{m=0}^n a_m x_m \right). \nonumber 
\end{align}
Then for $T_n = \sum_{m=0}^n a_m,$
\begin{align}
x_{n+1} \leq C e^{LT_n}. \nonumber 
\end{align}

\subsection{General discrete Gronwall inequality}
Let $\{\theta_n,n\geq 0\}$ (respectively $\{a_n, n\geq 0\})$ be non-negative (respectively positive)
sequences, $L \geq 0$ and $f(n)$ be a increasing function of $n$ such that for all $n$
\begin{align}
\theta_{n+1} \leq f(n) + L (\sum_{m=0}^{n}a_m \theta_m)\nonumber.
\end{align}
Then for $T_n = \sum_{m=0}^{n}a_m$,
\begin{align}
\theta_{n+1} \leq f(n) e^{LT_n} \nonumber
\end{align}
\begin{proof}
Similar to the proof of Lemma 8 in Appendix B of \cite{borkar1}.
\end{proof}
\section{Perron-Frobenius theorem}
\subsection{For matrices with all the entries positive}
Let $A = (a_{ij})$ be an $n \times n$ positive matrix: $a_{ij} > 0$ for $1 \leq i , j \leq n$. Then the following statements hold:
\begin{enumerate}
\item There is a positive real number $r$, called the Perron root or the Perron-Frobenius eigenvalue, such that $r$ is an eigenvalue of $A$ and 
    any other eigenvalue $\lambda$ (possibly, complex) is strictly smaller than $r$ in absolute value, $|\lambda| < r$. 
    Thus, $r$ is also called the spectral radius $\rho(A)$ of $A$.
    
\item The Perron–Frobenius eigenvalue is simple: $r$ is a simple root of the characteristic polynomial of $A$. 
    Consequently, the eigenspace associated to $r$ is one-dimensional. (The same is true for the left eigenspace, 
    i.e., the eigenspace for $A^T$.)
    
 \item   There exists an eigenvector $v = (v_1,…,v_n)$ of $A$ with eigenvalue $r$ (called the Perron eigenvector) 
     such that all components of $v$ are positive: 
    $A v = r v, v_i > 0$ for $1 \leq i \leq n$. (Respectively, there exists a positive left eigenvector $w : w^T A = r w^T, w_i > 0$.)
    There are no other positive (moreover non-negative) eigenvectors except positive multiples of $v$ (respectively, left 
    eigenvectors except w), i.e., all other eigenvectors must have at least one negative or non-real component. 
 \item The Perron-Frobenius eigenvalue satisfies the inequalities
       \begin{align}
        \min_i \sum_j a_{ij} \leq r \leq \max_i \sum_j a_{ij} \nonumber
       \end{align}
\end{enumerate}
\subsection{For irreducible matrices}
Let $A=(a_{ij})$ be a non-negative irreducible (for every pair of indices $i$ and $j$, 
there exists a natural number $m$ such that the $(i,j)$-th entry of the matrix $A^m$ is not equal to 0) square matrix. Then 
the last three statements holds as in the earlier case, however, the first statement is replaced by the following:

There is a positive real number $r$, called the Perron root or the Perron–Frobenius eigenvalue, such that $r$ is an eigenvalue of $A$ and 
    any other eigenvalue $\lambda$ (possibly, complex) is less or equal than $r$ in absolute value, $|\lambda| \leq r$.

\bibtitle{References}
\bibliographystyle{plain}
\bibliography{mybib}

\end{document}